\documentclass{article}
\usepackage{setspace}
\usepackage[hyperref,preprint]{jmlr2e}
\usepackage{mathtools}
\usepackage{array}
\usepackage{booktabs}
\usepackage{tabularx}
\usepackage{tikz}
\usetikzlibrary{positioning,calc}

\hypersetup{
	pdftitle={Eigenvalues of the Hessian in Deep Learning: The Origin of Symmetry and Its Breaking},
	pdfauthor={Yossi Arjevani},
	colorlinks=true,
	linkcolor=blue,
	citecolor=blue,
	urlcolor=blue,
	linktoc=all
}
\usepackage{amsmath,bbm}

\usepackage{placeins}
\usepackage{comment}
\usepackage{wrapfig}
\usepackage{pdflscape}
\newcommand{\qed}{\nobreak\hfill$\Box$\par}

\newcommand{\bmu}{\boldsymbol{\mu}}

\newcommand{\std}{{\mathrm{std}}_{O_{W_3}}}
\newcommand{\stdih}{\mathrm{Std}_{D_4}}
\newcommand{\semicircle}{\scalebox{0.53}{$\varnothing$}}

\newcommand{\stdsd}{\fs}
\newcommand{\triv}{\ft}

\newcommand{\dataspace}{Z}
\newcommand{\prompt}{\cP}
\newcommand{\mat}{\mathrm{Mat}}

\newcommand{\dd}{\mu}

\newcommand{{\qgn}}{d^{\mathrm{GN}}}
\newcommand{{\hgn}}{D^{\mathrm{GN}}}

\usepackage{multirow} %
\usepackage{graphicx,amssymb,colordvi,textcomp,latexsym, mathtools, enumitem, breqn} %
\usepackage{makecell}
\usepackage{booktabs}

\newcommand{\relu}{\mathrm{ReLU}}

\newcommand{\htodo}[1]{}

\newtheorem{fact}{Fact}

\def\ddefloop#1{\ifx\ddefloop#1\else\ddef{#1}\expandafter\ddefloop\fi}
\def\ddef#1{\expandafter\def\csname 
bb#1\endcsname{\ensuremath{\mathbb{#1}}}}
\ddefloop ABCDEFGHIJKLMNOPQRSTUVWXYZ\ddefloop
\def\ddef#1{\expandafter\def\csname 
#1\endcsname{\ensuremath{\mathbf{#1}}}}
\ddefloop abcdefghijklmnopqrstuvwxyz\ddefloop
\def\ddefloop#1{\ifx\ddefloop#1\else\ddef{#1}\expandafter\ddefloop\fi}
\def\ddef#1{\expandafter\def\csname 
b#1\endcsname{\ensuremath{\mathbb{#1}}}}
\ddefloop ABCDEFGHIJKLMNOPQRSTUVWXYZ\ddefloop
\def\ddef#1{\expandafter\def\csname 
c#1\endcsname{\ensuremath{\mathcal{#1}}}}
\ddefloop ABCDEFGHIJKLMNOPQRSTUVWXYZ\ddefloop
\def\ddef#1{\expandafter\def\csname 
f#1\endcsname{\ensuremath{\mathfrak{#1}}}}
\ddefloop ABCDEFGHIJKLMNOPQRSTUVWXYZabcdefghjklmnopqrstuvwxyz\ddefloop
\def\ddef#1{\expandafter\def\csname 
h#1\endcsname{\ensuremath{\widehat{#1}}}}
\ddefloop ABCDEFGHIJKLMNOPQRSTUVWXYZ\ddefloop
\def\ddef#1{\expandafter\def\csname 
hc#1\endcsname{\ensuremath{\widehat{\mathcal{#1}}}}}
\ddefloop ABCDEFGHIJKLMNOPQRSTUVWXYZ\ddefloop
\def\ddef#1{\expandafter\def\csname
sf#1\endcsname{\ensuremath{\mathsf{#1}}}}
\ddefloop abcdrs\ddefloop

\newcommand{\GL}{{\rm GL}}

\newcommand{\balpha}{\boldsymbol{\alpha}}
\newcommand{\blambda}{\boldsymbol{\lambda}}

\newcommand{\bepsilon}{\boldsymbol{\epsilon}}

\newcommand{\tr}{\mathrm{Tr}}
\newcommand{\codim}{\mathrm{codim}}

\newcommand{\bones}{\mathbf{1}}
\newcommand{\bzeros}{\mathbf{0}}

\newcommand{\bigO}[1]{\mathcal{O}{\left(#1\right)}}

\newcommand{\tens}{\otimes{}}
\newcommand{\defeq}{\coloneqq}

\DeclarePairedDelimiter{\prn}{(}{)}
\DeclarePairedDelimiter{\nrm}{\|}{\|}
\DeclarePairedDelimiter{\inner}{\langle}{\rangle}
\DeclarePairedDelimiter{\tri}{\langle}{\rangle}

\newcommand{\reals}{\mathbb{R}}

\usepackage{prettyref}
\newcommand{\pref}[1]{\prettyref{#1}}

\newcommand{\savehyperref}[2]{\texorpdfstring{\hyperref[#1]{#2}}{#2}}
\newrefformat{eq}{\savehyperref{#1}{\textup{(\ref*{#1})}}}
\newrefformat{setting}{\savehyperref{#1}{Learning Problem~\ref*{#1}}}
\newrefformat{eqn}{\savehyperref{#1}{Equation~\ref*{#1}}}
\newrefformat{lem}{\savehyperref{#1}{Lemma~\ref*{#1}}}
\newrefformat{obs}{\savehyperref{#1}{Observation~\ref*{#1}}}
\newrefformat{def}{\savehyperref{#1}{Definition~\ref*{#1}}}
\newrefformat{thm}{\savehyperref{#1}{Theorem~\ref*{#1}}}
\newrefformat{corr}{\savehyperref{#1}{Corollary~\ref*{#1}}}
\newrefformat{sec}{\savehyperref{#1}{Section~\ref*{#1}}}
\newrefformat{app}{\savehyperref{#1}{Appendix~\ref*{#1}}}
\newrefformat{ass}{\savehyperref{#1}{Assumption~\ref*{#1}}}
\newrefformat{ex}{\savehyperref{#1}{Example~\ref*{#1}}}
\newrefformat{fig}{\savehyperref{#1}{Figure~\ref*{#1}}}
\newrefformat{alg}{\savehyperref{#1}{Algorithm~\ref*{#1}}}
\newrefformat{rem}{\savehyperref{#1}{Remark~\ref*{#1}}}
\newrefformat{conj}{\savehyperref{#1}{Conjecture~\ref*{#1}}}
\newrefformat{prop}{\savehyperref{#1}{Proposition~\ref*{#1}}}
\newrefformat{table}{\savehyperref{#1}{Table~\ref*{#1}}}
\newrefformat{prob}{\savehyperref{#1}{Problem~\ref*{#1}}}
\newrefformat{claim}{\savehyperref{#1}{Claim~\ref*{#1}}}
\newrefformat{fact}{\savehyperref{#1}{Fact~\ref*{#1}}}
\newrefformat{sec}{\savehyperref{#1}{Section~\ref*{#1}}}
\newrefformat{ftnote}{\savehyperref{#1}{Footnote~\ref*{#1}}}
\newrefformat{exam}{\savehyperref{#1}{Example~\ref*{#1}}}
\newrefformat{exams}{\savehyperref{#1}{Examples~\ref*{#1}}}
\newrefformat{prob}{\savehyperref{#1}{(\ref*{#1})}}

\newcommand{\stab}{\operatorname{Stab}}

\usepackage{setspace}

\newcommand{\RR}{\mathbb{R}}
\newcommand{\EE}{\mathbb{E}}

\newcommand{\PP}{\mathbb{P}}

\newcommand{\NN}{\mathbb{N}}

\newcommand{\diag}{\mathrm{diag}}
\newcommand{\rk}{{\rm rk}}
\newcommand{\ploss}{{\cL}} 
\newcommand{\Od}[1]{{{\text{O}(#1)}}}

\newcommand{\bxi}{{\boldsymbol{\xi}}}

\newcommand{\grad}[1]{{\nabla {#1}}}

\newcommand{\sgn}{\mbox{sgn}}

\usepackage{breqn}

\newcommand{\tanset}{\mho}
\newcommand{\aut}{\mathrm{Aut}}

\usepackage[utf8]{inputenc} %
\usepackage[T1]{fontenc}    %
\usepackage{hyperref}       %
\usepackage{url}            %
\usepackage{booktabs}       %
\usepackage{amsfonts}       %
\usepackage{nicefrac}       %
\usepackage{microtype}      %
\usepackage{xcolor}         %

\newcommand{\weightspace}{\Theta}

\newcommand{\inspace}{X}
\newcommand{\outspace}{Y}
\newcommand{\universe}{\bA}
	
\newcommand{\id}{\mathrm{id}}

\newcommand{\grpstyle}[1]{\mathsf{#1}}
\newcommand{\model}{\grpstyle{L}}
\newcommand{\struc}{\grpstyle{S}}

\newcommand{\dist}{\grpstyle{D}}
\newcommand{\regis}{\grpstyle{R}}
\newcommand{\Model}{\Lambda}
\newcommand{\Struc}{\Sigma}
\newcommand{\Obj}{\Gamma}
\newcommand{\twoslashcircle}{%
	\mathord{\vcenter{\hbox{%
				\tikz[x=0.03em,y=0.03em,line width=0.02em,line cap=round]{
					\draw (0,0) circle (5);
					\draw (-5.8,-4.2) -- (4.2,5.8);
					\draw (-4.2,-5.8) -- (5.8,4.2);
				}%
	}}}%
}

\newcommand{\Sph}{\mathbb S}

\usepackage{tikz}					
\usepackage{ifthen}

\tikzset{%
	box/.style={
		align=center, minimum size=1cm, inner 
		sep=0pt, font=\tiny\bfseries,
		draw=black, #1
	},
	c1/.style={fill=yellow},
	c2/.style={fill=blue},
	c3/.style={fill=green},
	c3/.style={fill=red},
}

\newcommand{\blockmatrix}[9]{
	\draw[draw=#4,fill=#5] (0,0) rectangle( #1,#2);
	\ifthenelse{\equal{#6}{true}}
	{
		\draw[draw=#7,fill=#8] (0,#2) -- (#9,#2) -- ( #1,#9) -- ( #1,0) -- ( #1 
		- #9,0) -- (0,#2 -#9) -- cycle;
	}
	{}
	\draw ( #1/2, #2/2) node { #3};
}

\newcommand{\mblockmatrix}[4][none]{
	\begin{tikzpicture} 
		\ifthenelse{\equal{#1}{none}}
		{
			\blockmatrix{#2}{#3}{#4}{none}{none}{false}{none}{none}{0.0}
		}
		{
			\definecolor{fillcolor}{rgb}{#1}
			\blockmatrix{#2}{#3}{#4}{none}{fillcolor}{false}{none}{none}{0.0}
		}
	\end{tikzpicture}%
}

\newcommand{\fblockmatrix}[4][none]{
	\begin{tikzpicture} 
		\ifthenelse{\equal{#1}{none}}
		{
			\blockmatrix{#2}{#3}{#4}{black}{none}{false}{none}{none}{0.0}
		}
		{
			\definecolor{fillcolor}{rgb}{#1}
			\blockmatrix{#2}{#3}{#4}{black}{fillcolor}{false}{none}{none}{0.0}
		}
	\end{tikzpicture}%
}

\newcommand{\diagonalblockmatrix}[5][none]{
	\begin{tikzpicture} 
		
		\ifthenelse{\equal{#1}{none}}
		{
			\blockmatrix{#2}{#3}{#4}{black}{none}{true}{black}{none}{#5}
		}
		{
			\definecolor{fillcolor}{rgb}{#1}
			\blockmatrix{#2}{#3}{#4}{black}{none}{true}{black}{fillcolor}{#5}
		}
		
	\end{tikzpicture}%
}
\newcommand{\rowhead}[2]{\begin{tabular}[t]{@{}l@{}}#1\\#2\end{tabular}}

\newcommand{\netf}{\widetilde{N}}

\newcommand{\spT}{\cT_\star}
\DeclareMathOperator{\Unif}{Unif}
\DeclareMathOperator{\op}{op}

\newcommand{\E}{\mathbb E}
\newcommand{\Prob}{\mathbb P}

\newcommand{\sB}{\mathcal B}

\newcommand{\mset}[1]{\{\!\{#1\}\!\}}
\newcommand{\Mset}[1]{ \left\{\!\left\{#1\right\}\!\right\} }

\newcommand{\smloss}{\kappa}
\newcommand{\sqloss}{\mathtt{sq}}

\newcommand{\celoss}{\mathtt{ce}}
\newcommand{\Aff}{\mathrm{Aff}}
\newtheorem{setting}{Learning problem}
\newcommand{\convmap}{\Xi}

\usepackage{xspace}
\newcommand{\Integrated}{Composite\xspace}
\newcommand{\integrated}{composite\xspace}
\newcommand{\init}{\grpstyle{I}}
\newcommand{\avg}[1]{\overline{#1}}

\newcommand{\Span}{\operatorname{span}}\usepackage{lastpage}
\jmlrheading{}{2026}{}{}{}{}{Yossi Arjevani}

\ShortHeadings{The Origin of Symmetry and Its Breaking}{}
\firstpageno{1}
\begin{document}

\title{Eigenvalues of the Hessian in Deep Learning: \\
The Origin of Symmetry and Its Breaking}
\author{\name Yossi Arjevani \email yossi.arjevani@gmail.com \\
}
\thispagestyle{empty}
\editor{}

\maketitle

\begin{abstract}

Hessian spectra at trained models in deep learning exhibit a persistent pattern: eigenvalues organize into distinct clusters, including a large bulk near zero and a few isolated outliers. This paper shows that a natural account of these spectral phenomena emerges when the original setting is understood as a departure from a nearby, otherwise hidden, highly symmetric reference.

Modifications, including changes to the architecture, data distribution, or  parameter metric, expose a nearby reference configuration whose Hessian exhibits rich invariances---ones not accounted for by weight symmetries. There, symmetry enables a precise description of the spectra, forcing high-dimensional kernels and eigenvalues of large multiplicity. Returning to the original configuration breaks the Hessian symmetry and 
thereby produces the observed hierarchy of clusters and~outliers.

The framework is developed in some generality, with a detailed analysis of three-layer ReLU networks and applications to convolutional, graph, and transformer models, as well as to the NTK. The same mechanism is further shown to yield analogous spectral structures in layerwise Hessians and the Gauss--Newton matrix.

\end{abstract}

\setcounter{tocdepth}{2} 
\tableofcontents
\vspace{0.8cm}


\definecolor{matplotlibblue}{HTML}{1F77B4}

\begin{figure}[ht]
	\centering
	\includegraphics[width=\textwidth]{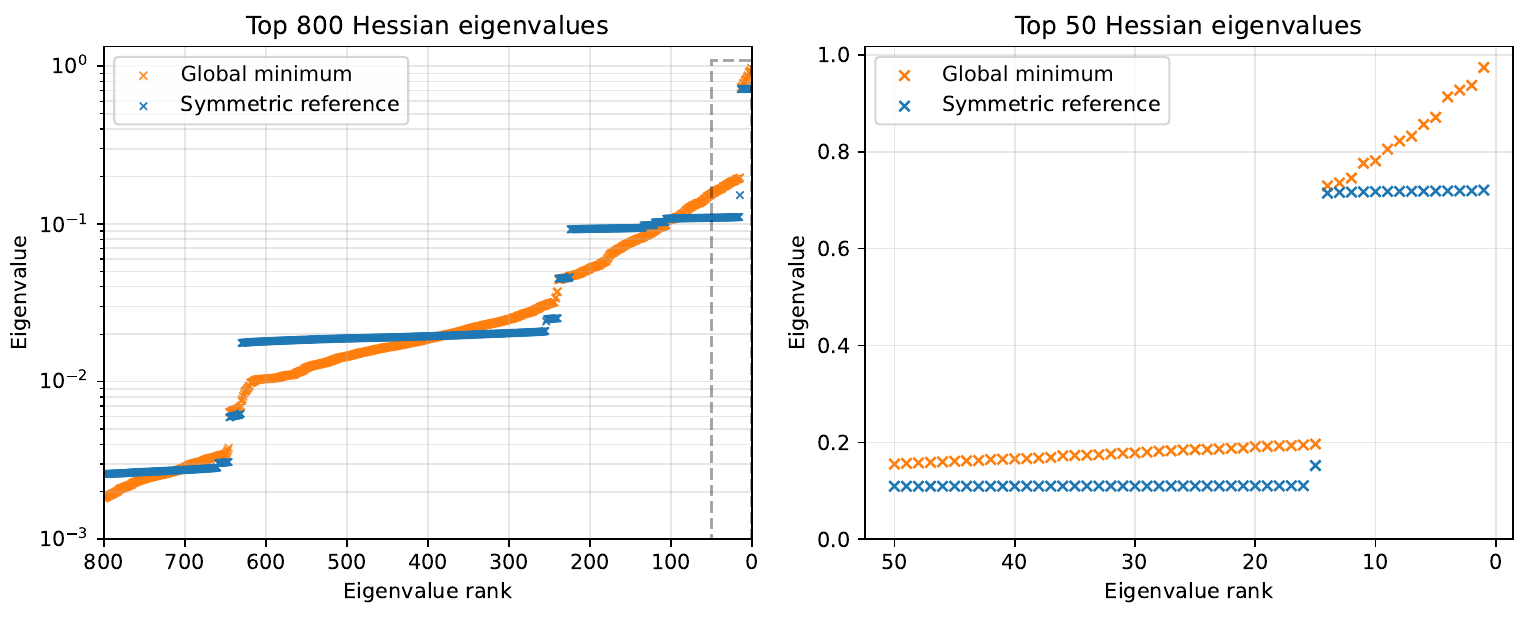}
	
	\smallskip
	{\footnotesize
		\setlength{\tabcolsep}{4pt}
		\renewcommand{\arraystretch}{1.15}
\begin{tabular}{c|p{.5\textwidth}|c}
	\multicolumn{3}{c}{
		\textbf{Spectrum at the {symmetric reference}}
	}
	\\[-1pt]
	\hline
	scale of eigenvalues
	&
	multiplicities forced by symmetry
	&
	total out of $8k^2$
	\\
	\hline
			$0$
			&
			one  eigenvalue with multiplicity $4k^2+4k$
			&
			$4k^2+4k$
			\\
			\hline
			$O(1/k)$
			&
			\begin{tabular}[t]{@{}l@{}}
				$3$ simple eigenvalues \\
				$6$ eigenvalues of multiplicity $k-1$ \\
				$4$ eigenvalues of multiplicity $\frac{k(k-3)}{2}$ \\
				$4$ eigenvalues of multiplicity $\frac{(k-1)(k-2)}{2}$
			\end{tabular}
			&
			$4k^2-6k+1$
			\\
			\hline
			$\Theta(1)$
			&
			\begin{tabular}[t]{@{}l@{}}
			one simple eigenvalue \\
			$2$ eigenvalues of multiplicity $k-1$
			\end{tabular}
			&
			$2k-1$
			\\
			\hline
		\end{tabular}

	}
	
\caption{
Hessian spectra of a three-layer network on a mixture of $k=15$ Gaussians with
random centers: at a global minimum of the original setting
(orange) and at a reference configuration (blue), close to the
original yet highly symmetric.\\
\textbf{At the symmetric reference.} The spectral structure falls into place. 
Symmetry forces the large kernel and the multiplicities in the table, and identifies invariant subspaces of directions weakly aligned with the data. For the family of global minima considered here, this results in only
\(2k-1\) eigenvalues remaining bounded away from~zero.\\
\textbf{Symmetry breaking.} Returning to the original configuration breaks
exact symmetry in a controlled manner. The resulting clusters arise as
perturbative remnants of the degenerate reference eigenvalues (left). \\
\textbf{Spectral multiplets.} A reference eigenvalue of multiplicity \(k-1\)
splits into \(k-1\) simple outliers (right), providing a mechanism for the
empirically observed scaling of the outlier count with the number of classes
\cite{sagun2016eigenvalues,sagun2017empirical}. With the parameter-space geometry taken into account, the spectra of a network trained on MNIST reveal the same organizing principle
(\pref{fig:mnist_n3_gn_metric_flow}).
	}
	\label{fig:original_vs_symmetric_mix}
\end{figure}

A recurring phenomenon in deep learning (DL) is the distinctive structure of the Hessian spectrum at trained networks. Empirical investigations \cite{lecun1998efficient,dauphin2014identifying,sagun2016eigenvalues,sagun2017empirical} found that the spectrum typically exhibits several  clusters, including a bulk concentrated near zero, alongside a set of outlier eigenvalues separated from these clusters. Far from being isolated observations, similar spectral features have since
been reported across a wide range of architectures and datasets
\cite{ghorbani2019investigation}, and for related quantities such as the
Gauss--Newton (GN) matrices \cite{papyan2018full}, neural tangent kernel (NTK) matrices \cite{fan2020spectra}, gradient covariance matrices \cite{jastrzebski2020break}, second-moment gradient matrices \cite{li2020hessian}, and network Jacobians
\cite{oymak2019generalization}, as well as in layerwise analyses of the Hessian
\cite{sankar2021deeper}.

Subsequent work sought to explain these spectral phenomena.
\cite{sagun2017empirical} observed that the number of outlier eigenvalues
closely matches the number of classes and, in overparameterized settings,
related part of the near-zero bulk to rank deficiency of the empirical GN
term. Later work further developed this use of the GN matrix \cite{fort2019emergent,papyan2018full} to provide a more refined account of the observed Hessian spectrum, including through connections with the  NTK \cite{jacot2019asymptotic}.  A parallel line of work has drawn on random matrix theory (RMT), often
under asymptotic or idealized assumptions. These analyses have captured several spectral features, but their scope
remains limited
\cite{pennington2017geometry,granziol2022learning,couto2025teacher}.
Taken together, these approaches provide valuable insight, but the
question of the origin and nature of these spectral phenomena
has remained~open.

In this work, we propose symmetry and its breaking as an organizing principle
for explaining the observed structure of Hessian spectra. Our analysis traces
the spectral clusters to the splitting of Hessian eigenvalues whose large
multiplicities are forced by symmetry at reference configurations. This mechanism, illustrated in \pref{fig:original_vs_symmetric_mix}, is established directly for the Hessian at fixed finite layer sizes.  A~central challenge is to identify the relevant Hessian invariances at these configurations, since they are generally not captured by weight symmetries. We develop the theoretical framework in some generality, while centering the detailed analysis on the settings studied empirically in \cite{sagun2016eigenvalues,sagun2017empirical}. The title was chosen in reference to these works, reflecting our aim to explain how exact symmetries and their subsequent breaking produce the
empirical phenomena reported there. Despite their
apparent simplicity, these settings already reveal the rich structure of the
underlying symmetry breaking (SB) phenomena. 

This extended version also considers applications beyond this core setting. Once the framework is in place, it can be applied to deeper networks, for example, convolutional networks, graph neural networks, and transformers. We include brief sketches of a few such settings in \pref{sec:dsb}.
Detailed accounts are in preparation as separate publications, to be incorporated later into a broader treatment \cite{arjevani2026dsb_tf}. Beyond the spectral
phenomena studied here, SB appears more broadly across DL, bearing on
optimization \cite{arjevani2025three}. We briefly discuss this connection in \pref{sec:concluding}.

\section{Introduction}
We begin by reviewing the relevant phenomenology and the limitations of
existing attempts to explain it. The main part of the introduction then gives
a high-level overview of the framework and demonstrates how SB accounts for the
observed spectral structure. The overview also serves as a guide to the
organization of the paper, with forward references to the sections where each
part of the framework is developed.

To avoid a lengthy preliminary section that could obscure the role of the core
ideas, we have sought to keep formal preliminaries in the introduction to a
minimum. The full formal development begins in \pref{sec:main}, while selected
background references are collected in the \hyperref[sec:reading]{Appendix}. Accordingly, some arguments are
presented at an informal level, with footnotes providing immediate technical
clarifications. This expository choice, \emph{toccata e fuga}, is intended to
isolate the mechanism from the technical framework that supports it, before
the subsequent sections return to develop that framework in full detail.

\subsection{Phenomenology}\label{sec:pheno}
The first step in studying the Hessian and its spectrum is, in principle,
their explicit computation. However, exact computation remains prohibitively difficult for full-scale architectures because of their large number of parameters. The settings considered in \cite{sagun2016eigenvalues,sagun2017empirical} were therefore chosen to permit precise and tractable computations and are described below.

We take inputs in $\inspace=\RR^d$ and outputs in $\outspace=\RR^k$,
with $d$ input coordinates and $k$ classes. The parameter space is
denoted by $\weightspace$, with $p=\dim\weightspace$. On these spaces,
we define the fully connected three-layer ReLU network
$N^3:\inspace\times\weightspace\to\outspace$ by
\begin{align}\label{net3}
	N^3(\x; W)
	&= W_3 \sigma\big(W_2 \sigma(W_1 \x + \b_1) + \b_2\big) + \b_3,
\end{align}
where \(\sigma\) denotes the ReLU activation function, applied coordinatewise,
and the parameter space \(\weightspace\)  consists of tuples
\(W=(W_1,\b_1,W_2,\b_2,W_3,\b_3)\) with
\(W_1 \in M(h,d)\), \(\b_1 \in \RR^h\), \(W_2 \in M(h,h)\),
\(\b_2 \in \RR^h\), \(W_3 \in M(k,h)\), and \(\b_3 \in \RR^k\). Here \(M(a,b)\) is the vector space of real \(a\times b\) matrices, and \(h\) is the number of neurons in the hidden layers.

In developing the framework, we also introduce intermediate architectures,
which we call \emph{diminished architectures}, to construct symmetric reference
configurations at which the relevant symmetries hold exactly. These architectures also provide simpler models in which to present the main
ideas. We write \(N^L(\x;W)\) for the \(L\)-layer version of \(N^3\) in
\eqref{net3}, and define 
	$N^{L+\nicefrac{1}{2}}(\x;W)
 =
	N^{L+1}(\x;(W,\bones^\top,0))$, with $k=1$ and \(\bones\) being the all-ones vector.
A superscript \(\circ\) indicates that all bias terms are removed, 
\(\semicircle\) indicates that only the final-layer bias is removed, and
\(\twoslashcircle\) indicates that the final-layer bias is restricted to be a
scalar multiple of \(\bones\). Thus,
\begin{alignat}{2}
	\label{net3circ}
	N^{2\nicefrac{1}{2},\circ}(\x; W)
	&=
	\bones^\top \sigma\big(W_2 \sigma(W_1 \x)\big),
	\quad\qquad&
	N^{3,\circ}(\x; W)
	&=
	W_3 \sigma\big(W_2 \sigma(W_1 \x)\big),
	\\
	N^{3,\semicircle}(\x; W)
	&=
	W_3 \sigma\big(W_2 \sigma(W_1 \x + \b_1) + \b_2\big),
	\quad&
	N^{3,\twoslashcircle}(\x; W)
	&=
	W_3 \sigma\big(W_2 \sigma(W_1 \x + \b_1) + \b_2\big)
	+
	b_3\bones .
	\notag
\end{alignat}
Define the pointwise loss associated with architecture \(N\) and loss function \(\ell^\ast:Y\times Y\to[0,\infty)\) by
\begin{align}\label{smloss_def}
\smloss\defeq	\smloss_N^\ast(\x,\y;W)
	=
	\ell^\ast\!\big(N(\x;W),\y\big).
\end{align}
Throughout, \(\ell^\ast\) is taken to be either the cross entropy loss \(\ell^\celoss\) or the squared loss \(\ell^\sqloss\). The expected loss is defined by
\begin{align}\label{eloss}
\ploss \defeq \ploss_{N,\dd}^*(W)
	=
	\mathbb{E}_{(\x,\y)\sim\dd}\!\left[\smloss_N^\ast(\x,\y; W)\right],
\end{align}
where $\dd$ denotes the data distribution.\footnote{Our use of \(\kappa\) for the pointwise loss reflects its role as an integral kernel in the theory of generalized functions. Later, we assume that \(\dd\) is a Radon measure and interpret the expected loss as a pushforward. This viewpoint allows us to treat Hessians with possible singular Radon measure components.} In general, the expected loss \eqref{eloss} and the objective function to be minimized may differ, for instance when the latter includes regularization. Moreover, some objectives are not naturally given as expected losses,
such as the tensor decomposition problem in \eqref{prob:tensor opt}.
Accordingly, unless the distinction matters, we usually use the term objective function.

The datasets considered in \cite{sagun2016eigenvalues,sagun2017empirical} are
MNIST and mixtures of Gaussians. We focus on the latter, for which the relevant
symmetries are explicit and exact (see \pref{fig:mnist_n3_gn_metric_flow} for results on MNIST).
\begin{setting}[Mixture of Gaussians]\label{setting:gaussian_mixture}
	The data distribution on
	\((X,Y)\in\mathbb{R}^d\times\{1,\dots,k\}\) is given by
	\begin{align}
		\mathbb{P}(Y=j)=\tfrac1k,
		\qquad
		X\mid Y=j \sim \mathcal{N}(\c_j,\tau^2 I_d),
		\qquad
		j=1,\dots,k,
	\end{align}
	where  \(\mathcal{N}\) denotes the Gaussian distribution,
	\(\c_1,\dots,\c_k\in\mathbb{R}^d\) are the class centers,
	\(I_d\) is the \(d\times d\) identity matrix, and \(\tau>0\).
	We use the cross-entropy loss \(\ell^\celoss\) unless otherwise stated. Following \cite{sagun2016eigenvalues}, we also consider the squared loss
	\(\ell^\sqloss\), identifying each class label \(j\) with its one-hot
	representation \(\e_j\in\mathbb{R}^k\). The same work considers the case \(k=2\), with
	\(\c_1=(-1,-1)\) and \(\c_2=(1,1)\). We denote the corresponding
	distribution by \(\cN_\pm\). In \cite{sagun2017empirical}, the centers
	are drawn randomly.
\end{setting}
We use He initialization \cite{he2015delving} and the Adam optimizer
\cite{kingma2015adam}, which partially adapts to local curvature. Training begins with batch sizes of 128, which are increased toward the end. Consistent with the findings of \cite{sagun2016eigenvalues},  \pref{fig:flat_directions} shows that after training, a large bulk of eigenvalues is concentrated near zero, together with a small set of positive eigenvalues separated from this bulk. Moreover, the Hessian itself exhibits a distinctive pattern reflecting
a sparse, low-dimensional structure (shown here for \(N^{3,\circ}\)). As further illustrated in the figure, increasing the dimension does not change this qualitative picture. The same behavior was reported when using the squared loss and when increasing the Gaussian variance. The phenomenon persists beyond the two-Gaussian case: for mixtures of
\(k\) Gaussians, \cite{sagun2017empirical} observed a similar 
spectral structure, with the number of outlier eigenvalues scaling with
the number of labels (see \pref{fig:original_vs_symmetric_mix}).

\begin{figure}[ht]
	\centering
\makebox[\textwidth][c]{\includegraphics[width=1.1\textwidth]{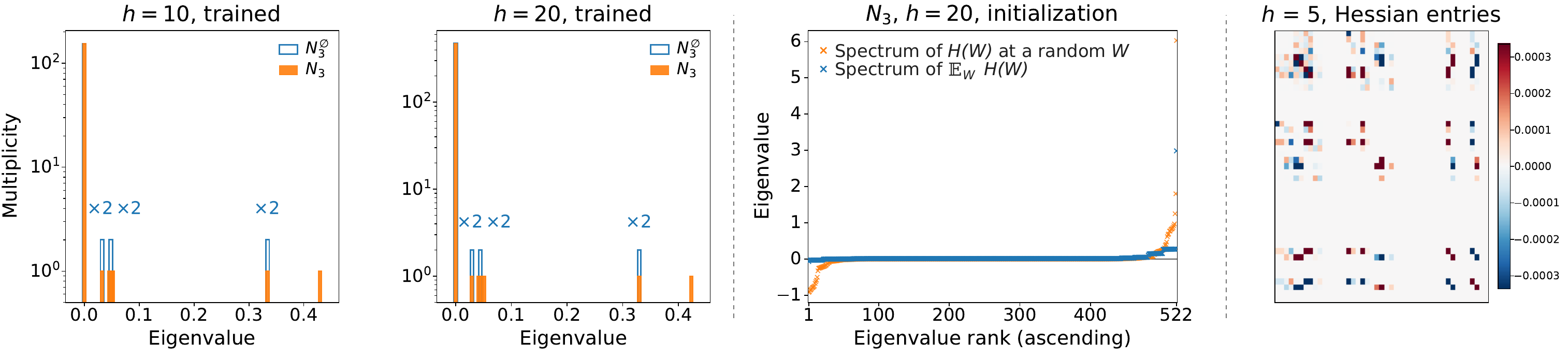}}
\caption{
	Hessian spectra (left and center) for the three-layer networks \(N^3\)
	and \(N^{3,\semicircle}\) (the latter obtained by removing the final-layer
	bias), and Hessian structure for the bias-free  \(N^{3,\circ}\) (right), on the
	two-Gaussian mixture in \pref{setting:gaussian_mixture}.
At trained models, their Hessian spectra (left) exhibit a large near-zero
bulk and a few outliers, consistent with \cite{sagun2016eigenvalues}. The spectrum at a single random draw, likewise examined in that work, is compared with that of the averaged Hessian~(center).\\
\textbf{At the symmetric reference.} Despite weight symmetries being
generically trivial, the argument illustrated in \eqref{struc_combine_with_dist} yields
a \(D_4\times C_2\) Hessian invariance. This invariance, richer than its \(k=2\) counterpart in
\pref{fig:original_vs_symmetric_mix}, forces all nonzero eigenvalues to have
multiplicity two (left, blue). The large kernels reflect symmetry-induced parameter degeneracy.\\
\textbf{Symmetry breaking.}
Restoring the final-layer bias breaks the
symmetry and splits two reference doublets, with the leading one producing
two isolated outliers (left, orange), while retaining the large reference kernel.
This explains the reported bulk-outlier pattern.
The detailed spectral description, including the full Cauchy interlacing
argument, is given in~\pref{table:n3_symmetry_breaking}.\\
	\textbf{Random initialization.} We show that the averaged Hessian has a rich
invariance that forces a small positive eigenvalue of multiplicity quadratic in $h$
	(center, blue), which in turn yields, with probability tending to one as
	\(h\to\infty\), a near-zero bulk containing a \(1-o(1)\) fraction of the
	spectrum at a random draw (center, orange).
}
	\label{fig:flat_directions}
\end{figure}

This spectral structure is not restricted to classification and appears
equally in regression. We include the following setting
in this extended version.
\begin{setting}\label{setting:gaussian}
The inputs are drawn from \(\mathcal{N}(0,I_d)\)
and labeled by $\x	\mapsto  \mathbf{1}^\top \sigma(\x)$.	The loss is the squared loss $\ell^\sqloss$.
\end{setting}
This learning problem has been studied for $N^{1\nicefrac12,\circ}$
and related variants in the statistical-physics 
literature \cite{saad1995online,goldt2019dynamics} and in the DL literature
\cite{tian2017analytical,du2018gradient,safran2021effects,xu2023overparameterization,simsek2023copyaverage,huang2026sharp,arjevani2019principle}. Here we study the same problem with
\(N^3\). Our framework extends to arbitrary depth, thereby resolving the
five- and six-layer architectures considered in the latter work. As in
the mixture of Gaussians, the spectra in \pref{fig:sq_evs}~(right) show
eigenvalue clusters together with separated outliers, whose number here is of
order \(d\). Importantly, some of these clusters lie close to zero but remain
positive. We return to this spectral structure below as a constraint
on candidate~explanations.

\begin{figure}[ht]
\centering
\includegraphics[width=\textwidth,scale=0.6]{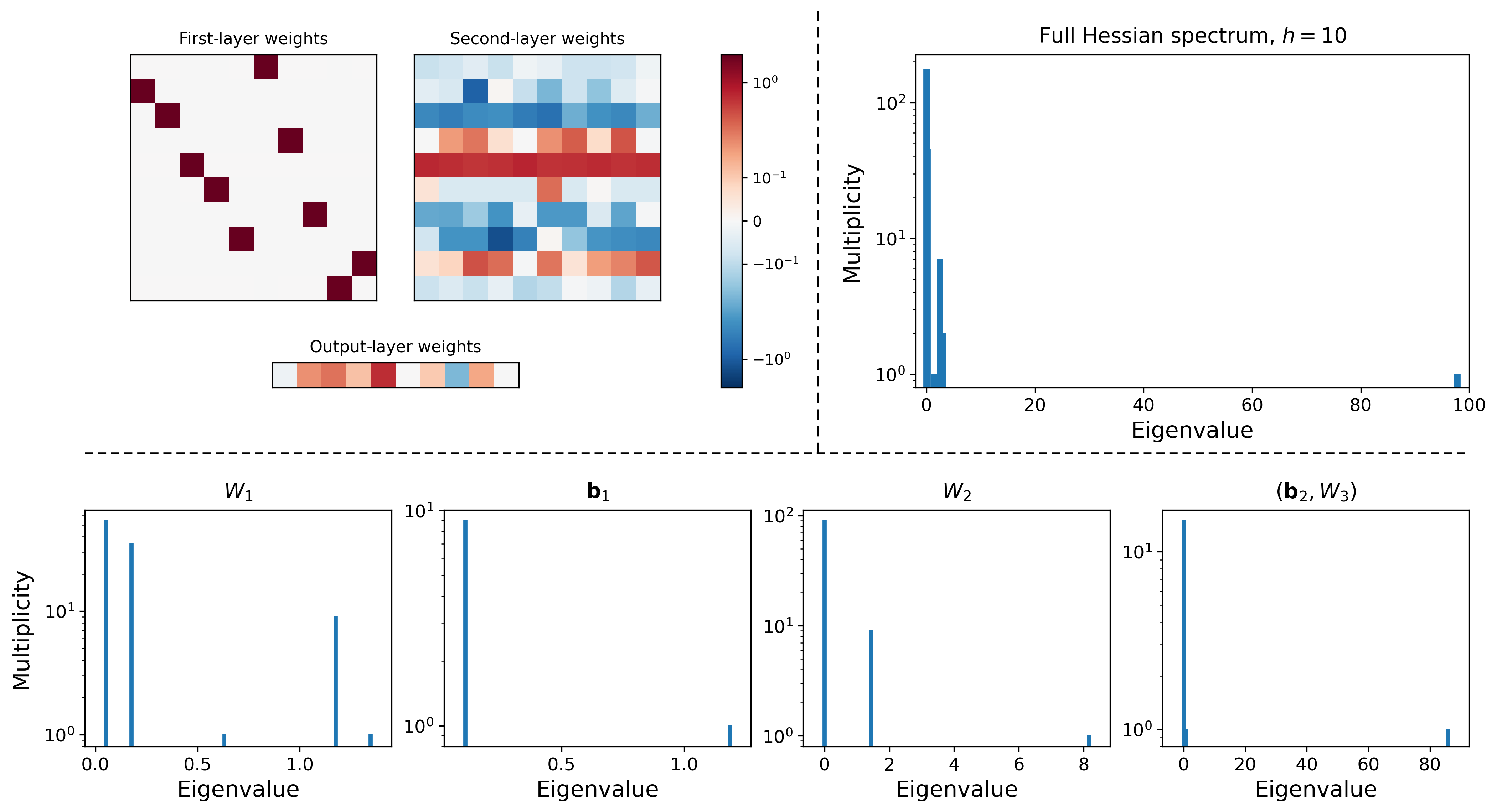}
\smallskip

{\footnotesize
	\setlength{\tabcolsep}{4pt}
	\renewcommand{\arraystretch}{1.15}
\begin{tabular}{c|p{.5\textwidth}|c}
	\multicolumn{3}{c}{
		\textbf{Spectrum of $N^3$ at the symmetric reference}
	}
	\\[-1pt]
	\hline
	scale of eigenvalues
	&
	multiplicities forced by symmetry
	&
	total out of $2d^2+3d+1$
	\\
	\hline
	$0$
	&
	one zero eigenvalue with multiplicity $d(d+2)$
	&
	$d(d+2)$
	\\
	\hline
	$\Theta(1)$
	&
	\begin{tabular}[t]{@{}l@{}}
		one eigenvalue with multiplicity $\frac{d(d-3)}{2}$ \\
		one eigenvalue with multiplicity $\frac{(d-1)(d-2)}{2}$ \\
		two eigenvalues, each with multiplicity $d-1$ \\
		one simple eigenvalue
	\end{tabular}
	&
	$d(d-1)$
	\\
	\hline
	$\Theta(d)$
	&
	\begin{tabular}[t]{@{}l@{}}
		two eigenvalues, each with multiplicity $d-1$ \\
		two simple eigenvalues
	\end{tabular}
	&
	$2d$
	\\
	\hline
	$\Theta(d^2)$
	&
	one simple eigenvalue
	&
	$1$
	\\
	\hline
\end{tabular}
	
	\smallskip
}
\caption{
	Weights and Hessian spectra of $N^{3}$ with $d=h\ge4$, trained on
	\pref{setting:gaussian}.
	\textbf{Top.} At the trained model, the Hessian has a rich invariance structure and
	nontrivial eigenvalue multiplicities---despite the learned weights having only
	trivial symmetry under the standard rescaling and permutation symmetries. Our
	framework instead traces these invariances to symmetries of the differential
	of $N^{3}$.
		The same obstruction appears for
		\pref{setting:gaussian_mixture} and for general $L$-layer fully connected
		networks in \cite{arjevani2019principle}. \textbf{Bottom.} 
	Nontrivial multiplicities also occur in the layerwise Hessian spectra. The limitation of weight symmetries is even more pronounced layerwise, since transformations coupling adjacent layers need not survive
	restriction to a single layer. Nevertheless, the framework
	captures the layerwise Hessian invariances and the resulting multiplicities
	recorded in~\pref{table:hess_kernel}.
}
\label{fig:sq_evs}
 \end{figure}

 \subsection{Previous attempts to identify an underlying mechanism}
We review representative approaches proposed to explain these spectral
structures, assess their limitations in light of the numerical results
above, and collect some of the main gaps in a list below.

\paragraph{Random matrix theory.}
A prominent line of work uses random matrix theory (RMT) to model Hessians
and related curvature matrices, drawing in part on spin-glass models of
high-dimensional loss landscapes. 
Representative examples include
\cite{choromanska2015loss,pennington2017geometry,liao2021hessian,baskerville2022universal,granziol2022learning,couto2025teacher}. These approaches can reproduce
broad bulk and outlier features, including through spiked models.
At the same time, the resulting theoretical picture does not fully capture the observed spectral structure, with major discrepancies reported, for example, in
\cite{granziol2020beyond}. Some modeling assumptions
motivated by studies of \emph{glassy} systems have also been criticized
\cite{baity2018comparing}, and developing more suitable random matrix models or
exploring alternative assumptions \cite{choromanska2015open} remains
an active research direction.

Set against the phenomenology above, the cited RMT analyses leave
several important gaps, summarized below, of which we highlight two here. First,
the same spectral organization already occurs in very small networks,
calling for an explanation that applies at finite dimensions.
Second, these analyses provide generative accounts by positing random
matrix ensembles or deriving them from probabilistic models, leaving
open what determines the organization of the Hessian spectrum at a
given critical point in the settings considered here, where this
structure is clearly observed.

\paragraph{Gauss--Newton decomposition.}
Another line of work uses the GN decomposition of the Hessian. As noted
earlier, rank deficiency of the empirical GN matrix was used in
\cite{sagun2017empirical} to account for some of the zero eigenvalues in the
overparameterized regime. Consider a network $N$ with $k$ outputs, and suppose that the pointwise loss is  twice differentiable in $W$ at the point considered. Write $J_WN(\x,W)$ for
the parameter Jacobian of $N(\x;W)$. Differentiating twice gives\par
{\small
	\begin{align}\label{gn_decomp}
		\nabla_W^2 \smloss_N^\ast(\x,\y;W)
		&=
		J_WN(\x,W)^{\!\top}
		\nabla_1^2\ell^\ast(N(\x;W),\y)
		J_WN(\x,W)
		+
		\sum_{a=1}^k
		\partial_{1,a}\ell^\ast(N(\x;W),\y)
		\nabla_W^2N_a(\x;W).
\end{align}}
 \par\noindent
 The first term is the \emph{pointwise} GN matrix. Averaging it over the
 training samples gives the empirical GN matrix. In the scalar-output case, for example,
 each pointwise GN matrix has rank at most one, so the empirical GN matrix has
 kernel dimension at least the number of parameters minus the number of
 samples. This rank argument, however, does not satisfactorily explain the
 observed Hessian structure. The large near-zero bulk persists when the number
 of samples far exceeds the number of parameters and also after passing to the
 expected loss, where the finite-sample rank bound no longer applies. Moreover, large Hessian kernels are observed even when the second term in
 \eqref{gn_decomp} is nonnegligible; see \pref{table:hess_kernel}. Similar
 considerations apply to the NTK matrix in \cite{jacot2019asymptotic}.
 
 Another feature of the GN matrix is that its structure permits
 decompositions into contributions from class-dependent gradient means
 and a residual measuring their spread. In
\cite{fort2019emergent}, the residual is assumed to follow a prescribed random
law, whereas \cite{papyan2020traces} proposes a more refined decomposition
and estimates its components directly from data to explain
 observed spectral features.

Taken together, these approaches provide valuable insight but do not account
for the full range of spectral phenomena displayed above. We now summarize
some of the principal gaps.

\smallskip
\par\noindent \textbf{$\bullet$ Spectral clusters.}
Beyond the large cluster near zero and the outliers, the spectrum exhibits a finer structure, with clusters that remain separated from zero. The number of these clusters is fixed and does not depend on the number of labels. 

 \smallskip
\par\noindent\textbf{$\bullet$ Spectral structure in small dimensions.}
Even if a random matrix model can be modified to capture the overall spectral
shape more faithfully, approaches based on such models typically operate in
high-dimensional or asymptotic regimes and therefore do not by themselves
explain why the same spectral structures already appear in very small
dimensions.
\smallskip
\par\noindent\textbf{$\bullet$ Large Hessian kernel at large sample sizes.}
At the other end of the scale, the large Hessian kernel is not an artifact of the small sample size. It persists when the sample size exceeds the number of
parameters and also in the Hessian of the expected loss. In these regimes,
bounds based on rank deficiency of the empirical GN matrix are vacuous.
\smallskip
\par\noindent\textbf{$\bullet$ Dependence on the GN decomposition.}
A limitation of the rank argument, shared by related approaches, is
its dependence on the GN matrix. This misses regimes in which the second term
in \eqref{gn_decomp}, involving second-order derivatives of the model, is
significant, including at nonglobal local minima and even on large sets of noncritical
points (see \pref{table:hess_kernel}).
\smallskip
\par\noindent\textbf{$\bullet$ Other curvature matrices.}
Related structures have been reported for
other curvature matrices, as discussed at the beginning of the introduction.
This suggests that these phenomena may be different manifestations of a
common underlying mechanism rather than consequences of a particular choice
of curvature matrix.
\smallskip
\par \noindent\textbf{$\bullet$ Layerwise spectra.}
The spectral cluster structure is not specific to the full Hessian. Related
structures are also observed in layerwise restrictions of the Hessian.

\smallskip
\noindent\textbf{$\bullet$ Regression.}
The spectral cluster structure is not specific to classification. Some explanations tie the existence and number of outliers to the network's output dimension and thereby, in classification, to the number of labels. This does not account for the regression setting, where there is only one output but the number of outliers scales as $\Theta(d)$; see \pref{table:hess_kernel}.

\smallskip
\noindent\textbf{$\bullet$ Mechanism.}
A final limitation is that existing approaches mainly encode spectral structure through a random
model or prescribed ansatz. Thus, they either fit the
observed spectrum through a generative model or decompose the empirical spectrum
post hoc, without fully clarifying whether a common mechanism explains why these structures arise and how they depend on the underlying learning problem.

In the next section, we explain how symmetry and its breaking provide
a natural account of the observed spectral structure and, in particular,
address the gaps identified above.


\subsection{Overview: origins of symmetry}\label{sec:overview_framework}
A central principle of the framework is to understand spectral
structure through Hessian invariances, either directly or relative to nearby configurations with richer invariances. By a configuration, we mean the collection of choices
determining the Hessian under consideration, including the
architecture, loss, data distribution, parameter point, parameter-space inner
product, and, where relevant, the subspace to which the Hessian is restricted. In some cases, multiple configurations may be simultaneously relevant, for example along an optimization trajectory.
Carrying out this program requires some preliminary work. We therefore begin with a high-level overview of the main ingredients of the framework and of the spectral analysis, referring along the way to the sections where they are developed in detail. 

We first make the notion of Hessian invariance precise. Let
\(\GL(\weightspace)\) denote the group of invertible linear transformations of~\(\weightspace\), and write \(\GL_p=\GL(\RR^p)\).  Identify
\(\weightspace\) with \(\RR^p\) and equip it with the standard Euclidean inner
product. We define the invariance group of the Hessian 	$\nabla^2\ploss(W)$ at \(W\) as the set of all \(g\in \GL_p\) satisfying
\begin{align}\label{hess_cong}
	g^\top
	\nabla^2\ploss(W)
	g
	=
	\nabla^2\ploss(W),
\end{align}
where \(g^\top\) denotes the usual transpose. When the Hessian is restricted to a
subspace \(V\subseteq\weightspace\), the same definition is applied with
\(V\) in place of \(\weightspace\).

\paragraph{Intrinsic and metric-dependent quantities.}
We distinguish between properties that are intrinsic and those that depend
on the choice of inner product or coordinates. This distinction motivates the formulation used later.

 The matrix $\nabla^2\ploss(W)$ represents the self-adjoint operator
associated with the second differential $d_W^2\ploss(W)$ for the
standard Euclidean inner product (using coordinates fixed by the
identification of $\weightspace$ with $\RR^p$).
The relation \eqref{hess_cong} in fact expresses invariance of the
second differential itself, meaning that
\begin{align}
d_W^2\ploss(W)[g\delta W,g\delta W']
=d_W^2\ploss(W)[\delta W,\delta W'], \quad
\delta W,\delta W'\in T_W\weightspace.
\end{align}
Here $T_W\weightspace\simeq\weightspace$ canonically.
Thus invariance is \emph{intrinsic}
in that it does not depend on the choice of inner product or coordinates.
The spectrum, by contrast, is metric-dependent, meaning that it depends
on the inner product used to define the associated self-adjoint operator. This metric dependence is not merely a formal matter. Rather, it is one of the mechanisms behind the formation of spectral clusters (see \pref{fig:mnist_n3_gn_metric_flow} and \pref{table:alexnet_parameter_spectrum}). We shall refer to this as \emph{spectral refraction} (see \pref{rem:spectral_refraction}). Coordinates serve a separate purpose, allowing us to express the
differential directly or represent the operator by a matrix. 

Another intrinsic feature is the kernel of the associated Hessian operator.
For every choice of inner product, the operator has the same kernel, namely
the radical of $d_W^2\ploss(W)$ (see the \hyperref[sec:reading]{Appendix}
for background references on these and other notions used throughout
this work). The null cone of $d_W^2\ploss(W)$, which will also be of interest later,
is intrinsic by definition. Riemannian settings are considered in
\pref{sec:layerwise}.

At suitable reference configurations, the Hessian matrix admits
large invariance groups. When these actions are orthogonal, standard group-theoretic constraints force a
rigid spectral structure with only a few distinct eigenvalues, some
necessarily of large multiplicity (see \pref{sec:failure}). The spectral
clusters at the original configuration are then analyzed as deviations from
this reference structure induced by controlled SB effects (see \pref{fig:sb_concepts} below).

\paragraph{A symmetric reference.} 
With these preliminaries in place, we turn to a simple setting to make
the framework concrete. Consider the mixture of two Gaussians specified in \pref{setting:gaussian_mixture}, with $\tau=1$ and the network \(N^3\). The associated class of model functions \(\x\mapsto N^3(\x;W)\) contains Bayes-optimal predictors. A simple family of weights realizing such a predictor is

{\footnotesize \begin{align}\label{Wstar}
	\spT
	=
	\left\{
	(W_1^\star,0,W_2^\star,0,W_3^\star,0)
	\,\middle|\,
	\begin{array}{@{}l@{}}
		W_1^\star
		=
		\begin{pmatrix}
			1 & 1\\
			-1 & -1
		\end{pmatrix},
		\quad
		W_2^\star
		=
		\begin{pmatrix}
			1 & -a_{1,2}\\
			-a_{2,1} & 1
		\end{pmatrix},
		\quad
		W_3^\star
		=
		\begin{pmatrix}
			-1 & 1\\
			1 & -1
		\end{pmatrix},\\
			a_{1,2},a_{2,1}>0
	\end{array}
	\right\}.
\end{align}}
\par\noindent \par\noindent The weights in $\spT$ also give global minima for each of the diminished
architectures $N^{3,\circ}$, $N^{3,\semicircle}$, and
$N^{3,\twoslashcircle}$, with the absent bias terms omitted. It turns out that these
architectural restrictions yield richer Hessian invariance groups. We later construct, at each $W^\star\in\spT$, the following invariance subgroups acting orthogonally with respect to the standard
Euclidean inner product
\begin{align}\label{intro:inv_grps}
	\begin{array}{c|c}
		\text{Architecture} & \text{Group} \\
		\hline
		N^{3,\circ},\ N^{3,\semicircle},\ N^{3,\twoslashcircle}
		&
		D_4\times C_2
		\\
		N^3
		&
		C_2^2
	\end{array}.
\end{align}
Here $D_4$ denotes the dihedral group of order eight and $C_2$ the cyclic
group of order two. In the present setting, we construct symmetric references by restricting the
architecture in this way. For all three diminished architectures, the $D_4\times C_2$ symmetry
forces the nonzero eigenvalues to occur in pairs. For
$N^{3,\semicircle}$, the reference used in \pref{fig:flat_directions}, there
are three such pairs of eigenvalues. Returning from $N^{3,\semicircle}$ to $N^3$
results in what we refer to as a \emph{resolutive} form of SB, in which the splitting is
constrained by the reference spectrum, as illustrated in the same figure and
further discussed in \pref{sec:analysis_overview}~below. The full set of global minima for $N^{3,\circ}$ is characterized in \eqref{k2_all_T}, with related spectral effects
analyzed in \pref{sec:two_gaussians_analysis}.

\begin{remark}
A modified $D_4\times C_2$ action nevertheless preserves
$d_W^2\ploss(W^\star)$ for $N^3$, but is no longer orthogonal
with respect to the standard Euclidean parameter geometry;
see \pref{rem:spectral_refraction}.
\end{remark}

\paragraph{The failure of weight symmetries.} Establishing the Hessian invariances in \eqref{intro:inv_grps} is an important component of the analysis and draws on several parts of the framework; spectral constraints then follow directly from standard representation theory. The reason is that the standard route proceeds by looking for global symmetries of the objective that fix the parameter point \(W^\star\in\spT\). 
In many areas of mathematics and physics, this approach has long served as a central and highly effective organizing principle \cite{golubitsky1988singularities}. However, this
fails already in the present example. For generic choices of $a_{1,2}$ and $a_{2,1}$ in \eqref{Wstar},
the subgroup fixing $W^\star$ among the standard positive rescaling
and permutation symmetries of the hidden units is trivial, and hence so is the subgroup of Hessian
invariances it accounts for, whereas the full Hessian invariance group
is nontrivial. This limitation is sharper layerwise, since these global  symmetries
couple adjacent layers and therefore need not survive restriction to
a single layer. Even so, in \pref{sec:layerwise} we establish a rich invariance structure
for the restricted Hessians, illustrated in \pref{fig:sq_evs}, and
determine their exact eigenvalue multiplicities, reported in
\pref{table:hess_kernel}. These results provide an analytical counterpart to the empirical layerwise spectral structure reported in \cite{sankar2021deeper}. We return to the failure of the weight-symmetry approach in
\pref{sec:failure}, where we examine it in more detail.

\begin{remark}[Identifiability]\label{rem:identifiability}
The global minima in \eqref{Wstar}, and still more clearly those in \eqref{sq_loss_global} for \pref{setting:gaussian} (as illustrated in \pref{fig:sq_evs}), reveal a basic limitation of \emph{identifiability}. 
Even modulo the global symmetries of the objective, the sets of global
minimizers remain positive-dimensional, reflecting substantial local degrees
of freedom; see \pref{sec:spec_analysis_gaussian}. Nevertheless, the Hessian invariances and the relevant spectral phenomena persist. Thus, both weight symmetries and exact identifiability reflect a view of the loss landscape that is too fine-grained,~asking us to look at it \emph{too~closely}.
\end{remark}

\paragraph{Generalized functions and presheaves of symmetry groups.}
The failure of the weight-symmetry approach discussed above requires us to
establish Hessian invariance directly. When the pointwise loss is $C^2$, suitable moment conditions ensure, for example via dominated convergence \cite{folland1999real}, that the Hessian
exists and is given by the expectation of the Hessian of the pointwise loss.
For ReLU networks, an additional complication arises from nonsmooth
activation boundaries, which can contribute singular terms that are absent from the classical pointwise Hessian.
Consequently, in our setting one generally has
\begin{align}\label{diff_under_expec}
	\nabla^2 \ploss(W)
	\neq
	\EE_{(\x,\y)\sim\dd}
	[
	\nabla_W^2 \smloss(\x,\y;W)
	]
		\quad \text{classically}.
\end{align}
The correct identity may be recovered in the sense of
\emph{generalized functions}\footnote{Also known as \emph{distributions}. We avoid this term because phrases such as the \emph{distributional} Hessian could
	otherwise be confused with the Hessian of the expected loss (even if \emph{generalized}
	may evoke statistical generalization).  Explicit expressions
	are supplied by \emph{geometric measure theory}, realizing 
	the singular terms as Radon measures (see \pref{sec:further_framework}).} by including the singular contributions
identified by the Gauss--Green formula, giving
\begin{align}\label{interchange}
	\nabla^2\ploss
	&=
	\EE\!\left[
	\nabla^2_W \smloss
	\right]
	\quad \text{as generalized functions}.
\end{align}
This formulation, discussed in \pref{sec:dist_der}, lets us establish invariances of the generalized Hessian on open sets, which are then generally inherited by the classical Hessian wherever it exists. It also leads naturally to \emph{presheaves} of symmetry groups, the subject of \pref{sec:formal_sym}. Readers unfamiliar with generalized functions may interpret the remainder of this section as referring to the classical Hessian.

\begin{remark}	The generalized Hessian has practical significance beyond its role
	in the invariance analysis. In direct or automatic differentiation, the pointwise
	Hessian captures only the regular part of the generalized Hessian. For example, let $X\sim\cN(0,1)$, write
 $\phi(x)=(2\pi)^{-1/2}e^{-x^2/2}$ and 	$f(x;w) = \sigma(x-w)$, and set
	$F(w)=\EE[f(X;w)]$. First differentiation commutes with expectation and
	gives $F'(w)=\EE[\partial_w f(X;w)]
	=-\int_w^\infty\phi(x)\,dx$.
	Classically, however, $\partial_w^2 f(X;w)=0$ almost surely, so automatic
	differentiation gives zero after averaging. The one-dimensional Gauss--Green formula instead gives
	\begin{align*}
			F''(w)
			&=
			\frac{d}{dw}F'(w)
			=
			-\bigl(-\phi(w)\bigr)
			+
			\int_w^\infty\partial_w\bigl(-\phi(x)\bigr)\,dx
			=
			\phi(w)
			=
			\frac{1}{\sqrt{2\pi}}e^{-w^2/2}.
		\end{align*}
	Consequently, the sample-averaged Hessians commonly computed in empirical
	studies need not be consistent estimators of the classical Hessian of the
	expected loss. In this work, calculations involving the singular terms were validated
	numerically using finite differences and Monte Carlo estimates of the corresponding surface integrals obtained with
	mollifiers.
\end{remark}

\paragraph{Differentials and layer coordinates.}
The interchange of expectation and differentiation reduces the analysis of Hessian invariance to pointwise quantities. Writing out the  pointwise Hessian matrix in
flattened coordinates makes its invariances difficult to read off. Instead, we work with the pointwise differentials themselves, while writing them in the natural block coordinates of the layers. For instance, for the diminished architecture $N^{3,\circ}$, the first
differential evaluated on a tangent vector $\delta W$, namely
$d_W N^{3,\circ}(\x;W)[\delta W]$, is given in layer coordinates~by
\begin{align}\label{dn3circ}
	\delta W_3\,\sigma\!\left(W_2\sigma(W_1\x)\right)
	+
	W_3\,D_2\,\delta W_2\,\sigma(W_1\x)
	+
	W_3\,D_2\,W_2\,D_1\,\delta W_1\,\x,
\end{align}
where $D_1(\x;W)=D\sigma(W_1\x)$ and
$D_2(\x;W)=D\sigma\!\left(W_2\sigma(W_1\x)\right)$.\footnote{
For the networks considered in this work (see \pref{def:definable_dag}), the first differential
	\(d_WN\) has no singular part. Thus pointwise evaluations of \(d_WN\) refer
	to its regular representative, defined up to a null set.}
If $\operatorname{vec}$ denotes row-wise stacking, then in flattened coordinates, the same differential is given by
{\begin{align*}
\bigl(I_k\otimes\sigma(W_2\sigma(W_1\x))^\top\bigr)\operatorname{vec}(\delta W_3)
	+
	\left(W_3D_2\otimes \sigma(W_1\x)^\top\right)
	\operatorname{vec}(\delta W_2)
	+
	\left(W_3D_2W_2D_1\otimes \x^\top\right)
	\operatorname{vec}(\delta W_1).
\end{align*}}
\par\noindent This comparison shows the advantage of layer coordinates:
they preserve the ordered compositional structure and make explicit how
each perturbation enters between the surrounding layerwise factors.
This allows one to read off the differential's invariances, much as
permutation and rescaling symmetries can be read off from the expression
defining the network architecture.
For the second differential
$d_W^2N^{3,\circ}$, this loss of structure is more severe and becomes still
more pronounced for deeper architectures; see \pref{sec:dsb}.

\paragraph{The Gauss--Newton and Fisher information forms.}
In general, the regular and singular terms of the pointwise second
differential $d_W^2\smloss$ are analyzed directly (see
\pref{table:hessian_quantities} for their expressions). A~natural route to establishing the Hessian invariances of $\ploss$ would therefore start from those of $d_W^2\smloss$.
For $W^\star\in\spT$ and $N^{3,\circ}$, however, the condition later formalized in \pref{sec:overview_gn} as
\emph{well specification}\footnote{The notion of a well-specified model is standard in statistics, where it means
	that the data-generating distribution lies in the model class. Neither being a
	global minimum nor being well specified implies the other.} holds. By
\pref{prop:well_specified_gn_identity},  this implies the identity
\begin{align}\label{well_spec_iden}
	d_W^2\ploss(W^\star)
	&=\EE_{(\x,\y)\sim \cN_\pm}
	\left[d_W^{\mathrm{GN}}\smloss(\x,\y;W^\star)\right],
\end{align}
where, for a general network $N$, the \emph{pointwise Gauss--Newton (GN) form} on the right is given by
\begin{align}\label{gn_def}
	d_W^{\mathrm{GN}}\smloss(\x,\y;W^\star)[\delta W]
	&\defeq
	d_1^2\ell(N(\x;W^\star),\y)
	\big[d_W N(\x;W^\star)[\delta W]\big].
\end{align}
Here and throughout, a single argument denotes diagonal evaluation
of an $r$-linear form $A$, i.e., $A[\v]\defeq A[\v,\ldots,\v]$,
with $\v$ repeated $r$ times.
The (expected) GN form is closely related to the \emph{Fisher information (FI) form}.
For a conditional model \(p_W(\y\mid\x)\), the FI form is defined by
\begin{align}\label{fi_def}
	d_W^{\mathrm{FI}}(W)[\delta W]
	&\defeq
	\EE_{\x}
	\EE_{\y\sim p_W(\cdot\mid\x)}
	\left[
	d_W\log p_W(\y\mid\x)[\delta W]^2
	\right].
\end{align}
For the likelihood models considered here, with matching normalization, $d_1^2\ell(\z,\y)$ is independent
of $\y$, so the GN and FI forms coincide
\cite{pascanu2013revisiting,martens2020new}. 

Beyond its role as an analytic simplification, the GN form also plays an important role in optimization. It underlies classical and stochastic methods that seek more efficient ways to exploit curvature
\cite{nocedal2006numerical,sra2011optimization,tran2020stochastic}, including methods developed for deep learning
\cite{martens2010deep,vinyals2012krylov,botev2017practical,gargiani2020promise,korbit2024exact}. 
The FI form likewise underlies natural gradient descent, where it captures
the local curvature of KL divergence and supports tractable implementations
\cite{amari1998natural,desjardins2015natural,martens2015optimizing,martens2020new}.
These roles make the GN and FI forms important objects of study in their own right, independently of their relation to the Hessian. The framework and methods introduced in this paper apply directly and show that their spectra exhibit the same qualitative organization as the Hessian~spectrum, with the underlying invariance mechanism illustrated next.

\paragraph{Structural and distributional symmetries.} 
With  \eqref{well_spec_iden} in hand, we can see how Hessian
invariances arise from symmetries of the network, the loss, and the data
distribution. We distinguish between \emph{structural} symmetries, coming
from the pointwise loss, and \emph{distributional}
symmetries, which come from the data distribution.
For $N^{3,\circ}$, a direct substitution into \eqref{dn3circ} verifies that,
for every $W^\star\in\spT$, $d_WN^{3,\circ}$ is invariant under the
transformation below, and hence so is the associated pointwise GN form \eqref{gn_def}:
\begin{flalign}
	&\hspace*{7em}
	\makebox[13em][l]{$\text{Structural symmetry }\delta\sfa\colon\quad$}
	(\x,\delta W_1)
	\mapsto
	(P_{(1,2)}\x,\delta W_1P_{(1,2)}),
	&&
\end{flalign}
where $P_{(1,2)}$ exchanges the first two input coordinates. 
In addition, for the $k=2$ instance of
\pref{setting:gaussian_mixture}, the data distribution $\cN_\pm$  is invariant under
\begin{flalign}\label{eq:distributional_swap}
	&\hspace*{7em}
	\makebox[13em][l]{$\text{Distributional symmetry }\sfd^\sfs\colon\quad$}
	\x
	\mapsto
	P_{(1,2)}\x .
	&&
\end{flalign}
The structural and distributional symmetries now combine to produce a
Hessian invariance at every $W^\star\in\spT$, as the following chain of
identities shows
\begin{align}
d_W^2\ploss(W^\star)[\delta W]
&=
\EE_{\cN_\pm}\!\left[
d_W^{\mathrm{GN}}\smloss(\x,\y;W^\star)[\delta W]
\right]
&&\boxed{\text{\small Well-specification identity \eqref{well_spec_iden}}}
\nonumber\\
	&=
	\EE_{\cN_\pm}\!\left[
	d_W^{\mathrm{GN}}\smloss(P_{(1,2)}\x,\y;W^\star)
	[
	\delta W_1P_{(1,2)},\delta W_2,\delta W_3
	]
	\right]
	&&\boxed{\text{\small Structural symmetry $\delta\sfa$ of $d^{\mathrm{GN}}_W\smloss$} } 
	\nonumber\\
	&=
	\EE_{\cN_\pm}\!\left[
	d_W^{\mathrm{GN}}\smloss(\x,\y;W^\star)
	[
	\delta W_1P_{(1,2)},\delta W_2,\delta W_3
	]
	\right]
	&&\boxed{\text{\small Distributional symmetry $\sfd^\sfs$}}
	\nonumber\\
&=
d_W^2\ploss(W^\star)
\big[
\delta W_1P_{(1,2)},\delta W_2,\delta W_3
\big].
&&\boxed{\text{\small Well-specification identity}}
	\label{struc_combine_with_dist}
\end{align}
The resulting invariance  is an example of what we call a \emph{\integrated symmetry},
\begin{flalign}\label{comp_intro}
	&\hspace*{7em}
	\makebox[13em][l]{$\text{\Integrated symmetry }\sfc_1\colon\quad$}
	\delta W_1
	\mapsto
	\delta W_1P_{(1,2)}.
	\quad \boxed{\text{Invariance of $d_W^2 \ploss(W^\star)$}}
\end{flalign}
This generates a subgroup
of the invariance group of $\nabla^2\ploss(W^\star)$ isomorphic to $C_2$.
The remaining \integrated invariances in \eqref{intro:inv_grps}, now acting
simultaneously on several perturbation variables, are derived similarly.
Thus, we find that combining structural symmetries derived directly from the
network differential with distributional symmetries allows us to establish
rich Hessian invariances even when the weight symmetries are trivial, as is
generically the case here. The general principle behind this construction is formalized in \pref{thm:local_symmetry_inv}. 

\begin{remark}[Hidden-feature symmetries]
	The same argument applies after an intermediate layer, with the joint
	hidden feature--label law replacing $\dd$ for the downstream network.
	Symmetries of this law, paired with compatible structural symmetries of
	the downstream network's pointwise differential form, yield \integrated invariances
	as above. This extension is of interest, for example, when approximate symmetries
	emerge in the hidden feature--label laws of trained networks, a phenomenon
	we call \emph{neural registration} (\pref{sec:hidden_feature_symmetry}).
\end{remark}

Symmetries of the initialization law can likewise produce invariances
of the \emph{averaged} Hessian through the composite invariance argument
in \eqref{struc_combine_with_dist}. Here symmetry forces an eigenvalue
of multiplicity at least $(h-1)^2$, whose magnitude we bound
by $\bigO{h^{-1}}$.
We use this to establish, with high probability, a near-zero bulk
in the spectrum of a random Hessian; see
\pref{fig:flat_directions} (center) and~\pref{thm:streamlined-N3-spectrum}.

\begin{remark}[Random labels]\label{rem:random_labels}
	The experiments with random labels in \cite{zhang2016understanding}
	motivate a related application of the framework. Uniform labels
	independent of the inputs give a distributional symmetry under
	class permutations. Pairing this with the structural symmetry
	that simultaneously permutes the output-layer weights
	gives invariance of the expected loss under the corresponding
	parameter action, by the composite invariance argument in
	\eqref{struc_combine_with_dist}. 
\end{remark}

\paragraph{Flat directions and the Hessian kernel.}
Symmetry also provides mechanisms, ranging in complexity, that force the
Hessian kernel to be nontrivial and, in our settings, often large. The most
elementary and well-known mechanism occurs at critical points: every vector
tangent to an orbit of global continuous symmetries lies in the Hessian
kernel.\footnote{These vectors are precisely those generated infinitesimally
	by the corresponding Lie algebra action.} In the settings considered in this work, such vectors generally account for
	only a vanishing fraction of the Hessian kernel, although at points in
	$\spT$ they account for most of it.
For the positive rescaling symmetries between adjacent layers in
$N^{3,\circ}$, these include the following four vectors:
\begin{align}\label{orb_vects}
	\bigl(E_{ii}W_1^\star,-W_2^\star E_{ii},0\bigr),
	\qquad i=1,2,
	\qquad \text{and} \qquad
	\bigl(0,E_{jj}W_2^\star,-W_3^\star E_{jj}\bigr),
	\qquad j=1,2,
\end{align}
where $E_{ij}$ denotes the matrix with a single nonzero entry in position
$(i,j)$. These are examples of what we call \emph{flat} vectors, roughly speaking,
vectors tangent to curves along which the objective is constant. Such vectors form a cone.\footnote{When a level set is definable in an o-minimal structure, this cone
	coincides with the \emph{tangent cone}, a central object in variational analysis
	\cite{van1998tame,rockafellar1998variational}.\label{ftnote:tangent_cone}}
We shall also consider a relaxed notion of flatness, in which the function difference is required only to vanish to higher order along a curve.	When the curve is an orbit of continuous
symmetries of the objective, as in \eqref{orb_vects}, we call the corresponding flat vector an
\emph{orbit vector}. Two further orbit vectors arise from the shift invariance
of the logits,
\begin{align}
	\bigl(0,0,\bones \e_j^\top\bigr),
	\qquad j=1,2,
\end{align}
where $\e_j$ denotes the $j$-th standard basis vector.

\begin{remark}
Under cross-entropy loss, logit-shift symmetry forces at least $h$ zero
Hessian eigenvalues at all points—critical or not—for any network with a fully
connected final layer; see \pref{thm:at_init_oblivious}.
\end{remark}

Flat vectors transverse to the orbit vectors are called
\emph{transverse flat vectors}. In the present setting, there are two, obtained
by perturbing $W_2$ along $E_{12}$ and $E_{21}$. These directions yield
Hessian kernel vectors by a similar mechanism and are summarized in
\pref{table:n3_symmetry_breaking}, together with the associated eigenvalue
multiplicities forced by the invariance groups in \eqref{intro:inv_grps}, using the notation introduced~below.
\begin{definition}[Multiplicity structure]\label{def:multi_def}
	We record an eigenvalue multiplicity structure using the multiset notation
	$\mset{m_1^{[q_1]},\ldots,m_r^{[q_r]}}$, meaning that multiplicity $m_i$
	occurs $q_i$ times. Coincidences are allowed, in which case the corresponding multiplicities add.
	When clear from context, we omit the double braces.
	Moreover, we suppress the exponent when $q_i=1$. For $s\leq q$, the notation
	$m^{[q]}|_{\alpha_1,\ldots,\alpha_s}$ means that $s$ of the $q$ eigenvalues
	have values $\alpha_1,\ldots,\alpha_s$.
\end{definition}

The mechanisms enforcing large Hessian kernels are more varied than this
brief discussion can convey. The two cases in \pref{table:hess_kernel}
illustrate distinct mechanisms, both reflecting local degrees of freedom
rather than finite-sample effects. In the first, symmetry forces large
Hessian kernels throughout open regions (whose points are generally
noncritical) as established in \pref{thm:at_positive}. In the second, well
specification implies that $\ker d_W N^{3}$ is contained in the Hessian
kernel, accounting for roughly half of the spectrum, as shown in
\pref{sec:spec_analysis_gaussian}. 

\begin{remark}[The flat minima conjecture]
	The notion used above should be distinguished from the flat minima
	conjecture. This literature investigates whether optimization tends to favor flatter
	regions of the loss landscape
	\cite{hochreiter1997flat,keskar2016large,jastrzkebski2017three,
		chaudhari2019entropy,simsekli2019tail}. Our notion of flatness is instead intended to capture the local degrees of 	freedom responsible for Hessian degeneracy and is invariant under invertible linear changes of coordinates. 
\end{remark}

\newcommand{\vpad}[1]{
	\raisebox{0pt}[\dimexpr\height+0.8ex\relax][\dimexpr\depth+0.8ex\relax]{#1}}
\begin{table}[h]
	\centering
	\small
	\renewcommand{\arraystretch}{1.35}
	\setlength{\tabcolsep}{6pt}
	\begin{tabular}{@{}
			>{\raggedright\arraybackslash}m{0.2\textwidth}|
			>{\centering\arraybackslash}m{0.23\textwidth}|
			>{\centering\arraybackslash}m{0.48\textwidth}
			@{}}
		\textbf{Parameter set}
		&
		\textbf{Dimension of the Hessian kernel}
		&
		\textbf{Multiplicity structure}
		\\
		\hline
		
		A.e. point with \(W_2,\b_2>0\), \quad\quad \emph{critical or not}
		&
		\vpad{
			{\boldmath 
				\(d(d-1)\)}
			{\scriptsize (all transverse)}
		}
		&
		$d(d-1)|_0,1^{[d^2+4d+1]}$
		\\
		\hline
		
		Global minimizers
		&
			{\boldmath \(d(d+2)\)}
			{\scriptsize (\(d^2\) transverse)}
		&
			\(
			d(d+2)|_0,\;
			\frac{(d-1)(d-2)}{2},\;
			\frac{d(d-3)}{2},
			(d-1)^{[4]},\;
			1^{[4]}
			\)
		\\
		\hline
\multicolumn{2}{c|}{}&
\begin{tabular}[t]{@{}l@{\quad}|l@{}}
	$W_1$ & ${\frac{(d-1)(d-2)}{2}, \frac{d(d-3)}{2},(d-1)^{[3]},1^{[2]}}$ \\\hline
	$\b_1$ & ${d-1,1}$ \\\hline
	$W_2$ & ${d(d-1)|_0,d-1,1}$ \\\hline
	$(\b_2,W_3)$ & ${(d-1)|_0,d-1,1^{[2]}}$ \\\hline
	$\b_3$ & ${1}$ 
\end{tabular}
	\end{tabular}
\caption{
Lower bounds on Hessian kernel dimensions for $N^3$ under \pref{setting:gaussian}, together with multiplicity structures for $d=h\ge4$. \textbf{Top.} Throughout an open region, points are typically noncritical, yet the
Hessian kernel is large and generically consists only of directions transverse
to the orbits of global symmetries
(see \pref{table:gauss_flat_directions}).
The proof uses generalized functions. Replacing ReLU by GELU removes these
transverse directions from the kernel
(see \pref{fig:lb_zeros}).
\textbf{Bottom.} At the global minimizers in \eqref{sq_loss_global}, the kernel dimension is larger and, under an invariant
inner product, the Hessian has large eigenvalue multiplicities, established
in \pref{thm:learning_gauss_analysis}, as do the corresponding layerwise
restrictions (see \pref{table:layerwise_restrictions}).
}
	\label{table:hess_kernel}
\end{table}

\paragraph{Alignment with the data.}
So far, we have emphasized one visible spectral consequence of Hessian
invariance, namely that the spectrum contains only a few distinct eigenvalues,
some necessarily with large multiplicity. This visible structure reflects a sharper representation-theoretic constraint
encoded by the so-called \emph{isotypic decomposition}
\cite{fulton1991representation}, recalled briefly in
\pref{sec:failure}. This decomposition also identifies parameter subspaces
preserved by the Hessian. 
For the groups considered here, a basis adapted to this decomposition
puts the Hessian in the block form
\begin{align}\label{block_iso}
	\begin{pmatrix}
		B^{(1)}\otimes I_{d_1} & & \\
		& \ddots & \\
		& & B^{(r)}\otimes I_{d_r}
	\end{pmatrix},
\end{align}
where each $B^{(i)}$ is a symmetric $m_i\times m_i$ matrix. Thus, each
eigenvalue of $B^{(i)}$ is repeated $d_i$ times in its block.
The restriction of the Hessian to each corresponding subspace can
then be analyzed separately, with its eigenvalue scales estimated from
alignment with the data. We
illustrate this procedure here, with detailed estimates in
\pref{sec:GN_scale}. In layer coordinates, the second differential has the
schematic~form
	\begin{align}\label{hessian_into_contributions}
		d_W^2\ploss(W)[\delta W]
		=
		\sum
		\E_\z\left[
		\mathrm{Left}_i(\z)\,\delta W_i\,\mathrm{Center}_{ij}(\z)\,\delta W_j\, \mathrm{Right}_j(\z)
		\right],
	\end{align}
where $\mathrm{Left}_i(\z)$, $\mathrm{Center}_{ij}(\z)$, and $\mathrm{Right}_j(\z)$ encode the data-dependent
transformations associated with the network layers. For $N^{3,\circ}$ at the
well-specified points $W^\star\in\spT$,
\eqref{hessian_into_contributions} specializes to
\begin{align}
	c_1
	\left\|
	\operatorname{diag}(\e_1-\e_2)
	\left(
	(\delta W_3)^\top(\e_1-\e_2)-\delta W_1\bones
	\right)
	-
	2\operatorname{diag}(\delta W_2)
	\right\|_2^2
	+
	c_2\|\delta W_1(\e_1-\e_2)\|_2^2,
\end{align}
where $c_1,c_2>0$ are explicit coefficients. One
Hessian-invariant subspace identified by the $D_4\times C_2$ action is
$\{(\delta W_1,0,0)\mid\delta W_1\bones=0\}$. On this subspace, the quadratic
form reduces to $c_2\|\delta W_1(\e_1-\e_2)\|_2^2$ and yields the smaller
nonzero doublet.

The underlying structure is already nontrivial in this example and becomes considerably more intricate in general. For instance, in \pref{setting:gaussian}, at the global minima with
$W_1^\star=I_d$, $W_3^\star=\bones^\top$, and $W_2^\star$ any positive
left-stochastic matrix, one finds
{\small
	\begin{align}\label{second_diff_setting_gauss}
d_W^2\ploss(W^\star)&[\delta W]
		=
		\frac12\|\delta W_1\|_F^2
		+
		\frac12\|\bones^\top\delta W_1\|_2^2
		+
		\frac1\pi
		\left(
		(\operatorname{tr}\delta W_1)^2
		+
		\operatorname{tr}((\delta W_1)^2)
		-
		2\|\operatorname{diag}(\delta W_1)\|_2^2
		\right)
		\\
		&\quad+
		\q^\top
		\left[
		(\delta W_1)^\top\bones
		+
		\left(1-\frac2\pi\right)
		\operatorname{diag}(\delta W_1)
		+
		\frac{2\operatorname{tr}(\delta W_1)}{\pi}\bones
		\right]
		+
		\q^\top
		\left[
		\left(1-\frac1\pi\right)I_d
		+
		\frac1\pi\bones\bones^\top
		\right]\q,\nonumber
	\end{align}}
\par\noindent	where $\q=(\delta W_2)^\top\bones+(W_2^\star)^\top\delta W_3^\top$.	
In \pref{sec:spec_analysis_gaussian}, we show how symmetry exposes the structure hidden in this expression by decomposing 
it into a sum of simple tensor-product forms (see \eqref{block_iso}). This perspective underlies the estimates reported in
\pref{fig:original_vs_symmetric_mix} and \pref{fig:sq_evs}, based on
\pref{prop:gn_expression_mix_gauss} and
\pref{prop:exact_gn_expression_gauss}, respectively.

\paragraph{NTK and Gram forms.}
The network map $N$ induces the map
$\netf:\weightspace\to L^2(\inspace,\dd_{\inspace};\outspace)$, where
$\dd_{\inspace}$ is the input marginal of $\dd$, by
$\netf(W)(\x)=N(\x;W)$. For
$f\in L^2(\inspace,\dd_{\inspace};\outspace)$, the \emph{NTK form}
\cite{jacot2018neural} is defined by
\begin{align}\label{ntkform}
	d_W^{\mathrm{NTK}}N[f]
	&\defeq
	\|(d_W\netf)^*f\|_{T_W\weightspace}^2,
\end{align}
where the adjoint is taken with respect to the Euclidean inner
product on $T_W\weightspace$ and the standard $L^2$ inner product.
We similarly define the \emph{network Gram form}
and the \emph{loss Gram form} by,~respectively,
\begin{align}\label{network_gram_form}
	&\|d_W\netf[\delta W]\|_{L^2(\inspace,\dd_{\inspace};\outspace)}^2,
	\qquad
	\left\|
	d_W\smloss(\,\cdot\,,\,\cdot\,;W)[\delta W]
	\right\|_{L^2(\dd)}^2 .
\end{align}
The operators associated with these forms, and related matrices, have been linked to optimization and generalization
\cite{oymak2019generalization,cohen2021gradient,jastrzebski2020break,li2020hessian}.

We illustrate the framework for the network Gram form
and the NTK form, taking $N^3$ under
\pref{setting:gaussian_mixture} with $d=k=2$,
$\dd=\cN_\pm$, and $\tau=1$.
The pointwise differential $d_WN^3(\x;W)[\delta W]$ admits the
architectural symmetry $\bar{\delta}\sfa_0(R)$, acting by
$(\x,W_1,\delta W_1)\mapsto(R\x,W_1R^{-1},\delta W_1R^{-1})$,
leaving the biases and their perturbations unchanged
(the argument is the same as in the bias-free case, where the symmetry
can be read off directly from \eqref{dn3circ}).
For the pointwise form $\|d_WN^3(\x;W)[\delta W]\|^2$,
we have the additional symmetry
$(W_3,\b_3,\delta W_3,\delta\b_3)
\mapsto(QW_3,Q\b_3,Q\delta W_3,Q\delta\b_3)$.
Together, these yield the
structural symmetry 
\begin{align}
	\bar{\delta}\sfs\colon
	(\x,W_1,W_3,\b_3,\delta W_1,\delta W_3,\delta\b_3)
	&\mapsto
	(R\x,W_1R^{-1},QW_3,Q\b_3,
	\delta W_1R^{-1},Q\delta W_3,Q\delta\b_3),
\label{eq:gram_structural_action}
\end{align}
Here $R\in\GL_2(\RR)$ and $Q\in\Od{2}$. Unlike the action in
\eqref{struc_combine_with_dist}, this action transforms $W$ along
with the data and tangent perturbations. Write
$\bar{\delta}_{\weightspace}\sfs$ for the parameter action and
$\bar{\delta}_{T_W\weightspace}\sfs$ for its tangent action.
By the definitions and orthogonality of $Q$, the pointwise network
Gram and NTK forms satisfy, respectively,
\begin{align}
	\left\|
	\bigl(d_{\bar{\delta}_{\weightspace}\sfs W}\netf
	[\bar{\delta}_{T_W\weightspace}\sfs\delta W]\bigr)(R\x)
	\right\|^2
	&=
	\left\|\bigl(d_W\netf[\delta W]\bigr)(\x)\right\|^2,
	\label{eq:network_gram_pointwise}\\
	\left\|
	\bigl((d_{\bar{\delta}_{\weightspace}\sfs W}\netf)
	(R\x)\bigr)^*[Q\v]
	\right\|^2
	&=
	\left\|\bigl((d_W\netf)(\x)\bigr)^*\v\right\|^2,
	\label{eq:pointwise_ntk_structural}
\end{align}
for $\v\in\RR^2$. The second identity additionally requires $R$ to be orthogonal.
In addition to the distributional symmetry $\sfd^\sfs$ in
\eqref{eq:distributional_swap}, we use
$\sfd^\sfr:(\x,\y)\mapsto(-\x,P_{(1,2)}\y)$
(see \pref{table:symmetry_examples}). Their input components generate $\tri{P_{(1,2)},-I_2}$,
preserving the input marginal $\dd_{\inspace}$ of $\dd=\cN_\pm$.
Restrict $R$ to this group. The induced action on
$f\in L^2(\inspace,\dd_{\inspace};\outspace)$ is
$(\bar{\delta}\sfs f)(\x)\defeq Qf(R\x)$. Applying the same composite argument as in
\eqref{struc_combine_with_dist} to the differential and its
adjoint, we obtain
\begin{align}
	\|d_{\bar{\delta}_{\weightspace}\sfs W}\netf
	[\bar{\delta}_{T_W\weightspace}\sfs\delta W]\|_{L^2}^2
	&=\|d_W\netf[\delta W]\|_{L^2}^2,
	\qquad
	d_{\bar{\delta}_{\weightspace}\sfs W}^{\mathrm{NTK}}N^3
	[\bar{\delta}\sfs f]
	=d_W^{\mathrm{NTK}}N^3[f].
	\label{eq:ntk_composite}
\end{align}
For the He initialization law $\nu^{\mathrm{He}}$
(explicitly given in \pref{sec:streamlined-three-layer}), we also have that
$\bar{\delta}_{\weightspace}\sfs$ preserves the initialization law
(see the initialization group in \pref{table:symmetry_groups}). Averaging, we find that
\begin{align}
	\EE_{W\sim\nu^{\mathrm{He}}}
	\|d_W\netf[\bar{\delta}_{T_W\weightspace}\sfs\delta W]\|_{L^2}^2
	&=\EE_{W\sim\nu^{\mathrm{He}}}\|d_W\netf[\delta W]\|_{L^2}^2,\\
	\EE_{W\sim\nu^{\mathrm{He}}}d_W^{\mathrm{NTK}}N^3[\bar{\delta}\sfs f]
	&=\EE_{W\sim\nu^{\mathrm{He}}}d_W^{\mathrm{NTK}}N^3[f].
	\label{eq:gram_ntk_initialization_invariance}
\end{align}
The resulting composite invariance group is isomorphic to
$\Od{2}\times C_2^2$.
For the initialization-averaged network Gram form, the hidden-unit
permutations enlarge this group to	$S_h^2\times\Od{2}\times C_2^2$.

The same composite argument applies to a deterministic infinite-width limit. Let $W_h$ have the He initialization law at width $h$,
and suppose that, for deterministic scaling factors $a_h>0$,
\begin{align}
	a_h d_{W_h}^{\mathrm{NTK}}N^3[f]
	&\xrightarrow{\mathbb P}d_\infty^{\mathrm{NTK}}[f]
	\qquad\text{for every }f\in L^2(\inspace,\dd_{\inspace};\RR^2),
	\label{eq:ntk_infinite_width_convergence}
\end{align}
where $d_\infty^{\mathrm{NTK}}$ is a deterministic limiting form.
The same identity and initialization symmetry give
\begin{align}
	a_h d_{W_h}^{\mathrm{NTK}}N^3[\bar{\delta}\sfs f]
	&\overset{d}{=}
	a_h d_{\bar{\delta}_{\weightspace}\sfs W_h}^{\mathrm{NTK}}N^3[\bar{\delta}\sfs f]
	=a_h d_{W_h}^{\mathrm{NTK}}N^3[f].
	\label{eq:ntk_random_form_symmetry}
\end{align}
Equality in law implies equality of the deterministic limits:
\begin{align}
	d_\infty^{\mathrm{NTK}}
	[\bar{\delta}\sfs f]
	&=d_\infty^{\mathrm{NTK}}[f].
	\label{eq:ntk_infinite_width_invariance}
\end{align}
Thus, averaging and passage to a deterministic limit yield invariance under the same composite action. The corresponding isotypic decomposition and spectral properties are summarized in \pref{table:ntk_limit_structure}.

\begin{table}[htbp]
	\centering
	\small
	\setlength{\tabcolsep}{3pt}
	\renewcommand{\arraystretch}{1.25}
	\begin{tabular*}{\linewidth}{@{\extracolsep{\fill}}ccc@{}}
		\toprule
		Invariance analysis
		& \makecell{Isotypic decomposition on $(\ker K_\infty)^\perp$}
		& \makecell{Positive eigenvalue decay}\\
		\midrule
		\makecell{Output invariance only $\Od{2}$}
		& $\mathrm{std}_{\Od{2}}^{\oplus\infty}$
		& \makecell{$\lambda_j=\Theta(j^{-2})$}\\
		\makecell{Full composite invariance $\Od{2}\times C_2^2$}
		& $\displaystyle
		\bigoplus_{\chi\in\widehat{C_2^2}}
		(\mathrm{std}_{\Od{2}}\otimes\chi)^{\oplus\infty}$
		& \makecell{$\lambda_{\chi,j}=\Theta(j^{-2})$}\\
		\bottomrule
	\end{tabular*}
\caption{Spectral analysis of the nonzero bounded operator $K_\infty$
	associated with $d_\infty^{\mathrm{NTK}}$. Incorporating distributional symmetries
	adds the $C_2^2$ input factor, refining the isotypic decomposition
	into four character sectors. Here $\widehat{C_2^2}$ is the set of
	four characters of $C_2^2$, and $\mathrm{std}_{\Od{2}}$ is the
	standard two-dimensional representation. Additional kernel analysis
	for the two-Gaussian law establishes $\Theta(j^{-2})$ decay of the
	positive eigenvalues in each sector and the multiplicity structure
	$\mset{\aleph_0|_0,\;2|_{\lambda_{\max}},\;2^{[\aleph_0]}}$.}
	\label{table:ntk_limit_structure}
\end{table}

In related work, \citet{maiti2021symmetry} derive the output symmetries relevant to the first
row of \pref{table:ntk_limit_structure}, with eigenvalue decay results due to \citet{bietti2021deep}. Here, the composite invariance
mechanism yields the additional $C_2^2$ input factor in the second row,
refining the isotypic decompositions of both the initialization-averaged
and limiting NTK operators and imposing stronger spectral constraints.
We defer to \cite{arjevani2027optsym} the complete analysis,
results analogous in purpose to \pref{thm:streamlined-N3-spectrum},
further examples, and a fuller treatment of the random-label
application in \pref{rem:random_labels}.

\subsection{Overview: symmetry breaking}\label{sec:analysis_overview}
We have shown how structural and distributional symmetries expose rich
Hessian invariances at suitable reference configurations, so far obtained
by passing to diminished architectures, and thereby impose a rigid
spectral structure. The spectral structure in the original settings,
here for $N^3$, is then recovered through distinct local and
nonlocal SB mechanisms (see \pref{fig:sb_concepts}). We now discuss further
ways to construct symmetric references and outline how SB accounts for the
phenomena in \pref{sec:pheno}. The discussion follows representative classes
of critical points, distinguishing mechanisms that may act simultaneously.

\begin{figure}[ht]
	\centering
	\includegraphics[width=0.85\textwidth]{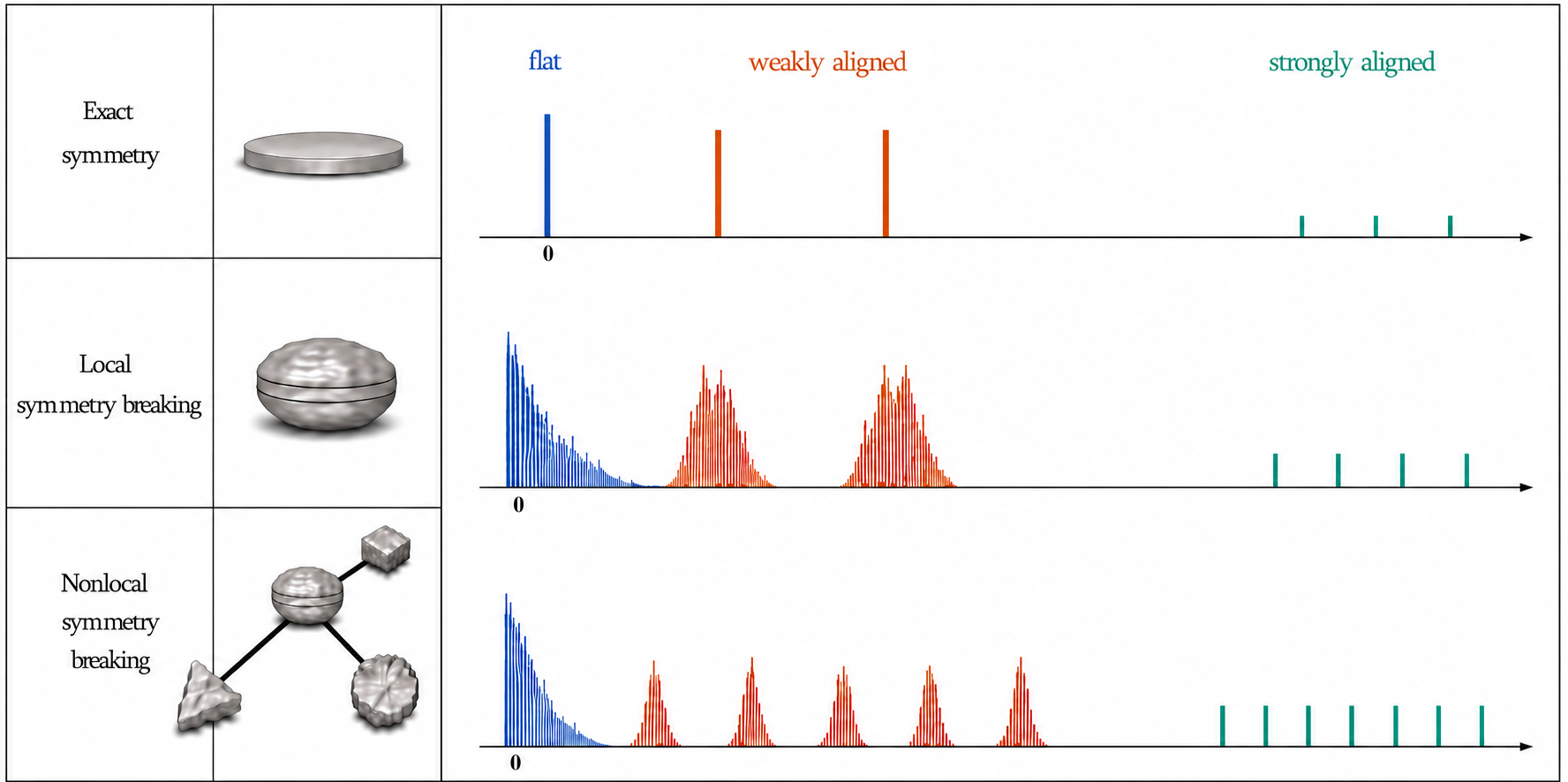}
	\caption{
			Schematic illustration of the SB mechanisms developed in this work.\\
			\textbf{Exact symmetry.}
A symmetric reference configuration is constructed by modifying the
setting. There, structural symmetries identified directly from
derivatives of the pointwise loss combine with symmetries of the data
distribution to produce Hessian invariances, even when weight symmetries are
trivial. The large Hessian kernel, the high eigenvalue multiplicities, and the
different eigenvalue scales now become transparent through the resulting rich
symmetry structure.\\
			\textbf{Local SB.}
			The spectra of the original models unfold as controlled SB splits these
			degeneracies into the observed hierarchy of clusters and outliers.\\
			\textbf{Nonlocal SB.}
			Depending on the initialization and algorithm, optimization may
			select critical points of lower symmetry, producing a more refined spectrum.
	}
	\label{fig:sb_concepts}
\end{figure}

\paragraph{Mixture of two Gaussians: resolutive SB.}
Recall that, properly embedded, $\spT$ in \eqref{Wstar} consists of global
minima for \pref{setting:gaussian_mixture} with $k=2$ and $\tau=1$ for each
of the networks $N^{3,\circ}$, $N^{3,\semicircle}$, and
$N^{3,\twoslashcircle}$ in \eqref{net3circ}. We demonstrated how symmetries
of the network, loss function, and data distribution combine to produce
the orthogonal Hessian invariance subgroup $D_4\times C_2$ recorded in
\eqref{intro:inv_grps}, thereby forcing all nonzero eigenvalues to have
multiplicity two. Varying $\tau>0$ in \pref{setting:gaussian_mixture},
with the realizing weights adjusted accordingly, preserves these Hessian
invariances and the kernel dimension
(see \pref{table:mixture_gaussians_resolution}).
For $N^3$, however, adding a full final-layer bias breaks the orthogonal
realization of the $D_4\times C_2$ action on the full parameter space. This is the \emph{resolutive} form of SB introduced earlier. Consequently, some multiplicity-two eigenvalues split into simple
eigenvalues, whose displacements are generally controlled by perturbation
theory. Beyond this perturbative control, the resolutive mechanism imposes additional
algebraic constraints on the splitting, as shown in
\pref{table:n3_symmetry_breaking} below. In particular, these constraints
bring the splitting within the scope of Cauchy's interlacing theorem. The full technical analysis,
including the kernel and invariant-subspace structure, is given in
\pref{sec:two_gaussians_analysis}.  At random initialization in the same setting,
we also establish a rich invariance group for the averaged Hessian that
explains the macroscopic near-zero bulk at a single draw; see
\pref{fig:flat_directions} and~\pref{thm:streamlined-N3-spectrum}.

\newcommand{\eigsTwoA}{
	\begin{tikzpicture}[x=0.72cm,y=0.72cm,baseline=-0.45ex]
		\draw[gray!45,line width=0.25pt] (0,0)--(3.15,0);
		\foreach \x in {0.85,1.55}
		\draw[line width=1.65pt,line cap=round] (\x,-0.18)--(\x,0.18);
	\end{tikzpicture}
}

\newcommand{\eigsTwoB}{
	\begin{tikzpicture}[x=0.72cm,y=0.72cm,baseline=-0.45ex]
		\draw[gray!45,line width=0.25pt] (0,0)--(3.15,0);
		\foreach \x in {1.55,2.05}
		\draw[line width=1.65pt,line cap=round] (\x,-0.18)--(\x,0.18);
	\end{tikzpicture}
}

\newcommand{\eigsThree}{
	\begin{tikzpicture}[x=0.72cm,y=0.72cm,baseline=-0.45ex]
		\draw[gray!45,line width=0.25pt] (0,0)--(3.15,0);
		\foreach \x in {0.55,1.55,2.55}
		\draw[line width=1.65pt,line cap=round] (\x,-0.18)--(\x,0.18);
	\end{tikzpicture}
}

\newcommand{\eigsSplit}{
	\begin{tikzpicture}[x=0.72cm,y=0.72cm,baseline=-0.45ex]
		\draw[gray!45,line width=0.25pt] (0,0)--(3.15,0);
		\draw[line width=1.65pt,line cap=round]
		(0.55,-0.18)--(0.55,0.18);
		\foreach \x/\y in {1.55/1.88,2.55/2.88} {
			\draw[line width=0.75pt,line cap=round]
			(\x,-0.18)--(\x,0.18);
			\draw[line width=0.75pt,line cap=round]
			(\y,-0.18)--(\y,0.18);
		}
	\end{tikzpicture}
}

\begin{table}[!htbp]
	\centering
	\small
	\renewcommand{\arraystretch}{1.22}
	\setlength{\tabcolsep}{4pt}
	\begin{tabular}{@{}
			l
			>{\raggedright\arraybackslash}p{0.29\textwidth}
			c
			c
			c
			@{}}
		\toprule
		Subarchitecture
		&  Orthogonal symmetries of $H$
		& $\dim\ker H$
		& Nonzero multiplicities
		& Schematic spectrum \\
		\midrule
		$N^{2,\circ}$
		& $D_4 \times C_2$
		& $4$
		& $2^{[2]}$
		& \eigsTwoA \\
		$\vdots$&$\vdots$&$\vdots$&$\vdots$&$\vdots$\\
		$N^{3,\circ}$
		& $D_4 \times C_2$
		& $8$
		& $2^{[2]}$
		& \eigsTwoB \\
		$\vdots$&$\vdots$&$\vdots$&$\vdots$&$\vdots$
		\\
		$N^{3,\semicircle}$
		& $D_4 \times C_2$
		& $10$
		& $2^{[3]}$
		& \eigsThree \\
		$N^{3,\twoslashcircle}$
		&  $D_4 \times C_2$
		& $11$
		& $2^{[3]}$
		& \eigsThree \\
		$N^3$
		& broken; but retained under a codimension-one restriction
		& $12$
		& $2,1^{[4]}$
		& \eigsSplit \\
		\bottomrule
	\end{tabular}
	\vspace{-0.1cm}
{\begin{align*}
	\text{Nested subspaces:}\quad
	\Theta_1=\Theta_{N^{2,\circ}}
	\subseteq \cdots \subseteq
	\Theta_{N^{3,\semicircle}}
	\subseteq
	\Theta_{N^{3,\twoslashcircle}}
	\subseteq
	\Theta_{N^3}=\Theta_r,
	\;
	\dim\bigl(\Theta_{i+1}/\Theta_i\bigr)=1,
\end{align*}}	\vspace{-0.8cm}
	\caption{Hessian spectra on $\spT$ along nested subarchitectures of $N^3$, viewed as
subspaces in its parameter space.
		In the symmetric cases, \(D_4 \times C_2\)-invariance
		forces the nonzero eigenvalues to have multiplicity two.
		Passing to \(N^3\) splits the reference eigenvalues, with the splitting
	constrained by Cauchy interlacing. This multiplicity structure is observed in
		\pref{fig:flat_directions}. Successive applications of Cauchy interlacing along the full chain give a sharper localization of the \(N^3\) eigenvalues, now relative to the spectra of the subarchitectures.}
	\label{table:n3_symmetry_breaking}
\end{table}

We now consider the global minima outside $\spT$. In
\pref{sec:two_gaussians_analysis}, we give a complete characterization of the
global minimizers of $N^{3,\circ}$ when $d=h=2$. For these other minima, the
spectrum again follows from resolutive SB, although with respect to a
different subspace (see \pref{table:sym_per_model}). Our focus on $\spT$ is
motivated by the empirical observation that Adam consistently converges to
this subset. A distinguishing feature of $\spT$ is its substantially smaller
curvature transverse to the kernel, indicating a concrete instance of an implicit bias of Adam (for studies of implicit bias in other settings, see \cite{safran2022linearregions,soudry2018implicit}). 

The same behavior persists when $h$ is increased, with Adam converging to embedded copies of $\spT$. This empirical observation is in line with a general principle followed throughout the paper. We first carry out the analysis in a small ambient parameter space
that is nevertheless sufficiently large to realize the relevant Bayes-optimal predictors. Passing to a larger ambient parameter space then embeds this model into a larger network, where the additional degrees of freedom account for extra flat directions. For more detailed treatments of such embeddings, higher-order flatness and implicit bias, see \cite{arjevani2026dsb_tf}.

\paragraph{Mixture of $k$ Gaussians: distributional SB.}
We now turn to mixtures of $k$ Gaussians described in
\pref{setting:gaussian_mixture}. We take the Gaussian centers to be drawn
uniformly on the sphere. Related choices lead to the same
qualitative results. Unlike in the preceding two-Gaussian setting, these centers are
almost surely in general position. To isolate SB arising from the data distribution from the resolutive effects
discussed above, we work with the bias-free model $N^{3,\circ}$. 
We take $d=k$ and $h=2k$ and work throughout with an explicit realization of
the Bayes-optimal predictor within $N^{3,\circ}$. Full details are given in
\pref{sec:spec_analysis_mix}.

 In this setting, a natural symmetric reference is constructed by solving the
 orthogonal Procrustes problem in operator norm. Organize the $k$ centers as
the rows of a matrix $C\in\mathbb{R}^{k\times k}$, and let $U$ be a nearest
orthogonal matrix to $C$. In our construction, we use the orthogonal factor
obtained from a singular value decomposition:
 \begin{align}\label{svd_reference}
C=P D_C Q^\top,
\qquad
U=PQ^\top .
 \end{align}
The rows of \(U\) are then used as the centers of the reference Gaussian
mixture, and the analysis proceeds through this symmetric case.  The exact isotypic decomposition, provided in \pref{sec:spec_analysis_mix}, identifies invariant subspaces of total dimension
$\Theta(k^2)$ which consist of directions weakly aligned with the data.
On these subspaces, \eqref{hessian_into_contributions} yields the
corresponding small-eigenvalue scales (see \pref{thm:mix_spectrum}). The resulting eigenvalue multiplicities and scales at the symmetric reference are summarized
in \pref{fig:original_vs_symmetric_mix}.  The spectral estimates then transfer to the original setting
through standard perturbation theory (the reference decomposition can
also be used directly).

\paragraph{Spontaneous SB.}
The SB mechanisms presented above were local in nature. An asymmetric configuration was analyzed by passing to a nearby symmetric configuration. We now turn to a nonlocal SB mechanism in which the problem parameters remain fixed, but the optimization dynamics selects points whose 
symmetric references have lower Hessian symmetry. The resulting spectrum is more refined and can be understood relative to these references~(\pref{fig:sb_concepts}). The corresponding symmetry types are encoded by \emph{stabilizers} under the invariance group $\Obj$ of the
objective. 

Given a point $W$ in the weight space $\weightspace$, its stabilizer is
\begin{align}\label{def_stabilizer}
	\stab_\Obj(W)\defeq \{\gamma\in \Obj:\gamma W=W\}.
\end{align}
The same terminology applies to arbitrary group actions, including linear actions on vector spaces such as spaces of symmetric matrices, tensors, linear functionals, or distributions, as well as actions on sets, such as the action of a permutation group on pairs of indices. When referring specifically to a parameter, we may also use the terms \emph{point stabilizer} or \emph{isotropy group} and sometimes write $\Gamma_W$.

Given a $\Gamma$-invariant objective, a central question is which symmetries in
$\Gamma$ are retained by its local minima and, more generally, by its critical
points.  This belongs to a classical theme in mathematics and physics concerning how the isotropy groups of solutions relate to the symmetries of the governing
equations. Here, the relevant equation is the gradient equation
$\nabla \ploss(W)=0$. A critical point is called \emph{SB} if its isotropy group is a
proper subgroup of $\Obj$. We say that \emph{spontaneous SB} occurs when such
a point has been selected by optimization. Our primary interest lies in cases in
which the residual symmetry remains substantial.

We illustrate this with $N^{3,\circ}$ under \pref{setting:gaussian}. As in the
earlier analyses, structural and distributional symmetries combine to produce
\integrated invariances, now at the level of the objective itself rather than
of its differential.  The resulting invariance
subgroup is $S_h^2\times S_d$, with the explicit actions of its factors given in
\pref{sec:symmetry_grps}. At the level of the objective, the composite invariance argument
in \eqref{loss_obj} follows the same underlying reasoning as that
used for $N^{1\nicefrac{1}{2}}$ in
\cite[Section~4.1]{arjevani2019principle}, with the conceptual
ingredients abstracted and formalized. As noted earlier, when used to identify Hessian invariances, the point-stabilizer approach employed there fails for the deeper networks already
considered in that work, whose Hessians retain rich invariances despite
trivial weight stabilizers (see \pref{table:gaussian_regression_resolution}).

When $h=d$, fix $0<\beta<1/d$ and set
$\alpha=1-d\beta$. Then the global~minimizer
\begin{align}\label{specificWstar}
	W^\star
	=
	\bigl(
	I_d,
	\alpha I_d+\beta\bones\bones^\top,
	\bones^\top
	\bigr)
\end{align}
is not fixed by the full group $S_h^2\times S_d$, but still retains
substantial symmetry. Its stabilizer contains
$\Delta S_d\defeq\{(\pi,\pi,\pi):\pi\in S_d\}$, the \emph{diagonal}
embedding of $S_d$ in $S_d^3$. Thus, $W^\star$ is an SB global minimizer.
We note in passing that even a substantial point stabilizer under the invariance group $\Obj$ need not recover the full Hessian invariance group, which may be strictly larger.

\paragraph{Goursat's lemma and O'Nan--Scott.}
We next discuss tools for examining possible isotropy groups under the $S_h^2\times S_d$-action.

Although some mechanisms can force or explain critical points with particular
stabilizers, $\Gamma$-invariance alone imposes no general lower bound on
the stabilizers of critical points.\footnote{For instance, suppose
$\Gamma$ is finite and a point $p$ has trivial stabilizer. Averaging a local bump
function with a nondegenerate critical point at $p$ gives a
$\Gamma$-invariant function for which the orbit $\Gamma p$ consists of
critical points, all with trivial stabilizer. Moreover, this critical orbit
persists under small $\Gamma$-invariant $C^2$-perturbations.} As discussed above in connection with the failure of weight symmetries
(\pref{sec:failure}), our settings provide many such instances in which
degrees of freedom transverse to $\Obj$-orbits reduce point stabilizers
without necessarily reducing the Hessian invariance group. Indeed, it was precisely this limitation that required us to develop a different route to establishing Hessian invariance. Nevertheless, the stabilizer lattice remains useful for an efficient organization of an otherwise highly complex set of critical points through the possible symmetry types (taking into consideration transverse flat directions), with the subgroup lattice serving as a tractable proxy when needed. Here, we limit ourselves to a high-level discussion of some general considerations rather than carrying out a systematic analysis, and refer to \cite{arjevani2024sc} for a fuller treatment.

A~convenient tool for understanding subgroups of \(S_h^2\times S_d\),
a direct product, is Goursat's lemma. Informally, it describes
them through subgroups of the factors and possible couplings between their
quotients. In the example in \eqref{specificWstar}, the same subgroup $S_d$ occurs
in each factor, with the three copies coupled diagonally. Representative subgroup
types of $S_d$ are shown in \pref{fig:onan_scott}, with the primitive case
further organized by the O'Nan--Scott theorem. These seemingly far-removed group-theoretic considerations organize the
symmetry types of lower-symmetry minima found by standard optimization
methods (see \pref{fig:sb_points} and \pref{rem:iso}).  Some of the isotropy groups are realized by minima with dead neurons, as discussed in~\pref{sec:spec_analysis_gaussian}.

\newcommand{\BoxScale}{1.12}   %
\newcommand{\ArrowScale}{0.88} %

\definecolor{diagpink}{RGB}{247,202,201}
\definecolor{blockgreen}{RGB}{198,239,206}
\definecolor{blockyellow}{RGB}{255,242,204}
\definecolor{blockpurple}{RGB}{217,210,233}
\definecolor{blockgray}{RGB}{217,217,217}
\colorlet{blockyellow}{white}

\newcommand{\RootGrid}{%
	\begin{tikzpicture}[scale={0.24*\BoxScale},baseline=(current bounding box.center)]
		\foreach \i in {0,...,4}{
			\foreach \j in {0,...,4}{
				\ifnum\i=\j
				\fill[diagpink] (\j,-\i) rectangle ++(1,-1);
				\else
				\fill[white] (\j,-\i) rectangle ++(1,-1);
				\fi
			}
		}
		\draw[line width=0.35pt] (0,0) rectangle (5,-5);
		\foreach \k in {1,...,4}{
			\draw[black!45,line width=0.22pt] (\k,0)--(\k,-5);
			\draw[black!45,line width=0.22pt] (0,-\k)--(5,-\k);
		}
	\end{tikzpicture}%
}

\newcommand{\IntransGrid}{%
	\begin{tikzpicture}[scale={0.24*\BoxScale},baseline=(current bounding box.center)]
		\foreach \i in {0,...,4}{
			\foreach \j in {0,...,4}{
				\ifnum\i=\j
				\fill[diagpink] (\j,-\i) rectangle ++(1,-1);
				\else
				\ifnum\i<4
				\ifnum\j<4
				\fill[white] (\j,-\i) rectangle ++(1,-1);
				\else
				\fill[blockgreen] (\j,-\i) rectangle ++(1,-1);
				\fi
				\else
				\fill[blockgreen] (\j,-\i) rectangle ++(1,-1);
				\fi
				\fi
			}
		}
		\draw[line width=0.35pt] (0,0) rectangle (5,-5);
		\foreach \k in {1,...,4}{
			\draw[black!45,line width=0.22pt] (\k,0)--(\k,-5);
			\draw[black!45,line width=0.22pt] (0,-\k)--(5,-\k);
		}
		\draw[line width=0.6pt] (4,0)--(4,-5);
		\draw[line width=0.6pt] (0,-4)--(5,-4);
	\end{tikzpicture}%
}

\newcommand{\ImprimitiveGrid}{%
	\begin{tikzpicture}[scale={0.20*\BoxScale},baseline=(current bounding box.center)]
		\foreach \i in {0,...,5}{
			\foreach \j in {0,...,5}{
				\pgfmathtruncatemacro{\bi}{floor(\i/3)}
				\pgfmathtruncatemacro{\bj}{floor(\j/3)}
				\ifnum\i=\j
				\fill[diagpink] (\j,-\i) rectangle ++(1,-1);
				\else
				\ifnum\bi=\bj
				\fill[white] (\j,-\i) rectangle ++(1,-1);
				\else
				\fill[blockgreen] (\j,-\i) rectangle ++(1,-1);
				\fi
				\fi
			}
		}
		\draw[line width=0.35pt] (0,0) rectangle (6,-6);
		\foreach \k in {1,...,5}{
			\draw[black!45,line width=0.18pt] (\k,0)--(\k,-6);
			\draw[black!45,line width=0.18pt] (0,-\k)--(6,-\k);
		}
		\draw[line width=0.6pt] (3,0)--(3,-6);
		\draw[line width=0.6pt] (0,-3)--(6,-3);
	\end{tikzpicture}%
}

\newcommand{\PaleyThirteenGrid}{%
	\begin{tikzpicture}[scale={0.105*\BoxScale},baseline=(current bounding box.center)]
		\foreach \i in {0,...,12}{
			\foreach \j in {0,...,12}{
				\pgfmathtruncatemacro{\diff}{mod(\j-\i+13,13)}
				\ifnum\diff=0
				\fill[diagpink] (\j,-\i) rectangle ++(1,-1);
				\else
				\ifnum\diff=1 \fill[blockgreen] (\j,-\i) rectangle ++(1,-1);
				\else\ifnum\diff=3 \fill[blockgreen] (\j,-\i) rectangle ++(1,-1);
				\else\ifnum\diff=4 \fill[blockgreen] (\j,-\i) rectangle ++(1,-1);
				\else\ifnum\diff=9 \fill[blockgreen] (\j,-\i) rectangle ++(1,-1);
				\else\ifnum\diff=10 \fill[blockgreen] (\j,-\i) rectangle ++(1,-1);
				\else\ifnum\diff=12 \fill[blockgreen] (\j,-\i) rectangle ++(1,-1);
				\else \fill[white] (\j,-\i) rectangle ++(1,-1);
				\fi\fi\fi\fi\fi\fi
				\fi
			}
		}
		\draw[line width=0.35pt] (0,0) rectangle (13,-13);
		\foreach \k in {1,...,12}{
			\draw[black!35,line width=0.12pt] (\k,0)--(\k,-13);
			\draw[black!35,line width=0.12pt] (0,-\k)--(13,-\k);
		}
	\end{tikzpicture}%
}

\newcommand{\PetersenGrid}{%
	\begin{tikzpicture}[scale={0.13*\BoxScale},baseline=(current bounding box.center)]
		\foreach \i in {0,...,9}{
			\foreach \j in {0,...,9}{
				\ifnum\i=\j
				\fill[diagpink] (\j,-\i) rectangle ++(1,-1);
				\else
				\fill[white] (\j,-\i) rectangle ++(1,-1);
				\fi
			}
		}
		\foreach \a/\b in {
			0/7,0/8,0/9,
			1/5,1/6,1/9,
			2/4,2/6,2/8,
			3/4,3/5,3/7,
			4/9,
			5/8,
			6/7
		}{
			\fill[blockgreen] (\b,-\a) rectangle ++(1,-1);
			\fill[blockgreen] (\a,-\b) rectangle ++(1,-1);
		}
		\draw[line width=0.35pt] (0,0) rectangle (10,-10);
		\foreach \k in {1,...,9}{
			\draw[black!35,line width=0.12pt] (\k,0)--(\k,-10);
			\draw[black!35,line width=0.12pt] (0,-\k)--(10,-\k);
		}
	\end{tikzpicture}%
}

\newcommand{\ProductActionGrid}{%
	\begin{tikzpicture}[scale={0.055*\BoxScale},baseline=(current bounding box.center)]
		\foreach \i in {0,...,24}{
			\foreach \j in {0,...,24}{
				\pgfmathtruncatemacro{\ri}{floor(\i/5)}
				\pgfmathtruncatemacro{\ci}{mod(\i,5)}
				\pgfmathtruncatemacro{\rj}{floor(\j/5)}
				\pgfmathtruncatemacro{\cj}{mod(\j,5)}
				\pgfmathtruncatemacro{\dr}{ifthenelse(\ri==\rj,0,1)}
				\pgfmathtruncatemacro{\dc}{ifthenelse(\ci==\cj,0,1)}
				\pgfmathtruncatemacro{\dist}{\dr+\dc}
				\ifnum\dist=0
				\fill[diagpink] (\j,-\i) rectangle ++(1,-1);
				\else
				\ifnum\dist=1
				\fill[blockgreen] (\j,-\i) rectangle ++(1,-1);
				\else
				\fill[white] (\j,-\i) rectangle ++(1,-1);
				\fi
				\fi
			}
		}
		\draw[line width=0.35pt] (0,0) rectangle (25,-25);
	\end{tikzpicture}%
}

\newcommand{\DiagonalTypeGrid}{%
	\begin{tikzpicture}[scale={0.026*\BoxScale},baseline=(current bounding box.center)]
		\input{diagonal_type_A5_fills.tex}
		\draw[line width=0.3pt] (0,0) rectangle (60,-60);
	\end{tikzpicture}%
}

\newcommand{\StructuralBox}[1]{%
	\begin{tikzpicture}[baseline=(current bounding box.center),scale=\BoxScale]
		\node[
		draw,
		rounded corners=2pt,
		inner sep=4pt,
		minimum width=1.25in,
		minimum height=0.62in,
		align=center
		] {\scriptsize #1};
	\end{tikzpicture}%
}

\def\labelone{-5.15}
\def\gridone{-4.35}
\def\groupone{-5.45}

\def\labeltwo{-7.35}
\def\gridtwo{-8.85}
\def\grouptwo{-10.15}

\newcommand{\TopLabel}[1]{%
	\parbox[c][8mm][c]{3.4cm}{\centering\small #1}%
}
\newcommand{\BottomLabel}[1]{%
	\parbox[c][8mm][c]{3.1cm}{\centering\small #1}%
}

\begin{figure}[ht]
	\centering
	\resizebox{\textwidth}{!}{%
		\begin{tikzpicture}[
			>=stealth,
			every node/.style={align=center},
			edge/.style={->,line width=0.55pt},
			lab/.style={font=\small,inner sep=0pt},
			box/.style={inner sep=0pt}
			]
			
			\def\yroot{0}
			\def\labelone{-3.10}
			\def\gridone{-4.20}
			\def\groupone{-5.25}
			
			\def\labeltwo{-7.25}
			\def\gridtwo{-8.55}
			\def\grouptwo{-9.90}
			
			\def\xstart{-1.8}
			\def\xgap{3.3}
			
			\node[box] (root) at ({0*\ArrowScale},{\yroot*\ArrowScale}) {
				\RootGrid\\[1.5mm]
				{\small \(S_d\)}
			};
			
			\node[lab] (intransLabel) at ({-6*\ArrowScale},{\labelone*\ArrowScale})
			{\TopLabel{Intransitive}};
			\node[box] (intransGrid) at ({-6*\ArrowScale},{\gridone*\ArrowScale})
			{\IntransGrid};
			\node[lab] (intrans) at ({-6*\ArrowScale},{\groupone*\ArrowScale})
			{\small \(S_4\times S_1\)};
			
			\node[lab] (imprimLabel) at ({0*\ArrowScale},{\labelone*\ArrowScale})
			{\TopLabel{Transitive imprimitive}};
			\node[box] (imprimGrid) at ({0*\ArrowScale},{\gridone*\ArrowScale})
			{\ImprimitiveGrid};
			\node[lab] (imprim) at ({0*\ArrowScale},{\groupone*\ArrowScale})
			{\small \(S_3\wr S_2\)};
			
			\node[lab] (prim) at ({6*\ArrowScale},{\labelone*\ArrowScale})
{\TopLabel{Primitive\\[-0.3mm] not 2-transitive}};
			
			\node[lab] (affineLabel) at ({(\xstart+0*\xgap)*\ArrowScale},{\labeltwo*\ArrowScale})
			{\BottomLabel{Affine}};
			\node[box] (affineGrid) at ({(\xstart+0*\xgap)*\ArrowScale},{\gridtwo*\ArrowScale})
			{\PaleyThirteenGrid};
			\node[lab] (affine) at ({(\xstart+0*\xgap)*\ArrowScale},{\grouptwo*\ArrowScale})
			{\small \(C_{13}\rtimes C_6\)};
			
			\node[lab] (almostLabel) at ({(\xstart+1*\xgap)*\ArrowScale},{\labeltwo*\ArrowScale})
			{\BottomLabel{Almost simple}};
			\node[box] (almostGrid) at ({(\xstart+1*\xgap)*\ArrowScale},{\gridtwo*\ArrowScale})
			{\PetersenGrid};
			\node[lab] (almost) at ({(\xstart+1*\xgap)*\ArrowScale},{\grouptwo*\ArrowScale})
			{\small \(S_5\curvearrowright\binom{[5]}2\)};
			
			\node[lab] (prodLabel) at ({(\xstart+2*\xgap)*\ArrowScale},{\labeltwo*\ArrowScale})
			{\BottomLabel{Product action}};
			\node[box] (prodGrid) at ({(\xstart+2*\xgap)*\ArrowScale},{\gridtwo*\ArrowScale})
			{\ProductActionGrid};
			\node[lab] (prod) at ({(\xstart+2*\xgap)*\ArrowScale},{\grouptwo*\ArrowScale})
			{\small \(S_5\wr S_2\)};
			
			\node[lab] (diagLabel) at ({(\xstart+3*\xgap)*\ArrowScale},{\labeltwo*\ArrowScale})
			{\BottomLabel{Diagonal type}};
			\node[box] (diagGrid) at ({(\xstart+3*\xgap)*\ArrowScale},{\gridtwo*\ArrowScale})
			{%
	\begin{tikzpicture}[scale={0.026*\BoxScale},baseline=(current bounding box.center)]
		\input{diagonal_type_A5_fills.tex}
		\draw[line width=0.3pt] (0,0) rectangle (60,-60);
	\end{tikzpicture}%
};
			\node[lab] (diag) at ({(\xstart+3*\xgap)*\ArrowScale},{\grouptwo*\ArrowScale})
			{\small \(N_{S_{60}}(A_5\times A_5)\)};
			
			\node[lab] (twistLabel) at ({(\xstart+4*\xgap)*\ArrowScale},{\labeltwo*\ArrowScale})
			{\BottomLabel{Twisted wreath}};
			\node[box] (twistGrid) at ({(\xstart+4*\xgap)*\ArrowScale},{\gridtwo*\ArrowScale})
			{\StructuralBox{$(A_5)^6\rtimes A_6$\\degree $60^6$}};
		\coordinate (rootArrowStart) at ([yshift=-0.1cm]root.south);
		
		\draw[edge] (rootArrowStart) -- ([yshift=0.10cm]intransLabel.north);
		\draw[edge] (rootArrowStart) -- ([yshift=0.10cm]imprimLabel.north);
		\draw[edge] (rootArrowStart) -- ([yshift=0.10cm]prim.north);
			
			\draw[edge] (prim.south) -- ([yshift=0.10cm]affineLabel.north);
			\draw[edge] (prim.south) -- ([yshift=0.10cm]almostLabel.north);
			\draw[edge] (prim.south) -- ([yshift=0.10cm]prodLabel.north);
			\draw[edge] (prim.south) -- ([yshift=0.10cm]diagLabel.north);
			\draw[edge] (prim.south) -- ([yshift=0.10cm]twistLabel.north);

		\end{tikzpicture}%
	}
\caption{Schematic overview of the subgroup structure relevant to nonlocal
	SB under the diagonal action of $S_d$ on $M(d,d)$, together with
	representative examples. The top node represents the full $S_d$ symmetry,
	while the lower nodes organize the possible symmetry types of
	SB points. The first level distinguishes the standard
	cases of intransitive, transitive imprimitive, and primitive subgroups.
	The primitive case is further organized according to the O'Nan--Scott~theorem.}
	\label{fig:onan_scott}
	
\end{figure}

\paragraph{Symmetric tensor decomposition: tangency arcs and 
	parameterized families of functions.}	
Symmetry of critical points may also be understood comparatively, relative to a given reference point. We conclude this section by describing a nonlocal SB mechanism, developed in \cite{arjevani2024sc}, which explains how nearby critical points may retain a substantial part of the reference point’s symmetry. To keep the manuscript focused, we do not develop the full theory but instead provide a very brief, high-level overview and representative examples.

Let \(f:\mathbb{R}^d\to\mathbb{R}\) be a \(C^1\) function, and let
\(\mathbf{c}\in\mathbb{R}^d\) be a critical point of \(f\). The
\emph{tangency set} of \(f\) at \(\mathbf{c}\) is defined\footnote{The tangency equation also admits the coordinate-free form
	\(df_{\x}\wedge(\x-\mathbf{c})^\flat=0\), where \(\flat\) is taken with
	respect to the chosen inner product, here the Euclidean one.}  by
\begin{align}
\mho_{\mathbf{c}}(f)
	\defeq
	\{\mathbf{x}\in\mathbb{R}^d
	\mid
	D_i f(\mathbf{x})(\mathbf{x}-\mathbf{c})_j
	=
	D_j f(\mathbf{x})(\mathbf{x}-\mathbf{c})_i,\ i,j\in[d]\}.
\end{align}
In particular, every critical point of \(f\) belongs to
\(\mho_{\mathbf{c}}(f)\). Arcs contained in this set and emanating from $\c$ are called
\emph{tangency arcs}.\footnote{These arcs play a role analogous to bifurcation branches. Global
	continuation of such branches can be studied using topological methods
	\cite{rabinowitz1971some}.} Such arcs, generically finitely many under mild regularity, may retain a substantial part of the symmetry of~$\c$, and critical points reached along them inherit this symmetry. 
Another quantity preserved along each arc is its index, which allows conclusions to be drawn about the extremal character of such critical points on spheres centered at $\c$.	 For maximal isotropy subgroups (see \pref{fig:onan_scott}), the existence of such arcs is guaranteed, while arcs with submaximal isotropy often appear as well. Applied iteratively at newly identified critical points, this construction reveals aspects of the broader symmetry structure of the landscape.

\begin{remark}\label{rem:iso}
	The admissible isotropy types of tangency arcs for the restriction to
	$M(d,d)$ of the diagonal $S_d$-action retained by $W^\star$ in
	\eqref{specificWstar} are analyzed in \cite{arjevani2024sc}.
	Some occur generically, including 
	$S_q\times S_{d-q}$, $S_q\wr S_2$ when $d=2q$, and
	$\prod_{i=1}^3 S_{n_i}$ for positive block sizes 
	or $(S_n^3)\rtimes C_3$ instead when the three blocks have equal
	size $n$. Generically occurring types occupy different positions in the subgroup
	and isotropy lattices, as illustrated in the table below.
	\begin{center}
		{\small
			\begin{tabular}{@{}ll@{}}
				\toprule
				Isotropy type & Position \\
				\midrule
				$S_{d-1}\times S_1$ ($d=5$ instance in \pref{fig:onan_scott})
				& Maximal subgroup and maximal isotropy subgroup \\
				$D_5$, $d=5$ (bottom panel of \pref{fig:sb_points})
				& Maximal isotropy subgroup, but not maximal subgroup \\
				$C_3\times S_2$, $d=5$ (middle panel of \pref{fig:sb_points})
				& Submaximal isotropy subgroup \\
				\bottomrule
		\end{tabular}}
	\end{center}
Here $D_5$ is the order-$10$ dihedral group. Stable occurrence need not,
however, be universal, as illustrated at $d=5$ by $\langle(123)\rangle$.
This latter type is submaximal even within its irreducible summand,
providing another counterexample to the maximal-isotropy conjecture
\cite{golubitsky1983benard} (cf. its formulation for minima \cite{michel1980minima}).
\end{remark}

If $f$ is definable in an o-minimal
structure, the arcs can be constructed definably
\cite{arjevani2023hidden}. Moreover, they always contain enough information to
determine whether a critical point is a local minimum, local maximum, or
saddle. Analogous conclusions hold generically for smooth
functions in the equivariant category. To illustrate the
construction in the semialgebraic case, consider the problem of decomposing a
real symmetric tensor into a sum of rank-one terms
\cite{arjevanifield2021tensor,arjevani2026geometry}. Standard approaches involve solving the following nonconvex optimization problem associated with an order-$n$ symmetric tensor $A$ on $\RR^d$: 
\begin{align} \label{prob:tensor opt}
	 \min_{ \stackrel{W\in M(k,d),}{\balpha\in\reals^k}}
	\nrm*{\sum_{i=1}^k \alpha_i \w_i^{\tens n} - A}^2. 
\end{align} 
For $k=d$ and $n=3$, we absorb each $\alpha_i$ into $\w_i$ and 
set $\balpha=\bones$. Taking
$A\defeq\sum_{i=1}^d\e_i^{\tens 3}$ then gives an $S_d^2$-invariance,
corresponding to independent permutations of the rows and columns of~$W$. Explicit constructions of real-analytic tangency arcs are given in
\pref{table:tensor_tan_arcs} below. A point on a numerically traced tangency arc for
\pref{setting:gaussian} is shown in \pref{fig:sb_points}.

\begin{table}[h]
	\renewcommand{\arraystretch}{1.2}
	\begin{scriptsize}
		\begin{tabular}{|c|c|c|c|c|c|c|c|c}
			\hline
			\textbf{Tangency arc}& \multicolumn{3}{c|}{\textbf{Source}}&  \multicolumn{3}{c|}{\textbf{Target}}
			\\\hline
$\left[
\begin{smallmatrix}
	0_{2,d}\\
	t\bones_{d-2}\bones_d^\top
\end{smallmatrix}
\right]$
&
$0$
&
Saddle
&
$S_d\times S_d$
&
$\left[
\begin{smallmatrix}
	0_{2,d}\\
	\Theta\bigl(d^{-1}\bigr)\bones_{d-2}\bones_d^\top
\end{smallmatrix}
\right]$
&
Saddle
&
$(S_2\times S_{d-2})\times S_d$
\\\hline
			$tI_d$ &$0$&Saddle& $S_d\times S_d$& $I_d$ &\makecell{Global \\minimum}&$\Delta S_d$\\
			\hline
			$tI_{d-1}\oplus [0]$&$0$&Saddle& $S_d\times S_d$& $I_{d-1}\oplus [0]$&Saddle &$\Delta S_{d-1}$\\
			\hline
			$I_{d-1}\oplus [t]$&$I_{d-1}\oplus [0]$&Saddle&$\Delta S_{d-1}$& $I_d$ &\makecell{Global \\minimum} &$\Delta S_d$\\
			\hline
			\tiny
			\makecell{$I_{d-2} \oplus 
				\begin{bsmallmatrix} a(t) & 
					0\\ t & 0\end{bsmallmatrix},$ \\ $~a(t) = \sum_{m=0}^\infty \binom{1/3}{m} 
				(-1)^mt^{3m}$}&$I_{d-1}\oplus [0]$&Saddle&$\Delta S_{d-1}$ & 	$I_{d-2} \oplus 
			\begin{bsmallmatrix} 0 & 
				0\\ 1 & 0\end{bsmallmatrix}$ &Saddle &$\Delta S_{d-1}$\\
			\hline
		\end{tabular}
	\end{scriptsize}
	\vspace*{0.1in}
\caption{Tangency arcs for the symmetric tensor decomposition problem 
	\eqref{prob:tensor opt}. Isotropy groups are listed up to conjugacy. The examples illustrate recursive application of the
	construction, leading to critical points with maximal and submaximal isotropy
	groups for the \(S_d^2\)-action. These are related, via Goursat's lemma, to the corresponding subgroup structures in the \(S_d\) factors, as illustrated in \pref{fig:onan_scott}.}
\label{table:tensor_tan_arcs}
\end{table}

	\begin{figure}[t]
			\includegraphics[scale=0.5]{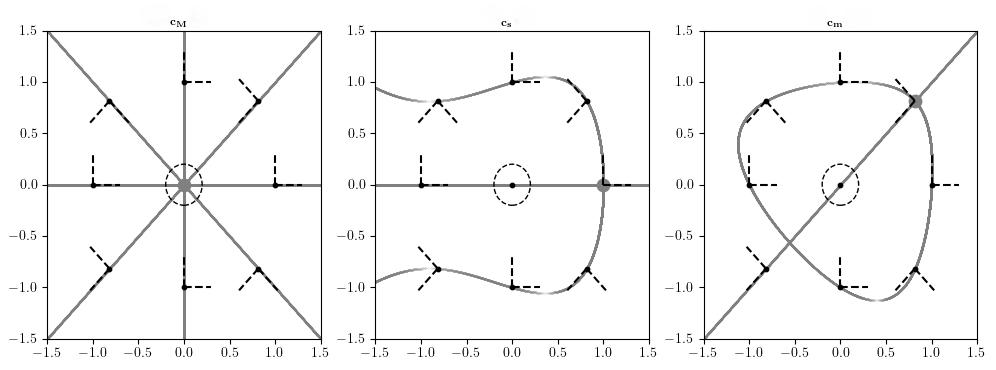}
		\caption{Tangency sets of $h$ in \eqref{eq:h} relative to different
			critical points. The tangency arcs may \emph{break} the symmetry of the reference critical point. Any critical point lying
			on an arc has at least the symmetry of that arc. 
			\label{fig:hc_tan}}
	\end{figure}

As a concrete low-dimensional illustration of how tangency arcs organize critical points by symmetry type, consider the following simple $B_2$-invariant function, with $B_n$ generally denoting the hyperoctahedral group (Weyl group of type~$B$),
\begin{equation}\label{eq:h}
	h(x,y) = x^4 + x^2y^2 + y^4 -2x^2 - 2y^2.
\end{equation}
The function $h$ is invariant under $(x,y) \mapsto (y,x)$, $x\mapsto-x$ and $y\mapsto-y$,  hence $B_2$-invariant. Computing, we find that $h$ has nine critical points: $\c_M = (0,0)$ (a maximum), $\c_s = (1,0)$ giving four critical points (saddles) in total owing to its \emph{orbit} $B_2(1,0) = \{ g(1,0)~|~g\in B_2\}$, and $\c_m = (\sqrt{2/3}, \sqrt{2/3})$ (a minimum) again yielding $|B_2(\sqrt{2/3}, \sqrt{2/3})|=4$ critical points. By equivariance, it suffices to compute the tangency set for the orbit representatives $\c_M, \c_s$ and $\c_m$. Although in this case the tangency set can be described by simple algebraic expressions, our interest lies rather in the geometric situation. Referring to \pref{fig:hc_tan}, the isotropy groups of tangency arcs
are either the full point stabilizer $(B_2)_{\c_\times}$ or its maximal
proper isotropy subgroups, $\times\in\{M,m,s\}$. In addition, we find that $\tanset_{\c_M}(h) \setminus \{\c_M\}$ has eight tangency arcs passing through the eight critical points away from the origin.  Iterating the analysis of the tangency set at these saddles reveals tangency arcs of index zero which connect to $\c_m$.

Tangency arcs can also be viewed as paths of critical points of the family of
functions $f(\x)-{\lambda}/{2}\|\x-\c\|^2$. Similarly, depending on the
phenomenon one wants to investigate, one can consider other parameterized
families of objectives and track the corresponding variation in local
geometry. For example, one may vary the input dimension and hidden-layer widths, as in
\cite{sagun2016eigenvalues,sagun2017empirical} and in the present work, or replace ReLU with leaky ReLU and vary the leakiness parameter continuously, as studied in \cite{ArjevaniField2020} (see also \cite{liu2025nonlinear}).
Layer widths can likewise be varied to track transitions from saddles to
minima and vice versa \cite{arjevani2021analytic,arjevani2022annihilation} (cf. \cite{simsek2025gaussianmultiindex}).
Other parameterized families of objective functions may arise from variations
in the data distribution, such as changing the number of classes or the
cluster variances (see \pref{table:mixture_gaussians_resolution}). Further
continuous deformations include taking
$D_C(t)=(1-t)D_C+tI$ in \eqref{svd_reference}.

\begin{figure}[ht]
	\centering
	\includegraphics[width=0.7\textwidth]{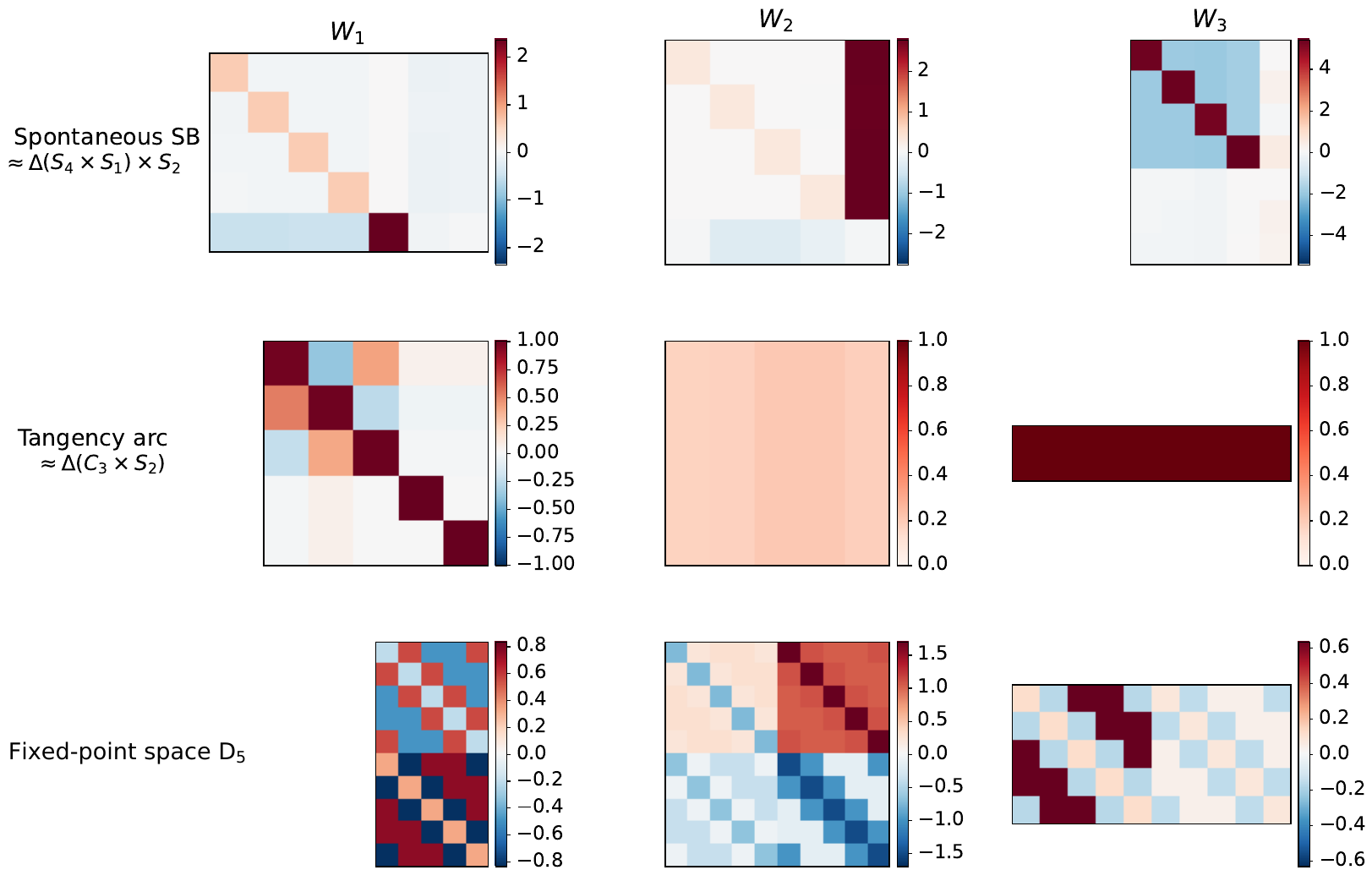}
\caption{Weight matrices illustrating three forms of SB. \textbf{Top.} Weights of a trained \(N^{3,\circ}\) network with \(d=k=7\) and
	\(h=5\), learned on the mixture of seven Gaussians in
	\pref{setting:gaussian_mixture}. The observed symmetry breaking is from
	\(S_7\times S_5^2\) to approximately
	\(\Delta(S_4\times S_1)\times S_2\).	\textbf{Middle.} A point on a tangency arc
	emanating from \(W^\star\) in \eqref{specificWstar}, with approximate isotropy
	\(\Delta(C_3\times S_2)\) (see \pref{rem:iso}). \textbf{Bottom.} Weight matrices in the fixed-point space of
	$D_5\leq \operatorname{AGL}(1,5)\leq S_5$, illustrating forced SB. Although $D_5$ is not maximal as a subgroup of
	$S_5$, it is a maximal \emph{isotropy} group for the diagonal
	action on $M(5,5)$. Indeed, $\operatorname{AGL}(1,5)$ is 2-transitive, so its
	fixed-point space in $M(5,5)$ coincides with the $S_5$-fixed-point space. Similar phenomena also occur on parameter manifolds, as illustrated in \pref{sec:layerwise} for parameters constrained to lie on a sphere.}
	\label{fig:sb_points}
\end{figure}

\section{Deep Symmetry Breaking}\label{sec:dsb}

Although the main analysis of this paper concerns three-layer networks, the framework itself is formulated in some generality. This extended version illustrates its application to selected deeper architectures and benchmark datasets. Detailed accounts of these applications
are in preparation as separate publications, to be collected later
in a broader treatment \cite{arjevani2026dsb_tf}.

We begin with a brief extension to deep fully connected networks and then
analyze AlexNet in greater detail, illustrating resolutive SB and spectral
refraction (\pref{sec:alexnet_superposition}). GNNs exhibit depth-dependent
eigenvalue scales (\pref{sec:gnns}), while a minimal Transformer example illustrates
Hessian invariance in in-context learning (\pref{sec:icl_symmetry}).
The unconstrained features model connects neural collapse with the
orthogonal realization of composite Hessian invariance
(\pref{sec:neural_collapse_composite_refraction}). Hidden feature--label laws of trained networks can themselves develop approximate symmetries, a phenomenon we refer to as \emph{neural
	registration} and examine in $N^5$ and ResNet-$18$
(\pref{sec:hidden_feature_symmetry}). These emerging symmetries provide
a further source of composite invariance, illustrated for $N^3$ trained
on MNIST in \pref{fig:mnist_n3_gn_metric_flow}. Finally, we ask what eigenvalue
multiplicities reveal about the symmetries that force them
(\pref{sec:can_hear}). Apart from AlexNet, these applications are presented
as very brief previews.

\begin{figure}[!t]
	\centering
	\makebox[\linewidth][c]{
		\includegraphics[width=\linewidth]{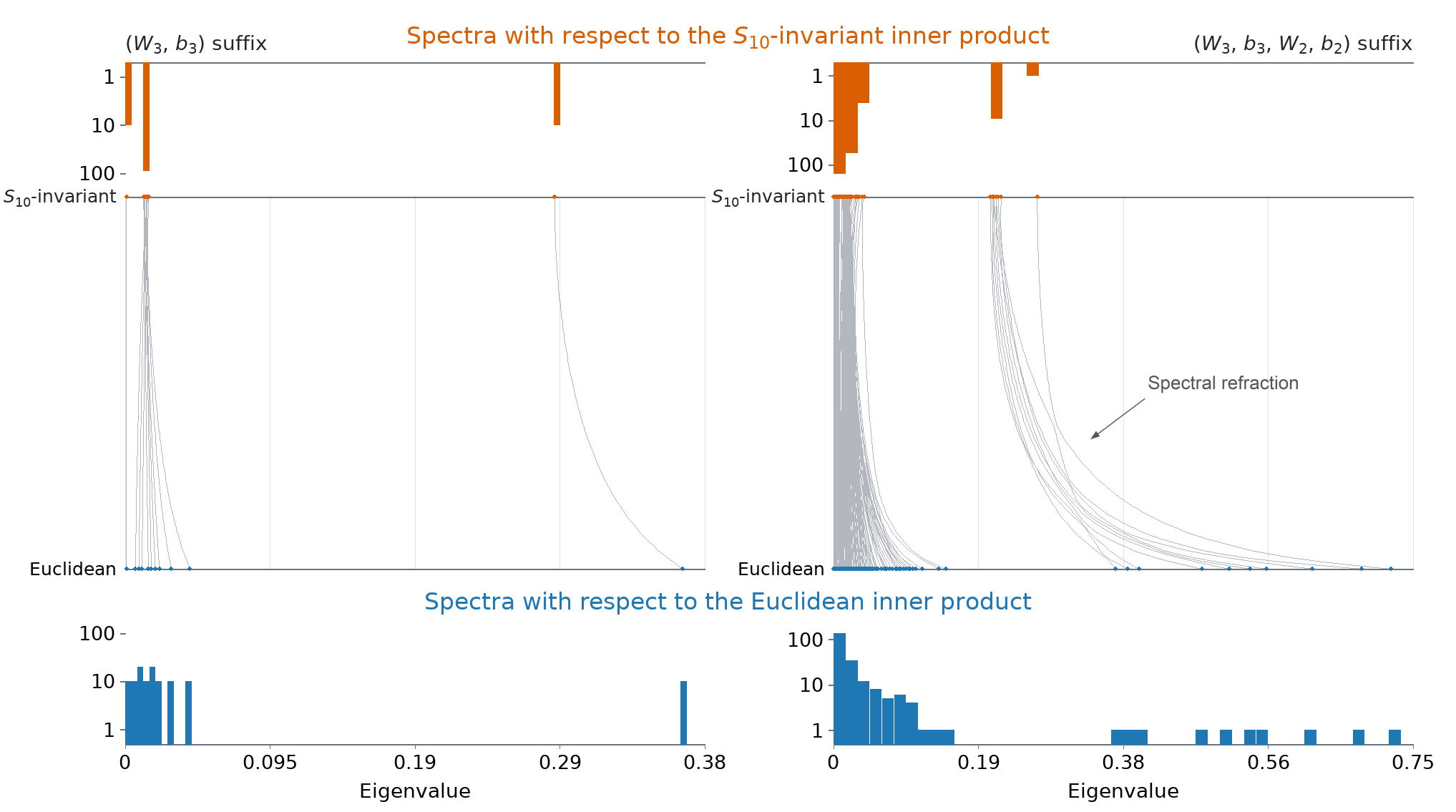}}
\caption{When Hessian invariances are compatible with the
	parameter-space geometry, they force large eigenvalue multiplicities.
	Here we follow the empirical spectra for $N^3$ trained  on MNIST
	as the parameter-space inner product varies from 
	 an $S_{10}$-invariant one to Euclidean. \textbf{Top.} Approximate invariance produces
	narrow bands at the invariant endpoint. \textbf{Bottom.} The action is incompatible
	with the Euclidean parameter-space geometry, and the bands refract
	into spectral clusters (\pref{rem:spectral_refraction}). The left
	and right panels concern the suffixes $(W_3,\b_3)$ and
	$(W_3,\b_3,W_2,\b_2)$, respectively. The approximate invariance arises from a phenomenon we refer to as
	\emph{neural registration}, whereby approximate symmetries emerge in
	hidden feature--label laws. The construction and
	analysis are given in \pref{sec:hidden_feature_symmetry}.}
	\label{fig:mnist_n3_gn_metric_flow}
\end{figure}

\subsection{AlexNet}
\label{sec:alexnet_superposition}

The first extension changes depth, not the learning problem: in
\pref{setting:gaussian}, $N^3$ is replaced by a fully connected network
$N^L$. The construction of Hessian invariances in
\pref{sec:spec_analysis_gaussian} carries over, mutatis mutandis. This
yields sharp asymptotic estimates for the Hessian spectrum as width and
depth vary. Spectral refraction takes a more involved form. In this case, the relevant permutations act on the full parameter space.
The AlexNet analysis below shows how this picture changes: restoring
convolutional directions restricts this permutation action on the full
quotient to $D_4$, producing resolutive SB.

Assume henceforth that $d\ge3$.  To demonstrate the core mechanism, we consider an AlexNet variant with a single channel in each
convolutional layer and without normalization, dropout, or biases. The first convolutional layer uses a
$3\times3$ kernel, unit stride, and zero padding.  These choices keep the
analysis transparent and allow the delta kernel to realize the target.
When this layer has a single channel, the absence of downsampling is
essential.  A stride larger than one makes the linear map defined by this
layer noninjective, while
$\bones_{d^2}^{\top}\sigma(\operatorname{vec}(\x))$ is not constant on its
fibers.  More general choices for the size of the kernel, the padding, and
the stride may be accommodated by introducing sufficiently many channels
in the first layer \cite{arjevani2026dsb_tf}.  Let $L_c$ and $L_f$ denote the numbers of convolutional
and fully connected layers, respectively, with the final readout included in $L_f$.  The dimensions of the
remaining layers are arbitrary.  Set $L=L_c+L_f$ and $n=d^2$.

Throughout the analysis, $\h_i$ denotes the vectorization of the output of
layer $i$, while $A_i$ denotes the corresponding linear map, with
\begin{align}
	A_i
	&=
	\begin{cases}
		\convmap(K_i),
		& 1\leq i\leq L_c,\\
		W_i,
		& L_c+1\leq i\leq L.
	\end{cases}
\end{align}
Here $K_i$ denotes the convolutional kernel and $W_i$ the weight matrix,
with $K_1\in\mathbb R^{3\times3}$, and by a slight abuse of notation, we use
$\convmap$ for the map induced by convolution in every convolutional layer,
with its domain, codomain, stride, padding, and number of channels
understood from context.   The network is then given by
\begin{align}
	\h_0
	&=
	\operatorname{vec}(\x),
	&
	\h_i
	&=
	\sigma(A_i\h_{i-1}),
	\quad
	1\leq i\leq L-1,
	&
	N^{\mathrm{AlexNet},\circ}(\x;W)
	&=
	A_L\h_{L-1}.
\end{align}
Let
\begin{align}
	\Omega^+
	&=
	\left\{
	W
	\ \middle|\
	K_2,\ldots,K_{L_c}>0,\quad
	W_{L_c+1},\ldots,W_{L-1}>0
	\right\},
\end{align}
with the readout unrestricted.  Define the tail map
$A_{L:2}=A_L\cdots A_2$.  We consider the following set of well-specified
global minima at which $dA_{L:2}$ is surjective,\footnote{If there exists one choice of the layer parameters for which
	$\operatorname{rank}dA_{L:2}=n$, then surjectivity is generic.  The
	$n\times n$ minors of the matrix representing $dA_{L:2}$ are polynomial
	functions of the layer parameters, and at least one of them is then not
	identically zero.  The rank-deficient parameters therefore form a
	proper algebraic subset, so the parameters for which $dA_{L:2}$ is
	surjective form an open dense subset of the positive orthant.}
\begin{align}
	\cG_{\mathrm{alexnet}}^{+}
	&=
	\left\{
	W\in\Omega^+
	\ \middle|\
	\begin{aligned}
		&K_1=sK_\delta,\quad
		A_{L:2}=s^{-1}\bones_n^\top
		\text{ for some }s>0,\quad
		\operatorname{rank}dA_{L:2}=n
	\end{aligned}
	\right\}.
	\label{eq:alexnet_positive_minima}
\end{align}
For any $W^+\in\cG_{\mathrm{alexnet}}^{+}$, all nonlinearities after the
first act linearly, and
\begin{align}
	N^{\mathrm{AlexNet},\circ}(\x;W^+)
	&=
	A_{L:2}\sigma\!\left(\convmap(K_1)\h_0\right).
\end{align}
Hence,
\begin{align}
	\label{alex_plus_diff}
	d_WN^{\mathrm{AlexNet},\circ}(\x;W^+)[\delta W]
	&=
	s\,dA_{L:2}[\delta W]\,\sigma(\h_0)
	+
	s^{-1}\bones_n^\top
	\operatorname{diag}\!\left(\mathbf 1_{\h_0>0}\right)
	\convmap(\delta K_1)\h_0.
\end{align}
The flat vectors coincide with the kernel of the model differential,
\begin{align}
	\ker d_WN^{\mathrm{AlexNet},\circ}(\,\cdot\,;W^+)
	&=
	\left\{
	\delta W
	\ \middle|\
	\delta K_1=\alpha K_1,\quad
	dA_{L:2}[\delta W]=-\alpha A_{L:2},\quad
	\alpha\in\mathbb R
	\right\}.
	\label{eq:alexnet_flat_directions}
\end{align}
The surjectivity of $dA_{L:2}$ implies that
\begin{align}
	\Phi
	\colon
	T_{W^+}\weightspace
	&\longrightarrow
	\mathbb R^{3\times3}\oplus\mathbb R^{1\times n},
	&
	\Phi(\delta W)
	&=
	\left(
	\delta K_1,\,
	dA_{L:2}[\delta W]
	\right)
\end{align}
is surjective and induces the isomorphism
\begin{align}
	\frac{
		T_{W^+}\weightspace
	}{
		\ker d_WN^{\mathrm{AlexNet},\circ}(\,\cdot\,;W^+)
	}
	&\cong
	\frac{
		\mathbb R^{3\times3}\oplus\mathbb R^{1\times n}
	}{
		\operatorname{span}\{(K_1,-A_{L:2})\}
	}.
	\label{eq:alexnet_kernel_quotient}
\end{align}
Under this identification, the model differential descends to
\begin{align}
	\overline{d_WN}_{\x}
	\bigl[(\delta K_1,\u^\top)\bigr]
	&=
	s\,\u^\top\sigma(\h_0)
	+
	s^{-1}\bones_n^\top
	\operatorname{diag}\!\left(\mathbf 1_{\h_0>0}\right)
	\convmap(\delta K_1)\h_0,
	\label{eq:alexnet_quotient_differential}
\end{align}
where $\u^\top=dA_{L:2}[\delta W]$.  The numerator on the right side of
\eqref{eq:alexnet_kernel_quotient} has dimension $n+9$, and the quotient
removes the one-dimensional subspace
$\operatorname{span}\{(K_1,-A_{L:2})\}$.  The parameter quotient therefore has
dimension $n+8$.  For the squared loss, well specification and the full
support of the Gaussian distribution imply that the kernels of the second
differential, the GN form, and the model differential coincide.  Hence, the
second differential descends to a nondegenerate bilinear form on the
quotient in~\eqref{eq:alexnet_kernel_quotient}.

\paragraph{Symmetries.}
In addition to the structural symmetries of the fully connected layers,
we have 
\begin{align}
	g\,\sigma(K*\x)
	&=
	\sigma\!\left((gK)*(g\x)\right),
	&
	\convmap(gK)
	&=
	P_g\convmap(K)P_g^\top,
	\qquad
	g\in D_4,
	\label{eq:alexnet_conv_equivariance}
\end{align}
where $P_g$ is the induced permutation of the spatial coordinates.  With
several channels, this action operates diagonally on the channels, while
permutations of channels can be compensated in adjacent~layers. On the subspace of the quotient represented by $(0,\u^\top)$,
\eqref{eq:alexnet_quotient_differential} admits the architectural symmetry
\begin{align}
	\bigl(
	\h_0,[(0,\u^\top)]
	\bigr)
	&\mapsto
	\bigl(
	P\h_0,[(0,\u^\top P^\top)]
	\bigr),
	\qquad
	P\in S_n.
\end{align}
On the entire quotient, convolutional equivariance restricts this action to
\begin{align}
	\bigl(
	\x,[(\delta K_1,\u^\top)]
	\bigr)
	&\mapsto
	\bigl(
	g\x,[(g\delta K_1,\u^\top P_g^\top)]
	\bigr),
	\qquad
	g\in D_4.
	\label{eq:alexnet_d4_quotient_action}
\end{align}
The distribution of $\h_0$ is invariant under $S_n$, while the Gaussian
input distribution is invariant under~$D_4$.  Canceling the corresponding
actions on the input therefore gives the composite actions
\begin{align}
	[(0,\u^\top)]
	&\mapsto
	[(0,\u^\top P^\top)],
	\qquad
	P\in S_n,
	\nonumber\\
	[(\delta K_1,\u^\top)]
	&\mapsto
	[(g\delta K_1,\u^\top P_g^\top)],
	\qquad
	g\in D_4.
	\label{alexnet_inv}
\end{align}
The first preserves the restrictions of the Hessian and the GN form to the
subspace represented by $(0,\u^\top)$, while the second preserves both
forms on the entire quotient.  Both transformations arise from orthogonal
actions on
$\mathbb R^{3\times3}\oplus\mathbb R^{1\times n}$ equipped with the
standard Euclidean inner product and preserve
$\operatorname{span}\{(K_1,-A_{L:2})\}$.  Hence, they descend to orthogonal
actions on the quotient.

\paragraph{Resolutive structure.}
A Gaussian moment calculation as in \eqref{second_diff_setting_gauss} gives
the second differential in the quotient coordinates as
\begin{align}
	d_W^2\ploss(W^+)
	\bigl[(\delta K_1,\u^\top)\bigr]
	&=
	\frac1{2s^2}
	\left(
	\lVert\convmap(\delta K_1)\rVert_F^2
	+
	\lVert\bones_n^\top\convmap(\delta K_1)\rVert_2^2
	\right)
	\nonumber\\
	&\quad+
	\frac1{\pi s^2}
	\left(
	\operatorname{tr}\!\left(\convmap(\delta K_1)^2\right)
	-
	2\left\lVert
	\operatorname{diag}\!\left(\convmap(\delta K_1)\right)
	\right\rVert_2^2
	+
	\left(
	\operatorname{tr}\!\left(\convmap(\delta K_1)\right)
	\right)^2
	\right)
	\nonumber\\
	&\quad+
	2\u^\top
	\left[
	\frac12\convmap(\delta K_1)^\top\bones_n
	+
	\left(
	\frac12-\frac1\pi
	\right)
	\operatorname{diag}\!\left(\convmap(\delta K_1)\right)
	+
	\frac{
		\operatorname{tr}\!\left(\convmap(\delta K_1)\right)
	}{\pi}\bones_n
	\right]
	\nonumber\\
	&\quad+
	s^2
	\left(
	1-\frac1\pi
	\right)
	\lVert\u\rVert_2^2
	+
	\frac{s^2}{\pi}
	\left(
	\u^\top\bones_n
	\right)^2.
	\label{eq:alexnet_effective_second_differential}
\end{align}
On the subspace represented by $(0,\u^\top)$, this reduces to
\begin{align}
	d_W^2\ploss(W^+)[(0,\u^\top)]
	&=
	s^2
	\left(
	1-\frac1\pi
	\right)
	\lVert\u\rVert_2^2
	+
	\frac{s^2}{\pi}
	\left(
	\u^\top\bones_n
	\right)^2.
\end{align}
Under the first action in \eqref{alexnet_inv},
$\mathbb R^n
=
\operatorname{span}\{\bones_n\}
\oplus
\bones_n^\perp$
is the decomposition into the trivial and standard representations of
$S_n$.  The preceding display gives the corresponding eigenvalues
\begin{align}
	\lambda_{\mathrm{triv}}
	&=
\left(s^2+\frac{n}{s^2}\right)\left(1+\frac{n-1}{\pi}\right),
	\quad
	\lambda_{\mathrm{std}}
	=
	s^2
	\left(
	1-\frac1\pi
	\right).
	\label{eq:alexnet_sn_eigenvalues}
\end{align}
Restoring $\delta K_1$ enlarges the quotient by only eight dimensions.
Cauchy interlacing, see \pref{table:n3_symmetry_breaking}, therefore implies
that $\lambda_{\mathrm{std}}$ retains multiplicity at least 	$(n-1)-8
	=
	n-9
	=
	d^2-9$. The pinned eigenspace can be identified explicitly.  Define the subspace of $\RR^n$ coupled to first-layer perturbations,
\begin{align}
U_c
	&=
	\left\{
	\frac12\convmap(\delta K_1)^\top\bones_n
	+
	\left(
	\frac12-\frac1\pi
	\right)
	\operatorname{diag}\!\left(\convmap(\delta K_1)\right)
	+
	\frac{
		\operatorname{tr}\!\left(\convmap(\delta K_1)\right)
	}{\pi}\bones_n
	\ \middle|\
	\delta K_1\in\mathbb R^{3\times3}
	\right\}.
	\label{eq:alexnet_coupling_space}
\end{align}
For the zero-padded $3\times3$ convolution and $d\geq3$, this space has
dimension nine and contains $\bones_n$.  The mixed term in
\eqref{eq:alexnet_effective_second_differential} vanishes on
$U_c^\perp$, which is also orthogonal to $\bones_n$.  Consequently, every
vector in $U_c^\perp$ is an eigenvector with eigenvalue
$\lambda_{\mathrm{std}}$, and	$\dim U_c^\perp
	=
	n-9$. 
	The full quotient therefore splits as
\begin{align}
	U_c^\perp
	\oplus
	\frac{
		\mathbb R^{3\times3}\oplus U_c
	}{
		\operatorname{span}\{(K_1,-A_{L:2})\}
	}.
	\label{eq:alexnet_resolutive_split}
\end{align}

We now pass from the restricted $S_n$ structure to the $D_4$-isotypic
decomposition of the full quotient. Using the standard irreducible representations $\mathrm A_1,\mathrm A_2,\mathrm B_1,\mathrm B_2,\mathrm E$ of $D_4$ \cite{mulliken1955report}, the character $\chi_d$ of its permutation representation on $\mathbb R^n$ decomposes as
\begin{align}
	\chi_d
	&=
	\frac{(d+1)(d+3)}{8}\chi_{\mathrm A_1}
	+
	\frac{(d-1)(d-3)}{8}\chi_{\mathrm A_2}
	+
	\frac{d^2-1}{8}
	\left(
	\chi_{\mathrm B_1}+\chi_{\mathrm B_2}
	\right)
	+
	\frac{d^2-1}{4}\chi_{\mathrm E},
	\qquad
	d\text{ odd},
	\nonumber\\
	\chi_d
	&=
	\frac{d(d+2)}{8}\chi_{\mathrm A_1}
	+
	\frac{d(d-2)}{8}\chi_{\mathrm A_2}
	+
	\frac{d(d-2)}{8}\chi_{\mathrm B_1}
	+
	\frac{d(d+2)}{8}\chi_{\mathrm B_2}
	+
	\frac{d^2}{4}\chi_{\mathrm E},
	\qquad
	d\text{ even}.
\end{align}
The character of the action on the perturbations of the kernel in the
first convolutional layer is
\begin{align}
	\chi_3
	&=
	(9,1,1,3,3)
	=
	3\chi_{\mathrm A_1}
	+
	\chi_{\mathrm B_1}
	+
	\chi_{\mathrm B_2}
	+
	2\chi_{\mathrm E}.
\end{align}
The quotient in \eqref{eq:alexnet_kernel_quotient} removes one copy of
$\mathrm A_1$.  Consequently, its isotypic decomposition is
{\small
\begin{align}
	&
	\begin{cases}
		\displaystyle
		\frac{d^2+4d+19}{8}\mathrm A_1
		\oplus
		\frac{d^2-4d+3}{8}\mathrm A_2
		\oplus
		\frac{d^2+7}{8}
		(\mathrm B_1\oplus\mathrm B_2)
		\oplus
		\frac{d^2+7}{4}\mathrm E,
		& d\text{ odd},
		\\[0.4cm]
		\displaystyle
		\frac{d^2+2d+16}{8}\mathrm A_1
		\oplus
		\frac{d^2-2d}{8}\mathrm A_2
		\oplus
		\frac{d^2-2d+8}{8}\mathrm B_1
		\oplus
		\frac{d^2+2d+8}{8}\mathrm B_2
		\oplus
		\frac{d^2+8}{4}\mathrm E,
		& d\text{ even}.
	\end{cases}
	\label{eq:alexnet_d4_isotypic_decomposition}
\end{align}
}
\par\noindent To describe the resolutive structure within the full $D_4$-isotypic
decomposition, we determine how the pinned eigenspace $U_c^\perp$ and the
remaining summand decompose.  
The map defining $U_c$ is $D_4$ equivariant
and, since $\dim U_c=9$, injective.  Hence,
	$U_c
	\cong
	3\mathrm A_1
	\oplus
	\mathrm B_1
	\oplus
	\mathrm B_2
	\oplus
	2\mathrm E$.
Consequently,  the remaining summand in
\eqref{eq:alexnet_resolutive_split} has dimension seventeen and decomposes
as
\begin{align}
	\frac{
		\mathbb R^{3\times3}\oplus U_c
	}{
		\operatorname{span}\{(K_1,-A_{L:2})\}
	}
	&\cong
	5\mathrm A_1
	\oplus
	2\mathrm B_1
	\oplus
	2\mathrm B_2
	\oplus
	4\mathrm E.
	\label{eq:alexnet_residual_d4}
\end{align}
Comparing this decomposition with
\eqref{eq:alexnet_d4_isotypic_decomposition} shows that $U_c^\perp$
contains all but five copies of $\mathrm A_1$, every copy of
$\mathrm A_2$, all but two copies of each of $\mathrm B_1$ and
$\mathrm B_2$, and all but four copies of $\mathrm E$.  All these copies
carry the common eigenvalue $\lambda_{\mathrm{std}}$.  
The generic multiplicity structures obtained from $D_4$ alone and after
taking the resolutive structure into account are
\begin{align}
	{\renewcommand{\arraystretch}{2}
		\begin{array}{c|c|c}
			D_4,\ d\text{ odd}
			&
			D_4,\ d\text{ even}
			&
			D_4\text{ with the resolutive structure}
			\\ \hline
			\displaystyle
			2^{\left[\frac{d^2+7}{4}\right]},
			1^{\left[\frac{d^2+9}{2}\right]}
			&
			\displaystyle
			2^{\left[\frac{d^2+8}{4}\right]},
			1^{\left[\frac{d^2+8}{2}\right]}
			&
			n-9,\,
			2^{[4]},\,
			1^{[9]}
	\end{array}}.
\end{align}
Thus, $D_4$ alone permits order $d^2$ distinct eigenvalues, whereas the
resolutive structure permits at most fourteen.  When $d=3$, the
multiplicity $n-9$ is absent, leaving at most thirteen distinct
eigenvalues. The restricted $S_n$ spectrum also locates the remaining eigenvalues
relative to $\lambda_{\mathrm{std}}$ and $\lambda_{\mathrm{triv}}$.  For
the $\mathrm B_1$ and $\mathrm B_2$ components, Cauchy interlacing~gives
\begin{align}
	\mu_{\mathrm B_1}^-
	&\leq
	\lambda_{\mathrm{std}}
	\leq
	\mu_{\mathrm B_1}^+,
	&
	\mu_{\mathrm B_2}^-
	&\leq
	\lambda_{\mathrm{std}}
	\leq
	\mu_{\mathrm B_2}^+.
	\label{eq:alexnet_B_interlacing}
\end{align}
If
$\mu_1^{\mathrm E}\leq\cdots\leq\mu_4^{\mathrm E}$
are the four eigenvalues of type $\mathrm E$, each occurring with
multiplicity two, then
\begin{align}
	\mu_1^{\mathrm E}
	&\leq
	\mu_2^{\mathrm E}
	\leq
	\lambda_{\mathrm{std}}
	\leq
	\mu_3^{\mathrm E}
	\leq
	\mu_4^{\mathrm E}.
	\label{eq:alexnet_E_interlacing}
\end{align}
Finally, if
$\mu_1^{\mathrm A_1}\leq\cdots\leq\mu_5^{\mathrm A_1}$
are the eigenvalues of type $\mathrm A_1$, then
\begin{align}
	\mu_1^{\mathrm A_1}
	\leq
	\lambda_{\mathrm{std}}
	\leq
	\mu_3^{\mathrm A_1},
	\qquad
	\mu_2^{\mathrm A_1}
	\leq
	\lambda_{\mathrm{std}}
	\leq
	\mu_4^{\mathrm A_1},
	\qquad
	\mu_3^{\mathrm A_1}
	\leq
	\lambda_{\mathrm{triv}}
	\leq
	\mu_5^{\mathrm A_1}.
	\label{eq:alexnet_A1_interlacing}
\end{align}

\paragraph{Asymptotic structure on the quotient.}
The asymptotic orders follow directly from
\eqref{eq:alexnet_effective_second_differential}.  Assume throughout that
$s=\Theta(1)$ and consider the seventeen-dimensional summand in
\eqref{eq:alexnet_resolutive_split} with respect to the invariant Euclidean
metric on the quotient.  The subspace
$\operatorname{span}\{(K_\delta,0),(0,\bones_n^\top)\}$ is contained in
the $\mathrm A_1$-isotypic component.  Before quotienting, the compression
of the representing operator to this subspace, relative to the ordered
basis $(K_\delta,0)$ and $(0,n^{-1/2}\bones_n^\top)$, is
\begin{align}
	\left(
	1+\frac{n-1}{\pi}
	\right)
	\begin{pmatrix}
		n/s^2 & \sqrt n\\
		\sqrt n & s^2
	\end{pmatrix}.
	\label{eq:alexnet_center_trivial_block}
\end{align}
In this ordered basis, the kernel is spanned by the coordinate vector
$(s,-\sqrt n/s)^\top$ and corresponds to the scaling direction removed in
\eqref{eq:alexnet_kernel_quotient}.  Its orthogonal complement within the
two-dimensional subspace projects onto the line spanned by the unit vector
\begin{align}
	\v_0
	&=
	\frac{
		\left(
		\sqrt n\,s^{-1}K_\delta,\,
		sn^{-1/2}\bones_n^\top
		\right)
	}{
		\sqrt{n/s^2+s^2}
	}.
\end{align}
On this unit vector, the second differential takes the value
\begin{align}
	d_W^2\ploss(W^+)[\v_0]
	&=
	\left(
	1+\frac{n-1}{\pi}
	\right)
	\left(
	\frac{n}{s^2}+s^2
	\right)
	=
	\Theta(n^2)
	=
	\Theta(d^4).
	\label{eq:alexnet_distinguished_quotient_value}
\end{align}
This gives a lower bound of order $n^2$ for the largest eigenvalue on the
seventeen-dimensional summand.  To determine the number of eigenvalues at
this scale and the scales of the remaining eigenvalues, we estimate the
restriction to $\operatorname{span}\{\v_0\}^{\perp}$ within this summand
and its mixed terms with $\operatorname{span}\{\v_0\}$.

The seventeen-dimensional summand is the orthogonal direct sum of
$\operatorname{span}\{\v_0\}$, the classes $[(\delta K_1,0)]$ with
$\delta K_1\perp K_\delta$, and the classes $[(0,\u^\top)]$ with
$\u\in U_c\cap\bones_n^\perp$.  These three summands have dimensions
$1$, $8$, and $8$, respectively.  The first carries $\mathrm A_1$, while
each of the other two carries
$2\mathrm A_1\oplus\mathrm B_1\oplus\mathrm B_2\oplus2\mathrm E$.

If $\delta K_1\perp K_\delta$, then
$\operatorname{tr}(\convmap(\delta K_1))=0$,
$\operatorname{diag}(\convmap(\delta K_1))=0$, and
$\lvert\operatorname{tr}(\convmap(\delta K_1)^2)\rvert
\leq\lVert\convmap(\delta K_1)\rVert_F^2$.  Index the kernel coefficients
by offsets $\a=(a_1,a_2)\in\{-1,0,1\}^2$.  The coefficient at offset
$\a$ occurs in exactly
$(d-\lvert a_1\rvert)(d-\lvert a_2\rvert)$ entries of the convolution
matrix, and the sets of entries associated with distinct offsets are
disjoint.  Moreover, each entry of
$\bones_n^\top\convmap(\delta K_1)$ is a sum of at most eight kernel
coefficients.  Consequently,
\begin{align}
	\frac{n}{4}\lVert\delta K_1\rVert_F^2
	&\leq
	\lVert\convmap(\delta K_1)\rVert_F^2
	\leq
	n\lVert\delta K_1\rVert_F^2,
	&
	\left\lVert
	\bones_n^\top\convmap(\delta K_1)
	\right\rVert_2^2
	&\leq
	8n\lVert\delta K_1\rVert_F^2.
	\label{eq:alexnet_convolution_counting_bounds}
\end{align}
Since $1/2-1/\pi>0$, there are constants $c_K,C_K>0$, independent of
$d$, such that
\begin{align}
	c_Kn\lVert\delta K_1\rVert_F^2
	&\leq
	d_W^2\ploss(W^+)[(\delta K_1,0)]
	\leq
	C_Kn\lVert\delta K_1\rVert_F^2.
	\label{eq:alexnet_kernel_uniform_bounds}
\end{align}
The restriction to the second eight-dimensional summand satisfies
\begin{align}
	d_W^2\ploss(W^+)[(0,\u^\top)]
	&=
	\lambda_{\mathrm{std}}\lVert\u\rVert_2^2,
	\qquad
	\u\in U_c\cap\bones_n^\perp.
	\label{eq:alexnet_tail_uniform_bound}
\end{align}

The mixed term between the two eight-dimensional summands is
$\frac12\u^\top\convmap(\delta K_1)^\top\bones_n$.  For every coordinate
$(i,j)$ with $2\leq i,j\leq d-1$, the corresponding entry of
$\convmap(\delta K_1)^\top\bones_n$ equals the sum of the kernel
coefficients.  Subtracting this value from every coordinate does not change
its inner product with $\u$, since $\u\perp\bones_n$.  The resulting
vector is supported on the coordinates satisfying
$i\in\{1,d\}$ or $j\in\{1,d\}$, of which there are $4d-4$, and its entries
are $O(\lVert\delta K_1\rVert_F)$.  Hence, for some constant
$C_{\mathrm{mix}}>0$ independent of $d$,
\begin{align}
	\left|
	d_W^2\ploss(W^+)
	\left[
	(\delta K_1,0),(0,\u^\top)
	\right]
	\right|
	&\leq
	C_{\mathrm{mix}}\sqrt d\,
	\lVert\delta K_1\rVert_F
	\lVert\u\rVert_2
	=
	C_{\mathrm{mix}}n^{1/4}\,
	\lVert\delta K_1\rVert_F
	\lVert\u\rVert_2.
	\label{eq:alexnet_mixed_uniform_bound}
\end{align}

For $\v\in\operatorname{span}\{\v_0\}$ and
$\u\in U_c\cap\bones_n^\perp$,
$d_W^2\ploss(W^+)[\v,(0,\u^\top)]=0$.
By \eqref{eq:alexnet_effective_second_differential},
\eqref{eq:alexnet_convolution_counting_bounds}, and the Cauchy--Schwarz
inequality, the mixed term between $\operatorname{span}\{\v_0\}$ and the
first eight-dimensional summand satisfies, for some constant $C_0>0$
independent of $d$,
\begin{align}
	\left|
	d_W^2\ploss(W^+)
	\left[
	\v,(\delta K_1,0)
	\right]
	\right|
	&\leq
	C_0n
	\lVert\v\rVert_2
	\lVert\delta K_1\rVert_F,
	\qquad
	\v\in\operatorname{span}\{\v_0\}.
	\label{eq:alexnet_v0_mixed_bound}
\end{align}
Relative to the diagonal quadratic form
$n^2\lVert\v\rVert_2^2+
n\lVert\delta K_1\rVert_F^2+
\lVert\u\rVert_2^2$,
the normalized mixed terms in
\eqref{eq:alexnet_mixed_uniform_bound} and
\eqref{eq:alexnet_v0_mixed_bound} have orders $O(n^{-1/4})$ and
$O(n^{-1/2})$, respectively.
Young's inequality therefore gives constants
$c_{\mathrm{res}},C_{\mathrm{res}}>0$, independent of $d$, such that, for
all sufficiently large $d$,
\begin{align}
	c_{\mathrm{res}}
	\left(
	n^2\lVert\v\rVert_2^2
	+
	n\lVert\delta K_1\rVert_F^2
	+
	\lVert\u\rVert_2^2
	\right)
	&\leq
	d_W^2\ploss(W^+)
	\left[
	\v+[(\delta K_1,\u^\top)]
	\right]
	\nonumber\\
	&\leq
	C_{\mathrm{res}}
	\left(
	n^2\lVert\v\rVert_2^2
	+
	n\lVert\delta K_1\rVert_F^2
	+
	\lVert\u\rVert_2^2
	\right),
	\label{eq:alexnet_residual_uniform_bounds}
\end{align}
for $\v\in\operatorname{span}\{\v_0\}$,
$\delta K_1\perp K_\delta$, and
$\u\in U_c\cap\bones_n^\perp$.  With respect to the invariant Euclidean metric on the quotient, the
min--max principle applied to the preceding two-sided comparison gives the
asymptotic spectral structure summarized in
\pref{table:alexnet_parameter_spectrum}. Since the operator representing
the second differential and the comparison matrix
$\operatorname{diag}(n^2,nI_8,I_8)$ both preserve the
$D_4$-isotypic components, the same argument on each component gives the
multiplicity structure in the table.

Let $H_{\mathrm{quo}}$ denote the
operator representing the descended second differential with respect to
the invariant Euclidean metric on the quotient. Realize the parameter
quotient as
$\ker d_WN^{\mathrm{AlexNet},\circ}(\cdot;W^+)^\perp$, and denote the
isomorphism induced by $\Phi$ on this subspace by $\widehat{\Phi}$. The
restriction $H$ of the ordinary Euclidean Hessian then satisfies
$H=\widehat{\Phi}^*H_{\mathrm{quo}}\widehat{\Phi}$, and the parameter
metric transported to the quotient is represented by
$(\widehat{\Phi}\widehat{\Phi}^*)^{-1}$.

Before quotienting, the parameter perturbations split orthogonally into
$\delta K_1$ and the perturbations of the remaining layers. Relative to
this decomposition, $\Phi=I_9\oplus dA_{L:2}$ and
$\Phi\Phi^*=I_9\oplus dA_{L:2}dA_{L:2}^*$. Since $I_9$ commutes with the action
on $\mathbb R^{3\times3}$, the condition
$P_g^\top dA_{L:2}dA_{L:2}^*P_g=dA_{L:2}dA_{L:2}^*$ for every $g\in D_4$ is sufficient
for the transported metric to be $D_4$ invariant. Surjectivity of $dA_{L:2}$
implies that $dA_{L:2}dA_{L:2}^*$ is positive definite, but does not imply this
invariance. For example, when $L=3$, take $A_{3:2}=A_3A_2$, where $A_2>0$ is an
invertible square fully connected map and
$A_3=s^{-1}\bones_n^\top A_2^{-1}$. Then
$dA_{3:2}dA_{3:2}^*=A_2^\top A_2+\lVert A_3\rVert_F^2I_n$, which does not commute
with some $P_g$ for generic $A_2$. Thus, $D_4$ invariance of the
transported metric is not guaranteed, and neither is the preservation of
the $D_4$-isotypic components by $H$. Regardless, the min--max principle
gives, for $1\leq j\leq n+8$,
\begin{align}
	\sigma_{\min}(\widehat{\Phi})^2\lambda_j(H_{\mathrm{quo}})
	&\leq
	\lambda_j(H)
	\leq
	\sigma_{\max}(\widehat{\Phi})^2\lambda_j(H_{\mathrm{quo}}).
	\label{eq:alexnet_global_singular_value_bound}
\end{align}
For every $W^+\in\cG_{\mathrm{alexnet}}^+$, the ordinary Euclidean
Hessian has kernel dimension $p-n-8$ and $n+8$ positive eigenvalues.
Membership in $\cG_{\mathrm{alexnet}}^+$ makes $\widehat{\Phi}$ invertible
for each fixed $d$, but does not control its singular values uniformly in
$d$. Since $\Phi=I_9\oplus dA_{L:2}$ before quotienting, it suffices that the
largest and smallest nonzero singular values of $dA_{L:2}$ be bounded above
and away from zero, respectively, uniformly in $d$. Under this condition,
\eqref{eq:alexnet_global_singular_value_bound} gives the same asymptotic
orders in parameter space.

\begin{table}[ht]
	\centering
	{\small
		\setlength{\tabcolsep}{4pt}
		\renewcommand{\arraystretch}{1.15}
		\begin{tabular}{c|p{.5\textwidth}|c}
			\multicolumn{3}{c}{
				\textbf{Spectrum on the parameter quotient}
			}
			\\[-1pt]
			\hline
			scale of eigenvalues
			&
			multiplicities forced by symmetry
			&
			total out of $n+8$
			\\
			\hline
			$\Theta(1)$
			&
			\begin{tabular}[t]{@{}l@{}}
				one eigenvalue of multiplicity $n-9$ \\
				two eigenvalues of multiplicity $2$ \\
				four simple eigenvalues
			\end{tabular}
			&
			$n-1$
			\\
			\hline
			$\Theta(d^2)$
			&
			\begin{tabular}[t]{@{}l@{}}
				two eigenvalues of multiplicity $2$ \\
				four simple eigenvalues
			\end{tabular}
			&
			$8$
			\\
			\hline
			$\Theta(d^4)$
			&
			one simple eigenvalue
			&
			$1$
			\\
			\hline
			\multicolumn{3}{c}{
				The kernel in parameter space has dimension $p-n-8$.
			}
		\end{tabular}
	}
	\caption{Asymptotic spectral structure on the parameter quotient.}
	\label{table:alexnet_parameter_spectrum}
\end{table}

\begin{figure}[h]
\centering
 \includegraphics[width=.7\textwidth]{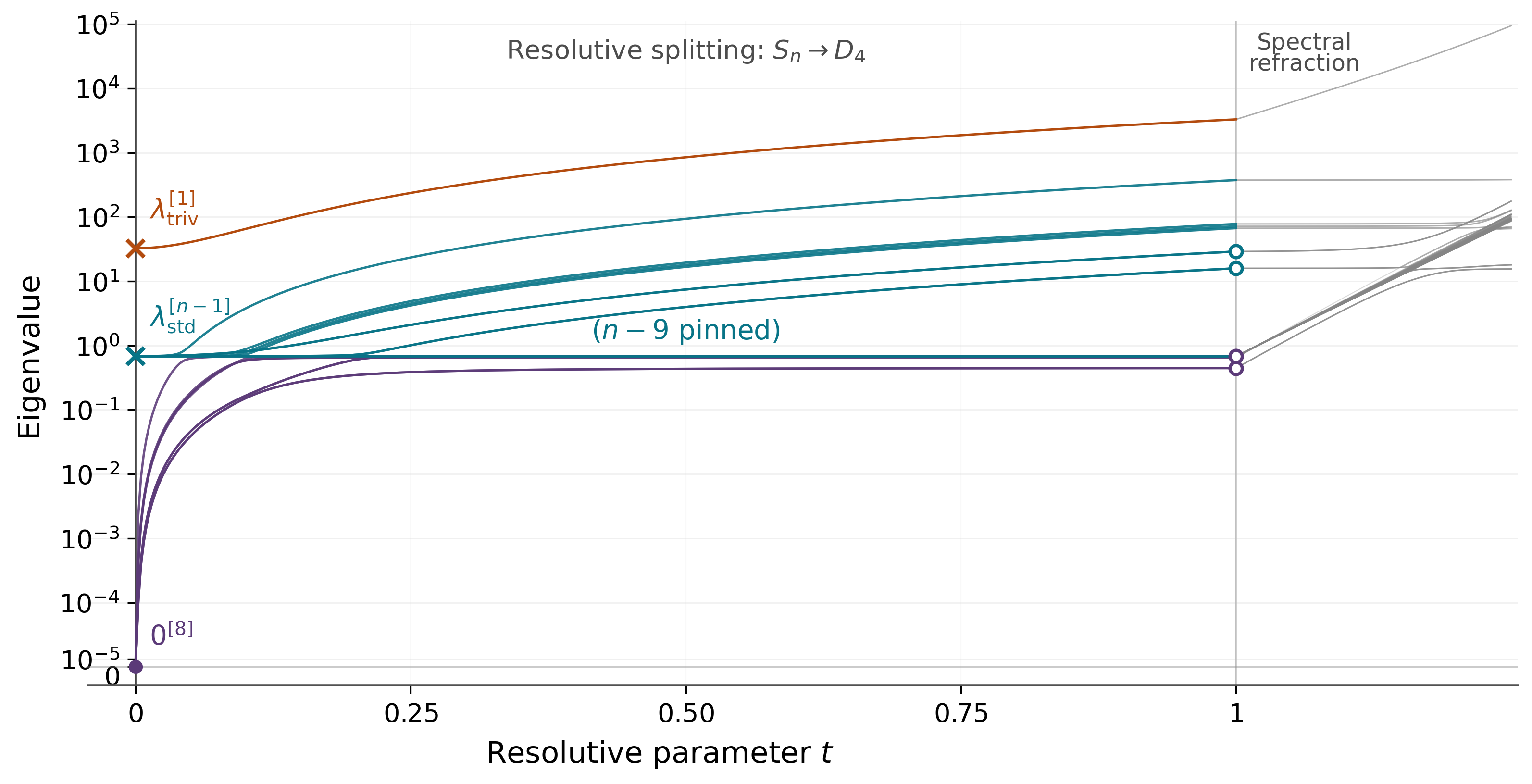}
	
\caption{
	Distinct eigenvalue branches of the Hessian on the parameter quotient at a
global minimum of a thin AlexNet, shown for $d=10$, $n=d^2=100$, and $s=1$.
The parameter $t\in[0,1]$ rescales $\delta K_1\mapsto t\,\delta K_1$.
At each $t$, the spectrum is computed modulo
$\operatorname{span}\{(K_\delta,-t\bones_n^\top)\}$.
At $t=0$, the positive spectrum is	$\lambda_{\mathrm{std}}^{[n-1]}\oplus\lambda_{\mathrm{triv}}$, with
	$\lambda_{\mathrm{std}}=1-\frac{1}{\pi}$ and $\lambda_{\mathrm{triv}}=1-\frac{1}{\pi}+\frac{n}{\pi}$.
	Eight branches emanate from zero, eight split from 	$\lambda_{\mathrm{std}}$, and the branch from
	$\lambda_{\mathrm{triv}}$ evolves into the largest eigenvalue, while 	$n-9=91$ eigenvalues remain exactly pinned at $\lambda_{\mathrm{std}}$. The  right-hand region shows spectral refraction as the	$D_4$-compatible quotient metric is deformed to the parameter-induced metric. 	Hollow circles mark the four $D_4$-forced multiplicity-two eigenvalues. Their subsequent splitting is below	the scale in the figure.
}

\end{figure}

\subsection{GNNs on Erd\H{o}s--R\'enyi graphs}
\label{sec:gnns}

We next consider a thin version of the Erd\H{o}s--R\'enyi GNN setting of
\citet{yehudai2021local}. For simplicity, we omit biases and the fully
connected suffix layers, and set the graph size, feature dimension, and
hidden width equal. The corresponding full analysis, including the
additional architectural components and the more involved spectral calculations, is given in \cite{arjevani2026dsb_tf}, which also treats the broader formulation within the standard message-passing framework
of \citet{gilmer2017neural,battaglia2018relational}.

Fix $n\geq3$, a depth $L\geq1$, and an edge probability $\rho\in(0,1)$. Let $A\sim\dd^{\mathrm{ER}}=G(n,\rho)$ be the adjacency matrix of an Erd\H{o}s--R\'enyi graph. We take the student and teacher to have common width and choose features and teacher weights that preserve simultaneous relabeling of vertices and feature coordinates. For $0<\beta<1$, define $X_0=(1-\beta)I_n+\beta\bones_n\bones_n^\top$, and
\begin{align}
	X_\ell=\relu\left(X_{\ell-1}W_{\ell,\mathrm{self}}+AX_{\ell-1}W_{\ell,\mathrm{nbr}}\right),\quad \ell=1,\ldots,L,\qquad N^{\mathrm{GNN},\circ}(A;W)=\bones_n^\top X_L\q.
	\label{eq:gnn-network}
\end{align}
Since $X_0$ is fixed throughout, we suppress it from the notation $N^{\mathrm{GNN},\circ}(A;W)$. Here $W_{\ell,\mathrm{self}},W_{\ell,\mathrm{nbr}}\in\RR^{n\times n}$ and $\q\in\RR^n$, so $\dim\weightspace=2Ln^2+n$. We take
\begin{align}
	W_{\ell,\mathrm{self}}^\star=W_{\ell,\mathrm{nbr}}^\star=I_n,\qquad \ell=1,\ldots,L,\qquad \q^\star=n^{-1}\bones_n.
	\label{eq:gnn-teacher-weights}
\end{align}
The resulting teacher satisfies
\begin{align}
	N^{\mathrm{GNN},\circ}(A;W^\star)=\frac1n\bones_n^\top(I_n+A)^LX_0\bones_n=\frac{1+(n-1)\beta}{n}\bones_n^\top(I_n+A)^L\bones_n.
	\label{eq:gnn-teacher-output}
\end{align}
The expected squared loss is then
\begin{align}
	\ploss(W)=\EE_{A\sim\dd^{\mathrm{ER}}}\left[\left(N^{\mathrm{GNN},\circ}(A;W)-N^{\mathrm{GNN},\circ}(A;W^\star)\right)^2\right].
	\label{eq:gnn-loss}
\end{align}
At $W^\star$, $\ploss$ is locally $C^2$, and its second differential equals the GN form.

\paragraph{Symmetries, kernel and spectrum.}
For every permutation matrix $P$, consider the (noninternal) structural symmetry $\grpstyle{s}^{\mathrm{GNN}}(P)=(\grpstyle{s}^{\mathrm{GNN}}_Z(P),\grpstyle{s}^{\mathrm{GNN}}_\Theta(P))$ given by
\begin{align}
	\grpstyle{s}^{\mathrm{GNN}}_Z(P)(A,X_0,y)&=(PAP^\top,PX_0P^\top,y),\notag\\
	\grpstyle{s}^{\mathrm{GNN}}_\Theta(P)W&=\left(\bigl(PW_{\ell,\mathrm{self}}P^\top,PW_{\ell,\mathrm{nbr}}P^\top\bigr)_{\ell=1}^L,P\q\right).
	\label{eq:gnn-structural-symmetry}
\end{align}
For the labels generated by the teacher above, the joint data law has the distributional symmetry
\begin{align}
	\grpstyle{d}^{\mathrm{ER}}(P):(A,X_0,y)\mapsto(PAP^\top,PX_0P^\top,y),
	\label{eq:gnn-distributional-symmetry}
\end{align}
since $A\overset{\mathrm d}=PAP^\top$, $PX_0P^\top=X_0$, and the teacher output in \eqref{eq:gnn-teacher-output} is unchanged by $A\mapsto PAP^\top$. Together, these yield the composite invariance of the objective $\grpstyle{c}^{\mathrm{GNN}}(P):=\grpstyle{s}^{\mathrm{GNN}}_\Theta(P)$, so $\ploss(\grpstyle{c}^{\mathrm{GNN}}(P)W)=\ploss(W)$. The teacher weights $W^\star$ are stabilized by $\left\langle\grpstyle{c}^{\mathrm{GNN}}(P):P\in S_n\right\rangle\cong S_n$. Consequently, $d_W^2\ploss(W^\star)$ is invariant under this $S_n$ action.

\begin{remark}
	The same composite invariance holds for random features whenever the
	joint law of $(A,X_0)$ is invariant under the $S_n$ action in
	\eqref{eq:gnn-structural-symmetry}.
\end{remark}
For a perturbation $\delta W$, set
\begin{align}
	\delta\bxi=\frac1n\left(\sum_{\ell=1}^L\delta W_{\ell,\mathrm{self}}-\sum_{\ell=1}^L\delta W_{\ell,\mathrm{nbr}}\right)\bones_n,\qquad \delta\boldsymbol{\eta}=\frac1n\left(\sum_{\ell=1}^L\delta W_{\ell,\mathrm{nbr}}\right)\bones_n+\delta\q.
	\label{eq:gnn-effective-coordinates}
\end{align}
A direct calculation gives
\begin{align}
	d_WN^{\mathrm{GNN},\circ}(A;W^\star)[\delta W]=\bigl(X_0(I_n+A)^{L-1}\bones_n\bigr)^\top\delta\bxi+\bigl(X_0(I_n+A)^L\bones_n\bigr)^\top\delta\boldsymbol{\eta}.
	\label{eq:gnn-model-differential}
\end{align}
The map $\delta W\mapsto(\delta\bxi,\delta\boldsymbol{\eta})$ is surjective onto the effective coordinate space $\RR^n\oplus\RR^n$. The composite $S_n$ action induces the diagonal permutation action on this space, whose isotypic decomposition is
\begin{align}
	\RR^n\oplus\RR^n\cong2\fs_{(n)}\oplus2\fs_{(n-1,1)}.
	\label{eq:gnn-effective-isotypic}
\end{align}
Quotienting the effective coordinate space by the kernel of the descended GN form gives the Hessian quotient. The resulting kernel, symmetry-forced multiplicities, and asymptotic scales are summarized in \pref{table:gnn-er-spectrum}. For a balanced stochastic block model with even $n$, the same analysis applies mutatis mutandis after replacing the $S_n$ symmetry by the wreath product $S_{n/2}\wr S_2$.

\begin{table}[t]
	\centering
	{\small
		\setlength{\tabcolsep}{4pt}
		\begin{tabular}{@{}ccccc@{}}
			\toprule
			Depth
			&
			Multiplicity pattern
			&
			$\fs_{(n)}$
			&
			$\fs_{(n-1,1)}$
			&
			$\dim\ker d_W^2\ploss$
			\\
			\midrule
			$L=1$
			&
			$(2n^2-1)|_0,\ (n-1),\ 1^{[2]}$
			&
			$\Theta(n^5),\ \Theta(1)$
			&
			$\Theta(n)$
			&
			$2n^2-1$
			\\
			$L=2$
			&
			$(4n^2-n)|_0,\ (n-1)^{[2]},\ 1^{[2]}$
			&
			$\Theta(n^7),\ \Theta(n^2)$
			&
			$\Theta(n^3),\ \Theta(n^{-1})$
			&
			$4n^2-n$
			\\
			$L\geq3$
			&
			$(2Ln^2-n)|_0,\ (n-1)^{[2]},\ 1^{[2]}$
			&
			$\Theta(n^{2L+3}),\ \Theta(n^{2L-2})$
			&
			$\Theta(n^{2L-1}),\ \Theta(n^{2L-6})$
			&
			$2Ln^2-n$
			\\
			\bottomrule
	\end{tabular}}
	\caption{Asymptotic spectrum of the Euclidean Hessian at $W^\star$ for fixed $\rho,\beta\in(0,1)$ and fixed depth $L$ as $n\to\infty$. Each $\fs_{(n)}$ branch is simple, while each $\fs_{(n-1,1)}$ branch has multiplicity $n-1$. The reference point $W^\star$ belongs to a larger local family of global minima, similarly to the AlexNet setting above. }
	\label{table:gnn-er-spectrum}
\end{table}

\subsection{Transformers: In-context learning} \label{sec:icl_symmetry}
As noted above, the same framework extends to general message-passing architectures, with the analysis applied to their message, aggregation, and update operators. We next apply this perspective to in-context learning in transformers. Following \citet{garg2022can}, we consider prompts comprising in-context examples followed by a query input. For a minimal illustration, we choose a function class exactly realizable by a single-head causal Transformer block with a residual connection.

Fix $K\geq1$. Draw $a$ uniformly from $\{-1,1\}$ and
$x_1,\ldots,x_{K+1}$ independently from $\Unif[-1,1]$,
independently of $a$. Set $f_a(x)=a+x$ and $y_i=f_a(x_i)$.
For $1\leq k\leq K$, the prompt is
$\prompt_k=(x_1,y_1,\ldots,x_k,y_k,x_{k+1})$, and the target is
$y_{k+1}$. Fix $h\geq6$ and orthonormal vectors
$\u_{\mathrm x},\u_{\mathrm c},\u_{\mathrm y}\in\bones_h^\perp$.
Encode each in-context example as an input token and an output token by
\begin{align}
	\h_{2i-1}
	&=x_i\u_{\mathrm x}+\sqrt{1-x_i^2}\,\u_{\mathrm c},
	&
	\h_{2i}
	&=\frac{y_i}{2}\u_{\mathrm y}
	+\sqrt{1-\frac{y_i^2}{4}}\,\u_{\mathrm c},
\end{align}
and encode the query input using the input-token formula with $x_i$
replaced by $x_{k+1}$. Every token is centered and has unit norm, so layer normalization with
fixed $\epsilon>0$ multiplies it by $c=(h^{-1}+\epsilon)^{-1/2}$.
With a residual connection and fixed output projection and readout, 
the model output and a realizing parameter choice are given by
\begin{align}
	N_k(\prompt_k;W)
	&=x_{k+1}+\sum_{j=1}^{2k}\alpha_j(W)\v^\top(c\h_j),
	&
	W^\star
	&=\left(0,0,\frac{2}{c}(-\u_{\mathrm x}+2\u_{\mathrm y})\right),
\end{align}
where $W=(\q,\k,\v)$ and $\alpha_j(W)$ are the softmax weights
obtained from the scalar query and the keys of the preceding $2k$ tokens. At $W^\star$, the model realizes $y_{k+1}$ exactly for every prompt. Every $R\in O(h)$ fixing $\operatorname{span}\{\bones_h,\u_{\mathrm x},
\u_{\mathrm c},\u_{\mathrm y}\}$
pointwise acts simultaneously on $\q,\k,\v$, preserves the loss, and
fixes $W^\star$. Together with joint sign reversal of $\q$ and $\k$,
this gives an orthogonal Hessian-invariance subgroup
$C_2\times O(h-4)$. The expected squared loss vanishes at $W^\star$, and so its Hessian there
equals the GN form. A direct calculation gives the multiplicity structure
$\mset{(3h-3)|_0,1^{[3]}}$.

\subsection{Neural collapse, UFM, and spectral refraction}
\label{sec:neural_collapse_composite_refraction}
We now turn from architectures used in practice to the symmetries and
invariances of a theoretical model motivated by neural collapse (NC)
\cite{papyan2020prevalence}. Concretely, we consider a bias-free version of
the unregularized unconstrained features model (UFM) of
\cite{mixon2022neural}. We recast the setting in an equivalent form that brings the SB framework naturally into view and makes the composite invariance argument transparent.

\paragraph{The unconstrained features model.} Let $k\geq2$, let $k\mid m$,
and put $q=m/k$. Assume $k\leq h<m$.  Let $\weightspace\defeq M(k,h)\times M(h,m)$ be the weight space,
with its standard Euclidean inner product. The network and data
distribution are given by
\begin{align}
	N^{\mathrm{UFM}}(\x;W,F)
	&\defeq WF\x,
	\qquad (W,F)\in\weightspace,
	&
	\dd^{\mathrm{UFM}}
	&\defeq\frac1m\sum_{c=1}^k\sum_{j=1}^q
	\delta_{(\e_{(c-1)q+j},\e_c)}.
\end{align}
Put $\mathbf Y\defeq I_k\otimes\bones_q^\top$. The expected squared loss is
\begin{align}
	\ploss^{\mathrm{UFM}}(W,F)
	&\defeq\EE_{(\x,\y)\sim\dd^{\mathrm{UFM}}}
	\|N^{\mathrm{UFM}}(\x;W,F)-\y\|^2
	=\frac1m\|WF-\mathbf Y\|_F^2.
\end{align}
Its global minimizers are exactly the pairs satisfying $WF=\mathbf Y$.
In particular, every global minimizer interpolates and has $\rk W=k$.

To describe the symmetries, let $\boldsymbol\tau=(\tau_1,\ldots,\tau_k)\in S_q^k$ and $\pi\in S_k$, and set
$P_{\boldsymbol\tau}\defeq\diag(P_{\tau_1},\ldots,P_{\tau_k})$ and
$P_\pi^{(m)}\defeq P_\pi\otimes I_q$. Write $P_\omega(\boldsymbol\tau,\pi)\defeq
P_\pi^{(m)}P_{\boldsymbol\tau}$, abbreviated to $P_\omega$
when the pair is clear. These matrices realize
the standard imprimitive action of
$S_q\wr S_k=S_q^k\rtimes S_k$ on the $m$ samples (see \pref{fig:onan_scott}).
\pref{table:ufm_symmetries} records the relevant architectural, structural, and
distributional symmetries, with the relevant general symmetries in \pref{table:symmetry_examples} repeated
for convenience.

\begin{table}[!htbp]
	\centering
	\renewcommand{\arraystretch}{1.2}
	{\small
		\begin{tabular}{cllll}
			\toprule
			\# & Symmetry type & Definition & Isomorphism class
			& Invariance of $\ploss^{\mathrm{UFM}}$ \\
			\midrule
			
			$\sfa_0^{\mathrm{UFM}}(M_0)$ & Architectural
			& $(FM_0^{-1},M_0\x)$
			& $\GL_m$ & No \\
			
			$\sfa_1^{\mathrm{UFM}}(M_1)$ & Architectural
			& $(WM_1^{-1},M_1F)$
			& $\GL_h$ & Yes \\
			
			$\sfs^{\mathrm{UFM}}(U)$ & Structural
			& $(UW,U\y)$
			& $O(k)$ & No \\
			
			$\sfd_{\mathrm{within}}^{\mathrm{UFM}}(\boldsymbol\tau)$
			& Distributional
			& $P_{\boldsymbol\tau}\x$
			& $S_q^k$ & No \\
			
			$\sfd_{\mathrm{between}}^{\mathrm{UFM}}(\pi)$
			& Distributional
			& $(P_\pi^{(m)}\x,P_\pi\y)$
			& $S_k$ & No \\
			
			$\sfc^{\mathrm{UFM}}$
			& \Integrated
			& $(P_\pi WM_1^{-1},M_1FP_\omega^\top)$
			& $\GL_h\times(S_q\wr S_k)$ & Yes \\
			\bottomrule
	\end{tabular}}
	\caption{Architectural, structural, and distributional
		symmetries of the UFM and the resulting composite invariance.
		Here $M_0\in\GL_m$, $M_1\in\GL_h$, and $U\in O(k)$.}
	\label{table:ufm_symmetries}
\end{table}
Following \pref{sec:failure}, we study Hessian invariance at global
minima directly through composite invariances rather than point stabilizers. The pointwise GN form is
\begin{align}
	d_{(W,F)}^{\mathrm{GN}}\smloss^{\mathrm{UFM}}
	(\x,\y;W,F)[\delta W,\delta F]
	&=2\|(\delta WF+W\delta F)\x\|^2.
\end{align}
At every global minimizer, the Hessian equals the expected GN form, and
we have
\begin{align*}
	d_{(W,F)}^2\ploss^{\mathrm{UFM}}[\delta W,\delta F]
	&=\frac2m\|\delta WF+W\delta F\|_F^2,
	&
	\ker d_{(W,F)}^2\ploss^{\mathrm{UFM}}
	&=\bigl\{(\delta W,\delta F):\delta WF+W\delta F=0\bigr\}.
\end{align*}
In particular, the kernel has dimension $h(k+m)-km$, the number of
independent flat directions along the manifold $WF=\mathbf Y$ of global
minimizers.

Although the Hessian here admits a direct analysis, this setting is
deliberately chosen to bring the SB framework into focus. The structural symmetries of the pointwise GN form compatible with the
distribution group in \pref{table:ufm_symmetries} have a complete
description. For $(\boldsymbol\tau,\pi)\in S_q\wr S_k$, every such symmetry induces
on the quotient by the kernel an action of the form
\begin{align}\label{eq:ufm_gn_quotient_action}
	\delta WF+W\delta F
	&\mapsto
	Q(\delta WF+W\delta F)P_\omega(\boldsymbol\tau,\pi)^\top,
	\qquad Q\in O(k).
\end{align}
The choice
$Q=I_k$ yields a composite invariance subgroup of
$\Obj_{(W,F)}^{[\mathrm{GN}]}$ isomorphic to $S_q\wr S_k$.
Under the Euclidean parameter metric, the largest subgroup whose action
can be chosen orthogonal is
\begin{align}\label{eq:ufm_orthogonal_quotient}
	\bigl\{(\boldsymbol\tau,\pi)\in S_q\wr S_k:
	P_\omega F^\top FP_\omega^\top=F^\top F\bigr\}.
\end{align}
When this condition fails for some $(\boldsymbol\tau,\pi)\in S_q\wr S_k$, spectral refraction may split the symmetry-forced multiplicities in the Euclidean Hessian spectrum.

\paragraph{Neural collapse and spectral refraction.} 
The distinction between orthogonal and nonorthogonal actions of this
Hessian invariance has a precise interpretation in terms of NC1 and NC2. To state their exact forms, define
\begin{align}
	\bmu_c&\defeq\EE_{\dd^{\mathrm{UFM}}}[F\x\mid\y=\e_c],
	&
	\bar\bmu&\defeq\frac1k\sum_{c=1}^k\bmu_c,\notag\\
	\Sigma_W
	&\defeq\frac1k\sum_{c=1}^k
	\EE_{\dd^{\mathrm{UFM}}}
	[(F\x-\bmu_c)(F\x-\bmu_c)^\top\mid\y=\e_c],
	&
	\Sigma_B
	&\defeq\frac1k\sum_{c=1}^k
	(\bmu_c-\bar\bmu)(\bmu_c-\bar\bmu)^\top.
\end{align}
NC1 is exact within-class collapse, that is, $\Sigma_W=0$.
NC2 says that the centered class means form a simplex equiangular tight frame (ETF), equivalently
\begin{align}
	\Sigma_B
	=\frac{\tr(\Sigma_B)}{k-1}
	\Pi_{\Span\{\bmu_c-\bar\bmu:c\in[k]\}},
	\quad
	\rk\Sigma_B=k-1,
\end{align}
where \(\Pi_V\) generally denotes orthogonal projection onto a subspace $V$.
For $k\leq h<m$, the full $S_q\wr S_k$ distribution group
gives orthogonal composite GN invariances if and only if the subgroup
in \eqref{eq:ufm_orthogonal_quotient} is the full group. This occurs precisely when NC1 and NC2 hold and the class means have equal norms. 

This gives a natural hierarchy. Composite GN invariance under the full wreath
group is universal across the global minima. NC1, a proper restriction on global minima when $h>k$, already makes the within-class factor $S_q^k$ act orthogonally, since
$FP_{\boldsymbol\tau}=F$ for every $\boldsymbol\tau\in S_q^k$. Adding NC2
and equal norms extends this orthogonal realization to the between-class factor and hence to the full wreath~group. 

At the NC1 level, restricting the representation in the second row
of \pref{table:ufm_quotient_isotypic} to $S_q^k$ splits each of
its $k$ induced copies into $k$ within-class standard components,
one per class and each of dimension $q-1$. The remaining
$k\triv\oplus k\stdsd_{k-1,1}$ restricts to $k^2\triv$.
When the between-class factor also acts orthogonally, the $k$
within-class components in each copy form an induced representation
of dimension $m-k$, as in the second row.  

At the minimal width $h=k$, if the subgroup in
\eqref{eq:ufm_orthogonal_quotient} is the full wreath group, then
\begin{align}\label{eq:ufm_classifier_gram}
	WW^\top
	&=\alpha \Pi_{\Span\{\bones_k\}}+\beta \Pi_{\bones_k^\perp},
	\qquad \alpha,\beta>0.
\end{align}
This permits an additional orthogonal $O(k-1)$ action on the GN form.
For $h>k$, \eqref{eq:ufm_classifier_gram} is no longer automatic.
If the subgroup in \eqref{eq:ufm_orthogonal_quotient} is the full
wreath group and \eqref{eq:ufm_classifier_gram} holds, the same
enlarged group $\Od{k-1}\times(S_q\wr S_k)$ acts. The third row records its quotient decomposition and multiplicity
structure in either case.

\begin{table}[!htbp]
	\centering
	\small
	\setlength{\tabcolsep}{2pt}
	\begin{tabular}{@{}p{.2\linewidth}p{.44\linewidth}p{.34\linewidth}@{}}
		\toprule
		Space & Isotypic decomposition & Multiplicity structure \\
		\midrule
		$T_{(W,F)}\weightspace$
		&
		$\begin{aligned}[t]
			&\bigl(h(k+m)-km+k\bigr)\triv
			\oplus k\stdsd_{k-1,1}\\
			&\oplus k
			\operatorname{Ind}_{S_q^k\rtimes S_{k-1}}^{S_q\wr S_k}
			\stdsd_{q-1,1}
		\end{aligned}$
		&
		$\begin{aligned}[t]
			&\bigl(h(k+m)-km\bigr)|_0,\\
			&1^{[k]},\quad (k-1)^{[k]},\quad (m-k)^{[k]}
		\end{aligned}$
		\\[3pt]
		\midrule
		$\dfrac{T_{(W,F)}\weightspace}
		{\ker d^2\ploss^{\mathrm{UFM}}}$
		&
		$
			k\triv\oplus k\stdsd_{k-1,1}
			\oplus k
			\operatorname{Ind}_{S_q^k\rtimes S_{k-1}}^{S_q\wr S_k}
			\stdsd_{q-1,1}
		$
		&
		$
			1^{[k]},\quad (k-1)^{[k]},
			(m-k)^{[k]}
$
		\\
		\midrule
		\makecell[l]{Quotient under\\
			$\Od{k-1}\times(S_q\wr S_k)$}
		&
		$\begin{aligned}[t]
			&(\triv\oplus\mathrm{std}_{O(k-1)})\\
			&\qquad\otimes\Bigl(
			\triv\oplus\stdsd_{k-1,1}
			\oplus
			\operatorname{Ind}_{S_q^k\rtimes S_{k-1}}^{S_q\wr S_k}
			\stdsd_{q-1,1}\Bigr)
		\end{aligned}$
		&
		$\begin{aligned}[t]
			&1,\quad (k-1)^{[2]},\quad (k-1)^2,\\
			&m-k,\quad (k-1)(m-k)
		\end{aligned}$
		\\
		\bottomrule
	\end{tabular}
		\caption{Isotypic decompositions at global minima under the full composite
			$S_q\wr S_k$ action and, in the third row where
			\eqref{eq:ufm_classifier_gram} holds, its $O(k-1)$ enlargement. Here $\mathrm{std}_{O(k-1)}$ is the standard representation of $O(k-1)$,
			and $\fs_\lambda$ denotes irreducible representations of symmetric groups as in \pref{sec:failure}.
				The first row uses a lift acting trivially on the Hessian kernel.
				In the induced representations, $\fs_{q-1,1}$ acts through the first
				$S_q$ factor, with the other $S_q$ factors and $S_{k-1}$ acting trivially.
				The resulting multiplicity structures use the notation of
				\pref{def:multi_def}. When the subgroup in
				\eqref{eq:ufm_orthogonal_quotient} equals $S_q\wr S_k$, the stated 
				multiplicities apply to the Euclidean Hessian;
				otherwise, spectral refraction may split them.}
	\label{table:ufm_quotient_isotypic}
\end{table}

The same analysis extends mutatis mutandis to deep linear UFMs. A fuller treatment of this setting and nonlinear UFM models, including the implications of neural collapse for Hessian invariance and questions of optimization and convergence, is deferred to \cite{arjevani2026dsb_tf}.

\subsection{Neural registration:  ResNet-$18$}
\label{sec:hidden_feature_symmetry}
So far, the applications have used symmetries of the data distribution. Here we bring into focus a phenomenon we call  \emph{neural registration}, in
which symmetries emerge in the joint hidden feature--label law. Paired with structural symmetries of a network suffix, these symmetries of the feature law yield composite invariances.
As these symmetries emerge across layers, the quality of registration is
reflected in the resulting composite invariance, which we use to
understand the structure of the original spectrum. The fuller treatment, including that of the layers preceding registration, is deferred to \cite{arjevani2026dsb_tf}.

Under mild conditions, approximate NC1, discussed in
\pref{sec:neural_collapse_composite_refraction}, gives rise to
approximate registration. 
A ResNet-$18$ trained on MNIST illustrates this connection:
registration strengthens as the NC1 diagnostic, defined in
\eqref{eq:trace_nc1_defect}, decreases toward
the final blocks (left panel of
\pref{fig:resnet18_mnist_registration_nc1_layers}). Registration is broader than collapse, however. At an explicit
five-layer global minimizer in \pref{setting:gaussian_mixture} with
$k=5$, the registration diagnostic decreases across hidden layers
while the NC1 diagnostic remains high (right panel of
\pref{fig:resnet18_mnist_registration_nc1_layers}).

We begin with some notation. For a network $N$ composed of $L$ modules
with parameter blocks $W_1,\ldots,W_L$, and for $1\leq i\leq j\leq L$,
write $N_{j:i}(\,\cdot\,;W_{j:i})$ for the composition of modules $i,\ldots,j$, where
$W_{j:i}=(W_i,\ldots,W_j)$. Write $N_{i:}$ for the composition of
modules $1,\ldots,i$ and $N_{:i}$ for the composition of modules
$i,\ldots,L$. Empty compositions are identity maps. For $i<j$,
\begin{align*}
	N(\,\cdot\,;W)
	&=
	N_{:j+1}(\,\cdot\,;W_{:j+1})
	\circ N_j(\,\cdot\,;W_j)
	\circ N_{j-1:i+1}(\,\cdot\,;W_{j-1:i+1})
	\circ N_i(\,\cdot\,;W_i)
	\circ N_{i-1:}(\,\cdot\,;W_{i-1:}).
\end{align*}
Let $\inspace_0\defeq\inspace$, and for $1\leq i\leq L$ let
$\inspace_i\defeq\RR^{d_i}$ be the output space of module $i$.
Write $\h_0(\x)=\x$ and
$\h_i(\x)=N_{i:}(\x;W_{i:})\in\inspace_i$ for $1\leq i\leq L$. For $0\leq i<L$, we have
$N(\x;W)=N_{:i+1}(\h_i(\x);W_{:i+1})$.
If perturbations are confined to the parameters of modules
$i+1,\ldots,L$, the pointwise loss and its parameter differentials depend
on $(\x,\y)$ only through $(\h_i(\x),\y)$.

\begin{definition}[Registration group]
	\label{def:feature_law_registration}
	Write 
	\begin{align*}
	\operatorname{Law}(\h_i,\y)\defeq \bigl(N_{i:}(\,\cdot\,;W_{i:}),
	\id_{\outspace}\bigr)_*\dd
	\end{align*}
	for the
	joint law of $(\h_i(\x),\y)$. Set $\aut(\inspace_i\times\outspace)
	\defeq\aut(\inspace_i)\times\aut(\outspace)$, with the usual
	identification of tuples of automorphisms with their componentwise actions. The \emph{registration group} at $i$ is defined~by
	\begin{align}
		\regis_i
		&\defeq\dist(\operatorname{Law}(\h_i,\y))
		\leq\aut(\inspace_i\times\outspace).
		\label{eq:feature_law_registration}
	\end{align}
\end{definition}
The composite invariance argument in \eqref{struc_combine_with_dist}
applies to the suffix $N_{:i+1}$, with $\regis_i$ as the distribution
group of its inputs and labels. For GN invariance, for example, we need
$\regis_i$-compatible structural symmetries of the pointwise GN form.
We demonstrate this on an $N^3$ network trained on MNIST with squared loss
(\pref{fig:mnist_n3_gn_metric_flow}). At a hidden layer of dimension $d$,
let $M=(\widehat{\bmu}_1,\ldots,\widehat{\bmu}_k)$ collect the empirical
class means and define
\begin{align}\label{eq:mean_matching_action}
	\rho(\pi)&=I_d+(MP_\pi-M)M^\dagger,\qquad \pi\in S_k.
\end{align}
These maps form an exact representation whenever
$P_\pi\ker M=\ker M$ for every permutation in the chosen group.
This holds automatically for full column rank, but also allows
dependent means, including centered simplex means. Write $\rho_j(\pi)$
for the action at hidden layer $j$ and set $\rho_3(\pi)=P_\pi$ at
the output. The baseline layerwise action is
\begin{align}
	\delta W_j&\mapsto
	\rho_j(\pi)\delta W_j\rho_{j-1}(\pi)^{-1},
	&
	\delta\b_j&\mapsto\rho_j(\pi)\delta\b_j,
	\qquad i<j\leq3.
\end{align}
For $N_{:3}$, we use this action directly. For $N_{:2}$, we conjugate
the representation by a fitted common change of coordinates to improve
its pointwise GN compatibility while preserving the group law.
The figure illustrates the resulting approximate composite invariance
and spectral refraction.

\paragraph{Collapse and registration.}
To quantify registration, here, we use the squared sliced
$2$-Wasserstein distance, which averages squared one-dimensional
$2$-Wasserstein distances uniformly over unit directions
\cite{bonneel2015sliced}, for its computational tractability and its
direct connection to NC1. Assume balanced classes and finite second moments. At feature cut $i$,
take a finite group $G\leq\aut(\inspace_i\times\outspace)$ whose label
actions permute the class encodings, and write
$\mathsf r=(\mathsf r_{\inspace_i},\mathsf r_{\outspace})\in G$.
Write $\h=\h_i$ and use the class-mean and covariance notation
$\bmu_a$, $\bar\bmu$, $\Sigma_W$, and $\Sigma_B$ from
\pref{sec:neural_collapse_composite_refraction}.
For $\tr(\Sigma_B)>0$, define
\begin{align}
	\operatorname{Reg}^{\mathrm{class}}_G
	\!\left(\operatorname{Law}(\h,\y)\right)
	&\defeq
	\frac{(k-1)\dim\inspace_i}{2k^2|G|\tr(\Sigma_B)}
	\sum_{\mathsf r\in G}\sum_{a=1}^k
	\operatorname{SW}_2^2
	\!\left((\mathsf r_{\inspace_i})_*
	\operatorname{Law}(\h\mid\y=\e_a),
	\operatorname{Law}(\h\mid\y=\mathsf r_{\outspace}\e_a)\right).
	\label{eq:feature_registration_defect}
\end{align}
The normalization uses the mean squared distance between distinct
class means, $2k\tr(\Sigma_B)/(k-1)$.

NC1 is also generally approximate.
For intermediate features, rather than the original diagnostic of
\cite{papyan2020prevalence}, we follow the trace normalization of
\cite{rangamani2023feature} 
\begin{align}
	\overline{\operatorname{NC1}}(\h)
	&\defeq\frac{\tr(\Sigma_W)}{\tr(\Sigma_W+\Sigma_B)}.
	\label{eq:trace_nc1_defect}
\end{align}
For linear feature actions satisfying
$\mathsf r_{\inspace_i}\bmu_a=\bmu_b$ whenever
$\mathsf r_{\outspace}\e_a=\e_b$, we have
\begin{align}
	\operatorname{Reg}^{\mathrm{class}}_G
	\!\left(\operatorname{Law}(\h,\y)\right)
	&\leq
	\frac{k-1}{2k}
	\left(
	\frac1{|G|}\sum_{\mathsf r\in G}
	\bigl(\|\mathsf r_{\inspace_i}\|_{\op}^2+1\bigr)
	\right)
	\frac{\overline{\operatorname{NC1}}(\h)}
	{1-\overline{\operatorname{NC1}}(\h)}.
	\label{eq:hidden_feature_wasserstein_defect}
\end{align}
Thus vanishing $\overline{\operatorname{NC1}}$ gives vanishing
registration defect when the average squared operator norm remains
bounded. \pref{fig:resnet18_mnist_registration_nc1_layers} places these two
diagnostics side by side across hidden layers of a five-layer ReLU
network on five Gaussians and a ResNet-$18$ trained on MNIST.
The ResNet-$18$ checkpoint comes from \cite{papyan2019measurements}.
We use the maps in \eqref{eq:mean_matching_action}. The sliced
discrepancies are averaged over adjacent class transpositions, using
$16$ projection directions per transposition. In the Gaussian case, the registration diagnostic
decreases across layers while $\overline{\operatorname{NC1}}$ remains
high, separating registration from collapse.

\begin{figure}[!htbp]
	\centering
	\includegraphics[width=\linewidth]{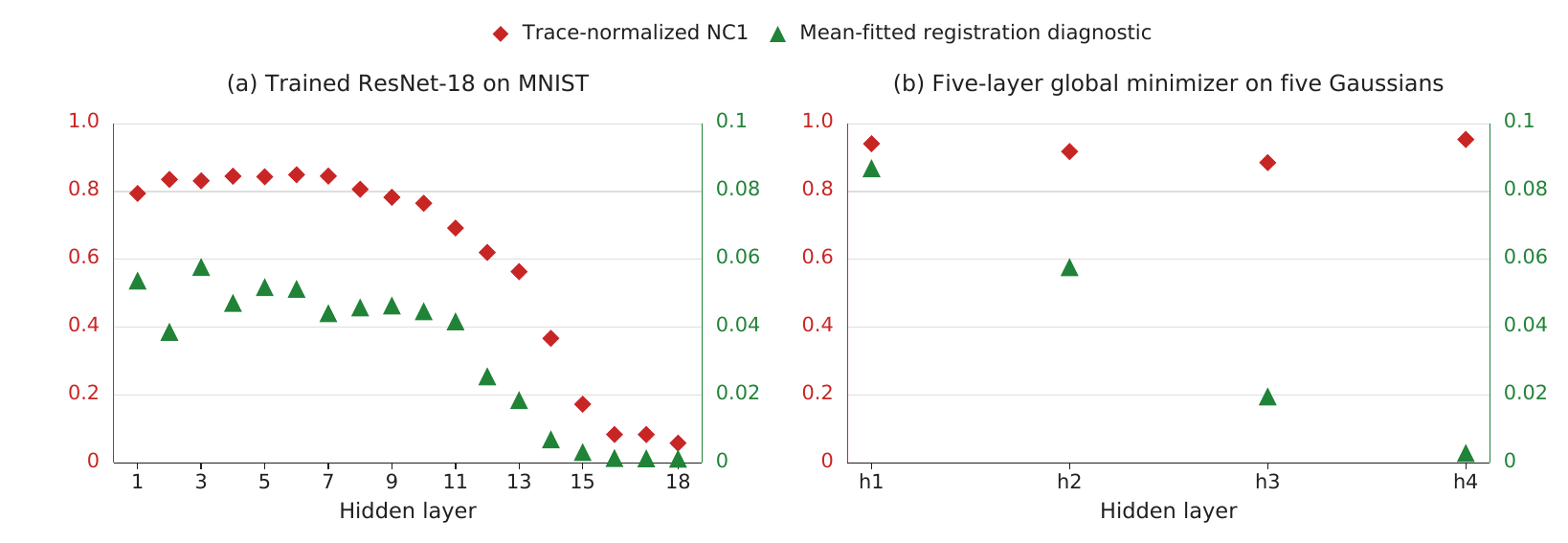}
	\caption{Registration and neural collapse across hidden layers.
		Red shows $\overline{\operatorname{NC1}}$ from
		\eqref{eq:trace_nc1_defect}, and green the normalized empirical
		registration diagnostic. The left uses $6{,}313$ MNIST images per class for a trained ResNet-$18$
		(indices denote feature cuts before successive parameter blocks). Both diagnostics
		decrease toward its final blocks. The right uses $10{,}000$
		independent samples per class from
		$\x\mid\y=\e_c\sim\cN(\e_c,I_5)$ at the explicit $N^5$
		global minimizer. Registration improves while the NC1 diagnostic
		remains high.}
	\label{fig:resnet18_mnist_registration_nc1_layers}
\end{figure}

\subsection{Can One Hear the Symmetry of a Critical Point?}\label{sec:can_hear}
The preceding applications lead from symmetries to spectral structure.
Specifically, they show how architectural, structural, and distributional
symmetries force large Hessian eigenvalue multiplicities. Inspired by Mark Kac’s celebrated question “Can One Hear the Shape of a Drum” \cite{kac1966can}, we now turn to the inverse problem:
\begin{center}
	What do eigenvalue multiplicities reveal about the symmetries that force them?
\end{center}
This question requires several qualifications before it can be approached.
For example, one should specify the ambient group in which to look,
here obtained by projecting the relevant structure group onto its
parameter component. Genericity is another qualification. For the natural
action of $S_d$ on $\RR^d$ by coordinate permutations, a fairly detailed account can be given (cf. \pref{table:isotropy_example}), provided there are no
eigenvalue coincidences beyond those forced by symmetry.

\section{Concepts and Methods}\label{sec:main}
We now give a  formal account of the overview, intentionally retaining some overlap with the earlier discussion for continuity. Selected proof sketches are included here to illuminate the sources of the phenomena considered, with the remaining proofs and technical details given in \pref{sec:further_framework}. The sketches may be omitted without interrupting the exposition.

\subsection{Architectural, structural, and distributional symmetries}
\label{sec:symmetry_grps}
The overview demonstrated the central role of Hessian invariance
properties in constraining the spectrum, in particular through the invariant
subspaces and eigenvalue degeneracies they impose. This leads to the basic
question of how to identify the Hessian invariances responsible for these
spectral features.

\paragraph{The point-stabilizer approach.} The standard approach to establishing such invariance proceeds as follows.
Let \(\Gamma\) act linearly on \(V\), and let \(f\colon V\to\RR\) be twice
differentiable and \(\Gamma\)-invariant, meaning
\begin{align}\label{inv_of_f}
	f(g\v)
	=
	f(\v)
	\qquad
	\text{for all } g\in\Gamma,\ \v\in V .
\end{align}
The action of \(\Gamma\) on \(V\) induces an action on the space of symmetric
matrices by \(H\mapsto g^{-\top} H g^{-1}\). Accordingly, we
may consider the corresponding stabilizer \(\stab_\Gamma(H)\) of a symmetric matrix
\(H\)
 (see
\eqref{def_stabilizer} for the stabilizer notation).
Differentiating the identity \eqref{inv_of_f} twice at \(\v\) gives \(g^\top\nabla^2 f(\v)g=\nabla^2 f(\v)\) for every \(g\in\stab_\Gamma\prn{\v}\). Therefore,
\begin{align}\label{point_in_hessian}
	\stab_\Gamma\prn{\v}
	\leq
	\stab_{\Gamma}\prn{\nabla^2 f(\v)}
	\leq
	\stab_{\GL(V)}\prn{\nabla^2 f(\v)} .
\end{align}
The inclusions in \eqref{point_in_hessian} therefore place the point stabilizer
inside the Hessian stabilizer. Thus, identifying the loss invariance group
$\Gamma$ provides a route to identifying a subgroup of the Hessian invariance
group.  As we will see shortly, this route generally fails to capture the relevant Hessian invariances in our setting. 

In many cases, the relevant 
invariances are apparent from the form of the objective, or are built into the
construction. For example, for the $n$-variate quadratic polynomial
\begin{align}\label{q}
	q(\v)=\sum_{i<j}v_i v_j,
\end{align}
it is immediate that permuting the coordinates of $\v$ leaves $q$ unchanged.
Thus, $q$ is invariant under the standard action of $S_n$ by coordinate
permutations. In our setting, where the objective comes from fitting neural
networks \eqref{eloss}, structural symmetries are likewise apparent in layer coordinates. \Integrated symmetries, however, arise from a subtler interaction
between structural and distributional symmetries, as in
\eqref{struc_combine_with_dist}.  We now formally introduce the concepts and methods developed to analyze these
invariances. 

\paragraph{Symmetry groups of a learning problem.} We define the \emph{data space} and the \emph{joint space} by
\begin{align}
	\dataspace &\defeq \inspace \times \outspace,
\qquad
	\universe \defeq \dataspace \times \weightspace .
\end{align}
The joint space is the natural domain of the pointwise loss~\eqref{smloss_def}.
Models such as $N^3$ are viewed on it through their dependence on $(\x,W)$. By \emph{symmetries} we mean elements of the automorphism groups of
$\dataspace$ and $\universe$, defined to be
$\aut(\dataspace)\defeq\aut(\inspace)\times\aut(\outspace)$ and
$\aut(\universe)\defeq\aut(\dataspace)\times\aut(\weightspace)$,
respectively, with the usual identification of tuples of automorphisms
with their componentwise actions. The factors consist of admissible
invertible transformations in the chosen category. In this work, unless
otherwise stated, these transformations are affine. Once fixed, these groups induce natural actions on the relevant objects. For
$g\in\aut(\universe)$, functions and maps $f$ on $\universe$ transform by pullback,
$g^* f \defeq f\circ g$. The action is likewise defined for $g\in\aut(\weightspace)$.
For $g\in\aut(\dataspace)$, distributions
transform by pushforward, $g_*\dd \defeq \dd\circ g^{-1}$. We write
$g f\defeq (g^{-1})^*f$ and $g\dd\defeq g_*\dd$, with the intended domain, codomain,
and meaning determined by the object. 

\begin{definition}[Internal and compatible symmetries]\label{def:type_of_symmetries}
	We define the architecture (or model) group, the structure group,
	and the distribution group, respectively, by
	\begin{align}
		\model(N)
		&\defeq \stab_{\aut(\universe)}(N\times \id_\outspace),\quad
		\struc(N,\ell)
		\defeq \stab_{\aut(\universe)}(\smloss),\quad
		\dist(\dd)
		\defeq \stab_{\aut(\dataspace)}(\dd).
	\end{align}
	Elements of these groups are called architectural (or model), structural,
	and distributional symmetries, respectively.
	For $g\in\aut(\universe)$, we write $g_{\dataspace}$ and
	$g_{\weightspace}$ for its components in $\aut(\dataspace)$ and
	$\aut(\weightspace)$. A~transformation $g\in\aut(\universe)$ is \emph{internal} if
	$g_{\dataspace}=\id_{\dataspace}$ and is \emph{$\dd$-compatible} if
	$g_{\dataspace}\in\dist(\dd)$.
\end{definition}

\begin{table}[!htbp]
	\centering
	\renewcommand{\arraystretch}{1.2}
	{\small\begin{tabular}{clll}
		\toprule
		\# & Symmetry type & Action & Invariance of $\ploss$ \\ 
		\midrule
		
		$\mathsf{a}_0(M)$ & Architectural 
		& $(W_1,\x)\mapsto (W_1M^{-1},M\x)$
		& No \\
		
		$\mathsf{a}^\pi_{i}(P_\pi)$ & Architectural 
		& $(W_i,W_{i+1})\mapsto (P_\pi W_i,W_{i+1} P_\pi^{-1})$
		& Yes \\
		
		$\mathsf{a}^{\lambda}_{i}(\blambda)$ & Architectural 
		& $(W_i,W_{i+1})\mapsto (\diag_\blambda W_i,W_{i+1}\diag_\blambda^{-1})$
		& Yes \\
		$\mathsf{s}^\celoss_1(\v)$ & Structural 
		& $W_3\mapsto W_3 + \bones \v^\top$
		& Yes \\
		
		$\mathsf{s}^\celoss_2$ & Structural 
		& $(W_3,\y)\mapsto(P_{(1,2)}W_3,P_{(1,2)}\y)$
		& No \\
		
		$\mathsf{s}^\celoss_3$ & Structural 
		& $W_3 \mapsto -P_{(1,2)}W_3$
		& Yes\\
		
		$\mathsf{d}^{\mathsf{s}}$ & Distributional 
		& $\x \mapsto P_{(1,2)}\x$
		& No\\[0.1cm]
		$\mathsf{d}^{\mathsf{r}} $ & Distributional 
		& $(\x,\y)\mapsto(-\x,P_{(1,2)}\y)$
		& No \\
		
		$\mathsf{c}^{\mathsf{s}}$ & \Integrated
		& $\sfa_0\circ \sfd^\sfs: W_1 \mapsto W_1P_{(1,2)}$ &Yes\\
		$\mathsf{c}^{\mathsf{r}}$ & \Integrated
		& 
		\makecell[lt]{
			$\mathsf{s}^\celoss_2\circ \sfs^\celoss_3\circ 
			\mathsf{a}^{\pi}_{2} \circ \mathsf{a}^{\pi}_{1}
			\circ \sfa_0\circ\sfd^\sfr:$\\$(W_1,W_2,W_3)\mapsto (-P_{(1,2)}W_1, P_{(1,2)}W_2P_{(1,2)},-W_3P_{(1,2)})$}
		&Yes
		\\
		\bottomrule
	\end{tabular}}
\caption{Examples of architectural symmetries of $N^{3,\circ}$, structural
	symmetries under the cross-entropy loss, and distributional symmetries of
	the mixture of two Gaussians in \pref{setting:gaussian_mixture},
	together with the resulting \integrated invariances. The final
	column indicates whether each symmetry is an invariance of the expected
	loss $\ploss$, as determined by \pref{prop:inv_of_eloss}.
	Here \(M\in\GL_d\), \(P_\pi\) denotes the permutation matrix associated with
	\(\pi\), \(\blambda\in\RR_{>0}^h\), and \(\v\in\RR^h\).
}
	\label{table:symmetry_examples}
\end{table}

When clear from context, we suppress the arguments of symmetry groups and of
the invariance groups introduced below. Since every architectural symmetry is structural, we often use the latter term
for both. Examples of symmetries and groups defined in
\pref{def:type_of_symmetries} appear in \pref{table:symmetry_examples} and
\pref{table:obj_symmetry_groups},~respectively.

\begin{proposition}\label{prop:inv_of_eloss}
If \(g \in \struc\)  is $\dd$-compatible, then the objective \(\ploss\) is invariant under
\(g_{\weightspace}\). In particular, if $g$ is internal, this invariance holds independently of the data
distribution. In each case, the transformations satisfying the corresponding condition form a group.
\end{proposition}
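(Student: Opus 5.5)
The proof is a change of variables in the expectation defining \(\ploss\), using the two stabilizer conditions one after the other. Write \(g=(g_{\dataspace},g_{\weightspace})\) with \(g_{\dataspace}\in\aut(\dataspace)\) and \(g_{\weightspace}\in\aut(\weightspace)\). By \pref{def:type_of_symmetries}, \(g\in\struc\) means \(g^*\smloss=\smloss\), that is,
\begin{align*}
\smloss(g_{\dataspace}(\x,\y);g_{\weightspace}W)=\smloss(\x,\y;W)
\qquad\text{for all }(\x,\y,W)\in\universe .
\end{align*}
\(\dd\)-compatibility means \((g_{\dataspace})_*\dd=\dd\).

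Fix \(W\). The chain mirrors \eqref{struc_combine_with_dist}, but for the loss itself rather than its second differential:
\begin{align*}
\ploss(g_{\weightspace}W)
=\EE_{(\x,\y)\sim\dd}\bigl[\smloss(\x,\y;g_{\weightspace}W)\bigr]
=\EE_{(\x,\y)\sim (g_{\dataspace})_*\dd}\bigl[\smloss(\x,\y;g_{\weightspace}W)\bigr]
=\EE_{(\x,\y)\sim\dd}\bigl[\smloss(g_{\dataspace}(\x,\y);g_{\weightspace}W)\bigr]
=\ploss(W).
\end{align*}
The second equality uses distributional invariance. The third is the pushforward change of variables \(\int f\,d(T_*\dd)=\int f\circ T\,d\dd\). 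The last uses the structural symmetry.

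The one point that needs care is integrability. The change of variables holds for nonnegative measurable functions with values in \([0,\infty]\). Since \(\ell^\ast\ge0\) and affine maps are Borel, the identity holds in \([0,\infty]\) whenever \(\ploss\) is defined. In particular, finiteness of \(\ploss\) is also preserved by the action, so no moment assumption beyond the one defining \(\ploss\) is needed. If the objective includes a regularizer, one additionally assumes it is \(g_{\weightspace}\)-invariant. I do not expect any genuine obstacle here.

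The internal case is immediate. If \(g_{\dataspace}=\id_{\dataspace}\), then \(g_{\dataspace}\in\dist(\dd)\) for every \(\dd\), so the invariance holds independently of the data distribution.

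Next, the group claims.

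For the \(\dd\)-compatible case, \(\struc\) and \(\dist(\dd)\) are stabilizers, hence subgroups. The set of \(\dd\)-compatible elements of \(\struc\) is the preimage of \(\dist(\dd)\) under the homomorphism \(\struc\to\aut(\dataspace)\), \(g\mapsto g_{\dataspace}\), so it is a subgroup.

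The internal elements form the kernel of that same homomorphism, hence a normal subgroup.

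Finally, \(g\mapsto g_{\weightspace}\) is a homomorphism, because the action on \(\universe\) is componentwise. Therefore the induced transformations \(g_{\weightspace}\) of \(\weightspace\) also form groups, namely the images of these subgroups, and they act on \(\weightspace\) by invariances of \(\ploss\).
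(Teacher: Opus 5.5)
Your proof is correct and follows the paper's argument: a change of variables in the expectation that combines $\smloss\circ g=\smloss$ with $(g_{\dataspace})_*\dd=\dd$, and group claims that follow from $\struc\cap\bigl(\dist(\dd)\times\aut(\weightspace)\bigr)$ being a subgroup. The paper phrases the change of variables through $g_{\dataspace}^{-1}$, which is equivalent; your extra remarks on nonnegativity (so the identity holds in $[0,\infty]$) and on the images $g_{\weightspace}$ forming groups under the componentwise projection are harmless refinements.
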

This proposition gives a basic mechanism for deriving invariance properties of
the objective, as illustrated in \pref{table:symmetry_examples}. Its formulation
extracts the conceptual ingredients of the argument in
\cite[Section~4.1]{arjevani2019principle}, and the proof, provided in \pref{sec:formal_sym},  adapts it to the present setting.

\begin{definition}[Invariance groups]\label{def:invs}
	We define the architectural (or model), structural, and $\dd$-\integrated invariance
	groups of the objective, respectively, by
	\begin{align}
		\Model(N)
		&\defeq
		\{g_{\weightspace}: g\in\model(N) \text{ is internal}\}				= \pi_\weightspace\bigl(\model(N) \cap (\id_{\dataspace}\times \aut(\weightspace))\bigr),
		\nonumber\\
		\Struc(N,\ell)
		&\defeq
		\{g_{\weightspace}: g\in\struc(N,\ell) \text{ is internal}\} 				= \pi_\weightspace\bigl(\struc(N,\ell) \cap (\id_{\dataspace}\times \aut(\weightspace))\bigr),
		\nonumber\\
		\Obj(N,\ell,\dd)
		&\defeq
		\{g_{\weightspace}: g\in\struc(N,\ell) \text{ is } \dd\text{-compatible}\}
		= \pi_\weightspace\bigl(\struc(N,\ell) \cap (\dist(\dd)\times \aut(\weightspace))\bigr).
	\end{align}
\end{definition}
We omit the modifier $\dd$ from \emph{compatible} and \emph{\integrated}
when clear from context. Moreover, we generally reserve \emph{symmetry group}
for groups preserving the model, pointwise loss, pointwise forms,
or data distribution, and \emph{invariance group} for those preserving
the objective or expected forms derived from it.

For $N^3$, the symmetry groups and their internal subgroups are given in
\pref{table:obj_symmetry_groups}. The factor
$\Aff(d)=\RR^d\rtimes\GL_d$ represents affine input changes compensated in the first layer, while $\RR_{>0}\wr S_h$ represents the usual positive rescalings (see, e.g., \cite{freeman2017topology}) and permutations between adjacent layers.\footnote{Here $N\rtimes G$ denotes the semidirect product associated with an
	action of $G$ on $N$. Thus, in
	$\Aff(d)=\RR^d\rtimes\GL_d$, the group $\GL_d$ acts on $\RR^d$ by its
	standard linear action. For the scaling symmetries, $S_h$ permutes the factors
	of $\RR_{>0}^h$, and the resulting semidirect product
	$\RR_{>0}^h\rtimes S_h$ is the wreath product
	$\RR_{>0}\wr S_h$ (see \cite[p.~187]{dummit2004abstract}
	for the definition and \cite[Appendix A, Table~3]{arjevani2025three}
	for a wreath-product isotropy type in a network with bias terms). }
The remaining factors arise from the loss. For squared loss,
$\RR\rtimes C_2$ acts by simultaneous translations of the scalar output and
label, and by simultaneous sign changes. For binary cross-entropy, the loss
depends on the final-layer parameters only through the difference between the
two rows of $[W_3,\b_3]$. The average of the two rows may therefore be
transformed affinely, with its translation depending linearly on the remaining
$p-h-1$ parameter coordinates. These transformations form a group isomorphic to
$\RR^{(h+1)(p-h)}\rtimes\GL_{h+1}$. The architecture groups and the $C_2$ factor that exchanges the labels
preserve this group under conjugation, producing the remaining semidirect
products in the table.

\begin{table}[!htbp]
	\centering
	\renewcommand{\arraystretch}{1.25}
	\setlength{\tabcolsep}{4pt}
	\makebox[\textwidth][c]{\small
		\begin{tabular}{@{}l l@{\qquad}l l@{}}
			\toprule
			\multicolumn{2}{c}{Symmetry group}
			&
			\multicolumn{2}{c}{Internal subgroup}
			\\
			\midrule
			$\model(N^3)$
			&
			$(\RR_{>0}\wr S_h)^2\times\Aff(d)$
			&
			$\Model(N^3)$
			&
			$(\RR_{>0}\wr S_h)^2$
			\\
			\midrule
			$\struc(N^3,\ell^\sqloss)$
			&
			$(\RR\rtimes C_2)\times\model(N^3)$
			&
			$\Struc(N^3,\ell^\sqloss)$
			&
			$\Model(N^3)$
			\\
			\midrule
			$\struc(N^3,\ell^\celoss)$
			&
			$\bigl(\RR^{(h+1)(p-h)}\rtimes\GL_{h+1}\bigr)
			\rtimes\bigl[C_2\times\model(N^3)\bigr]$
			&
			$\Struc(N^3,\ell^\celoss)$
			&
			$\bigl(\RR^{(h+1)(p-h)}\rtimes\GL_{h+1}\bigr)
			\rtimes\Model(N^3)$
			\\
			\bottomrule
		\end{tabular}
	}
	\caption{Architecture and structure groups of
		$N^3$, together with their internal subgroups. The squared-loss row concerns
		scalar regression ($k=1$), whereas the cross-entropy row concerns binary
		classification ($k=2$). The internal subgroups induce objective invariances independently of the data
		distribution. For cross-entropy, the internal structure group strictly
		extends the internal architecture group.}
	\label{table:obj_symmetry_groups}
\end{table}
Proving that a given set of structural symmetries is exhaustive is 
tractable in many cases, with the proof usually straightforward, if lengthy. As this is not needed below, we leave the issue outside the scope of the
present work and defer its treatment to future work.

\subsection{The failure of weight symmetries}\label{sec:failure}
The inclusion in \eqref{point_in_hessian} identifies
$\stab_\Obj\prn{\v}$ as a subgroup of the Hessian invariance group. In the settings considered here, however, this point stabilizer fails to capture the relevant Hessian invariances, as illustrated by
\pref{fig:sq_evs}. To see why these missing symmetries matter, we first recall how symmetry constrains the
spectrum of a Hessian. 

A \emph{representation} of a group \(G\) on a Euclidean vector space \(V\) is a homomorphism \(G\to\GL(V)\). We sometimes say instead that \(G\) acts \emph{linearly} on \(V\), and when the homomorphism is clear from context,
we write the action of $g\in G$ on $\mathbf v\in V$ as $g\mathbf v$.  A subspace \(V'\subseteq V\) is called \(G\)-\emph{invariant} if
\begin{align}
	gV'
	=
	V'
	\qquad
	\text{for all } g\in G .
\end{align}
A $G$-invariant subspace, equipped with the restricted action, is itself a
representation of $G$, called a $G$-\emph{subrepresentation} of $V$.
A $G$-subrepresentation is \emph{irreducible} if it contains no proper
nonzero $G$-subrepresentations. Assume that $V$ is a finite direct sum of
irreducible $G$-subrepresentations. Let
$\fv_1,\ldots,\fv_m$ be a complete list of pairwise nonisomorphic
irreducible summands,\footnote{Two $G$-representations $V_1$ and
	$V_2$ are isomorphic if there exists a linear isomorphism
	$T\colon V_1\to V_2$ such that
	$T(g\mathbf v)=gT(\mathbf v)$ for all $g\in G$ and $\mathbf v\in V_1$.} and let $r_i$ denote the multiplicity of the isomorphism class represented by
	$\fv_i$. For each
$i$, let $V_i$ be the sum of all irreducible subrepresentations of $V$
isomorphic to $\fv_i$. Then
\begin{align}\label{iso_comps}
	V
	&=
	V_1
	\oplus
	\cdots
	\oplus
	V_m,
	\quad 
	V_i
	\cong
	r_i\fv_i .
\end{align}
Here
$r_i\fv_i$ denotes the direct sum of $r_i$ copies of $\fv_i$.  The subspaces $V_i$ are called the \emph{isotypic components}, and the
displayed direct sum is called the \emph{isotypic decomposition}. 
A decomposition into irreducible subrepresentations need not exist or be unique, but when it exists, the isotypic components and the isotypic  decomposition are uniquely determined.

Let $H$ be self-adjoint and commute with the $G$-action, meaning
\begin{align}\label{commutation}
	Hg
	=
	gH
	\qquad
	\text{for all } g\in G.
\end{align}

\begin{fact}\label{fact:iso_dec_commuting_maps}
	If $V$ has the isotypic decomposition \eqref{iso_comps} and $H$ is as above, each isotypic component \(V_i\) is invariant under \(H\), that is, \(HV_i\subseteq V_i\). The restriction of \(H\) to \(V_i\) has real eigenvalues, each with multiplicity a multiple of \(\dim(\fv_i)\). Finally, \(\ker H\) is a \(G\)-invariant subspace.
\end{fact}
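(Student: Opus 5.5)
The argument rests on Schur's lemma and the uniqueness of the isotypic decomposition. Throughout, we take the $G$-action to be orthogonal, as is implicit in the setting. For complex representations we would work over $\mathbb{C}$ and then restrict; here we phrase everything over $\RR$, using the real form of Schur's lemma, whose endomorphism algebra is $\RR$, $\mathbb{C}$, or the quaternions.

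\emph{Invariance of the isotypic components.} Fix $i$ and an irreducible subrepresentation $U\subseteq V_i$ with $U\cong\fv_i$. By \eqref{commutation}, $H|_U\colon U\to V$ is $G$-equivariant, so $H(U)$ is either $0$ or a subrepresentation isomorphic to a quotient of $U$. Since $U$ is irreducible, any nonzero image is isomorphic to $\fv_i$. Such an image is therefore contained in the sum of all copies of $\fv_i$, which is $V_i$ by definition. Summing over the irreducible subrepresentations that span $V_i$ gives $HV_i\subseteq V_i$.

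\emph{Eigenvalues and multiplicities.} Because $H$ is self-adjoint, so is its restriction to the invariant subspace $V_i$, and hence its eigenvalues are real. Let $\lambda$ be one of them. The eigenspace $E_\lambda=\ker(H-\lambda)\cap V_i$ is $G$-invariant, since $g$ commutes with $H-\lambda$ and preserves $V_i$. We now use complete reducibility: because the action is orthogonal, or more generally because $V_i$ is semisimple as a submodule of the semisimple module $V$, the subrepresentation $E_\lambda$ decomposes as a direct sum of irreducibles. Each summand lies in $V_i$ and is therefore isomorphic to $\fv_i$, by the uniqueness of isotypic components. It follows that $\dim E_\lambda$ is a multiple of $\dim\fv_i$. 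Since $H|_{V_i}$ is diagonalizable, this dimension is exactly the multiplicity of $\lambda$.

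\emph{The kernel.} For $\v\in\ker H$ we have $H(g\v)=gH\v=0$, so $\ker H$ is $G$-invariant.

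The main subtlety is the semisimplicity of $E_\lambda$, equivalently the fact that every subrepresentation of $V_i$ is a sum of copies of $\fv_i$. We would justify this through the standard fact that a submodule of a semisimple module is semisimple, with summands drawn from the isomorphism types already present. Alternatively, assuming orthogonality, one can take orthogonal complements inside $V_i$ and argue by induction on dimension.
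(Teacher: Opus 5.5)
Your proof is correct and is the standard argument: Schur's lemma gives $HV_i\subseteq V_i$, self-adjointness of $H|_{V_i}$ gives real eigenvalues and diagonalizability, and semisimplicity of the $G$-invariant eigenspaces $\ker(H-\lambda)\cap V_i$ gives the multiplicity count. The paper itself offers no proof and simply records the statement as a Fact from standard representation theory. Note that orthogonality of the action is not needed for the Fact itself: your semisimple-submodule justification suffices, and the claim that every irreducible subrepresentation of $V_i\cong r_i\fv_i$ is isomorphic to $\fv_i$ follows by projecting onto a summand and applying Schur. In the paper, orthogonality is used only to convert the congruence \eqref{hess_cong} into the commutation relation \eqref{commutation}.
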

Let $V=T_{W}\weightspace$, equipped with its
Euclidean inner product, and let $\nabla_W^2\ploss(W)$ denote the Hessian matrix representing
$d^2\ploss(W)$ in the standard coordinates on $\weightspace$. For \emph{orthogonal} representations,\footnote{As
	indicated in \pref{sec:overview_framework}, invariance is a property of the second-order
	differential itself. The spectrum, on the other hand, also depends on the
	inner product used to identify the second-order differential with a
	self-adjoint endomorphism of $V$. Orthogonality ensures compatibility with
	this identification.} namely representations whose image lies in
$\mathrm O(V)$, the group of orthogonal transformations of $V$, the
congruence relation \eqref{hess_cong} is equivalent to $gH=Hg$, and
\pref{fact:iso_dec_commuting_maps} applies. This is the representation-theoretic origin of the eigenvalue multiplicities
and of the less visible but equally rigid invariant structure of the Hessian
(see the block form \eqref{block_iso}).

\begin{table}[!htbp]
	\centering
{\small\begin{tabular}{c|c|c|c}
	& \(\bzeros\) & \(\e_i\) & Generic point \\
	\hline
	\(\stab_{S_n}(\x)\)
	& \(S_n\)
	& \(S_{n-1}\)
	& \(S_{1}^n\) \\
	Stabilizer multiplicity structure
	& \(1,\ n-1\)
	& \(1^{[2]},\ n-2\)
	& \(1^{[n]}\) \\
	Hessian multiplicity structure 
	& \(1,\ n-1\)
	& \(1,\ n-1\)
	& \(1,\ n-1\)
\end{tabular}}
\caption{Point stabilizers and  multiplicity structure
	(\pref{def:multi_def}) for the natural action of \(S_n\) on \(\RR^n\). 
The quadratic example \(q\) in \eqref{q}
	illustrates one limitation of the point-stabilizer approach: its Hessian is
	constant, so the same spectral structure persists even at points with
	trivial stabilizer (see also \pref{sec:can_hear}).  In the ReLU settings considered here,
	the Hessian need not be constant, yet weight stabilizers fail to recover
	the rich invariance structure that forces its nontrivial eigenvalue
	multiplicities.}
	\label{table:isotropy_example}
\end{table}

For $q$ in \eqref{q}, both $q$ and its Hessian are $S_n$-invariant. At the origin,
$\stab_{S_n}\prn{\bzeros}=S_n$, so the point stabilizer exactly recovers the
multiplicity structure of the Hessian spectrum, made explicit below. At a generic point of $\RR^n$, the Hessian retains its rich $S_n$-invariance and the multiplicity structure it forces. However,  the point
stabilizer collapses to the trivial group. The point-stabilizer
approach then becomes entirely silent, imposing no nontrivial
representation-theoretic constraints on the Hessian.

Before turning to an alternative route to identifying Hessian invariances,
we spell out the representation-theoretic details of this example. As a representation of $\stab_{S_n}\prn{\bzeros}=S_n$, $T_{\bzeros}\RR^n\cong\RR^n$  decomposes
isotypically as
	$\RR^n
	=
	\RR\bones_n
	\oplus
	\bones_n^\perp $.
Both summands are irreducible representations of \(S_n\). In the general
theory \cite{fulton1991representation}, the irreducible representations of \(S_n\) correspond to partitions \(\lambda\) of \(n\), here \((n)\) and \((n-1,1)\), respectively, and are
denoted by \(\fs_\lambda\), with \(\ft \) reserved for the
trivial representation of the relevant symmetric group.
\pref{fact:iso_dec_commuting_maps} now forces the
Hessian to preserve these two components and to have one eigenvalue on each, with multiplicities \(1\) and \(n-1\), respectively (see \pref{table:isotropy_example}). Note that the derivation uses only
symmetry, and therefore applies to any Hessian with the same invariance. In this example, the families of irreducible representations that appear, and their multiplicities, stabilize for $n\geq 2$.
This is a general phenomenon for standard constructions involving
representations of \(S_d\), see \cite{arjevani2024sc}. 

The same failure of the point-stabilizer approach arises in our setting,
albeit for different reasons. For example, for generic $W\in\spT$ in \eqref{Wstar}, the composite transformation
$\mathsf c^{\mathsf r}$ interchanges $a_{1,2}$ and $a_{2,1}$ and
therefore does not fix $W$. Nevertheless, the Hessian at $W$ has the
$D_4\times C_2$ invariance. This discrepancy is not specific to $\spT$ but extends to the full
set of global minimizers in \eqref{k2_all_T} and, more broadly, arises in all
settings considered in this work.

\begin{remark}[Spectral refraction]\label{rem:spectral_refraction}
Applying \pref{fact:iso_dec_commuting_maps} to Hessian invariance hinges on
the action being orthogonal with respect to the chosen inner product. As noted in \pref{sec:overview_framework}, the second differential and its invariance group are intrinsic, but its realization as a self-adjoint operator, and hence its spectrum, depends on the chosen inner product. For a nonorthogonal representation, the congruence relation \eqref{hess_cong} need not imply the commutation relation \eqref{commutation}. Thus, although the invariance group of \(d^2\ploss\) is unaffected by the choice of inner product—and may still be large—the ordinary Euclidean Hessian need not exhibit the multiplicities in \pref{fact:iso_dec_commuting_maps}. We refer to this phenomenon as \emph{spectral refraction}. This viewpoint enters only briefly in \pref{sec:spec_analysis_gaussian},
but plays a substantial role in the applications in \pref{sec:dsb}.
\end{remark}

\subsection{Flat directions and symmetry}\label{sec:non_relu_flat}
Structural invariances of the objective have the advantage of explicitness:
they can be read directly from the network architecture and the loss, without
reference to the data distribution. Even at this level of generality, they
have significant implications for the Hessian kernel.
\begin{definition}[Flat directions]
	Let $f$ be a real-valued function and let $\p$ be a point in its domain.
	The tangent cone
		$T_\p\bigl(f^{-1}(f(\p))\bigr)$
	to the level set $f^{-1}(f(\p))$ at $\p$ is called the
	\emph{level-set cone}. Its elements are called \emph{flat vectors}. 
	Unit-length flat vectors are called \emph{flat directions}. If $f$ is invariant under a group action, flat vectors tangent to the group
	orbit are called \emph{orbit} flat vectors, and otherwise~\emph{transverse}.
\end{definition}

Flat directions help explain the geometric origins of large Hessian
kernels. Flatness alone, however, does not imply membership in the
Hessian kernel. Identifying conditions under which flat directions lie in the Hessian
kernel, particularly those imposed by symmetry, is a central theme
of this work.

We illustrate the basic relationship between flat directions and the Hessian
kernel through smooth curves. This covers the constructions considered in this work. A parallel argument applies to directions defined through the sequential characterization of the tangent cone  \cite{arjevani2026dsb_tf}.
Let $f:V\to\RR$ be smooth, let $\p\in V$, and let
$\gamma:[0,1)\to V$ be a smooth curve satisfying $\gamma(0)=\p$ and
$f(\gamma(t))=f(\p)$ for all $t\in[0,1)$.
 Then $\dot\gamma(0)$ is  a flat vector. Differentiating,  we find that 
$\inner{\grad f(\gamma(t)),\dot{\gamma}(t)}=0$. Differentiating once more
yields
\begin{align}
\inner{\nabla^2 f(\gamma(t))\,\dot{\gamma}(t),\dot{\gamma}(t)}+\inner{\grad f(\gamma(t)),\ddot{\gamma}(t)}=0.
\end{align}
In particular, if $\p$ is a critical point or if
$\ddot{\gamma}(0)=0$,\footnote{On a Riemannian manifold \cite{boumal2023introduction,burgisser2019noncommutative}, as in \pref{sec:layerwise}, the
	corresponding condition is
	$\nabla_{\dot\gamma}\dot\gamma|_{t=0}=0$, where $\nabla$ is the
	Levi-Civita connection. In particular, this holds when $\gamma$ is a
	geodesic.} this reduces to
\begin{align}\label{gamma_dot_in_null_cone}
	\inner{\nabla^2 f(\p)\,\dot{\gamma}(0),\dot{\gamma}(0)}=0.
\end{align}
Thus, $\dot\gamma(0)$ lies in the \emph{null cone} of $\nabla^2 f(\p)$, i.e., the set of all $\v$ such that
$\inner{\nabla^2 f(\p)\v,\v}=0$. Although the Hessian kernel is always contained in the null cone, the inclusion may be
strict. 
For example, if $f(\xi,\zeta)=\xi^2-\zeta^2$, then at the origin both the level-set cone
and the null cone consist of the lines $\xi=\pm\zeta$, whereas the Hessian
kernel is trivial. In general, the null cone equals the Hessian kernel if and only if the
Hessian is positive or negative semidefinite.

\paragraph{Flatness at critical points and local extrema.}
A sufficient condition for a flat direction to give a kernel vector is its
realization through a curve of critical points. This applies in particular to
group orbits through critical points, since for a $\Gamma$-invariant $f$, all
points on a given orbit are simultaneously critical or noncritical. 
If $\Gamma$ is a Lie group acting smoothly, then both the isotropy group
$\Gamma_\p$ and the orbit $\Gamma\p$ are manifolds,\footnote{Although not proper, the Lie group actions considered here are semialgebraic, so their orbits are embedded rather than merely immersed submanifolds.}
 and their dimensions satisfy
\begin{align}\label{dim_formula}
	\dim \Gamma = \dim \Gamma_\p + \dim \Gamma \p.
\end{align}
The following summarizes the discussion so far.
\begin{proposition}[Notation as above]\label{prop:basic_flat}
	Assume the relevant derivatives exist.\\
	(i) Any flat direction lies in the kernel of the first differential.\\
	(ii) At a critical point $\p$, all orbit flat directions lie in the Hessian
	kernel, spanning a $(\dim \Gamma-\dim \Gamma_\p)$-dimensional subspace.\\
	(iii) At a local extremum, all flat directions, whether orbit or
	transverse, lie in the Hessian kernel.
\end{proposition}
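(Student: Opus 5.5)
We argue directly from the definition of the level-set cone, handling both the smooth-curve description used above and the sequential description of the tangent cone. Throughout, $\p$ is fixed, $c=f(\p)$, and $\v\in T_\p(f^{-1}(c))$. Sequentially, this means $\v=\lim_n (\x_n-\p)/t_n$ for some $\x_n\to\p$ with $f(\x_n)=c$ and $t_n\downarrow 0$. We only assume differentiability of the orders needed in each part.

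\textbf{Part (i).} Assume $f$ is differentiable at $\p$. Expanding to first order gives
\[
0=f(\x_n)-f(\p)=df(\p)[\x_n-\p]+o(\|\x_n-\p\|).
\]
Divide by $t_n$ and note that $\|\x_n-\p\|/t_n\to\|\v\|$ is bounded. Letting $n\to\infty$ yields $df(\p)[\v]=0$. For a smooth curve $\gamma$ this is just the first-derivative identity $\inner{\grad f(\gamma(t)),\dot\gamma(t)}=0$ at $t=0$. Since $\ker df(\p)$ is a linear subspace, the conclusion passes from flat directions to all flat vectors.

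\textbf{Part (ii).} Let $\Gamma$ be a Lie group acting smoothly, and linearly as in the earlier setup, with $f$ $\Gamma$-invariant and $C^2$ near $\p$.
\begin{itemize}
\item Differentiating $f(g\x)=f(\x)$ gives the equivariance $\grad f(g\x)=g^{-\top}\grad f(\x)$. Hence, if $\p$ is critical, every point of the orbit $\Gamma\p$ is critical.
\item For $\xi$ in the Lie algebra, take the curve $\gamma(t)=\exp(t\xi)\p$. It lies in the level set and consists of critical points, so $\grad f(\gamma(t))\equiv 0$.
\item Differentiating at $t=0$ gives $\nabla^2 f(\p)\dot\gamma(0)=0$. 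So every orbit tangent vector lies in the Hessian kernel, not merely in the null cone.
\item Orbit flat vectors are exactly the elements of $T_\p(\Gamma\p)$, since the orbit is an embedded submanifold by the footnote. This tangent space is the image of the infinitesimal action $\xi\mapsto\dot\gamma(0)$.
\item The kernel of the infinitesimal action is the Lie algebra of $\Gamma_\p$. By \eqref{dim_formula} the image therefore has dimension $\dim\Gamma-\dim\Gamma_\p$.
\end{itemize}

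\textbf{Part (iii).} Say $\p$ is a local minimum; the maximum case follows by replacing $f$ with $-f$. Then $\grad f(\p)=0$ and $\nabla^2 f(\p)\succeq 0$.
\begin{itemize}
\item Using a second-order Peano expansion at $\p$,
\[
0=f(\x_n)-f(\p)=\tfrac12\inner{\nabla^2 f(\p)(\x_n-\p),\x_n-\p}+o(\|\x_n-\p\|^2).
\]
Dividing by $t_n^2$ and passing to the limit shows that $\v$ lies in the null cone. For smooth curves this is already \eqref{gamma_dot_in_null_cone}.
\item For a positive semidefinite form $B$, Cauchy--Schwarz gives $|B(\v,\w)|^2\le B(\v,\v)B(\w,\w)$. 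So $B(\v,\v)=0$ forces $B(\v,\cdot)=0$, i.e.\ $\v\in\ker\nabla^2 f(\p)$.
\end{itemize}
This applies to orbit and transverse flat vectors alike.

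The only delicate point is regularity. In (iii) the Peano remainder needs only twice differentiability at $\p$. In (ii), by contrast, we need $C^2$ in a neighborhood of $\p$, so that we can differentiate $\grad f$ along the orbit curve.
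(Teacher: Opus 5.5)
Your argument is correct and follows the same logic as the paper. Part (i) is the first-order identity along the level set. Part (ii) realizes orbit vectors by curves $\exp(t\xi)\p$ consisting of critical points and counts dimensions via \eqref{dim_formula}. Part (iii) places flat vectors in the null cone at a critical point and uses that the null cone of a semidefinite Hessian equals its kernel.

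Where you differ is in (i) and (iii). The paper differentiates $f\circ\gamma$ along smooth curves in the level set. That reaches only curve-realized flat vectors, and the paper defers the sequential description of the tangent cone to a ``parallel argument'' elsewhere. Your first- and second-order Peano expansions along $\x_n=\p+t_n\v+o(t_n)$ instead treat the level-set cone exactly as defined, and they need only (twice) differentiability at $\p$.

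Two small refinements to (ii):
\begin{itemize}
\item Linearity of the action is not needed, since any $f$-preserving diffeomorphism maps critical points to critical points. This matters because the paper's groups include affine ones, such as the common-logit shifts.
\item $C^2$ near $\p$ is more than required. The map $t\mapsto\grad f(\gamma(t))$ vanishes identically, so the chain rule at $t=0$ only uses differentiability of $\grad f$ at $\p$.
\end{itemize}
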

The inclusion $\model(N)\leq \struc(N,\ell)$ implies that
$\Model(N)\leq \Struc(N,\ell)\leq \Obj(N,\ell,\dd)$. Therefore, for any $W$,
\begin{align}
	T_W\bigl(\Model(N)\cdot W\bigr)
	\subseteq
	T_W\bigl(\Struc(N,\ell)\cdot W\bigr)
	\subseteq
	T_W\bigl(\Obj(N,\ell,\dd)\cdot W\bigr).
\end{align}
For the internal groups of $N^3$ in \pref{table:obj_symmetry_groups}, these inclusions and \pref{prop:basic_flat} give the following.

\begin{corollary}\label{corr:order_h_zero_evs}
(i) Irrespective of the data distribution, loss, or weight point $W$, the
kernel of the GN matrix of the objective \eqref{eloss} contains
$T_W(\Model(N^3)\cdot W)$, which has dimension $2h$ for generic $W$ and is
described in \pref{table:gauss_flat_directions}. At critical points, the same inclusion holds for the
Hessian.\\
	(ii) Under the cross-entropy loss, independently of the data distribution, the same
	holds with $T_W(\Struc(N^3,\celoss)\cdot W)$, which has dimension $3h+1$ for
	generic $W$.
\end{corollary}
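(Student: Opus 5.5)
The plan is to reduce both parts to one pointwise identity. Every orbit vector of the relevant internal group is sent by the model differential $d_WN^3(\x;W)$ into the kernel of $d_1^2\ell^\ast$, for every $\x$. Averaging the pointwise GN form \eqref{gn_def} then shows that it vanishes on the orbit tangent space. The statement about the Hessian at critical points follows directly from \pref{prop:basic_flat}(ii). The dimension counts are then obtained from the orbit--stabilizer formula \eqref{dim_formula} together with a genericity argument.

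\textbf{Part (i).} Let $g(t)\in\Model(N^3)$ be a smooth curve with $g(0)=\id$. Because the group is internal, $N^3(\x;g(t)W)=N^3(\x;W)$ holds identically in $\x$. Differentiating at $t=0$ gives $d_WN^3(\x;W)[\v]=0$ for every $\v\in T_W(\Model(N^3)\cdot W)$. Hence $J_WN^3(\x,W)\v=0$ pointwise, the first term of \eqref{gn_decomp} annihilates $\v$, and so does its expectation. This holds for any loss, any $\dd$ and any $W$.

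To compute the dimension, note that $\Model(N^3)=(\RR_{>0}\wr S_h)^2$. The permutation factors are discrete, so the identity component is $\RR_{>0}^{2h}$, which has dimension $2h$. Its infinitesimal generators are the analogues of \eqref{orb_vects} with biases included:
\[
(E_{ii}W_1,\,E_{ii}\b_1,\,-W_2E_{ii},0,0,0)\quad\text{and}\quad(0,0,E_{jj}W_2,\,E_{jj}\b_2,\,-W_3E_{jj},0).
\]
If every row of $[W_1,\b_1]$ and of $[W_2,\b_2]$ is nonzero, these $2h$ vectors are linearly independent. Indeed, the $W_1$-block of the first family involves disjoint rows, and the same holds for the $\b_2$-block and $W_2$-rows of the second family after projecting away the first family. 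It follows that the continuous stabilizer is trivial and, by \eqref{dim_formula}, the orbit has dimension $2h$ off a proper algebraic set.

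\textbf{Part (ii).} Here $k=2$, and the internal structure group additionally contains the $\RR^{(h+1)(p-h)}\rtimes\GL_{h+1}$ factor. This factor acts on the average $\bar{\w}$ of the two rows of $[W_3,\b_3]$ and leaves the row difference fixed. I will show two things.

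First, the tangent directions of this factor at $W$ span exactly the $(h+1)$-dimensional space $\{(0,\dots,0,\bones\u^\top,\bones u_0):(\u,u_0)\in\RR^{h+1}\}$. This holds for generic $W$, where $\bar{\w}\neq 0$: the $\GL_{h+1}$ and translation parts then move $\bar{\w}$ in every direction.

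Second, each such direction satisfies $d_WN^3(\x;W)[\v]=(\u^\top\sigma(\cdot)+u_0)\bones$. The logit perturbation therefore lies in $\RR\bones$. By shift invariance of softmax, $d_1^2\ell^\celoss(\z,\y)\bones=0$, so the pointwise GN form again vanishes on $\v$. This is the argument for $\mathsf s^\celoss_1$ in \pref{table:symmetry_examples}, extended to include the bias.

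For the dimension, it remains to check that these $h+1$ vectors are independent of the $2h$ scaling generators. The scaling generators of the second family change $W_3$ by $-W_3E_{jj}$, and these have nonzero row difference once the two rows of $W_3$ differ in column $j$. The remaining scaling generators have no $(W_3,\b_3)$ component and are independent as shown in Part (i). The total is therefore $3h+1$.

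\textbf{Main obstacle.} I expect the main difficulty to be the genericity claims, namely that the orbit-tangent rank equals its maximum $2h$ or $3h+1$ off a proper subset. The plan is to treat the spanning vectors as the columns of a matrix whose entries are polynomial in $W$. I will exhibit one explicit $W$ at which the rank is maximal, for example rows of the form $\e_i$ with $\b_1=\b_2=\bones$ and $W_3$ having distinct rows. Lower semicontinuity of rank then gives maximal rank on a Zariski-open dense set. A secondary check is that the orbit tangent space of the full semidirect product is just the sum of the tangent spaces of its factors, which follows from differentiating the product of curves.
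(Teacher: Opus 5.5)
Your proposal is correct and follows essentially the paper's own argument, which derives the corollary from the inclusions $\Model(N^3)\le\Struc(N^3,\celoss)\le\Obj$ together with \pref{prop:basic_flat}. Orbit vectors of the internal groups are flat for the pointwise model (respectively, move the logits only along $\RR\bones$, which the cross-entropy GN factor annihilates), and at critical points they lie in the Hessian kernel, with the dimension count coming from \eqref{dim_formula}; your version just makes the generators, the logit-shift step, and the genericity condition explicit (the constant translations already give all $h+1$ average-row directions at every $W$, and independence in part (ii) already follows from the non-final-layer components, so the conditions $\bar{\w}\neq 0$ and distinct $W_3$ entries are not needed).
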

In \pref{sec:relu_flat}, transverse flat directions yield subspaces of the kernel whose dimension grows quadratically, rather than only linearly. 

\paragraph{Flatness at noncritical points.}
At noncritical points, symmetry can identify flat directions annihilated by the Hessian by restricting its action to individual isotypic components. The general argument is as follows. Let $E\subseteq V\subseteq\RR^d$ be subspaces such that (i) $E$ is contained
in the null cone of $\nabla^2 f(\x)$ and (ii) is invariant under the Hessian
of $f|_{\x+V}$ at $\x$.  Although a vector $\v\in E$
need not itself lie in $\ker\nabla^2 f(\x)$, the vector
$\nabla^2 f(\x)\v$ is orthogonal to $E$. Indeed, let $\w\in E$. Then, since
$\inner{\nabla^2 f(\x)\u,\u}=0$ for
$\u\in\{\v,\w,\v+\w\}$, we get\footnote{This is the polarization identity. The subspace $E$ is then said to be \emph{isotropic} for the associated bilinear form.}
$	\inner{\nabla^2 f(\x)\v,\w}=0$. Hence $\nabla^2 f(\x)E\subseteq E^\perp$.

\begin{proposition}\label{prop:flat_directions}
	With $V$ and $E$ as above, $\nabla^2 f(\x)E\subseteq V^\perp$, and hence
	\begin{align}\label{lb_zero_evs}
		\dim\bigl(\ker\nabla^2 f(\x)\cap E\bigr)
		\geq \max\{\dim E-\operatorname{codim}V,0\}.
	\end{align}
Both hypotheses are satisfied under the following conditions.\\
(i) The level set of $f$ through $\x$ contains a neighborhood of $\x$
within $\x+E$.\footnote{Under this condition, every $\v\in E$ is realized by the
	affine flat curve so 
	\eqref{gamma_dot_in_null_cone} applies.}\\
(ii) $E$ is a sum of isotypic components
for an orthogonal subgroup of the
invariance group of the Hessian of $f|_{\x+V}$.
\end{proposition}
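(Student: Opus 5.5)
The plan is to reduce everything to two elementary facts about the self-adjoint operator $H\defeq\nabla^2 f(\x)$: the polarization identity, and the relation between $H$ and the Hessian of the restriction $f|_{\x+V}$. Write $P_V$ for the orthogonal projection onto $V$. Since $\x+V$ is an affine slice through $\x$, the Hessian of $f|_{\x+V}$ at $\x$ is the operator $H_V\defeq P_VH|_V\colon V\to V$. This holds because its quadratic form is the restriction of $\inner{H\u,\u}$ to $V$. Hypothesis (ii) then reads $P_VHE\subseteq E$.

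\textbf{Main inclusion.} First I use the polarization argument recorded just before the statement. Since $E$ lies in the null cone of $H$ and is a subspace, $\inner{H\v,\w}=0$ for all $\v,\w\in E$, so $HE\subseteq E^\perp$. Next, because $E\subseteq V$, for $\v,\w\in E$ we have $\inner{P_VH\v,\w}=\inner{H\v,P_V\w}=\inner{H\v,\w}=0$, so $P_VHE\subseteq E^\perp$ as well. Combining with invariance gives $P_VHE\subseteq E\cap E^\perp=\{0\}$. Hence $HE\subseteq\ker P_V=V^\perp$.

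\textbf{Dimension bound.} I consider the linear map $H|_E\colon E\to V^\perp$. Its kernel is $\ker H\cap E$. Rank--nullity gives $\dim(\ker H\cap E)\geq\dim E-\dim V^\perp=\dim E-\codim V$, and trivially the dimension is at least $0$. This yields \eqref{lb_zero_evs}.

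\textbf{Sufficient conditions.} For (i), take any $\v\in E$ and the affine curve $\gamma(t)=\x+t\v$. For small $t$ it stays in the level set, so $f(\gamma(t))=f(\x)$, and it has $\ddot\gamma\equiv0$. The computation leading to \eqref{gamma_dot_in_null_cone} then gives $\inner{H\v,\v}=0$, with no criticality assumption needed; thus $E$ lies in the null cone. For (ii), let $G$ be an orthogonal subgroup of the invariance group of $H_V$ acting on $V$. Orthogonality converts the congruence \eqref{hess_cong} into commutation $gH_V=H_Vg$. \pref{fact:iso_dec_commuting_maps} then shows that each isotypic component is $H_V$-invariant, so any sum of isotypic components is $H_V$-invariant. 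This is exactly hypothesis (ii).

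The only step I expect to require care is identifying $H_V$ with the compression $P_VH|_V$ and checking that the projection $P_V$ can be dropped in the polarization step. Both rely on $E\subseteq V$ and on using the Euclidean inner product restricted to $V$, which is the same inner product in which the action of $G$ is assumed orthogonal.
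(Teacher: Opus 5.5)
Your proof is correct and follows the paper's own argument. Polarization gives $\nabla^2 f(\x)E\subseteq E^\perp$, and invariance of $E$ under the compression $P_V\nabla^2 f(\x)|_V$ forces that compression to vanish on $E$; rank--nullity for $E\to V^\perp$ then gives the bound, and you verify (i) and (ii) via the affine flat curve and \pref{fact:iso_dec_commuting_maps}, exactly as the paper indicates.
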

Although the following result admits a more direct proof, we give a representation-theoretic argument to illustrate how, through \pref{prop:flat_directions}, symmetry and flat directions jointly enforce zero eigenvalues at \emph{noncritical} points under the cross-entropy loss, independently of the data and, this time, also independently of the architecture. Denote the weights and biases of the last layer at a point $W$ by \(W_{\mathrm{last}}\in\RR^{2\times h}\) and \(\b_{\mathrm{last}}\in\RR^{2}\), and recall that  $\sfs^\celoss_1$ and $\sfs^\celoss_3$ in \pref{table:symmetry_examples} are invariances of the objective (here extended to $[W_{\mathrm{last}},\b_{\mathrm{last}}]$). Since the common-logit shift $\sfs^\celoss_1$ leaves the Hessian unchanged, we may assume that  \([W_{\mathrm{last}},\b_{\mathrm{last}}]=(1,-1)^\top \c\) for some \(\c\in\RR^{h+1}\).  Then, \(\langle \sfs^\celoss_3\rangle \le \stab_{\Struc}(W) \le \stab_{\Obj}(W) \le \stab_{\Obj}\prn{\nabla^2 \ploss(W)}\), the last inclusion by \eqref{point_in_hessian}. The isotypic decomposition of $\weightspace$ under $\langle \sfs^\celoss_3\rangle$ consists of a trivial component of codimension $h+1$ and a sign component of dimension $h+1$, namely the subspace $E$ of perturbations supported in the final layer and satisfying $[\delta W_{\mathrm{last}},\delta\b_{\mathrm{last}}] =(1,1)^\top\v^\top$ for $\v\in\RR^{h+1}$,  which also coincides with the space of orbit flat directions $T_W(\tri{\sfs^\celoss_1}\cdot W)$. Taking \(V=T_W\weightspace\),  \pref{prop:flat_directions} shows that
$E$ is an $(h+1)$-dimensional subspace of the Hessian~kernel.

\begin{theorem}\label{thm:at_init_oblivious}
Under the cross-entropy loss, for any network whose final layer is fully
connected with two outputs, the Hessian has at least $h+1$ zero eigenvalues
(resp.\ $h$ without a final-layer bias), where \(h\) is the width of the last hidden layer, at any point where the Hessian exists, critical or not.
\end{theorem}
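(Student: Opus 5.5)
The plan is to give a direct argument that bypasses the representation-theoretic route sketched just before the statement, while staying consistent with it. Write the network as $N(\x;W)=W_{\mathrm{last}}\h(\x;W')+\b_{\mathrm{last}}$, where $W'$ collects all parameters preceding the last layer and $\h\in\RR^h$ is the last hidden representation. For binary cross-entropy, $\ell^\celoss(\z,\y)$ depends on $\z\in\RR^2$ only through the logit difference $z_1-z_2$. Hence the pointwise loss $\smloss$, and so $\ploss$, depends on $[W_{\mathrm{last}},\b_{\mathrm{last}}]$ only through the difference of its two rows. This is exactly the invariance $\sfs^\celoss_1(\v)$ of \pref{table:symmetry_examples}, extended to include the bias. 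It is internal, so by \pref{prop:inv_of_eloss} the identity $\ploss(W+\delta W)=\ploss(W)$ holds for every $\delta W$ supported in the final layer of the form $[\delta W_{\mathrm{last}},\delta\b_{\mathrm{last}}]=\bones_2\v^\top$ with $\v\in\RR^{h+1}$. Let $E$ denote the $(h+1)$-dimensional space of such $\delta W$.

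The key step is to note that this is an invariance under \emph{translations} of $\weightspace$, not merely linear maps. Since $\ploss(W+\u)=\ploss(W)$ for all $\u\in E$ and all $W$, the function $\ploss$ is constant along the affine subspaces $W+E$. Differentiating $t\mapsto\ploss(W'+t\u)$ at $t=0$ gives $d\ploss(W')[\u]=0$ for every $W'$ where the differential exists. Differentiating this identity once more in an arbitrary direction $\w$ gives $d^2\ploss(W)[\u,\w]=0$. So $E\subseteq\ker\nabla^2\ploss(W)$ wherever the Hessian exists, without any criticality assumption. Equivalently, in the language of \pref{prop:flat_directions}, take $V=T_W\weightspace$, so that $\operatorname{codim}V=0$. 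Hypothesis (i) holds because the level set contains $W+E$ locally. Hypothesis (ii) holds because $E$ is the sign isotypic component for $\tri{\sfs^\celoss_3}$ after the reduction $[W_{\mathrm{last}},\b_{\mathrm{last}}]=(1,-1)^\top\c$. The bound \eqref{lb_zero_evs} then yields $\dim\ker\ge h+1$.

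The main obstacle is regularity. For ReLU networks the Hessian may exist only in the generalized sense of \eqref{interchange}, so the differentiation step must be justified. I would handle this by observing that translation invariance $\ploss(\cdot+\u)=\ploss$ holds as an identity of functions. Every generalized derivative therefore commutes with translation, and $\partial_\u\ploss=0$ as a generalized function. Consequently $\partial_\w\partial_\u\ploss=0$ as well, and this vanishing is inherited by the classical Hessian at any point where it exists. This last point is the one to check carefully.

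Without a final-layer bias, the same argument applies with $\v\in\RR^h$, giving $h$ zero eigenvalues. Finally, the architecture preceding the last layer enters only through $\h$, which is untouched by perturbations in $E$. This is why the conclusion is independent of depth, data, and the point $W$.
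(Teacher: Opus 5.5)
Your proof is correct. Its core is the ``more direct proof'' that the paper mentions but deliberately does not give.

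The paper's argument is representation-theoretic. It uses the logit shift $\sfs^\celoss_1$ only to move the point to $[W_{\mathrm{last}},\b_{\mathrm{last}}]=(1,-1)^\top\c$ without changing the Hessian. It then notes that $\sfs^\celoss_3$ fixes this point, so by \eqref{point_in_hessian} it is an orthogonal invariance of $\nabla^2\ploss(W)$. It identifies $E$ as the sign isotypic component of $\langle\sfs^\celoss_3\rangle$. Finally, \pref{prop:flat_directions} with $V=T_W\weightspace$ shows that the isotropic, Hessian-invariant subspace $E$ lies in the kernel, since polarization gives $\nabla^2\ploss(W)E\subseteq E\cap E^\perp=\{0\}$.

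Your main argument needs neither the stabilizer nor orthogonality. The shift is a translation invariance valid at every parameter, so $\partial_\u\ploss$ exists and vanishes identically as a classical directional derivative. One more differentiation at $W$ then gives $E\subseteq\ker\nabla^2\ploss(W)$. This holds under any standard meaning of ``the Hessian exists at $W$'': Fr\'echet, or even a second-order Peano expansion. In the Peano case, comparing the expansions of $\ploss(W+s\w+s\u)$ and $\ploss(W+s\w)$ forces $\nabla^2\ploss(W)\u=0$.

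For the same reason, the regularity issue you flag is not actually an obstacle. The detour through generalized derivatives is unnecessary. By itself it would give pointwise conclusions only almost everywhere unless one also has continuity; compare item (iv) of \pref{thm:local_symmetry_inv}.

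Your ``equivalently'' paragraph correctly reproduces the paper's route. That route does not buy anything for this theorem, where yours is more economical. What it provides is a worked instance of the mechanism in \pref{prop:flat_directions}: flatness plus an isotypic condition forcing kernel vectors at noncritical points. The paper reuses this mechanism for the transverse flat directions in \pref{thm:at_positive}. There $E$ depends on $W_3$, and the conclusion is only relative to a proper subspace $V$, with the codimension loss in \eqref{lb_zero_evs}.
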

The theorem above concerns orbit flat directions, as does \pref{corr:order_h_zero_evs}. In \pref{sec:relu_flat}, after introducing higher-order structure groups, we turn to transverse flat directions.

\begin{remark}
	In the example above, an isotypic component coincides with the space of orbit
	flat directions. This illustrates the general phenomenon whereby the invariance
	group of the objective acts on level-set cones, providing a representation-theoretic
	description of both orbit and transverse flat vectors 
	\cite{arjevani2026dsb_tf}. The same work also studies flatness using the functional formulation underlying
	\eqref{network_gram_form}.
\end{remark}

\subsection{Generalized derivatives and  higher-order symmetry groups} \label{sec:dist_der}
In \pref{sec:failure} and \pref{sec:non_relu_flat}, we derived results on
Hessian invariance and the Hessian kernel, respectively, directly from the
objective function. We found, however, that this approach may capture too little to explain the observed spectra. In the sequel, we therefore turn to a direct analysis of the Hessian. The discussion can be followed without familiarity with the theory of generalized functions by taking the identities below
as given. Their derivation and the more rigorous development of the underlying theory in our setting
are deferred to \pref{sec:further_framework}.

Since expectation and differentiation cannot be interchanged naively, as indicated in \eqref{diff_under_expec}, we interpret derivatives as generalized functions. Then the generalized Hessian $D_W^2\smloss$ splits into a regular part, which agrees with $\nabla_W^2\smloss$ on every open set where the pointwise loss is $C^2$, and a singular part supported outside the $C^2$ region. While singular terms may be highly irregular in general, they are more
structured for ReLU DAGs, namely networks represented by directed acyclic
graphs with ReLU activations, a special case of the definable DAG networks in
\pref{def:definable_dag}. For such networks, standard results from geometric measure theory show that the singular part is a Radon measure concentrated on the codimension-one
interfaces between activation regions, with no Cantor part. More precisely,
it admits a density with respect to Hausdorff measure restricted to these
interfaces. Together, the regular and singular terms give the identity \eqref{interchange} of generalized functions.
\begin{proposition}[Informal]
	\label{prop:scheme_hessian_rest}
	Assume that $N$ is a ReLU DAG network and that the loss is either the squared loss
	or the cross-entropy loss. Then, as generalized functions,
	\begin{align}\label{holy_eqn}
		\partial_W^\balpha \ploss
		=
		\EE_{\z\sim\dd}\!\left[
		\partial_W^\balpha \smloss(\z;\cdot)
		\right],
	\end{align}
	for any multiindex $\balpha$. Moreover, the first
	derivatives of $\smloss$ and $N$ are regular.\footnote{The regularity of the first derivatives  ultimately follows from the activation regions being semialgebraic, and hence admitting a semialgebraic stratification on whose full-dimensional strata the pointwise loss is smooth. For definable, locally DC networks, see \pref{sec:further_framework}.}
\end{proposition}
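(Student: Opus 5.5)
The plan is to prove \eqref{holy_eqn} by pairing both sides with a test function and moving all derivatives onto the test function. In that form the interchange reduces to Fubini's theorem.

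Fix a multiindex $\balpha$ and a test function $\varphi\in C_c^\infty(\weightspace)$. By definition of the generalized derivative,
\begin{align*}
\langle \partial_W^\balpha\ploss,\varphi\rangle
=(-1)^{|\balpha|}\int_\weightspace \ploss(W)\,\partial^\balpha\varphi(W)\,dW
=(-1)^{|\balpha|}\int_\weightspace \EE_{\z\sim\dd}[\smloss(\z;W)]\,\partial^\balpha\varphi(W)\,dW .
\end{align*}
The right-hand side of \eqref{holy_eqn} is understood as the generalized function $\varphi\mapsto \EE_{\z}\langle \partial_W^\balpha\smloss(\z;\cdot),\varphi\rangle$. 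This quantity equals $(-1)^{|\balpha|}\EE_\z\int \smloss(\z;W)\partial^\balpha\varphi(W)\,dW$.

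So the whole identity follows once we justify exchanging $\EE_\z$ and $\int dW$. By Tonelli it suffices that $\EE_\z\int_{\mathrm{supp}\,\varphi}|\smloss(\z;W)|\,dW<\infty$. The first step is therefore a local-uniform growth bound. A ReLU DAG is piecewise polynomial in $(\x,W)$, so on a compact set $K\subset\weightspace$ we have $\|N(\x;W)\|\le C_K(1+\|\x\|)^{m}$, where $m$ is bounded by the depth.

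For squared loss this gives $|\smloss|\le C_K(1+\|\x\|+\|\y\|)^{2m}$. For cross-entropy, $\ell^\celoss(\z,\y)\le \log k+2\|\z\|_\infty$, giving a bound of order $(1+\|\x\|)^{m}$. We then impose the corresponding moment condition on $\dd$, which is the standing assumption alluded to in the introduction and holds for Gaussian mixtures. With these bounds, joint measurability of $\smloss$ (it is continuous in $W$ and Borel in $\z$) and Fubini give \eqref{holy_eqn} for every $\balpha$ simultaneously. A side check is that $\z\mapsto\langle\partial^\balpha_W\smloss(\z;\cdot),\varphi\rangle$ is measurable and integrable, which follows from the same bound.

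The second claim, regularity of the first derivatives, is where the real work lies. I would fix $\x$ and use that $W\mapsto N(\x;W)$ is continuous and piecewise polynomial with respect to a finite semialgebraic partition of $\weightspace$. This partition is given by the sign patterns of the pre-activations, which are polynomial in $W$. On each open full-dimensional cell $N$ is a polynomial, and the complement of the union of these cells is a closed semialgebraic set of codimension at least one, hence Lebesgue-null.

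A continuous function that is $C^1$ off a closed null set and locally Lipschitz (ReLU DAGs are locally Lipschitz in $W$) lies in $W^{1,\infty}_{\mathrm{loc}}$ by Rademacher's theorem. Its generalized gradient is then the a.e. classical gradient, with no singular part, because a Lipschitz function has no jump across interfaces. Composing with $\ell^\ast$, which is smooth, gives the same conclusion for $\smloss$ by the chain rule for Lipschitz functions.

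The main obstacle is making the local Lipschitz and growth bounds uniform in $\x$ in the form needed to apply Fubini also for $\partial_W\smloss$. It may also be necessary to verify that the regular representative of $\EE_\z[\partial_W\smloss]$ coincides with the classical gradient of $\ploss$, via dominated convergence with the envelope $C_K(1+\|\x\|)^{m}$. For general multiindices I would not attempt to identify the singular part here. That identification via the Gauss--Green formula on the semialgebraic interfaces is deferred to \pref{sec:further_framework}, and the informal statement only requires the distributional identity proved above.
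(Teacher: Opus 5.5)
Your proposal is correct and follows the paper's argument. You pair with a test function, move all derivatives onto $\varphi$, and exchange $\mathbb{E}_{\mathbf z}$ with $\int dW$ by Fubini; the paper justifies this exchange directly from the standing hypothesis $\ploss\in L^1_{\mathrm{loc}}(\weightspace)$ together with $\smloss\ge 0$ and Tonelli, and your growth-plus-moment bound is one sufficient way to verify exactly that. You then obtain regularity of the first derivatives, as the paper does, from local Lipschitz continuity (giving $W^{1,\infty}_{\mathrm{loc}}$) and the chain rule; the uniformity issue you flag at the end is not needed, since the test-function argument already covers order one and local Lipschitz continuity makes the $C^1$-off-a-null-set hypothesis superfluous.
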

In this formulation, once invariance is established on an invariant open set, it
transfers to the classical Hessian almost everywhere, and everywhere when the
objective function is $C^2$.  To make this precise, we now define the action of $\aut (\universe)$ formally. Since $\weightspace$ is Euclidean, we may identify
$T_W\weightspace\simeq\weightspace$, and use the induced linear action on
perturbations:\footnote{On a general  manifold of weights, identifying $T_W\Theta$ with
	$T_{gW}\Theta$ is not canonical and requires further structure. In the
	Euclidean setting used here, $T_W\weightspace\simeq\weightspace$ is
	canonical, and hence so is $T_W\Theta\simeq T_{gW}\Theta$.}
\begin{align}
	g\cdot(\z,W,\delta W)
	&=
	(g\z,W,g\,\delta W),
	\qquad
	\delta W\in T_W\weightspace .
\end{align}
If $\kappa$ is smooth on $\dataspace\times\weightspace$, then
$d_W^k\kappa$ is a section of $\mathrm{Sym}^k(T^*\weightspace)$. The action on
functions is by pullback, while the action on $d_W^k\kappa$ is the induced
dual action on symmetric covariant $k$-tensors. For example, for the first
differential and any $g\in\aut(\universe)$,
\begin{align}
	(g\cdot d_W\kappa)(\z,W)[\delta W]
	&=
	d_W\kappa(g^{-1}\z,W)[g^{-1}\delta W].
\end{align}
The same description applies to generalized derivatives in the parameter
variables.\footnote{\label{ftnote:actionon_distr}
	On a product open set
	$\Omega=\Omega_{\dataspace}\times\Omega_{\weightspace}$,
	we regard $d_W^k\kappa$ as the family
	$\z\mapsto d_W^k\kappa(\z,\cdot)$ of tensor-valued distributions
	in $W$. The action replaces $\z$ by $g_{\dataspace}^{-1}\z$
	and applies the inverse linear action to the perturbations.}	 The extended actions yield the natural higher-order extensions of the symmetry 	groups in \pref{def:type_of_symmetries} and \pref{def:invs}, as detailed in
	\pref{table:symmetry_groups}. The analogue of \pref{prop:inv_of_eloss} follows from
	\pref{prop:scheme_hessian_rest}, with higher-order structural and \integrated invariances
	implying invariance of the corresponding higher-order differentials.

\newcommand{\AutW}{\aut_W \weightspace}
\newcommand{\AutU}{\aut_W \universe}
\newcommand{\piw}{\pi_{W}}
\newcommand{\StabU}[1]{\stab_{\AutU}\!\left(#1\right)}
\newcommand{\StabGlobal}[1]{
	\stab_{\aut(\universe)}\!\left(#1\right)}

\begin{table}[!t]
	\centering
	\begin{tabular}{lll}
		\toprule
		& Notation & Definition \\
		\midrule
		
		Local automorphism group
		& $\AutW$
		& $\aut(T_W\weightspace)$
		\\
		Weight-component projection
		& $\piw$
		& Projection onto $\AutW$ \\
		Joint local automorphism group
		& $\AutU$
		& $\aut(\dataspace)\times \AutW$ \\
		Distribution group
		& $\dist(\dd)$
		& $\stab_{\aut(\dataspace)}{\dd}$ \\
Initialization group
& $\init(\nu)$
& $\stab_{\aut(\weightspace)}{\nu}$ \\
		\midrule
		\multicolumn{3}{c}{\textbf{Symmetry groups of pointwise quantities}} \\
		\midrule
		
		Order-$r$ model/architecture
		& $\model_W^{[r]}$
		& $\StabU{d_W^r (N\times \id_\outspace)}$ \\
		
		Order-$r$ structure
		& $\struc_W^{[r]}$
		& $\StabU{d_W^r \smloss}$ \\
		
		GN structure
		& $\struc_W^{[\mathrm{GN}]}$
		& $\StabU{\qgn_W \smloss}$ \\
		
		\midrule
		\multicolumn{3}{c}{\textbf{Invariance groups of the objective}} \\
		\midrule
		
		Order-$r$ model/architectural invariances
& $\Model_W^{[r]}$
& $\piw\bigl(\model_W^{[r]} \cap (\id_{\dataspace}\times \AutW)\bigr)$ \\

Order-$r$ structural invariances
& $\Struc_W^{[r]}$
& $\piw\bigl(\struc_W^{[r]} \cap (\id_{\dataspace}\times \AutW)\bigr)$ \\

Order-$r$ $\dd$-\integrated invariances
& $\Obj_W^{[r]}$
& $\piw\bigl(\struc_W^{[r]} \cap (\dist(\dd) \times \AutW)\bigr)$ \\

GN structural invariances
& $\Struc_W^{[\mathrm{GN}]}$
& $\piw\bigl(\struc_W^{[\mathrm{GN}]} \cap (\id_{\dataspace}\times \AutW)\bigr)$ \\

GN $\dd$-\integrated invariances
& $\Obj_W^{[\mathrm{GN}]}$
& $\piw\bigl(\struc_W^{[\mathrm{GN}]} \cap (\dist(\dd)\times \AutW)\bigr)$ \\
		\midrule
		\multicolumn{3}{c}{
			\textbf{Invariance groups of averaged quantities}
		} \\
		\midrule
Order-$r$ model/architectural invariances
& $\avg{\Model}^{[r]}$
& $\pi_{W}\Bigl(
\model^{[r]}
\cap
(\id_{\dataspace}\times\init(\nu))
\Bigr)$ \\

Order-$r$ structural invariances
& $\avg{\Struc}^{[r]}$
& $\pi_{W}\Bigl(
\struc^{[r]}
\cap
(\id_{\dataspace}\times\init(\nu))
\Bigr)$ \\

Order-$r$ $(\dd,\nu)$-\integrated invariances
& $\avg{\Obj}^{[r]}$
& $\pi_{W}\Bigl(
\struc^{[r]}
\cap
(\dist(\dd)\times\init(\nu))
\Bigr)$ \\

GN structural invariances
& $\avg{\Struc}^{[\mathrm{GN}]}$
& $\pi_{W}\Bigl(
\struc^{[\mathrm{GN}]}
\cap
(\id_{\dataspace}\times\init(\nu))
\Bigr)$ \\

GN $(\dd,\nu)$-\integrated invariances
& $\avg{\Obj}^{[\mathrm{GN}]}$
& $\pi_{W}\Bigl(
\struc^{[\mathrm{GN}]}
\cap
(\dist(\dd)\times\init(\nu))
\Bigr)$ \\
		\bottomrule
	\end{tabular}
	
\caption{Definitions of local automorphism groups and of symmetry and
	invariance groups for higher derivatives and the GN form, extending
	\pref{def:type_of_symmetries} and \pref{def:invs}. At fixed~$W$, each
	classical pointwise derivative is viewed as a partially defined map on
	$\dataspace$ with values in symmetric multilinear forms, and its stabilizer
	is taken under the induced action. For generalized derivatives, the corresponding groups are defined on open
sets; see \pref{sec:formal_sym}. Here
	$\aut_W(\weightspace)=\aut(T_W\weightspace)$, whereas
	$\aut(\weightspace)_W$ is the stabilizer of $W$ in
	$\aut(\weightspace)$. Without a subscript $W$, $\model^{[r]}$, $\struc^{[r]}$, and
	$\struc^{[\mathrm{GN}]}$ denote the corresponding global stabilizers under
	the induced action
	$g\cdot(\z,W,\delta W)=(g\z,gW,g\delta W)$.
	In the barred rows, $\pi_W$ extracts the induced linear action on
	perturbations, while the intersections impose compatibility with $\nu$
	and, in the composite case, with $\dd$. These groups preserve the
	corresponding initialization~averages.}

	\label{table:symmetry_groups}
\end{table}

\subsection{Gauss--Newton regularity and well specification}\label{sec:overview_gn}
The identity \eqref{holy_eqn} can be refined at second order, separating the Gauss--Newton (GN) term, always regular, from a second-order model term that carries all singularities. This corresponds to the second-order chain rule applied to \(\kappa(\x,\y;W)=\ell(N(\x;W),\y)\), which gives the coordinate-free version of \eqref{gn_decomp}
\begin{align}\label{aut_universe_action}
	d_W^2 \kappa(\x,\y;W)[\delta W]
	&=
	d_1^2\ell(N(\x;W),\y)\big[d_W N(\x;W)[\delta W]\big]
	+
	d_1\ell(N(\x;W),\y)\, d_W^2 N(\x;W)[\delta W].
\end{align}
The first term is the \emph{pointwise GN form}, which we denote by \(\qgn_W\kappa\). The \emph{(expected) GN form} is obtained as the pushforward of the pointwise GN form,
	$d_W^{\mathrm{GN}}\ploss = \EE_{\dd}\, d_W^{\mathrm{GN}}\kappa$.
The symmetry groups \(\struc_W^{[\mathrm{GN}]}\), \(\Struc_W^{[\mathrm{GN}]}\), and \(\Obj_W^{[\mathrm{GN}]}\) are defined analogously and are listed in \pref{table:symmetry_groups}.
The coefficient $d_1\ell(N(\x;W),\y)$ in the second-order model term isolates a natural condition under which the Hessian reduces to the GN term.

\begin{definition}[Well-specified model]\label{def:wellspec}
	Set \(\dd_{\inspace}:=(\pi_{\x})_*\dd\). A model \(N\) is
	\emph{well-specified} at \(W^\star\) if
	\begin{align}
		\EE_{(\x,\y)\sim\dd}
		\bigl[d_1\ell(N(\x;W^\star),\y)\mid\x\bigr]
		=0
		\quad\text{for }\dd_{\inspace}\text{-almost every }\x.
	\end{align}
\end{definition}

\begin{proposition}[informal]\label{prop:gn_reduction}
	(i) The pointwise GN form and the  GN form are regular.
	\\
	(ii) If the objective is twice differentiable at \(W^\star\) and the model is well-specified, then
	\begin{align}
		d_{W^\star}^{\mathrm{GN}}\ploss = d_{W^\star}^2 \ploss.
	\end{align}
\end{proposition}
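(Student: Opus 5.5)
Part (i) and part (ii) call for different arguments, and I will take them in that order.

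For (i), I would use the second-order chain rule \eqref{aut_universe_action} on each open activation region, where $N$ is smooth. The pointwise GN form there is $d_1^2\ell(N,\y)[d_WN[\delta W]]$. This expression involves only the first differential $d_WN$, which is regular by \pref{prop:scheme_hessian_rest}. The factor $d_1^2\ell$ is a smooth function of $N$, since $\ell$ is the squared or cross-entropy loss, and $N$ is continuous. So the pointwise GN form is a locally bounded, measurable function of $(\z,W)$ with no Gauss--Green interface contributions. The only singular part of $D_W^2\smloss$ then comes from differentiating $d_WN$ a second time, and this occurs only in the model term $d_1\ell\,d_W^2N$.

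Regularity of the expected GN form $\EE_\dd\,d_W^{\mathrm{GN}}\smloss$ follows from this. I would check local integrability with the moment assumptions implicit in \pref{prop:scheme_hessian_rest}. In the ReLU case, $\|d_WN(\x;W)\|\le C(W)(1+\|\x\|)$ locally uniformly in $W$, and $d_1^2\ell$ is bounded for cross-entropy and constant for squared loss. Dominated convergence then shows that the average is a continuous, hence regular, function of $W$.

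For (ii), I would apply \eqref{holy_eqn} with $|\balpha|=2$ and split the generalized Hessian into the GN part and the model part $\EE_\dd[d_1\ell(N,\y)\,D_W^2N]$. The next step is to condition on $\x$. The factor $D_W^2N(\x;\cdot)$, regular part and interface measure together, depends on $\x$ but not on $\y$. The tower property then turns the model part into $\EE_{\dd_\inspace}\bigl[\EE[d_1\ell(N(\x;W),\y)\mid\x]\,D_W^2N(\x;W)\bigr]$. At $W=W^\star$ the conditional expectation vanishes $\dd_\inspace$-a.e. by \pref{def:wellspec}.

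\textbf{Main obstacle.} The model part is a generalized function of $W$, so it cannot simply be evaluated at the single point $W^\star$. My plan is to exploit the fact that $\ploss$ is twice differentiable at $W^\star$, so the classical second differential there is the limit of difference quotients of $d_W\ploss$. Because first derivatives are regular, $d_W\ploss(W)=\EE[d_1\ell(N,\y)\,d_WN]$ holds classically. I would then write the difference quotient along $\delta W$ and expand $d_1\ell(N(\x;W^\star+t\delta W),\y)=d_1\ell(N(\x;W^\star),\y)+t\,d_1^2\ell\,[d_WN[\delta W]]+o(t)$.

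Pairing the zeroth-order term with $d_WN(\x;W^\star+t\delta W)$ gives something whose conditional expectation given $\x$ is zero by well specification, since $d_WN$ depends on $\x$ only. Note that this happens before taking any limit, so the interface singularities never have to be evaluated at a point. The first-order term gives the GN form in the limit, using continuity of $d_WN$ in $W$ away from a $\dd_\inspace$-null set together with dominated convergence. The $o(t)$ remainder is controlled by local Lipschitz bounds on $d_1\ell$ combined with the moment bounds. Together these yield $d_{W^\star}^2\ploss=d^{\mathrm{GN}}_{W^\star}\ploss$.
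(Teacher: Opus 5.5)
Part (i) is essentially the paper's argument: $d_WN\in L^\infty_{\mathrm{loc}}$ (from $N\in W^{1,\infty}_{\mathrm{loc}}$) times the continuous factor $d_1^2\ell(N,\y)$ is locally bounded, hence regular. The paper also derives the decomposition \eqref{gn_dist_dec} from the Sobolev chain rule and the $BV$ product rule, and that is what justifies your remark that all singular mass sits in the model term. You do not need continuity of the averaged GN form in $W$, and it can fail, e.g.\ where a unit's incoming weights and bias both vanish, since then every input lies on the interface; local boundedness suffices.

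Part (ii) takes a genuinely different route. The paper works with function values:
\begin{itemize}
\item it Taylor-expands $\ell(\cdot,\y)$ to second order about $N(\x;W^\star)$;
\item it removes the linear term $\EE[d_1\ell(N(\x;W^\star),\y)(N(\x;W)-N(\x;W^\star))]$ by well specification after conditioning on $\x$;
\item it passes to the limit in the quadratic remainder along rays by dominated convergence, using only differentiability of $N(\x;\cdot)$ at $W^\star$ and the $L^2$ Lipschitz envelope;
\item it matches the result with the second-order expansion of $\ploss$ and polarizes.
\end{itemize}
You instead differentiate the classical gradient along a segment and apply the same conditioning at every point of it. This gives the bilinear form directly and makes the cancellation exact before any limit.

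The cost is two inputs the paper never needs.
\begin{itemize}
\item $\nabla\ploss(W)=\EE[d_1\ell\,d_WN(\x;W)]$ must hold at every point of the segment. Regularity of the distributional gradient only gives this for almost every $W$, and a segment is a null set. So you need a pointwise dominated-convergence argument there, hence differentiability of $N(\x;\cdot)$ at those points for $\dd_\inspace$-a.e.\ $\x$.
\item You need $d_WN(\x;W^\star+s\,\delta W)\to d_WN(\x;W^\star)$ for a.e.\ $\x$. This holds at the nondegenerate minima treated in the paper, where no preactivation vanishes identically in $\x$ at $W^\star$. It does not follow from differentiability at $W^\star$.
\end{itemize}
You can drop the second requirement by testing only against $\delta W$ and polarizing. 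For definable networks, $s\mapsto\partial_sN(\x;W^\star+s\,\delta W)$ has one-sided limits at $0$ by the monotonicity theorem, and the mean value theorem forces them to equal $d_WN(\x;W^\star)[\delta W]$. Overall, the paper's value-level argument needs fewer hypotheses. Yours stays closer to the first-order regularity of \pref{prop:scheme_hessian_rest}.
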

In this work, all global minima considered are well-specified. Nonglobal local minima and noncritical points, such as those treated in \pref{sec:relu_flat}, need not be, and often are not.

\subsection{Invariance of the Averaged Hessian and the Spectrum of a Random Hessian}
\label{sec:streamlined-internal-symmetry}

The barred groups introduced above record symmetries preserved by averaging
over initialization. We now apply them to the \emph{regular} second differential, as considered in \cite{sagun2016eigenvalues}. We
first give general results for its average and for the spectrum of a random
Hessian. Their application to $N^3$ appears in \pref{sec:streamlined-three-layer}.

\paragraph{Structural invariances of the averaged second differential.}
Using the canonical identification $T_W\weightspace\simeq\weightspace$, let
$\nu$ be the initialization law. We assume that the expectations below exist
and that the pointwise loss may be differentiated twice in the indicated
regular sense for $\dd\otimes\nu$-almost every $(\z,W)$, excluding the singular terms 
described in \pref{prop:scheme_hessian_rest}.  
All averages are iterated, first over the data and then over the
initialization. Whether the resulting form also equals
$\EE_Wd_{W,\mathrm{reg}}^2\ploss(W)$ is a separate identification and is
not assumed in the argument.

\begin{proposition}
	\label{prop:streamlined-internal-average}
	We have
	\begin{align}
		\avg{\Struc}^{[2]}(N,\ell,\nu)
		&\leq
		\avg{\Obj}^{[2]}(N,\ell,\dd,\nu)
		\leq
		\stab_{\aut_W(\weightspace)}\left(
		\EE_{W\sim\nu}\EE_{\z\sim\dd}
		d_{W,\mathrm{reg}}^2\smloss(\z;W)
		\right).
	\end{align}
\end{proposition}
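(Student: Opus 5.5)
The first inclusion is formal and I will dispose of it first. An element of $\avg{\Struc}^{[2]}$ is the weight component of some $g\in\struc^{[2]}\cap(\id_{\dataspace}\times\init(\nu))$. Since $\id_{\dataspace}\in\dist(\dd)$ for every $\dd$, such a $g$ also lies in $\struc^{[2]}\cap(\dist(\dd)\times\init(\nu))$. Applying $\pi_W$ gives $\avg{\Struc}^{[2]}\leq\avg{\Obj}^{[2]}$. Both sides are groups because the intersections are intersections of subgroups of $\aut(\universe)$, and $\pi_W$ is a homomorphism. This is the same argument as in \pref{prop:inv_of_eloss}.

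For the second inclusion, fix $g\in\struc^{[2]}\cap(\dist(\dd)\times\init(\nu))$. Write $g_{\dataspace}$ for its data component, $g_{\weightspace}$ for its parameter component, and $\bar g$ for the induced linear action on perturbations. The plan is to run the chain \eqref{struc_combine_with_dist} at the averaged level, with two change-of-variables steps: one in $\z$ and one in $W$.

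The structural condition, read through the global induced action $g\cdot(\z,W,\delta W)=(g\z,gW,g\delta W)$ of \pref{table:symmetry_groups}, says that for $\dd\otimes\nu$-a.e.\ $(\z,W)$,
\begin{align*}
d^2_{W,\mathrm{reg}}\smloss(g_{\dataspace}\z;g_{\weightspace}W)[\bar g\,\delta W]=d^2_{W,\mathrm{reg}}\smloss(\z;W)[\delta W].
\end{align*}
Fix $\delta W$ and compute $\EE_{W\sim\nu}\EE_{\z\sim\dd}\,d^2_{W,\mathrm{reg}}\smloss(\z;W)[\bar g\,\delta W]$ in three steps.
\begin{enumerate}
\item Substitute $W=g_{\weightspace}W'$. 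Because $g_{\weightspace}\in\init(\nu)$, we have $(g_{\weightspace})_*\nu=\nu$, so the outer average is unchanged.
\item Substitute $\z=g_{\dataspace}\z'$ in the inner average. This is legitimate because $g_{\dataspace}\in\dist(\dd)$.
\item Apply the structural identity to the integrand. The integrand becomes $d^2_{W,\mathrm{reg}}\smloss(\z';W')[\delta W]$.
\end{enumerate}
The result is $\EE_W\EE_{\z}\,d^2_{W,\mathrm{reg}}\smloss(\z;W)[\delta W]$. Polarization then upgrades this diagonal identity to invariance of the bilinear form, so $\bar g$ lies in the stabilizer on the right.

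Since the averages are iterated, first over data and then over initialization, each substitution acts on one integral at a time. No Fubini identification with $\EE_W d^2_{W,\mathrm{reg}}\ploss(W)$ is needed, in line with the standing caveat.

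I expect the main obstacle to be justifying that the pointwise structural identity holds almost everywhere for the regular part. The symmetries are affine, so they map the codimension-one activation interfaces of \pref{prop:scheme_hessian_rest} onto interfaces. Hence they preserve the open $C^2$ region and its null complement, and they commute with taking the regular part.

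A second point is that the substitutions must preserve the exceptional null sets. This requires $g_{\dataspace}$ and $g_{\weightspace}$ to be measure-preserving, which is exactly the content of $\dist(\dd)$ and $\init(\nu)$. Integrability of the transformed integrand follows from the assumed existence of the expectations by the same change of variables.
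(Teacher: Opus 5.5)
Your proposal is correct and follows essentially the same route as the paper. The paper's proof consists only of the remark that the standard composite invariance argument \eqref{struc_combine_with_dist} is applied first to $\dd$ and then to $\nu$. Your two changes of variables (using $g_{\dataspace}\in\dist(\dd)$ and $g_{\weightspace}\in\init(\nu)$), followed by the structural identity and polarization, are exactly that argument written out, and doing the outer substitution before the inner one makes no difference.
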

The proof is the standard composite invariance argument, applied first to
$\dd$ and then to $\nu$.

Equip $\weightspace$ with its Euclidean
inner product, let $H(W)$ be the self-adjoint operator representing
$\EE_{\z\sim\dd}d_{W,\mathrm{reg}}^2\smloss(\z;W)$, and set
$M=\EE_{W\sim\nu}H(W)$. \pref{fact:iso_dec_commuting_maps} applies after
restricting the groups in \pref{prop:streamlined-internal-average} to their
orthogonal elements, facilitating the analysis of the averaged Hessian through the associated isotypic components and the resulting constraints
on eigenvalue multiplicities. We now ask what  transfers to a random Hessian $H(W)$. 

We give here a general spectral count. For a subspace $E\subseteq\weightspace$, let $\Pi_E$ denote the orthogonal projection
onto $E$. For a self-adjoint operator $A$ and an interval $I$, let $n_-(A)$
denote its negative inertia and let $N_A(I)$ count its eigenvalues in $I$,
with multiplicity.
\begin{proposition}
	\label{prop:streamlined-random-bulk}
	Let $H=Q+R$ be a random self-adjoint operator on the Euclidean space
	$\weightspace$. Suppose $U,B\subseteq\weightspace$ are fixed orthogonal
	subspaces such that, almost surely, $Q\succeq0$,
	$U\subseteq\ker Q\cap\ker R$, and $\Pi_BR\Pi_B=0$. 	Fix a nonzero subspace $E\subseteq B$ and assume
	$\EE\tr(\Pi_EQ\Pi_E)<\infty$. Then, almost surely,
	\begin{align}
		n_-(H)
		&\leq
		\codim(U\oplus B).
		\label{eq:streamlined-general-inertia}
	\end{align}
	For every $T>0$ and $\delta\in(0,1)$, with probability at least
	$1-\delta$,
\begin{align}
	N_H([0,T])
	&\geq
	\dim(U\oplus E)-\codim(U\oplus B)
	-\frac{\EE\tr(\Pi_EQ\Pi_E)}{\delta T}.
	\label{eq:streamlined-general-bound}
\end{align}
\end{proposition}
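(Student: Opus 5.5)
The argument is deterministic, applied for each fixed realization, and uses two elementary tools: a Sylvester-type inertia bound via a compression, and Markov's inequality.

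\textbf{Inertia bound.} Set $S\defeq U\oplus B$. For $\u\in U$ and $\b\in B$, expand $\inner{H(\u+\b),\u+\b}$. Since $U\subseteq\ker Q\cap\ker R$ and both operators are self-adjoint, every term involving $\u$ vanishes. The form therefore reduces to
\begin{align*}
\inner{Q\b,\b}+\inner{R\b,\b},
\end{align*}
and $\inner{R\b,\b}=\inner{\Pi_BR\Pi_B\b,\b}=0$. Hence $H$ is positive semidefinite on $S$. If $n_-(H)>\codim S$, the span of the negative eigenvectors would meet $S$ nontrivially, which gives a contradiction. This proves \eqref{eq:streamlined-general-inertia} almost surely.

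\textbf{Count near zero.} Apply the same reasoning on $S_E\defeq U\oplus E$. For $\v=\u+\e$ with $\u\in U$ and $\e\in E$, we get $\inner{H\v,\v}=\inner{Q\e,\e}$, since $E\subseteq B$ kills the $R$-term. Define $A\defeq \Pi_EQ\Pi_E\succeq0$ on $E$. Then $\#\{\text{eigenvalues of }A>T\}\le \tr(A)/T$. Let $E_T\subseteq E$ be the spectral subspace of $A$ for eigenvalues in $[0,T]$; its dimension is at least $\dim E-\tr(A)/T$. On $U\oplus E_T$ we have $0\le \inner{H\v,\v}\le T\|\v\|^2$.

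The key step is to convert this two-sided bound on a subspace into an eigenvalue count in $[0,T]$. Since $H$ is only quadratically controlled on $U\oplus E_T$ and not invariant there, I would use min--max. The upper bound gives at least $m\defeq\dim(U\oplus E_T)$ eigenvalues $\le T$, because $\lambda_m(H)\le\max_{U\oplus E_T}$ of the Rayleigh quotient. Among these, at most $n_-(H)\le\codim S$ are negative. Hence
\begin{align*}
N_H([0,T])\ge \dim U+\dim E-\tr(A)/T-\codim(U\oplus B).
\end{align*}

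\textbf{Probability.} By Markov's inequality, $\tr(A)\le \EE\tr(\Pi_EQ\Pi_E)/\delta$ with probability at least $1-\delta$. Substituting this into the previous display yields \eqref{eq:streamlined-general-bound}.

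The only delicate point is the min--max step: bounding $\lambda_m$ from above using a subspace on which the form is at most $T$, and then subtracting the negative inertia. Both are standard once $H\succeq0$ on $S$ is established.
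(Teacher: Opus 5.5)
Your proof is correct and follows essentially the same route as the paper's: positivity of $H$ on $U\oplus B$ bounds the negative inertia, and on $U\oplus E$ the compression of $H$ reduces to $\Pi_EQ\Pi_E$, whose eigenvalues above $T$ are bounded by the trace and then controlled by Markov's inequality. Where the paper simply invokes Cauchy interlacing twice, you spell out the same steps, using the dimension-intersection argument for the inertia and min--max on $U\oplus E_T$ for the count.
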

\begin{proof}
	$\Pi_{U\oplus B}H\Pi_{U\oplus B}
	=\Pi_{U\oplus B}Q\Pi_{U\oplus B}\succeq0$.
	Interlacing gives
	$n_-(H)\leq\codim(U\oplus B)$. For the spectral count,
	$\Pi_{U\oplus E}H\Pi_{U\oplus E}=\Pi_EQ\Pi_E\succeq0$.
	Markov's inequality gives
	$\tr(\Pi_EQ\Pi_E)\leq\EE\tr(\Pi_EQ\Pi_E)/\delta$ with probability at least
	$1-\delta$. On this event, at most
	$\EE\tr(\Pi_EQ\Pi_E)/(\delta T)$ eigenvalues of this compression exceed
	$T$. Hence at least
	$\dim U+\dim E-\EE\tr(\Pi_EQ\Pi_E)/(\delta T)$ of its eigenvalues lie in
	$[0,T]$. Interlacing gives at least as many eigenvalues of $H$ below or
	equal to $T$. At most $\codim(U\oplus B)$ of them are negative. Since
	$U\perp E$, we have $\dim U+\dim E=\dim(U\oplus E)$, which gives
	\eqref{eq:streamlined-general-bound}.
\end{proof}


\section{Spectral Analysis} \label{sec:spectra}
We now turn to analyzing the Hessian spectra for \pref{setting:gaussian_mixture} and \pref{setting:gaussian}. Our general strategy is to construct a symmetric reference when needed,
identify the relevant structural and distributional symmetries, and determine
the induced composite invariance groups. We then derive the spectral structure forced by symmetry and show how it gives rise to the spectra of the original trained models through SB.
Some of the results are purely structural, as the next subsection~demonstrates.

\subsection[The world is (quite) flat]
{The world is (quite) flat\protect\footnotemark}
\label{sec:relu_flat}
\footnotetext{The title echoes Thomas L. Friedman’s \emph{The World Is Flat}.}

Both \pref{corr:order_h_zero_evs} and \pref{thm:at_init_oblivious} guarantee
$\Omega(h)$ zero eigenvalues under fairly general conditions by identifying orbit
flat directions. The former applies to $N^3$ at critical points, independently of the data distribution, and the latter, under cross-entropy loss, at arbitrary points of networks regardless of activation.
 This generality comes at a price. By accounting only for the effects of global symmetries of the loss, these results miss important local degrees of freedom
and therefore fall short of the quadratic growth in kernel dimension observed in \pref{fig:original_vs_symmetric_mix} and \pref{fig:sq_evs}. Using the higher-order symmetry groups (\pref{sec:dist_der}), we now show
that transverse flat directions recover the missing degeneracy, yielding
$\Omega(h^2)$ zero eigenvalues. Throughout this subsection, we set $h=d$ and assume $k=1$ for simplicity.
\begin{theorem}\label{thm:at_positive}
	Under these standing assumptions, at $W_2,\b_2>0$, independently of
	the loss function and data distribution,  the numbers of zero eigenvalues contributed by flat directions are almost everywhere given by
	\[
	\begin{array}{l|ccc}
		& N^{2\nicefrac{1}{2},\circ} & N^{3,\circ} & N^3 \\
		\hline
		\ker\mathrm{GN} & h^2 & h(h+1) & h(h+2) \\
		\ker\mathrm{Hessian} & h(h-1) & (h-1)^2 & h(h-1) \\
		\dim\Theta & 2h^2 & h(2h+1) & (2h+1)(h+1)
	\end{array}.
	\]
	For the GN matrix of $N^3$, the constituent subspaces of the kernel are
	described in \pref{table:gauss_flat_directions}. Similar descriptions
	hold for the other cases.
\end{theorem}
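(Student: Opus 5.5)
The plan rests on one observation. On the open region $\{W_2>0,\ \b_2>0\}$ the second ReLU is inactive as a nonlinearity. Since $\sigma(W_1\x+\b_1)\ge0$ entrywise, we have $W_2\sigma(W_1\x+\b_1)+\b_2>0$ for every $\x$, so $D_2\equiv I$ there. Hence on this region
\[
N^3(\x;W)=A\,\sigma(W_1\x+\b_1)+c,\qquad A\defeq W_3W_2,\quad c\defeq W_3\b_2+b_3 ,
\]
and the pointwise loss factors as $\smloss=F\circ\phi$, where $\phi(W)=(W_1,\b_1,A,c)$ is polynomial. The function $F$ depends on the loss and on $\dd$ only through $(W_1,\b_1,A,c)$. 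The diminished cases are handled the same way: for $N^{3,\circ}$ one drops $\b_1,\b_2,b_3$ and $c$, and for $N^{2\nicefrac12,\circ}$ one fixes $W_3=\bones^\top$, so $A=\bones^\top W_2$.

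All flat directions we use have vanishing $\delta W_1$ and $\delta\b_1$ components. These are the only coordinates that see activation interfaces, so by \pref{prop:scheme_hessian_rest} the singular part of the generalized Hessian does not pair with them. The computation therefore reduces to the classical chain rule
\[
d^2\smloss[\v,\w]=d^2F\bigl[d\phi\,\v,\,d\phi\,\w\bigr]+dF\bigl[d^2\phi[\v,\w]\bigr],
\]
and the first term is exactly the GN form.

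\textbf{GN kernel.} The GN kernel contains $\ker d\phi$ together with the first-layer rescaling orbit directions; the latter are the $h$ independent vectors $(E_{ii}W_1,E_{ii}\b_1,-W_2E_{ii},0,\dots)$ from \pref{corr:order_h_zero_evs}. For $N^3$, the map $(W_2,\b_2,W_3,b_3)\mapsto(W_3W_2,\,W_3\b_2+b_3)$ goes from dimension $h^2+2h+1$ onto dimension $h+1$ whenever $W_3\neq0$. Its fibre directions therefore contribute $h^2+h$, and adding the $h$ rescalings gives $h(h+2)$.

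The same count gives $h(h+1)$ for $N^{3,\circ}$: there are $h^2+h$ variables, $h$ image dimensions, and $h$ rescalings. For $N^{2\nicefrac12,\circ}$ it gives $h^2$: there are $h^2-h$ directions $\delta W_2$ with $\bones^\top\delta W_2=0$, plus $h$ rescalings. Genericity (almost everywhere) is needed only to guarantee $W_3\neq0$ and that the rescaling vectors are independent of the fibre. I would organise these subspaces as in \pref{table:gauss_flat_directions}.

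\textbf{Hessian kernel.} A vector $\v\in\ker d\phi$ lies in the Hessian kernel if and only if $dF[d^2\phi[\v,\w]]=0$ for all $\w$. Write $g_A$ and $g_c$ for the partial gradients of $F$ with respect to $A$ and $c$. Since $\phi$ is bilinear in $W_3$ versus $(W_2,\b_2)$, I restrict to $\v=(\delta W_2,\delta\b_2)$ with $\delta W_3=0$ and $\delta b_3=0$, i.e.\ no first-layer or $W_3,b_3$ components.

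Membership in $\ker d\phi$ then reads $W_3\delta W_2=0$ and $W_3\delta\b_2=0$, which are $h+1$ conditions. The curvature pairing with an arbitrary $\delta W_3'$ is $\delta W_3'\,(\delta W_2\,g_A^\top+\delta\b_2\,g_c)$, so its vanishing imposes the $h$ conditions $\delta W_2 g_A^\top+\delta\b_2 g_c=0$. Pairing with any $\w$ having $\delta W_3'=0$ gives zero, since $d^2\phi$ couples only $W_3$ with $(W_2,\b_2)$.

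One of these last $h$ conditions is implied by the others: left-multiplying by $W_3$ yields $W_3\delta W_2g_A^\top+W_3\delta\b_2g_c=0$ automatically. For generic $W$ with $g_A\neq0$ the remaining constraints are independent, so the count is $h^2+h-(2h+1)+1=h(h-1)$.

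The same computation without biases gives $h^2-(2h-1)=(h-1)^2$ for $N^{3,\circ}$. For $N^{2\nicefrac12,\circ}$ there is no $W_3$ perturbation, so the bilinear term vanishes and all $h(h-1)$ fibre directions survive.

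\textbf{Main obstacle.} The delicate step is justifying the generic rank statements. These are that the constraint systems are independent for almost every $W$ in the region, and that the rescaling orbit vectors do not themselves satisfy the Hessian conditions. The latter fails generically because they move $W_1$ and so pair with the regular and singular $W_1$-curvature. I would prove independence by exhibiting one point of the region where the rank is maximal and invoking semialgebraicity, so that rank deficiency occurs only on a proper semialgebraic (null) set.

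The second point needing care is the exact transfer to the classical Hessian. I would argue that the singular Radon-measure terms of \pref{prop:scheme_hessian_rest} are supported on interfaces defined by $W_1\x+\b_1=0$ alone. Consequently they contract to zero against any perturbation with vanishing $(\delta W_1,\delta\b_1)$, independently of $\ell$ and $\dd$.
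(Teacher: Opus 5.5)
Your argument is correct. It lands on the same subspace and the same bounds as the paper, but it proves the key step by a different mechanism.

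\textbf{The paper's route.} The paper takes $E=\{(\delta W_2,\delta\b_2):W_3\delta W_2=0,\ W_3\delta\b_2=0\}$ as the sum of the nontrivial isotypic components of $V=\{\delta W:\delta W_3\parallel W_3\}$ under the orthogonal $\delta\sfs_1$ action. It then argues in three steps:
\begin{itemize}
\item \pref{lem:hessian_inv_despite} makes the Hessian of $\ploss|_{W+V}$ invariant;
\item local constancy of $\ploss$ along $W+E$ puts $E$ in the null cone;
\item \pref{prop:flat_directions} (polarization plus isotypic invariance) gives $\nabla^2\ploss(W)E\subseteq V^\perp$, hence $\dim(\ker\nabla^2\ploss(W)\cap E)\geq\dim E-\codim V$.
\end{itemize}

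\textbf{Your route.} You avoid the representation theory entirely. Factoring $\ploss=\bar F\circ\phi$, with $\bar F(W_1,\b_1,A,c)=\EE[\ell(A\sigma(W_1\x+\b_1)+c,\y)]$, and computing $d\bar F[d^2\phi[\v,\cdot]]$ shows directly that for $\v\in E$ the vector $\nabla^2\ploss(W)\v$ has only a $\delta W_3$-component. That component is $(\delta W_2g_A^\top+\delta\b_2g_c)^\top$, which is orthogonal to $W_3$. This is an explicit verification of the same inclusion $\nabla^2\ploss(W)E\subseteq V^\perp$.

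\textbf{What each buys.} The paper's route displays the general mechanism, namely symmetry plus flatness forcing kernel vectors at noncritical points. It needs no factorization, so it transfers to settings where none is available. Yours is elementary and in fact sharper. The map $(\delta W_2,\delta\b_2)\mapsto\delta W_2g_A^\top+\delta\b_2g_c$ sends $E$ onto $\ker W_3$ exactly when $(g_A,g_c)\neq0$. This identifies $\ker\nabla^2\ploss(W)\cap E$ and shows the bound is attained on $E$ at every point that is not critical in the outer-layer variables.

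\textbf{Your ``main obstacle''.} This makes it moot. For the stated lower bounds, rank--nullity with the single redundancy suffices. The semialgebraic genericity argument you propose would not apply anyway, because $g_A,g_c$ depend on $\ell$ and $\dd$; for Gaussian data $\bar F$ involves Gaussian integrals and is not semialgebraic.

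\textbf{Two small corrections.}
\begin{itemize}
\item $d^2F[d\phi\,\cdot,d\phi\,\cdot]$ is not the GN form. It also contains $d_1\ell$ times the second derivative of $A\sigma(W_1\x+\b_1)$ in $(W_1,\b_1,A)$, including the interface terms. This is harmless, since the term vanishes on $\ker d\phi$.
\item The chain rule is cleanest applied to the expected $\bar F$ rather than pointwise. Since $d\phi$ is surjective when $W_3\neq0$ (always, for $N^{2\nicefrac{1}{2},\circ}$), $\ploss$ is twice differentiable at $W$ exactly when $\bar F$ is at $\phi(W)$. Then no discussion of singular terms or \pref{prop:scheme_hessian_rest} is needed.
\end{itemize}
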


For the GN matrix, part~(i) of \pref{prop:basic_flat} suffices to establish
the above through the relevant flat directions in \pref{table:gauss_flat_directions}. The points considered in \pref{thm:at_positive}, however,
are generally not critical, so
\pref{prop:basic_flat} does not yield the corresponding conclusions for the
Hessian. Nevertheless, symmetry forces Hessian kernels of dimension $\Omega(h^2)$ at these points.
A visual indication that rich symmetry is indeed present is given in \pref{fig:lb_zeros}.
The next lemma makes this precise.
\begin{figure}[ht]
	\centering
	\includegraphics[width=\textwidth]{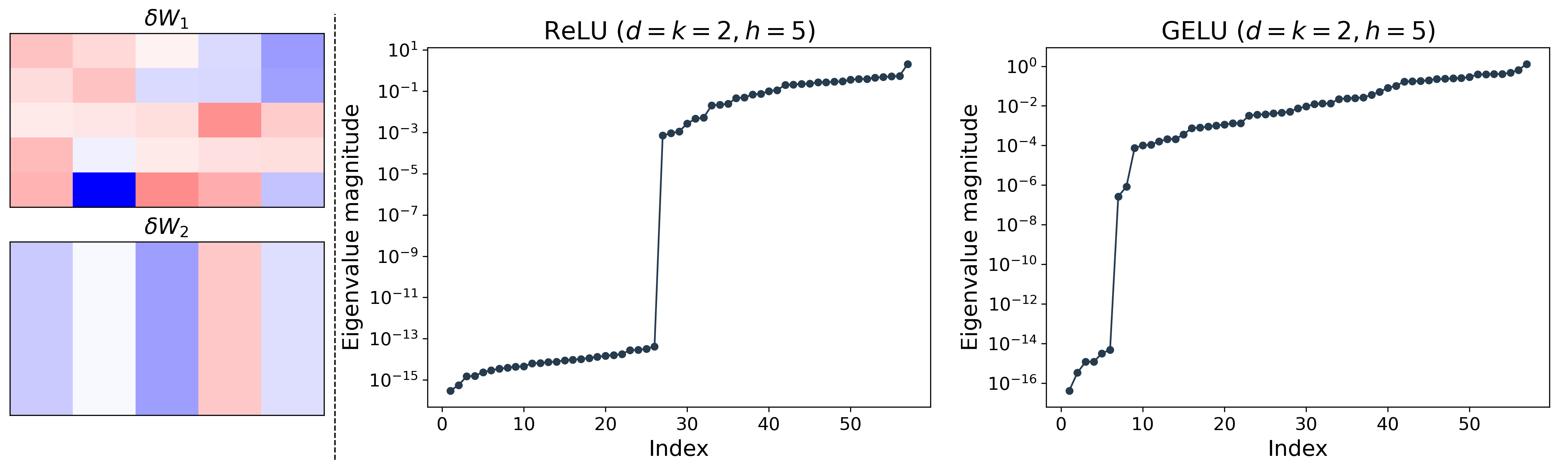}
	
	\caption{
		\textbf{Left.} For the network $N^{2\nicefrac{1}{2},\circ}$ in \eqref{net3circ}, consider
		any point $W$ at which all second-layer weights are positive. One finds that
		the $\delta W_2$-component of
		$\nabla^2\ploss(W)\delta W$ always has identical rows. 
		This reflects a rich Hessian invariance group
		(\pref{lem:hessian_inv_despite}), present despite $W$ generically having
		trivial symmetry.
	\textbf{Middle.} Under the conditions of \pref{thm:at_positive}, the Hessian of $N^3$ \eqref{net3} with ReLU has $\Theta(h^2)$ zero eigenvalues---independently of the data distribution. The result extends directly to the setting shown here, namely
	$d=k=2$, $h=5$ for $\cN_\pm$ in \pref{setting:gaussian_mixture}. \textbf{Right.} If ReLU is replaced by GELU, this result no longer holds, but \pref{thm:at_init_oblivious}, which holds regardless of the activation, still applies and guarantees $\Omega(h)$ zero eigenvalues. The distinct ``knee'' locations in spectra evaluated at the same weights are numerically consistent with the corresponding lower bounds.
}
	\label{fig:lb_zeros}
\end{figure}

\begin{lemma}\label{lem:hessian_inv_despite}
	Let \(W\in\Theta\) with \(W_2,\b_2 >0\). Then, 
	\(d_W N^{2\nicefrac{1}{2},\circ}, d_W N^{3,\circ}\) and \(d_W N^3\) are invariant under
	\begin{align}\label{deltas1}
		\delta\sfs_1(M_1,M_2):\quad 
		(\delta W_2,\delta \b_2)
		\mapsto
		(M_1 \delta W_2,\, M_2 \delta \b_2),
		\qquad M_1,M_2\in \stab_{\GL(M(1,d))}\prn{W_3},
	\end{align}
	 with  $W_3 = \bones^\top_h$ for $N^{2\nicefrac{1}{2},\circ}$. Thus
		$\delta \sfs_1 \in \Model_W^{[1]}(N), N\in \{N^{2\nicefrac{1}{2},\circ}, N^{3,\circ}, N^3\}$.
	We also have $\delta \sfs_1 \in \Model_W^{[2]}(N^{2\nicefrac{1}{2},\circ})$. However, for $N \in \{N^{3,\circ}, N^3\}$, this invariance holds only for perturbations satisfying~$\delta W_3 \parallel W_3$.
\end{lemma}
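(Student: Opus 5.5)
The plan is to compute the first and second differentials of the three networks explicitly in layer coordinates, as in \eqref{dn3circ}, on the open set $\{W_2,\b_2>0\}$. I will then check that every occurrence of $(\delta W_2,\delta\b_2)$ is multiplied on the left by $W_3$ (or by $\bones_h^\top$), possibly through a factor that is identically the identity. Throughout I read $\stab_{\GL(M(1,d))}(W_3)$ as acting on row vectors from the right, so $M_i$ satisfies $W_3M_i=W_3$. Then $W_3M_1\delta W_2=W_3\delta W_2$ and $W_3M_2\delta\b_2=W_3\delta\b_2$, which is exactly invariance under $\delta\sfs_1(M_1,M_2)$. Membership in $\Model_W^{[r]}(N)$ is then immediate from \pref{table:symmetry_groups}: $\delta\sfs_1$ is internal (it acts trivially on $\dataspace$) and stabilizes $d_W^r N$.

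\textbf{Step 1: the second-layer mask is trivial.} This is the key observation. Since $\sigma\geq0$ coordinatewise, the hypothesis $W_2,\b_2>0$ gives $W_2\sigma(W_1\x+\b_1)+\b_2>0$ for every $\x$. Hence $D_2(\x;W)=I_h$, and this holds on the whole open set $\{W_2,\b_2>0\}$, not only at the point $W$. For $N^{3,\circ}$ and $N^{2\nicefrac12,\circ}$ there is no $\b_2$. There $W_2\sigma(W_1\x)>0$ unless $\sigma(W_1\x)=0$, and on that set every term carrying $D_2$ is multiplied by a vanishing factor. So one may again take $D_2=I$ inside all products, up to a null set. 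It follows that near $W$ the second nonlinearity acts linearly, and no interface of the second layer meets this region.

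\textbf{Step 2: first differentials.} Substituting $D_2=I$ into \eqref{dn3circ} and its biased analogue, the perturbation $(\delta W_2,\delta\b_2)$ enters only through $W_3\delta W_2\,\sigma(W_1\x+\b_1)+W_3\delta\b_2$, with $W_3=\bones_h^\top$ for $N^{2\nicefrac12,\circ}$. The remaining terms involve only $\delta W_1,\delta\b_1,\delta W_3,\delta\b_3$, and these are untouched by $\delta\sfs_1$. This gives invariance of $d_WN$ for all three networks, using only $W_3M_1=W_3$ and $W_3M_2=W_3$.

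\textbf{Step 3: second differentials; this is the main obstacle.} I will use \pref{prop:scheme_hessian_rest} to write $d_W^2N$ as a generalized function and sort its terms. By Step~1, the only singular contributions come from the first-layer interfaces. They arise from the generalized derivative of $D_1$, and they involve only the pair $(\delta W_1,\delta\b_1)$, sandwiched as $W_3W_2(\cdots)$; no $\delta W_2$ appears in them. The regular bilinear terms containing $\delta W_2$ or $\delta\b_2$ are of two kinds:
\begin{enumerate}
\item the cross term $W_3\,\delta W_2\,D_1(\delta W_1\x+\delta\b_1)$, which is invariant because it still has $W_3$ on the left;
\item the cross terms with the last layer, $\delta W_3\,\delta W_2\,\sigma(\cdot)+\delta W_3\,\delta\b_2$.
\end{enumerate}
The $\delta W_2$--$\delta W_2$ term vanishes, since $D_2$ is locally constant. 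For $N^{2\nicefrac12,\circ}$ the last layer is fixed at $\bones^\top$, so there is no $\delta W_3$ and the second kind of term is absent; this gives $\delta\sfs_1\in\Model_W^{[2]}$. For $N^{3,\circ}$ and $N^3$, invariance of the second kind of term requires $\delta W_3M_i=\delta W_3$. This holds when $\delta W_3=cW_3$, and in general it fails because $\stab(W_3)$ does not fix other row vectors. I will exhibit a rank-one $M_1=I+\u\v^\top$ with $W_3\u=0$ and $\delta W_3\u\neq0$ to show the failure. The care needed here is in justifying that the generalized singular part carries no $\delta W_2$ dependence. I would handle it via the Gauss--Green representation, using the fact that $W_2$ enters the first-layer interface terms only as a smooth multiplier.
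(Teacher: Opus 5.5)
Your proposal is correct and follows essentially the same route as the paper. The paper observes that $W_2,\b_2>0$ forces $D_2=I$, reads off invariance of $d_WN^3$ directly from the layer-coordinate formula \eqref{dwn3}, and for the second-order claim appeals to the second-variation formulas of \pref{table:hessian_quantities}; there, as you argue, the only surviving singular terms are first-layer interface terms in $(\delta W_1,\delta\b_1)$ alone. Your explicit isolation of the obstruction as the regular cross terms $\delta W_3\,\delta W_2\,\sigma(\cdot)+\delta W_3\,\delta\b_2$, together with the rank-one counterexample $M_1=I+\u\v^\top$, supplies detail that the paper leaves to inspection.
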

We treat $\Model_W^{[1]}(N^3)$. The remaining cases follow similarly. As noted  in the overview, the argument becomes transparent when working in layer coordinates.
For $N^3$, a direct computation gives
\begin{align}\label{dwn3}
	d_W N^3(\x;W)[\delta W]
	&=
	W_3 D_2\Bigl(
	\delta W_2\,\sigma(W_1 \x + \b_1)
	+ W_2 D_1\bigl(\delta W_1 \x + \delta \b_1\bigr)
	+ \delta \b_2
	\Bigr) 
	\notag \\
	&\quad
	+ \delta W_3\,\sigma\bigl(W_2 \sigma(W_1 \x + \b_1) + \b_2\bigr)
	+ \delta \b_3,
\end{align}
where $D_1 \defeq D\sigma(W_1\x+\b_1)$ and
$D_2 \defeq D\sigma\bigl(W_2\sigma(W_1\x+\b_1)+\b_2\bigr)$,
with their dependence on $\x$ suppressed in the notation.
Under the assumption of \pref{lem:hessian_inv_despite}, we have \(D_2 = I_d\). Inspection of \eqref{dwn3} then readily shows that \(d_W N^3\) is invariant under \eqref{deltas1}. For $\Model_W^{[2]}$, one must also take into account singular terms, see \pref{sec:formal_sym}.

The directions listed in \pref{table:gauss_flat_directions}
lie in $\ker d_WN^3$. Since the GN form factors through $d_WN^3$, they
lie in its kernel and give the $N^3$ GN bound in
\pref{thm:at_positive}. The other GN bounds follow similarly. For the Hessian of $N^3$, let
$V=\{\delta W:\delta W_3\parallel W_3\}$.
By \pref{lem:hessian_inv_despite}, the orthogonal transformations in
\eqref{deltas1} preserve the Hessian of $\ploss|_{W+V}$.
Writing $\std$ for the standard representation of
$\stab_{O(M(1,d))}\prn{W_3}$ on $W_3^\perp$, the action on $V$ has
the isotypic decomposition
\begin{align}\label{deltas1_iso_dec}
	V\cong
	(h^2+2h+3)(\triv,\triv)
	\oplus h(\std,\triv)
	\oplus(\triv,\std).
\end{align}

\proof[\pref{thm:at_positive}, Sketch].
Let $E\subset V$ be the sum of the two nontrivial isotypic components
in \eqref{deltas1_iso_dec}, so $\dim E=h^2-1$. On $E$,
$W_3\delta W_2=W_3\delta\b_2=0$. Since $W_2,\b_2>0$, the network,
and hence $\ploss$, is locally constant along $W+E$.
Both hypotheses of \pref{prop:flat_directions} therefore hold.
As $\operatorname{codim}V=h-1$, \eqref{lb_zero_evs} gives
$\dim(\ker\nabla^2\ploss(W)\cap E)\ge h(h-1)$.
The other architectures follow~similarly. \qed

\begin{table}[!htbp]
	\centering
	{\footnotesize \begin{tabular}{l l c l}
		\hline
		Type & Space & Dimension & Description \\
		\hline
		Transverse & $(\std,\triv)$-isotypic component & $h(h-1)$ & $\delta W_2$ with columns orthogonal to $W_3$ \\
		Transverse & $(\triv,\std)$-isotypic component & $h-1$ & $\delta \mathbf{b}_2$ orthogonal to $W_3^\top$ \\
		Transverse & in the $(\triv,\triv)$-isotypic component & $1$ & $\delta \mathbf{b}_2 \in \mathrm{Span}(W_3^\top)$ and $W_3\,\delta \mathbf{b}_2 = -\,\delta \mathbf{b}_3$ \\
		Orbit & $T_W(\Model(N^3) W)$ & $2h$ & $\delta W = (A W_1, A\mathbf{b}_1, B W_2 - W_2 A, B\mathbf{b}_2, -W_3 B, 0)$ \\
		\hline
		Total & & $h(h+2)$ & \\
		\hline
	\end{tabular}}
\caption{Subspaces of $\ker d_WN^3$ yielding the GN bound in
	\pref{thm:at_positive}, with $A,B$ diagonal.
	Orbit flat directions contribute only $\Theta(h)$ zero eigenvalues, compared
	with $\Theta(h^2)$ from transverse directions.
	The two nontrivial isotypic components contain all but one transverse
	direction and, through \pref{prop:flat_directions}, yield the Hessian
	kernel bound.}

	\label{table:gauss_flat_directions}
\end{table}

\subsection{Regression }\label{sec:spec_analysis_gaussian}
The preceding analysis uses only structural Hessian invariances, hence holds
independently of the data distribution. When present, distributional
symmetries may combine with them to form \integrated invariances (see
\pref{def:invs}), yielding the richer invariance group needed for a faithful
account of the observed spectra. Throughout this subsection and the next, we assume $h=d\ge4$ to avoid separate treatment of low-dimensional cases. 

The labeling function in \pref{setting:gaussian} is realizable within the $N^3$ model class, for example by any point in the following set of global minima,
{\small \begin{align}\label{sq_loss_global}
	\cG_0 = \left\{ W \ \middle| \ 
	\begin{aligned}
		& W_1 = P_{\pi, \s} \text{ for } \pi \in S_d, \s \in \RR_{>0}^d, \\
		& W_2 \in \GL_d \text{ and } \b_2 \in \RR^d \text{ have strictly positive entries}, \\
		& W_3 W_2 W_1 = \bones^\top, \quad \b_1 = \bzeros, \quad W_3 \b_2 = -\b_3 
	\end{aligned}
	\right\},
\end{align}}
\par\noindent where $P_{\pi, \s} \coloneqq \diag(\s)P_\pi$ 
is a positive \emph{monomial} matrix, consisting of exactly one positive entry in each row and column.  Arguments similar to those in the proof of \pref{prop:k2_global_minima} apply in this setting. Since the pointwise loss vanishes identically at every \(W \in \cG_0\), the GN form coincides with the second differential by \pref{prop:gn_reduction}. Therefore, for the squared loss used here, compatible symmetries of $d_WN^3$ suffice to establish the Hessian invariances considered below.

Although the inputs in \pref{setting:gaussian} follow a Gaussian distribution, the distributional symmetries are not given by all orthogonal transformations, since the labels must also be taken into account. The relevant distribution group is therefore
\begin{align}\label{dist_sym_gaussian}
	\sfd_3(P_\pi): \quad \x \mapsto P_\pi \x, \quad \text{for } \pi \in S_d. 
\end{align}
We can offset the input change through the architectural symmetry $\mathsf{a}_0$ in \pref{table:symmetry_examples}. For illustration, take $N^{3,\circ}$ with $W_1=I_d$, $W_2\in\GL_d$ entrywise positive and $W_3W_2=\bones^\top$. The identities
\begin{align}
	\sigma(P_\pi\u)=P_\pi\sigma(\u), \qquad
	D\sigma(P_\pi\u)=P_\pi D\sigma(\u)P_\pi^\top
\end{align}
(where the derivatives exist) yield the structural symmetry
\begin{align}
	\delta\sfs_2(P_\pi):
	(\x,\delta W_1,\delta W_2,\delta W_3)
	\mapsto
	(P_\pi\x,\,
	P_\pi\delta W_1P_\pi^\top,\,
	\delta W_2P_\pi^\top,\,
	\delta W_3W_2P_\pi^\top W_2^{-1})
\end{align}
of $d_WN^{3,\circ}$. Thus,
$\delta\sfs_2\in\model_W^{[1]}(N^{3,\circ})$ is not internal but is
compatible with the data distribution $(\operatorname{id},\bones^\top\sigma)_*\cN$, since its data component is given by \eqref{dist_sym_gaussian}. It therefore induces \integrated invariance of the GN form at these
global minima (\pref{thm:local_symmetry_inv}). The following
lemma gives the corresponding invariance for $N^3$, including the bias
correction.
\begin{lemma}\label{lem:hessian_inv_despite3}
On $\cG_0$, $d_WN^3$ admits the structural symmetry $\delta\sfs_1$
in \eqref{deltas1}, and the GN form admits the following \integrated
invariance:
{\small\begin{align}
		\delta \sfc_1(P_\pi):~
		(\delta W_1, &\delta\b_1,\delta W_2, \delta \b_2, \delta W_3)
		\mapsto
		\bigl(
		W_1 P_\pi W_1^{-1}\delta W_1 P_\pi^\top,\;
		W_1 P_\pi W_1^{-1}\delta \b_1,\;
		\\
		&\quad 
		\delta W_2 W_1 P_\pi^\top W_1^{-1},\;
		\delta \b_2
		-
		\delta W_3 A^{-1}(I-A)\b_2
		\frac{W_3^\top}{W_3 W_3^\top},\;
		\delta W_3 W_2 W_1 P_\pi^\top W_1^{-1} W_2^{-1}
		\bigr),
		\nonumber
	\end{align}}
\par\noindent	where $\pi\in S_d$ and $A \defeq W_2 W_1 P_\pi W_1^{-1} W_2^{-1}$. 
For every $W\in\cG_0$,
\begin{align}
\tri{\delta \sfs_1, \delta \sfc_1}
  \leq \Obj_W^{[\mathrm{GN}]}(N^3,\ell^\sqloss,(\operatorname{id},\mathbf{1}^{\top}\sigma)_{*}\cN)&\leq
  \stab_{\aut(T_W\weightspace)}\prn{d_W^2\ploss}.
\end{align}
\end{lemma}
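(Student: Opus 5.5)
The plan is to verify the two structural identities for the pointwise differential $d_WN^3(\x;W)$ at $W\in\cG_0$ directly in layer coordinates, using \eqref{dwn3}. Then we pass to the GN form and the Hessian through the composite argument \eqref{struc_combine_with_dist} and \pref{prop:gn_reduction}.

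\textbf{Step 1: the first symmetry $\delta\sfs_1$.} On $\cG_0$ we have $W_2,\b_2>0$. The first-layer features $\sigma(W_1\x)=P_{\pi,\s}\sigma(\x)$, with $\b_1=0$, are nonnegative, so $D_2=I_d$. Hence \pref{lem:hessian_inv_despite} applies verbatim: $\delta W_2$ and $\delta\b_2$ enter \eqref{dwn3} only through $W_3\delta W_2$ and $W_3\delta\b_2$, which are fixed by $M_1,M_2\in\stab(W_3)$.

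\textbf{Step 2: the composite symmetry $\delta\sfc_1$.} I would pair $\delta\sfc_1$ with the data map $\x\mapsto P_\pi\x$ of \eqref{dist_sym_gaussian}. Write $W_1=\diag(\s)P_\sigma$ and $B\defeq W_1P_\pi W_1^{-1}$. Since $B$ is a positive monomial matrix, $\sigma(W_1P_\pi\x)=B\,\sigma(W_1\x)$ and $D_1(P_\pi\x)=P'D_1(\x)P'^\top$, where $P'$ is the permutation underlying $B$. I would then substitute $P_\pi\x$ and the transformed perturbations into \eqref{dwn3} and check each term in turn.

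\begin{itemize}
\item \emph{First-layer term.} $W_2D_1(P_\pi\x)\bigl(B\delta W_1P_\pi^\top P_\pi\x+B\delta\b_1\bigr)$. Moving $B$ through $D_1$ turns the transformed term into $W_2 B D_1(\x)(\delta W_1\x+\delta\b_1)$.
\item \emph{Second-layer term.} $\delta W_2B^{-1}\sigma(W_1P_\pi\x)=\delta W_2\sigma(W_1\x)$, which is exact.
\item \emph{Third-layer term.} $\delta W_3 W_2B^{-1}W_2^{-1}\,\sigma\bigl(W_2B\sigma(W_1\x)+\b_2\bigr)$. Here $\sigma$ acts linearly because all entries are positive. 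This gives $\delta W_3W_2\sigma(W_1\x)+\delta W_3A^{-1}\b_2$.
\end{itemize}

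The leftover discrepancy $\delta W_3(A^{-1}-I)\b_2$ is a constant in $\x$. The $\delta\b_2$ correction is designed to cancel it, since $W_3\cdot(\text{correction})=-\delta W_3A^{-1}(I-A)\b_2$. The mismatch $W_2B$ versus $W_2$ in the first-layer term must also be absorbed. To see this, I would use $W_3W_2W_1=\bones^\top$, which gives $W_3W_2B=\bones^\top P_\pi W_1^{-1}=\bones^\top W_1^{-1}=W_3W_2$. So, after left-multiplication by $W_3$ (the only way the first-layer term enters), the mismatch disappears.

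\textbf{Step 3: from the pointwise differential to the Hessian.} The first-layer term and the other terms must be combined carefully. I expect this bookkeeping to be the main obstacle, specifically checking that the bias correction and the $\delta W_3$ map are mutually consistent. Once the pointwise identity
\[
d_WN^3(P_\pi\x;W)[\delta\sfc_1\delta W]=d_WN^3(\x;W)[\delta W]
\]
holds, the squared-loss pointwise GN form inherits it, since the label $\bones^\top\sigma(\x)$ is $S_d$-invariant. Thus $\delta\sfc_1$ lies in $\struc_W^{[\mathrm{GN}]}$ with a $\dd$-compatible data component, and therefore in $\Obj_W^{[\mathrm{GN}]}$. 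By Step 1, $\delta\sfs_1$ is internal, so the generated subgroup also lies in $\Obj_W^{[\mathrm{GN}]}$. Finally, the loss vanishes identically on $\cG_0$, so the model is well specified, and \pref{prop:gn_reduction}(ii) gives $d_W^{\mathrm{GN}}\ploss=d_W^2\ploss$. Every element of $\Obj_W^{[\mathrm{GN}]}$ therefore stabilizes $d_W^2\ploss$.
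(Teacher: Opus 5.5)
Your proposal is correct and follows essentially the paper's route. You verify directly from \eqref{dwn3}, using $D_2=I_d$ on $\cG_0$, that $\delta\sfs_1$ is an internal symmetry of $d_WN^3$ and that $\delta\sfc_1$, paired with $\x\mapsto P_\pi\x$, is a $\dd$-compatible order-one model symmetry; you then pass to the GN form and, by well specification (\pref{thm:local_symmetry_inv}, \pref{prop:gn_reduction}), to $d_W^2\ploss$. Two small points: your term-by-term computation in Step 2 already closes the bookkeeping you flag as the obstacle in Step 3, since the terms match exactly. Also, the pointwise GN form inherits the invariance because $d_1^2\ell^{\sqloss}$ is constant, whereas the $S_d$-invariance of the label $\bones^\top\sigma(\x)$ is what makes the data map $\dd$-compatible.
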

We find in particular that, at global minima $W\in\cG_0$, the Hessian
has rich invariances that the point stabilizer generally fails to capture (see \pref{rem:identifiability}).

Dead neurons are often encountered in practice
(see \pref{fig:sq_evs}) and give rise to a distinct class of global
minima at which the composite action above may no longer preserve
the Hessian. This occurs for example when \(e\) rows of \(W_2\), together with the corresponding entries of \(\b_2\), are strictly negative, while the remaining rows of 
\(W_2\) and entries of \(\b_2\) are strictly positive, and the first and last lines of \eqref{sq_loss_global} hold with
$W_2$ and $\b_2$ replaced by $D_2W_2$ and $D_2\b_2$. Denote this set by~$\cG_e$.  In this regime, $\delta \sfs_1$ persists, with $W_3$ now replaced by $W_3D_2$ in \pref{lem:hessian_inv_despite}, whereas $\delta \sfc_1$ fails. 
Nevertheless, for every $W\in\cG_e$, the image of $d_WN^3$ is isomorphic,
as an $S_d$-module, to the corresponding image on $\cG_0$. Explicitly, it is
spanned by 
$\mathbf 1_{\{(W_1\x)_i>0\}}x_j$,
$\mathbf 1_{\{(W_1\x)_i>0\}}$, and $1$.
Consequently, $\operatorname{rank}d_WN^3=d^2+d+1$, the Hessian kernel has
dimension $d(d+2)$, and the quotient by the kernel has the same isotypic
decomposition as for $\cG_0$. Thus, under a group-invariant inner product,
the same multiplicities are forced in the nonzero spectrum.

We next state the isotypic decomposition for the subgroup of
$\tri{\delta\sfs_1,\delta\sfc_1}$ obtained by restricting $M_1,M_2$
in \eqref{deltas1} to $\stab_{O(M(1,d))}\prn{W_3}$.
 In \pref{sec:failure}, we encountered \(\triv\), of degree
\(1\), and \(\stdsd_{(d-1,1)}\), of degree \(d-1\). Here, we also need
\(\stdsd_{(d-2,2)}\), of degree \(\frac{d(d-3)}{2}\), and
\(\stdsd_{(d-2,1,1)}\), of degree \(\frac{(d-1)(d-2)}{2}\). Since $\delta\sfc_1$ mixes $\delta\b_2$ with $\delta W_3$, set
$\delta\widetilde{\b}_2\defeq
\delta\b_2+(\delta W_3\b_2)W_3^\top/(W_3W_3^\top)$.
In these coordinates, the blocks carry the following representations
\begin{align}\label{p1_iso_dec}
	\delta W_1 &\cong 2(\triv,\triv,\triv) \oplus 3(\triv,\triv,\stdsd_{d-1,1})
	\oplus (\triv,\triv,\stdsd_{d-2,2}) \oplus (\triv,\triv,\stdsd_{d-2,1,1}),
	&\delta \b_1 &\cong (\triv,\triv,\triv) \oplus (\triv,\triv,\stdsd_{d-1,1}),\nonumber\\
	\delta W_2 &\cong (\triv,\triv,\triv) \oplus (\triv,\triv,\stdsd_{d-1,1})
	\oplus (\std,\triv,\triv) \oplus (\std,\triv,\stdsd_{d-1,1}),
&\delta\widetilde{\b}_2
&\cong (\triv,\triv,\triv)\oplus(\triv,\std,\triv),\nonumber\\
	\delta W_3 &\cong (\triv,\triv,\triv) \oplus (\triv,\triv,\stdsd_{d-1,1}),
	&\delta \b_3 &\cong (\triv,\triv,\triv).
\end{align}
The three entries refer, in order, to the orthogonal factors
acting through $M_1$ and $M_2$, and the permutation factor $S_d$.
The decomposition follows from standard tensor product constructions  \cite{fulton1991representation} (see also \cite{thiede2020general} for related
constructions in the context of equivariant network theory, with further
discussion in \pref{sec:equivariance}). An explicit description of the isotypic components is readily obtained by
adapting the analysis of \pref{setting:gaussian_mixture} in
\pref{sec:spec_analysis_mix}.
\begin{theorem}\label{thm:learning_gauss_analysis}
In the setting of \pref{setting:gaussian},
\begin{enumerate}[leftmargin=*,label=(\roman*)]
\item For any $W\in\cG_0$, the Hessian is invariant under $\tri{\delta \sfs_1, \delta \mathsf{c}_1}$. The corresponding isotypic decomposition of $T_W\weightspace$ \eqref{p1_iso_dec} and the $d(d+2)$-dimensional subspace of zero eigenvectors arising from flat directions (\pref{thm:at_positive}) give 
{\footnotesize
	\begin{align}
		&T_W\weightspace \cong\;
		7(\triv,\triv,\triv)
		\oplus
		6(\triv,\triv,\stdsd_{d-1,1}) 
		\oplus
		(\triv,\triv,\stdsd_{d-2,2})
		\oplus
		(\triv,\triv,\stdsd_{d-2,1,1})
		\oplus
		(\std,\triv,\triv)
		\oplus
		(\std,\triv,\stdsd_{d-1,1})
		\oplus
		(\triv,\std,\triv),\nonumber\\
		&T_W(\ploss^{-1}(0)) \supseteq
		3(\triv,\triv,\triv)
		\oplus
		2(\triv,\triv,\stdsd_{d-1,1})
		\oplus
		(\std,\triv,\triv)
		\oplus
		(\std,\triv,\stdsd_{d-1,1})
		\oplus
		(\triv,\std,\triv).
	\end{align}
}
The multiplicity pattern corresponding to this isotypic decomposition is given in \pref{table:hess_kernel}.
\item For any $W\in\cG_e$ with $1\leq e\leq d-1$,
the Hessian kernel has dimension $d(d+2)$, and the quotient by this kernel
has the same isotypic decomposition, and hence the same symmetry-forced
nonzero multiplicity structure under an invariant inner product, as in
part~(i).
\end{enumerate}
\end{theorem}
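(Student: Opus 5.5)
Part~(i) is assembled from three ingredients already in place. First, I will use Lemma~\ref{lem:hessian_inv_despite3}. Since the pointwise loss vanishes identically on $\cG_0$, Proposition~\ref{prop:gn_reduction} identifies $d_W^2\ploss$ with the GN form. The lemma then places $\tri{\delta\sfs_1,\delta\sfc_1}$ inside the Hessian stabilizer.

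I will restrict $M_1,M_2$ to the orthogonal stabilizer of $W_3$ and pass to the coordinates $(\delta W_1,\delta\b_1,\delta W_2,\delta\widetilde{\b}_2,\delta W_3,\delta\b_3)$. In these coordinates the group is a product of $\stab_{O(M(1,d))}(W_3)^2$ with a copy of $S_d$. Each block in \eqref{p1_iso_dec} is a standard tensor construction:
\begin{itemize}
\item $\delta W_1$ carries $\RR^d\otimes\RR^d$, twisted by the monomial $W_1$;
\item $\delta W_2$ carries $W_3^\top$-direction and $W_3^\perp$ columns, tensored with the permutation module;
\item $\delta W_3$ carries the permutation module transported by $W_2$.
\end{itemize}
The $S_d$ pieces come from $\RR^d\otimes\RR^d\cong 2\triv\oplus3\stdsd_{d-1,1}\oplus\stdsd_{d-2,2}\oplus\stdsd_{d-2,1,1}$ and $\RR^d\cong\triv\oplus\stdsd_{d-1,1}$. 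Summing the blocks gives the stated decomposition of $T_W\weightspace$, and I will check it by a dimension count against $(2d+1)(d+1)$.

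For the kernel, I will take the $d(d+2)$ flat directions of Theorem~\ref{thm:at_positive} and Table~\ref{table:gauss_flat_directions}. These lie in $\ker d_WN^3$, hence in the GN kernel, hence in the Hessian kernel at these global minima. By Fact~\ref{fact:iso_dec_commuting_maps} this kernel is invariant. To identify its isotypic type, I will read off each listed subspace:
\begin{itemize}
\item the $(\std,\triv)$ and $(\triv,\std)$ transverse components supply $(\std,\triv,\triv)\oplus(\std,\triv,\stdsd_{d-1,1})\oplus(\triv,\std,\triv)$;
\item the $2d$ orbit directions indexed by diagonal $A,B$ together with the $\b_2$--$\b_3$ direction supply $3(\triv,\triv,\triv)\oplus2(\triv,\triv,\stdsd_{d-1,1})$, since diagonal matrices form permutation modules.
\end{itemize}

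Subtracting the kernel leaves $4\triv\oplus4\stdsd_{d-1,1}\oplus\stdsd_{d-2,2}\oplus\stdsd_{d-2,1,1}$ for the $S_d$ factor. By Fact~\ref{fact:iso_dec_commuting_maps}, under the invariant inner product this yields the multiplicity pattern $\tfrac{(d-1)(d-2)}{2},\tfrac{d(d-3)}{2},(d-1)^{[4]},1^{[4]}$ together with $d(d+2)|_0$, as in Table~\ref{table:hess_kernel}. Note that the Euclidean metric is not invariant, because $\delta\sfc_1$ is a conjugated rather than orthogonal action. I therefore state the multiplicities for an invariant inner product and not as properties of the Euclidean Hessian.

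For part~(ii), $\delta\sfc_1$ no longer acts, so I will argue through the image of the differential. Well specification gives $\ker d_W^2\ploss=\ker d_WN^3$, because the Hessian is $\EE\|d_WN^3[\delta W]\|^2$ and $\cN$ has full support. Hence the quotient $T_W\weightspace/\ker$ is isomorphic, via $d_WN^3$, to the span $\mathcal F$ of the functions $\mathbf 1_{\{(W_1\x)_i>0\}}x_j$, $\mathbf 1_{\{(W_1\x)_i>0\}}$, and $1$. The dead rows are killed by $D_2$, and the live ones reach all of these functions through $\delta W_1,\delta\b_1$ via the surviving columns, which I expect to exist because $W_3D_2W_2$ is still positive.

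Linear independence of these functions in $L^2(\cN)$ gives rank $d^2+d+1$, and hence a kernel of dimension $(2d+1)(d+1)-(d^2+d+1)=d(d+2)$. The space $\mathcal F$ is an $S_d$-module under $\x\mapsto P_\pi\x$, isomorphic to the one on $\cG_0$. The Hessian is the $L^2$ Gram form on $\mathcal F$, which is $S_d$-invariant by distributional symmetry. Transporting this structure back gives the same decomposition and forced multiplicities. I expect the main obstacle to be the surjectivity claim in $\cG_e$, namely that the surviving neurons still reach every $\mathbf 1_{\{(W_1\x)_i>0\}}x_j$. I will handle it using invertibility of the live part of $W_3D_2W_2W_1$ restricted to first-layer coordinates.
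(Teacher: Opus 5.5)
Your proposal is correct and follows the paper's route. It uses the GN reduction with \pref{lem:hessian_inv_despite3} for the invariance, the tensor-product decomposition \eqref{p1_iso_dec} plus the flat directions of \pref{table:gauss_flat_directions} for part~(i), and, for part~(ii), the identification of the quotient by the kernel with the $S_d$-module spanned by $\mathbf 1_{\{(W_1\x)_i>0\}}x_j$, $\mathbf 1_{\{(W_1\x)_i>0\}}$, and $1$. One point to tighten: the orbit subspace is not itself $\delta\sfc_1$-invariant (its image generally picks up a component in the $\std$-isotypic part of the kernel), so $3(\triv,\triv,\triv)\oplus2(\triv,\triv,\stdsd_{d-1,1})$ should be read off on the kernel modulo those components, or as the complement of that function space, which your rank count already supplies.
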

For simplicity, we restrict the discussion to the normalized realization of
$\cG_0$ in \eqref{second_diff_setting_gauss} by the bias-free network
$N^{3,\circ}$.  Removing the bias parameters and quotienting by the $d(d+1)$-dimensional Hessian kernel gives
\begin{align}
	2(\triv,\triv,\triv)
	\oplus
	3(\triv,\triv,\stdsd_{d-1,1})
	\oplus
	(\triv,\triv,\stdsd_{d-2,2})
	\oplus
	(\triv,\triv,\stdsd_{d-2,1,1}),
\end{align}
of dimension $d^2$ (as for $N^{1\nicefrac12,\circ}$, reflecting the same
$S_d$-representation on $M(d,d)$; this quotient viewpoint is developed
further in \pref{sec:alexnet_superposition}). At the normalized realization, the action on the blocks corresponding to $\delta W_1$ and $\delta W_2$ is orthogonal with respect to the Euclidean
inner product. In particular, the last two components, each of dimension $\Theta(d^2)$,
are supported in $\delta W_1$, so the metric induced on the quotient is
invariant on them and they consequently retain their exact multiplicities,
while \pref{prop:exact_gn_expression_gauss} shows that their eigenvalues
remain uniformly bounded.

The complementary
$2(\triv,\triv,\triv)\oplus
3(\triv,\triv,\stdsd_{d-1,1})$
subspace has dimension $3d-1$ and is preserved as a whole, although the
Euclidean realization of the Hessian may mix its isotypic components and
split their forced multiplicities. Identifying the quotient with
$\RR^{d\times d}$, we use the invariant inner product that agrees with the
Frobenius inner product off the diagonal and weights the diagonal subspace
by $(d+1)^{-1}$. This metric is uniformly equivalent within a factor of two
to the metric induced by the Euclidean inner product on parameter space,
so the min--max principle gives the same comparison for the ordered
eigenvalues and preserves their asymptotic scales. A refinement of
\pref{prop:exact_gn_expression_gauss} gives one $\Theta(d^2)$ outlier and
places all other unbounded eigenvalues at order $\Theta(d)$. This is an instance of spectral
refraction confined to $3d-1=O(d)$ directions.

\subsection{Analysis by layer and on manifolds: spontaneous SB}\label{sec:layerwise}

In the sequel, we discuss how the framework adapts to restrictions of the Hessian. Such restrictions are relevant to several applications, as illustrated below.

Let $H:V\times V\to\RR$ be a symmetric bilinear form invariant under a group $G \leq \aut(V)$. For $E\leq V$, define
\begin{align}
	G\!\mid_E
	&\defeq
	\{g|_E~|~g\in G,\ gE=E\}.
\end{align}
Then $G\!\mid_E\leq\stab_{\aut(E)}(H|_{E\times E})$. In the present setting, let $E\leq T_W\Theta$. There are two natural ways to extract symmetries on $E$ that induce \integrated invariances: first restrict the pointwise differential to $E$ and then retain only those whose $\dataspace$ component lies in $\dist$, or reverse the order. Write $\struc_{W;E}^{[r]}(N,\ell)$ for the stabilizer of
$d_W^r\smloss|_{E^r}$ in $\aut(\dataspace)\times\aut(E)$. In the notation of \pref{table:symmetry_groups}, these correspond, respectively, to
\begin{align*}
	\Obj_{W\!;E}^{[r]}(N,\ell,\dd)
	&\defeq
	\pi_E\!\left(
	\struc_{W;E}^{[r]}(N,\ell)
	\cap
	\bigl(\dist(\dd)\times\aut(E)\bigr)
	\right),
	\nonumber\\
	\Obj_W^{[r]}(N,\ell,\dd)\!\mid_E
	&\defeq
	\pi_E\!\left(
	\bigl(
	\struc_W^{[r]}(N,\ell)
	\cap
	\bigl(\dist(\dd)\times\aut_W(\weightspace)\bigr)
	\bigr)
	\!\mid_{\dataspace\times E}
	\right),
\end{align*}
where $\pi_E\colon\aut(\dataspace)\times\aut(E)\to\aut(E)$ denotes the
projection onto the second factor. Whenever the relevant expectation identity holds at $W$, the
order-$2$ \integrated invariances satisfy 
\begin{align}
	\Obj_W^{[2]}\!\mid_E
	\,\,\leq\,\,
	\Obj_{W\!;E}^{[2]}
	\,\,\leq\,\,
	\stab_{\aut(E)}\!\left((d^{2}_W\ploss)|_{E\times E}\right),
\end{align}
and both inclusions can be strict. The remaining symmetry groups are defined likewise, and the corresponding relations generalize directly.

\paragraph{Layerwise analysis.}
We now apply the preceding framework to the layerwise Hessian spectra using the symmetries inherited from $\tri{\delta\sfs_1,\delta\sfc_1}\leq\Obj_W^{[\mathrm{GN}]}$ at $W\in\cG_0$. Since $\Theta$ is Euclidean, we identify each $T_{W_i}\Theta_i$ with its canonical image in $T_W\Theta$. Thus, for example,
$T_{W_1}\Theta_1
=
\{(\delta W_1,0,0,0,0,0)\}
\leq
T_W\Theta$.
The \integrated symmetry $\delta\sfc_1(P_\pi)$ sends
	$(\delta W_1,0,0,0,0,0)$
to
	$(W_1P_\pi W_1^{-1}\delta W_1P_\pi^\top,0,0,0,0,0)$.
Therefore $\delta\sfc_1(P_\pi)T_{W_1}\Theta=T_{W_1}\Theta$ and
$\delta\sfc_1\!\mid_{T_{W_1}\Theta}\in \Obj_W^{[\mathrm{GN}]}\!\mid_{T_{W_1}\Theta}$. The full analysis of the layerwise Hessian is summarized in \pref{table:layerwise_restrictions}.

\begin{table}[!htbp]
	\centering
	\footnotesize
	\setlength{\tabcolsep}{3pt}
	\begin{tabular}{l | l | l | l | l | l}
		\hline
		& $T_{W_1}\Theta$ 
		& $T_{\b_1}\Theta$ 
		& $T_{W_2}\Theta$ 
		& $T_{\b_2,W_3}\Theta$
		& $T_{\b_3}\Theta$ \\
		\hline
		
		$\delta\sfs_1$
		& trivial
		& trivial
		& $M_1\delta W_2$
		& $\bigl(M_2\delta\b_2,\delta W_3\bigr)$
		& trivial
		\\
		
		$\delta\sfc_1$
		& $B_\pi\delta W_1P_\pi^\top$
		& $B_\pi\delta\b_1$
		& $\delta W_2C_\pi$
		& $\Phi_\pi$
		& trivial
		\\
		\hline
		
		$\tri{\delta\sfs_1,\delta\sfc_1}\!\mid_E$
		& $S_d$
		& $S_d$
		& $\stab_{O(M(1,d))}(W_3)\times S_d$
		& $\stab_{O(M(1,d))}(W_3)\times S_d$
		& trivial
		\\
		\hline
		
		\rowhead{Isotypic}{dec.}
		& \begin{tabular}[t]{@{}l@{}}
			$2\triv\oplus 3\stdsd_{d-1,1}$ \\
			$\oplus\,\stdsd_{d-2,2}\oplus\,\stdsd_{d-2,1,1}$
		\end{tabular}
		& $\triv\oplus\,\stdsd_{d-1,1}$
		& \begin{tabular}[t]{@{}l@{}}
			$(\triv,\triv)\oplus(\triv,\stdsd_{d-1,1})$ \\
			$\oplus\,(\std,\triv)\oplus\,(\std,\stdsd_{d-1,1})$
		\end{tabular}
		& \begin{tabular}[t]{@{}l@{}}
			$2(\triv,\triv)\oplus(\std,\triv)$ \\
			$\oplus\,(\triv,\stdsd_{d-1,1})$
		\end{tabular}
		& $\triv$
		\\
		\hline
		
		\rowhead{Multi.}{structure}
		& \begin{tabular}[t]{@{}l@{}}
			$\mset{1^{[2]},(d-1)^{[3]},$ \\
				$\frac{d(d-3)}{2},\frac{(d-1)(d-2)}{2}}$
		\end{tabular}
		& $\mset{1,d-1}$
		& $\mset{1,(d-1)^{[2]},(d-1)^2}$
		& $\mset{1^{[2]},(d-1)^{[2]}}$
		& $\mset{1}$
		\\
		
		\hline
	\end{tabular}
	\caption{Restrictions of $\delta\sfs_1$ and $\delta\sfc_1$ to the relevant parameter subspaces, together with the corresponding restricted invariance groups, isotypic decompositions, and multiplicity structures. Here $B_\pi\defeq W_1P_\pi W_1^{-1}$, $C_\pi\defeq W_1P_\pi^\top W_1^{-1}$, $R_\pi\defeq W_2W_1P_\pi^\top W_1^{-1}W_2^{-1}$, $A_\pi\defeq W_2W_1P_\pi W_1^{-1}W_2^{-1}$, and $\Phi_\pi(\delta\b_2,\delta W_3)\defeq(\delta\b_2-\delta W_3A_\pi^{-1}(I-A_\pi)\b_2\frac{W_3^\top}{W_3W_3^\top},\delta W_3R_\pi)$. The two $\stab_{O(M(1,d))}(W_3)$ factors in $\delta\sfs_1(M_1,M_2)$ correspond, respectively, to $M_1$ and $M_2$. Trivial factors are suppressed in the displayed product groups and isotypic decompositions where no confusion results. Since $\delta\sfc_1$ maps a $W_3$ variation to one with a $\b_2$ component, the natural invariant block is $T_{\b_2,W_3}\Theta$ rather than $T_{W_3}\Theta$.}
	\label{table:layerwise_restrictions}
\end{table}

\paragraph{SB on the sphere.}
The framework developed in this work extends naturally to settings in which the weights are constrained to a manifold. As a simple example, consider \pref{setting:gaussian} with the model
$N^{1\nicefrac{1}{2},\circ}$ (\pref{sec:pheno}), with parameter space given by the Frobenius sphere in $M(d,d)$,
\begin{align}
	\Theta
	\defeq
	\Sph(\rho)
	=
	\{W_1\in M(d,d):\|W_1\|_F=\rho\},
	\qquad
	\rho\defeq\|I_d\|_F=\sqrt d.
\end{align}
We find that $I_d\in\Theta$ remains a global minimizer. Set
$E\defeq T_{I_d}\Sph(\rho)$, the space of trace-zero matrices. We consider the \integrated invariances
$\Obj_{I_d;E}^{[2]}(N^{1\nicefrac{1}{2},\circ},\ell^\sqloss,\dd)$
of the Hessian restricted to $E$. Since $I_d$ is also an ambient critical point, the Hessian of
$\ploss|_{\Sph(\rho)}$ at $I_d$ is the restriction of the ambient Hessian to
$T_{I_d}\Sph(\rho)$.\footnote{For an embedded submanifold \(\mathcal M\), the Riemannian
	Hessian of \(f|_{\mathcal M}\) at \(W\in\mathcal M\) differs from the restriction
	of the ambient Hessian to \(T_W\mathcal M\) by a term involving the second
	fundamental form. In the spherical case, this term is a scalar multiple of the
	metric induced on \(T_W\Sph(\rho)\). Thus, at points critical only for
	\(\ploss|_{\Sph(\rho)}\), it can shift eigenvalues but does not change the
	isotypic decomposition or the multiplicities forced by the relevant orthogonal 
	symmetries.}
	Consequently, we may obtain invariances of the spherical Hessian by
	restricting ambient \integrated invariances that preserve $E$.

The invariance properties of
\(N^{1\nicefrac{1}{2},\circ}\) on \(M(d,d)\) were first studied in
\cite[Section 4.1]{arjevani2019principle}, where the objective was shown to be
\(S_d\times S_d\)-invariant, with the two factors independently permuting the
rows and columns of \(W_1\).  Although the point-stabilizer approach generally yields only trivial invariances in our settings, here it gives a diagonal copy of $S_d$ in $\Obj_{I_d}^{[2]}$, acting by $\delta W_1\mapsto P_\pi\delta W_1P_\pi^\top$. This action preserves $T_{I_d}\Sph(\rho)$ since $\tr(P_\pi\delta W_1P_\pi^\top)=\tr(\delta W_1)$. Estimates of the Hessian spectrum were obtained in
\cite{arjevani2020analytic,arjevani2023hidden}, and later for
$N^{2,\circ}(\x;W_1,\w_2)$ in \cite{arjevani2021analytic}. Although analogous estimates are possible, our focus here is on Hessian invariances. The $S_d$-representation on $M(d,d)$ agrees with that on the $\delta W_1$ component in \eqref{p1_iso_dec}. Since $T_{I_d}\Sph(\rho)=(\RR I_d)^\perp$, its isotypic decomposition is obtained by removing one trivial copy, yielding the following decomposition and multiplicity structure for $d\ge4$:
\begin{align}\label{eq:n1ahalf_sphere_iso_mult}
T_{I_d}\Sph(\rho)
	&\cong
	\triv
	\oplus
	3\stdsd_{d-1,1}
	\oplus
	\stdsd_{d-2,2}
	\oplus
	\stdsd_{d-2,1,1}
	\enspace\text{and}\enspace
	\Mset{
		1,
		(d-1)^{[3]},
		\frac{d(d-3)}{2},
		\frac{(d-1)(d-2)}{2}
	}.
\end{align}
This multiplicity structure is illustrated numerically in the top row of
\pref{fig:sb_n1ahalf_sphere}. The bottom row shows a spontaneous SB point, whose spectrum is likewise constrained by its stabilizer $\Delta (S_{d-2}\times S_2)$.

\begin{figure}[t]
	\centering
	\includegraphics[width=0.5\textwidth]{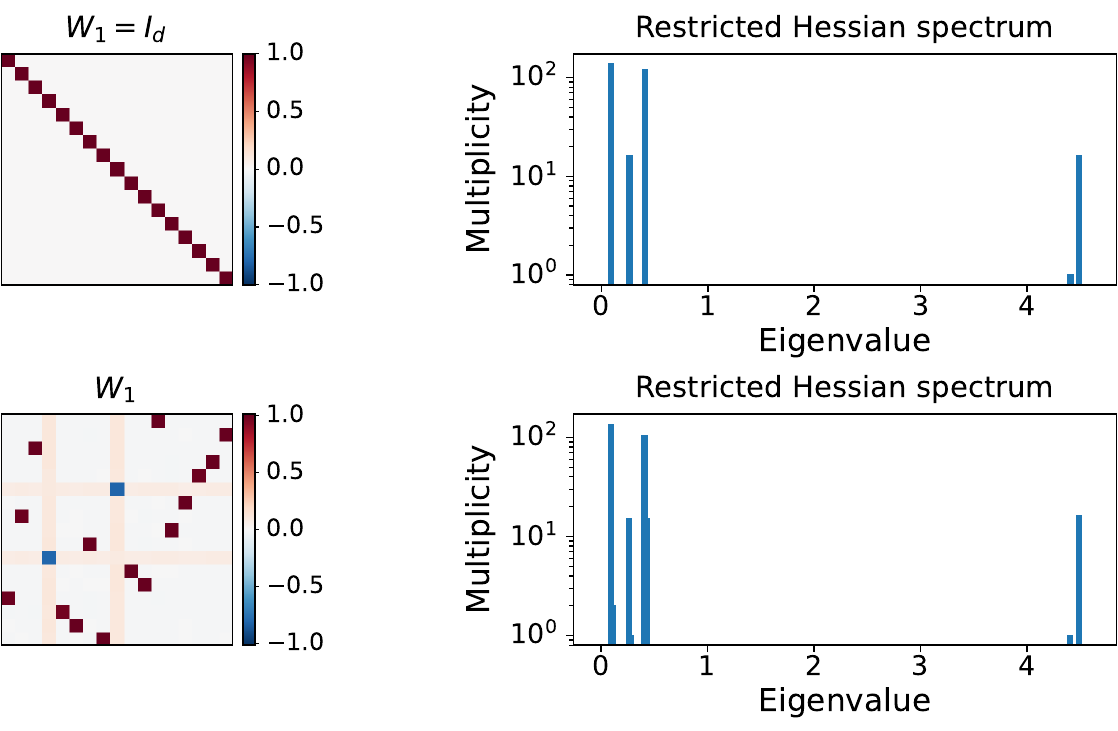}
\caption{
	Weights $W_1$ and corresponding spectra of the Hessian of
	$\ploss|_{\Sph(\rho)}$ for $N^{1\nicefrac{1}{2},\circ}$. The top row corresponds to
	the global minimum $W_1=I_d$, whose observed eigenvalue multiplicities agree
	with those predicted by \eqref{eq:n1ahalf_sphere_iso_mult}, up to the numerical
	coalescence of nearby eigenvalues. The bottom row shows a trained SB point
	with stabilizer $\Delta(S_{d-2}\times S_2)$.}
	\label{fig:sb_n1ahalf_sphere}
\end{figure}

\subsection{Two Gaussians: resolutive SB}
\label{sec:two_gaussians_analysis}
We next study global minima in the $k=2$ case of
\pref{setting:gaussian_mixture}, taking $\tau=1$ for the explicit
formulas, and allowing $d$ and $h$ to vary independently. The Bayes-optimal risk is the conditional entropy of the labels given the input,
$\mathbb E\!\left[
h_{\mathrm{bin}}\!\big(\mathbb P(Y=1\mid X)\big)
\right]
\approx 0.193$,
where \(h_{\mathrm{bin}}(u)=-u\log u-(1-u)\log(1-u)\). This value is attained by predictors whose two outputs differ by \(2(x_1+x_2)\). The symmetry arguments apply
for every $\tau>0$, with the Bayes-optimal logit difference scaled by
$\tau^{-2}$. Here, as throughout this work, the Bayes-optimal predictor is characterized by semialgebraic conditions on the network output. Since ReLU networks  (\pref{def:definable_dag}) are semialgebraic in $(\x,W)$, the set of weights satisfying these conditions for all $\x$ is semialgebraic.

Referring to the symmetries in \pref{table:symmetry_examples}, if \(d=h=2\), modulo the rescaling symmetries $\sfa_i^\lambda$, permutation
symmetries $\sfa_i^\pi$ ($i=1,2$), and common-logit shifts
$\sfs_1^\celoss$, every global minimum implemented by \(N^{3,\circ}\) is of the form
\begin{align}\label{k2_all_T}
	\cT =
	\left\{
	(W_1,W_2,W_3)
	~\middle|\;
	W_1=
	\begin{pmatrix}
		1&1\\
		-1&-1
	\end{pmatrix},
	~
	\sigma(W_2)\in \GL_2,
	~
	W_3 =
	\begin{pmatrix}
		-1&1\\
		1&-1
	\end{pmatrix}
	\sigma(W_2)^{-1}
	\right\}.
\end{align}
The characterization is proved in \pref{prop:k2_global_minima}.
At $W^\star\in\cT$ where the relevant classical derivatives exist,
well specification gives equality of the Hessian and GN matrix. In particular, all flat directions  lie in the kernel of the Hessian by \pref{prop:basic_flat}.  To identify the orbit directions $T_{W^\star}(\Obj\cdot W^\star)$, observe from
 \pref{table:obj_symmetry_groups} that for the bias-free model the two
 rescaling families contribute $2h$ directions and the common-logit
 transformations contribute $h$. Thus, $T_{W^\star}(\Struc\cdot W^\star)$ has dimension $3h=6$ and equals
 $T_{W^\star}(\Obj\cdot W^\star)$ in the present case.
There are two additional transverse flat directions at $W^\star$, characterized by $\delta W_1=0$, 
$d\sigma(\u_j)[\delta\u_j]\in\mathrm{span}\{(1,1)^\top\}$, $j=1,2$, and
$\delta W_3=-W_3d\sigma(W_2)[\delta W_2]\sigma(W_2)^{-1}$, where $\u_j$ and $\delta\u_j$ are the $j$th columns of $W_2$ and $\delta W_2$, respectively. Together, these yield \(8\) zero eigenvalues out of \(12\) total eigenvalues.

To analyze the $4$-dimensional complement of the flat-direction space described above, we now take into account the relevant \integrated symmetries: $\sfc^\sfs$ and $\sfc^\sfr$. Note that \(\sfc^\sfs\in\stab_{\Gamma}\prn{W^\star}\) for all \(W^\star\in\cT\), hence \(\sfc^\sfs\in\Gamma_{W^\star}^{[\mathrm{GN}]}\). Its sign component, denoted by \(C^{(-1)}\), is \(\delta W_1= (a,b)^\top (1,-1),a,b\in\RR\), which realizes the \(\stdsd_{(1,1)}\) representation of \(S_2\cong\tri{\sfc^\sfs}\).
This yields a \(2\)-dimensional invariant subspace orthogonal to the flat directions. 
The symmetry $\sfc^\sfr$ preserves the GN form on $C^{(-1)}$,
but not on all of $T_{W^\star}\Theta$. Inspection of $d_WN^{3,\circ}$
gives another structural symmetry of the pointwise GN form:
\begin{align}
	\delta\sfs_2:
	(\delta W_1,\delta W_2,\delta W_3)
	\mapsto
	(\operatorname{diag}(1,-1)\delta W_1,\delta W_2\operatorname{diag}(1,-1),
	\delta W_3\sigma(W_2)\operatorname{diag}(1,-1)\sigma(W_2)^{-1}).
\end{align}
It satisfies
\begin{align}
	d_WN^{3,\circ}(\x;W^\star)[\delta\sfs_2\delta W]
	=
	\sgn(x_1+x_2)\,
	d_WN^{3,\circ}(\x;W^\star)[\delta W].
\end{align}
Hence, $\delta\sfs_2$ preserves the pointwise GN form, giving
\begin{align}
	\sfc^\sfs, \delta \sfs_2
	\in
	\Obj_{W^\star}^{[\mathrm{GN}]}(N^{3,\circ}, \ell^\celoss, \cN_{\pm}) \quad\text{ and }\quad
	\sfc^\sfr \in
	\Obj_{W^\star;C^{(-1)}}^{[\mathrm{GN}]}
	(N^{3,\circ},\ell^\celoss,\cN_\pm).
\end{align}
The restricted generators yield an irreducible two-dimensional representation
on $C^{(-1)}$, with image isomorphic to $D_4$. In our experiments, Adam consistently converges to the subfamily
$\spT\subseteq\cT$ described in \eqref{Wstar}. At these points,
$\sfc^\sfr$ extends to the full tangent space, and $\delta\sfs_2$
extends to $N^{3,\semicircle}$ by also sending
$(\delta\b_1,\delta\b_2)$ to $(\operatorname{diag}(1,-1)\delta\b_1,\operatorname{diag}(1,-1)\delta\b_2)$.
The corresponding GN invariance extends to $N^{3,\twoslashcircle}$
as well.

\begin{table}[!htbp]
	\centering
	\begin{tabular}{c|c|c|c|c|c|c|c}
		&
		Points &
		$\dim\mathrm{Flat}$&
		$\codim\,\mathrm{Flat}$&
		$\sfc^{\sfs}$&
		$\sfc^{\sfr}$&
		$\delta \sfs_2$ &
		isotypic decomposition of $T_W\Theta/\mathrm{Flat}$
		\\ \hline
		$N^{3,\circ}$ &$\cT$&
		$8$ & $4$ &
		$T_W\Theta$ & $C^{(-1)}$ & $T_W\Theta$ &
		$\bigl(\fs_{(2)}\oplus \fs_{(1,1)}\bigr)_{\sfc^\sfs}
		\otimes
		\bigl(\fs_{(2)}\oplus \fs_{(1,1)}\bigr)_{\delta\sfs_2}$ 
		\\
		$N^{3,\circ}$ &$\spT$&
		$8$ & $4$ &
		$T_W\Theta$ & $T_W\Theta$ & $T_W\Theta$ &
		$(\fs_{(2)} \oplus \fs_{(1,1)})_{\sfc^\sfs}\otimes \stdih $ 
		\\
		$N^{3,\semicircle}$ &$\spT$&
		$10$ & $6$ &
		$T_W\Theta$ & $T_W\Theta$ & $T_W\Theta$ &
		$\left(\fs_{(1,1)} \oplus 2\fs_{(2)}\right)_{\sfc^\sfs}
		\otimes \mathrm{Std}_{D_4}$ \\
		$N^{3,\twoslashcircle}$ &$\spT$&
		$11$ & $6$ &
		$T_W\Theta$ & $T_W\Theta$ & $T_W\Theta$ &
		$\left(\fs_{(1,1)} \oplus 2\fs_{(2)}\right)_{\sfc^\sfs}
		\otimes \mathrm{Std}_{D_4}$ 
		\\
		$N^3$ &$\spT$&
		$12$ & $6$ &
		$T_W\Theta$ & $T_W\Theta$ & $\delta\b_3\|\bones$ &
		$\left(
		\fs_{(1,1)} \oplus 2\fs_{(2)}
		\right)_{\sfc^\sfs}
		\otimes
		\left(
		\fs_{(2)} \oplus \fs_{(1,1)}
		\right)_{\sfc^\sfr}$ 
	\end{tabular}
\caption{Isotypic decompositions of $T_W\Theta/\mathrm{Flat}$ for the
	indicated models under \pref{setting:gaussian_mixture} with $k=2$,
	where $\mathrm{Flat}$ is the space of flat directions. The table also
	gives the dimension and codimension of $\mathrm{Flat}$ and the subspaces
	on which $\sfc^{\sfs}$, $\sfc^{\sfr}$, and $\delta\sfs_2$ act.}
	\label{table:sym_per_model}
\end{table}

For $N^3$, the Euclidean-orthogonal realization of the action on the
full space reduces from $C_2\times D_4$ to
$\langle\sfc^\sfs,\sfc^\sfr\rangle\cong C_2^2$.
However, on $\delta\b_3\|\bones$, the full Euclidean-orthogonal
action is recovered. Since this subspace has codimension one,
Cauchy interlacing constrains the resulting eigenvalue splitting,
see \pref{table:n3_symmetry_breaking}.   Moreover, spectral
displacement can be bounded by standard perturbation-theoretic
estimates in terms of
$\|\hgn_{W^*}\ploss-
(\hgn_{W^*}\ploss\big|_{\delta\b_3\|\bones}\oplus 0)\|_2$.
The doublet on $C^{(-1)}$ remains intact. The splitting of the largest eigenvalue of $N^{3,\twoslashcircle}$
provides a structural mechanism for the two isolated eigenvalues
observed in \cite{sagun2016eigenvalues}. The Hessian matrix in \pref{fig:flat_directions} is shaped by the flat directions described earlier and by its restriction to $\delta\b_3\|\bones$ lying in the commutant algebra of $D_4\times C_2$. The same qualitative picture persists for different values of the
variance parameter and for squared loss; see
\pref{table:mixture_gaussians_resolution}.

\subsection{The Spectrum of a Random Hessian}
\label{sec:streamlined-three-layer}
We now use the results of \pref{sec:streamlined-internal-symmetry} to
study the Hessian spectrum of the full architecture $N^3$ \eqref{net3}
for $h\geq2$ and $k=2$ under cross-entropy loss at He
initialization,
\begin{align}
	(W_1)_{i\alpha}&\sim\cN(0,2/d),
	&(W_2)_{ji}&\sim\cN(0,2/h),
	&(W_3)_{aj}&\sim\cN(0,2/h),
\end{align}
with all weight entries sampled independently and
$\b_1=\b_2=\b_3=0$. Denote this initialization law by
$\nu^{\mathrm{He}}$. Let $H(W)$ denote the regular Hessian as considered
in that section, and set $M\defeq\EE_{W\sim\nu^{\mathrm{He}}}H(W)$. All expectations in this subsection are assumed to be finite.
For $\pi_1,\pi_2\in S_h$, let
\begin{align}
	\mathsf{i}^{\mathrm{He}}(P_{\pi_1},P_{\pi_2})
	(W_1,\b_1,W_2,\b_2,W_3,\b_3)
	&\defeq
	\bigl(\sfa_2^\pi(P_{\pi_2}^\top)\circ
	\sfa_1^\pi(P_{\pi_1}^\top)\bigr)
	(W_1,\b_1,W_2,\b_2,W_3,\b_3)
	\notag\\
	&=
	(P_{\pi_1}^\top W_1,P_{\pi_1}^\top\b_1,
	P_{\pi_2}^\top W_2P_{\pi_1},
	P_{\pi_2}^\top\b_2,W_3P_{\pi_2},\b_3).
	\label{eq:streamlined-permutation-action}
\end{align}

\begin{lemma}
	\label{lem:streamlined-N3-invariances}
	Regardless of the data distribution $\dd$,
	\begin{align}
	\tri{\mathsf{i}^{\mathrm{He}}(P_{\pi_1},P_{\pi_2}):
	\pi_1,\pi_2\in S_h}
		&\leq\avg{\Struc}^{[2]}(N^3,\ell,\nu^{\mathrm{He}}).
		\label{eq:streamlined-structural-invariance}
	\end{align}
	For the mixture of two Gaussians $\cN_\pm$ from
	\pref{setting:gaussian_mixture}, the composite symmetries $\sfc^\sfs$ and
	$\sfc^\sfr$ join the structural symmetries above in
	$\avg{\Obj}^{[2]}(N^3,\ell,\cN_\pm,\nu^{\mathrm{He}})$.
\end{lemma}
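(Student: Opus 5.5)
The plan is to check directly the two conditions defining the barred groups in \pref{table:symmetry_groups}. For $\avg{\Struc}^{[2]}$, a transformation must lie in $\struc^{[2]}$ with trivial data component, and its parameter component must lie in $\init(\nu^{\mathrm{He}})$. For $\avg{\Obj}^{[2]}$, the data component is instead required to lie in $\dist(\cN_\pm)$. Once membership is established, \pref{prop:streamlined-internal-average} turns it into invariance of the averaged regular Hessian $M$, so nothing beyond membership needs to be shown.

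\textbf{Step 1: the permutations $\mathsf{i}^{\mathrm{He}}$.} These are compositions of the architectural symmetries $\sfa_1^\pi,\sfa_2^\pi$, extended in the usual way to biases. For every $\x$ and $W$ they satisfy $N^3(\x;\mathsf{i}^{\mathrm{He}}W)=N^3(\x;W)$, because ReLU commutes with permutations. Since the maps are linear in $W$, the same holds after differentiating: under the global action $g\cdot(\z,W,\delta W)=(\z,gW,g\delta W)$, both the classical regular differentials $d_W^r\smloss$ and the generalized ones are preserved. Indeed, the activation interfaces are mapped onto one another, so the singular parts are preserved as well. This gives membership in $\struc^{[2]}$ with identity data component, independently of the loss and of $\dd$.

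For membership in $\init(\nu^{\mathrm{He}})$, observe that each weight matrix has i.i.d.\ centered Gaussian entries and the biases are zero. Left or right multiplication by permutation matrices only permutes entries, so it preserves the product law, and zero biases are fixed.

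\textbf{Step 2: the composite symmetries $\sfc^\sfs,\sfc^\sfr$.} Following \pref{table:symmetry_examples}, each is a composition of an architectural $\sfa_0$ with a distributional symmetry ($\sfd^\sfs$, resp.\ $\sfd^\sfr$), and for $\sfc^\sfr$ also the permutation symmetries and the cross-entropy symmetries $\sfs_2^\celoss,\sfs_3^\celoss$. One first extends them to $N^3$ with biases: $\b_1\mapsto\b_1$, $\b_2\mapsto P_{(1,2)}\b_2$, and $\b_3\mapsto -P_{(1,2)}\b_3$ under $\sfc^\sfr$. One then verifies that the full joint transformation stabilizes $\smloss$. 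For $\sfc^\sfr$ the key identity is $\ell^\celoss(-P_{(1,2)}\z,P_{(1,2)}\y)=\ell^\celoss(\z,\y)$, which holds because negating and swapping two logits leaves the logit difference paired with the swapped label unchanged. As in Step 1, this stabilization passes to the second differentials, and $\dd$-compatibility follows since $\sfd^\sfs,\sfd^\sfr\in\dist(\cN_\pm)$.

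\textbf{Main obstacle: $\init(\nu^{\mathrm{He}})$-compatibility of the composites.} For this one needs $W_1\mapsto W_1P_{(1,2)}$ and $W_1\mapsto -P_{(1,2)}W_1$ to preserve the law of $W_1$. This holds because the entries are i.i.d.\ symmetric Gaussians, which are invariant under sign flips and permutations. Likewise $W_3\mapsto -W_3P_{(1,2)}$ and $W_2\mapsto P_{(1,2)}W_2P_{(1,2)}$ preserve their laws, and the zero biases are fixed.

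A second subtlety is that $\sfc^\sfr$ uses $\sfa^\pi_i$ with the specific permutation $P_{(1,2)}$ on hidden units. This presupposes $h=2$. For general $h$, I would replace the hidden-unit factors by identities, or by any permutation, since the $\sfa^\pi$ are internal anyway. The requirement is only that the data and output components match $\sfd^\sfr$ through $\sfa_0(-P_{(1,2)})$ and $\sfs_2^\celoss\circ\sfs_3^\celoss$. I expect the careful bookkeeping of this extension, including the sign on the output bias, to be the only delicate point.
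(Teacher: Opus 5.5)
Your overall route is the right one. The paper states this lemma without a separate argument, and what must be checked is exactly membership in the barred groups. That follows from permutation equivariance of ReLU, invariance of the He law under signed permutations of entries, and $\sfd^{\sfs},\sfd^{\sfr}\in\dist(\cN_\pm)$. Step~1 and your treatment of $\sfc^{\sfs}$ are fine.

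The gap is exactly at the point you flagged. Your extension of $\sfc^{\sfr}$ to $N^3$ is incorrect, and so is the loss identity you use to justify it. The transformation you write down therefore does not stabilize $\smloss$ and is not in $\struc^{[2]}$.
\begin{itemize}
\item The identity $\ell^{\celoss}(-P_{(1,2)}\z,P_{(1,2)}\y)=\ell^{\celoss}(\z,\y)$ is false. Since $-P_{(1,2)}\z=\z-(z_1+z_2)\bones$, its left side equals $\ell^{\celoss}(\z,P_{(1,2)}\y)$.
\item The first-layer bias rule is wrong. $W_1\mapsto-P_{(1,2)}W_1$ together with $\x\mapsto-\x$ gives the preactivation $P_{(1,2)}W_1\x+\b_1'$. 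Matching $P_{(1,2)}\sigma(W_1\x+\b_1)$ then forces $\b_1\mapsto P_{(1,2)}\b_1$, not $\b_1\mapsto\b_1$.
\item The output bias rule is wrong. $W_3\mapsto-W_3P_{(1,2)}$ negates the weight part of the logits. Your rule $\b_3\mapsto-P_{(1,2)}\b_3$, which is congruent to $\b_3$ modulo $\RR\bones$, then gives logits $-(W_3\r_2-\b_3)$ up to a common shift, instead of $-(W_3\r_2+\b_3)$. Here $\r_2=\sigma(W_2\sigma(W_1\x+\b_1)+\b_2)$.
\end{itemize}

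The fact that biases vanish on the support of $\nu^{\mathrm{He}}$ does not rescue this. Membership in $\struc^{[2]}$ is a global condition, and bias perturbations enter the Hessian. For instance, on perturbations with $\delta\b_1=0$, your map equals a genuine invariance composed with $\delta\b_3\mapsto P_{(1,2)}\delta\b_3$. That map flips $\bepsilon^\top\delta\b_3$. It therefore flips the sign of the mixed $(\delta W_3,\delta\b_3)$ entry of $M$, namely $\EE\bigl[[\operatorname{softmax}(N^3)]_1[\operatorname{softmax}(N^3)]_2\,(\bepsilon^\top\delta W_3\r_2)(\bepsilon^\top\delta\b_3)\bigr]$. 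This entry is nonzero, for example for $\delta W_3=\bepsilon\e_1^\top$ and $\delta\b_3=\bepsilon$, since the softmax product is positive and $(\r_2)_1\geq0$ is positive with positive probability.

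The repair is to use $(\b_1,\b_2,\b_3)\mapsto(P_{(1,2)}\b_1,P_{(1,2)}\b_2,-\b_3)$. The choice $P_{(1,2)}\b_3$ for the last entry also works, since it differs from $-\b_3$ by a common logit shift. With this extension, $N^3(-\x;\sfc^{\sfr}W)=-N^3(\x;W)$ for all $\x$ and $W$. The identity actually needed is $\ell^{\celoss}(-\z,P_{(1,2)}\y)=\ell^{\celoss}(\z,\y)$. It holds because for $\y=\e_1$ both sides equal $\log(1+e^{z_2-z_1})$, and symmetrically for $\y=\e_2$. This map fixes zero biases, so your He-compatibility argument goes through unchanged.

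Finally, the hidden-unit $P_{(1,2)}$ in $\sfc^{\sfr}$ is just the transposition of the first two hidden units and makes sense for every $h\geq2$, so nothing needs replacing. If you do substitute another permutation, $\b_1$ and $\b_2$ must be permuted by the same matrix as the corresponding hidden layer.
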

The argument below uses only the structural invariance. The representation
on $\weightspace$ of the structure group in the lemma
contains a unique copy of
$\fs_{(h-1,1)}\boxtimes\fs_{(h-1,1)}$, namely
\begin{align}
	B&\defeq
	\{(0,0,Z,0,0,0):Z\in M(h,h),\ Z\bones=0,\ Z^\top\bones=0\}.
\end{align}
The subspace $B$ has codimension $\bigO{h}$ in
$\weightspace$ for fixed $d$. 
\begin{proposition}
	\label{prop:streamlined-N3-quadratic-eigenvalue}
	The averaged Hessian satisfies $M|_B=\lambda_B I_B$ for some scalar
	$\lambda_B$. If
	$m_2\defeq\EE_{(\x,\y)\sim\dd}\|\x\|^2<\infty$, then
	\begin{align}
		\lambda_B
		&\leq
		\frac{m_2}{2dh}\left(1-\frac1\pi\right).
		\label{eq:streamlined-quadratic-He-bound}
	\end{align}
\end{proposition}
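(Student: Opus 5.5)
The argument has two parts: Schur's lemma gives $M|_B=\lambda_B I_B$, and a trace computation bounds $\lambda_B$.

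\emph{Scalar restriction.} By \pref{lem:streamlined-N3-invariances}, the group $\tri{\mathsf{i}^{\mathrm{He}}(P_{\pi_1},P_{\pi_2})}\cong S_h\times S_h$ preserves the averaged regular second differential. Its action on $\weightspace$ is by permutation matrices, so it is Euclidean-orthogonal and \pref{fact:iso_dec_commuting_maps} applies to $M$. On the $\delta W_2$ block the action is $Z\mapsto P_{\pi_2}^\top ZP_{\pi_1}$, that is, $\RR^h\otimes\RR^h$. This decomposes as $(\triv\oplus\fs_{(h-1,1)})\boxtimes(\triv\oplus\fs_{(h-1,1)})$, and $B$ is exactly the $\fs_{(h-1,1)}\boxtimes\fs_{(h-1,1)}$ summand. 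As stated before the proposition, this irreducible occurs only once in $\weightspace$, so $B$ is an entire isotypic component. It is of real type, since it is an outer tensor product of absolutely irreducible $S_h$-modules. Hence $M$ preserves $B$ and, by Schur's lemma, acts on it as a scalar $\lambda_B$. In particular
\[
\lambda_B=\frac{\tr(\Pi_BM\Pi_B)}{(h-1)^2}=\frac{\EE_W\,\tr(\Pi_BH(W)\Pi_B)}{(h-1)^2},
\]
so it remains to bound the averaged trace of $H(W)$ over $B$.

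\emph{Reduction to the GN term.} For perturbations supported in $W_2$, we have $N^3=W_3\sigma(W_2\a+\b_2)+\b_3$ with $\a=\sigma(W_1\x+\b_1)$. Because $\sigma$ is piecewise linear, the regular part of $d^2_WN^3$ in the $W_2$ direction vanishes almost everywhere. Only the GN term of \eqref{aut_universe_action} survives, namely $d_1^2\ell[W_3D_2Z\a]$. For two-class cross-entropy the logit Hessian is $p(1-p)(\e_1-\e_2)(\e_1-\e_2)^\top$, and $p(1-p)\le\frac14$. Writing $\u^\top$ for the difference of the two rows of $W_3$, the pointwise form is therefore at most $\frac14(\u^\top D_2Z\a)^2=\frac14\langle D_2\u\a^\top,Z\rangle_F^2$. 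Summing over an orthonormal basis of $B$ gives
\[
\tr(\Pi_BH\Pi_B)\le\tfrac14\,\EE_{\z}\,\|\Pi_B(D_2\u\a^\top)\|_F^2 .
\]
The projection onto $B$ is $C(\cdot)C$ with $C=I-\frac1h\bones\bones^\top$, so this equals $\frac14\EE\|CD_2\u\|^2\,\|C\a\|^2$.

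\emph{Gaussian averaging.} This is the step I expect to be the main obstacle. Under $\nu^{\mathrm{He}}$, $W_3$ is independent of $(W_1,W_2)$, and $D_2$ and $\a$ depend only on the latter. I will first average over $W_3$. The vector $\u$ has i.i.d.\ entries with variance $4/h$, so $\EE_{W_3}\|CD_2\u\|^2=\frac4h\tr(D_2CD_2)\le\frac4h\tr D_2$.

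Next I average over $W_2$. Conditionally on $\a$, the pre-activations $W_2\a$ are centered Gaussians and the biases are zero, so $\EE\,\tr D_2=h/2$. Moreover $\tr D_2$ depends on $W_2$ only through the signs of $W_2\a$, so it is conditionally independent of the norm $\|C\a\|$; the two factors can be averaged separately.

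Finally I average over $W_1$, and this is where the factor $1-\frac1\pi$ should come from. Write $g_i=(W_1\x)_i\sim\cN(0,2\|\x\|^2/d)$, independent across $i$. Then
\[
\EE\|C\a\|^2=\Bigl(1-\tfrac1h\Bigr)h\bigl(\EE\sigma(g)^2-(\EE\sigma(g))^2\bigr)=(h-1)\frac{\|\x\|^2}{d}\Bigl(1-\frac1\pi\Bigr),
\]
using $\EE\sigma(g)^2=\frac{\|\x\|^2}{d}$ and $(\EE\sigma(g))^2=\frac{\|\x\|^2}{\pi d}$.

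Collecting the factors gives
\[
(h-1)^2\lambda_B\le\tfrac14\cdot\tfrac4h\cdot\tfrac h2\cdot(h-1)\frac{m_2}{d}\Bigl(1-\frac1\pi\Bigr),
\]
hence $\lambda_B\le\frac{m_2}{2d(h-1)}(1-\frac1\pi)$. The remaining $h$ versus $h-1$ discrepancy with \eqref{eq:streamlined-quadratic-He-bound} should close by keeping the factor $\tr(D_2CD_2)=(1-\frac1h)\tr D_2$ rather than discarding $C$. The delicate points are this bookkeeping of the centering projections and the justification of the conditional-independence factorization. I would verify both by computing exactly in the case $h=2$.
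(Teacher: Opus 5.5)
Your proof is correct, and the step you left hedged does close exactly. Since $D_2$ is a $0/1$ diagonal matrix, $\tr(D_2CD_2)=\tr(CD_2)=\tr(D_2)-\frac1h\bones^\top D_2\bones=(1-\frac1h)\tr(D_2)$ holds identically. Your factors therefore multiply to $\frac14\cdot\frac4h\cdot\frac{h-1}{h}\cdot\frac h2\cdot(h-1)\frac{m_2}{d}\bigl(1-\frac1\pi\bigr)=\frac{(h-1)^2m_2}{2dh}\bigl(1-\frac1\pi\bigr)$, which is precisely $(h-1)^2$ times the claimed bound. The factorization over $W_2$ needs no independence argument. It is the tower property, $\EE[\tr(D_2)\mid W_1]=\frac h2$ on $\{\a\neq0\}$, combined with $\|C\a\|=0$ on $\{\a=0\}$.

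Your ingredients are the paper's: Schur on the multiplicity-one component $B$, vanishing of the regular model term on the $W_2$ block, $p_1p_2\le\frac14$, and successive He averaging over $W_3$, $W_2$, $W_1$. The evaluation step differs. The paper uses $M|_B=\lambda_BI_B$ to test $M$ on the single unit vector $Z_\star=\frac12(\e_1-\e_2)(\e_1-\e_2)^\top\in B$, so only two hidden units enter. In your notation this gives $\EE_{W_3}(\u^\top D_2Z_\star\a)^2=\frac1h\bigl((D_2)_{11}+(D_2)_{22}\bigr)(a_1-a_2)^2$, and neither the centering projection $C$ nor the $h$ versus $h-1$ bookkeeping ever appears. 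Your trace over $B$ is slightly heavier, but it directly delivers the bound on $\EE_W\tr(\Pi_BQ(W)\Pi_B)$ that \pref{thm:streamlined-N3-spectrum} actually uses, without invoking any invariance. Symmetry is then needed only to turn that trace into the eigenvalue statement $M|_B=\lambda_BI_B$.
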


\begin{proof}
	The subspace $B$ is irreducible and occurs with multiplicity one in
	$\weightspace$, so
	\pref{lem:streamlined-N3-invariances} and
	\pref{fact:iso_dec_commuting_maps} give
	$M|_B=\lambda_B I_B$. Set $\bepsilon\defeq(1,-1)^\top\in\RR^2$,
	$\u\defeq(1,-1,0,\ldots,0)^\top/\sqrt2\in\RR^h$, and
	$Z_\star\defeq\u\u^\top$. The embedding of $Z_\star$ in the $W_2$
	coordinate is a unit vector in $B$. The network is affine in $W_2$ on each
	ReLU region, so $\nabla_{W_2}^2N^3(\x;W)=0$. For binary cross entropy,
	\begin{align}
		\nabla_1^2\ell(N^3(\x;W),\y)
		&=
		\bigl[\operatorname{softmax}(N^3(\x;W))\bigr]_1
		\bigl[\operatorname{softmax}(N^3(\x;W))\bigr]_2
		\bepsilon\bepsilon^\top.
	\end{align}
	Evaluating the regular chain rule on the unit direction $Z_\star$ therefore
	gives
	{\small \begin{align}
			\lambda_B
			&=
			\EE_{(\x,\y)\sim\dd,\,W\sim\nu^{\mathrm{He}}}\!\left[
			\bigl[\operatorname{softmax}(N^3(\x;W))\bigr]_1
			\bigl[\operatorname{softmax}(N^3(\x;W))\bigr]_2
			\left|\bepsilon^\top d_{W_2}N^3(\x;W)[Z_\star]\right|^2
			\right].
	\end{align}}
	\par\noindent Under He initialization, averaging successively over $W_3$,
	$W_2$, and $W_1$ gives
	\begin{align}
		\EE_W\left|
		\bepsilon^\top d_{W_2}N^3(\x;W)[Z_\star]
		\right|^2
		&=
		\frac{2\|\x\|^2}{dh}\left(1-\frac1\pi\right).
	\end{align}
	Since the product of the two softmax probabilities is at most $1/4$, this
	proves \eqref{eq:streamlined-quadratic-He-bound}.
\end{proof}
As $\tr(\Pi_BM\Pi_B)=(h-1)^2\lambda_B$, this estimate supplies the trace
bound required by \pref{prop:streamlined-random-bulk}.
\begin{theorem}\label{thm:streamlined-N3-spectrum}
	Assume $m_2<\infty$. For $W\sim\nu^{\mathrm{He}}$, we have
	$n_-(H(W))\leq(d+5)h$ almost surely. For every
	$T>0$ and every $\delta\in(0,1)$, the following bound holds without any
	symmetry assumption on $\dd$,
	\begin{align}
		\PP\left\{
		N_{H(W)}([0,T])
		\geq
		h^2-(d+6)h+2
		-\frac{(h-1)^2m_2}{2dh\,\delta T}
		\left(1-\frac1\pi\right)
		\right\}
		&\geq1-\delta.
		\label{eq:streamlined-structural-count}
	\end{align}
\end{theorem}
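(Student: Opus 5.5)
The plan is to apply \pref{prop:streamlined-random-bulk} to the regular Hessian $H(W)$. We split it by the regular chain rule \eqref{aut_universe_action} as $H=Q+R$. Here $Q$ represents the data-averaged pointwise GN form $\EE_{\dd}\,d_1^2\ell(N^3,\y)[d_WN^3[\delta W]]$, and $R$ represents the data-averaged regular second-order model term $\EE_{\dd}\,d_1\ell(N^3,\y)\,d_{W,\mathrm{reg}}^2N^3[\delta W]$. For binary cross-entropy, $\nabla_1^2\ell=p_1p_2\bepsilon\bepsilon^\top\succeq0$, so $Q\succeq0$ almost surely. For the subspaces, we take $E=B$, with $B$ the $\fs_{(h-1,1)}\boxtimes\fs_{(h-1,1)}$ copy supported in $W_2$. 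We take $U$ to be the $(h+1)$-dimensional space of final-layer logit-shift directions, $\delta W_3=\bones\v^\top$ and $\delta\b_3=c\bones$, that is, $T_W(\tri{\sfs_1^\celoss}\cdot W)$ including the bias. These live in different parameter blocks, so $U\perp B$.

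The hypotheses are checked one at a time.
\begin{enumerate}[leftmargin=*,label=(\roman*)]
\item $U\subseteq\ker Q$. A vector in $U$ changes the logits by a multiple of $\bones$, and $\bepsilon^\top\bones=0$.
\item $U\subseteq\ker R$. $N^3$ is affine in $(W_3,\b_3)$, so $d^2_{W,\mathrm{reg}}N^3[\u,\cdot]$ for $\u\in U$ is a mixed term of the form $\bones\,\v^\top(\cdots)$ with values in $\RR\bones$. It is annihilated by $d_1\ell$, since the cross-entropy logit gradient $\operatorname{softmax}-\y$ sums to zero. This holds pointwise off the null set of activation interfaces, hence after averaging.
\item $\Pi_BR\Pi_B=0$. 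On each activation region $N^3$ is affine in $W_2$, so the regular second derivative in $W_2$ vanishes. This is the same observation used in the proof of \pref{prop:streamlined-N3-quadratic-eigenvalue}.
\end{enumerate}

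Next comes the dimension count. We have $p=\dim\weightspace=h^2+(d+4)h+2$, $\dim U=h+1$ and $\dim B=(h-1)^2$. This gives $\codim(U\oplus B)=(d+5)h$, and \eqref{eq:streamlined-general-inertia} yields $n_-(H(W))\le(d+5)h$ almost surely. Likewise, $\dim(U\oplus E)-\codim(U\oplus B)=h+1+(h-1)^2-(d+5)h=h^2-(d+6)h+2$, which is the leading term in \eqref{eq:streamlined-structural-count}.

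For the trace term, item (iii) gives $\Pi_BH\Pi_B=\Pi_BQ\Pi_B$, so $\EE\tr(\Pi_BQ\Pi_B)=\tr(\Pi_BM\Pi_B)=(h-1)^2\lambda_B$. Here $M|_B=\lambda_BI_B$ by \pref{prop:streamlined-N3-quadratic-eigenvalue}; that result relies only on the structural invariance of \pref{lem:streamlined-N3-invariances} and therefore requires no symmetry of $\dd$. Inserting the bound \eqref{eq:streamlined-quadratic-He-bound} on $\lambda_B$ into \eqref{eq:streamlined-general-bound} gives \eqref{eq:streamlined-structural-count}.

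I expect the main obstacle to be item (ii), namely verifying $U\subseteq\ker R$ rigorously in the regular sense. We must check every mixed block $(\delta W_3,\delta\b_3)\times(\delta W_1,\delta\b_1,\delta W_2,\delta\b_2)$ and confirm that it is proportional to $\bones$ almost everywhere. We must also make sure that the singular interface terms are excluded, consistently with the regular-Hessian convention of \pref{sec:streamlined-internal-symmetry}. A secondary point is confirming that the iterated expectations commute with the projections so that $\EE\tr(\Pi_BQ\Pi_B)=\tr(\Pi_BM\Pi_B)$; this is justified by the standing finiteness assumptions together with $Q\succeq0$.
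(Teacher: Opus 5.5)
Your proposal is correct and follows the paper's proof essentially step for step. It uses the same $U$ (the logit-shift directions, including the bias) and the same choice $E=B$. It splits $H=Q+R$ and verifies $U\subseteq\ker Q\cap\ker R$ and $\Pi_BR\Pi_B=0$ in the same way, obtains the same dimension count, and bounds the trace through $\tr(\Pi_BM\Pi_B)=(h-1)^2\lambda_B$ together with \eqref{eq:streamlined-quadratic-He-bound}. The only cosmetic difference is that you check $U\subseteq\ker R$ by direct computation, whereas the paper simply cites the common-logit-shift symmetry $\sfs_1^\celoss$.
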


\begin{proof}
	Let $U$ be the $(h+1)$-dimensional subspace of
	perturbations with all coordinates zero except
	$(\delta W_3,\delta\b_3)=(\bones_2\a^\top,c\bones_2)$, where
	$\a\in\RR^h$ and $c\in\RR$. Then $U\perp B$ and
	$\codim(U\oplus B)=(d+5)h$. Moreover,
	$\dim(U\oplus B)-\codim(U\oplus B)=h^2-(d+6)h+2$. The decomposition
	\eqref{gn_decomp} gives
	$H(W)=Q(W)+R(W)$, where $Q(W)$ is the GN matrix and $Q(W)\succeq0$. The
	common logit shift symmetry $\sfs_1^\celoss$ in
	\pref{table:symmetry_examples} gives
	$U\subseteq\ker Q(W)\cap\ker R(W)$. On each fixed ReLU region,
	\eqref{dwn3} shows that $N^3$ is affine in $W_2$, and hence
	$\Pi_BR(W)\Pi_B=0$. Therefore, by
	\pref{prop:streamlined-N3-quadratic-eigenvalue},
	\begin{align}
		\EE_W\tr(\Pi_BQ(W)\Pi_B)
		&=
		\tr(\Pi_BM\Pi_B)
		=(h-1)^2\lambda_B.
	\end{align}
	Applying \pref{prop:streamlined-random-bulk} with $E=B$, followed by
	\eqref{eq:streamlined-quadratic-He-bound}, proves the probability bound.
\end{proof}
Symmetry guides the proof by identifying the quadratic component $B$ and
reducing its expected trace per dimension to $\lambda_B$. The bound is
asymptotically sharp in its leading term, although its lower-order constants
can be improved. A generalization to additional architectures and data
distributions is developed in~\cite{arjevani2026dsb_tf}.

\begin{remark}[Spherical initialization]
Also relevant to the experiments in \cite{sagun2016eigenvalues}
is the following initialization scheme.
	Let $\nu^{\mathrm{sph}}_{r_h}$ be the uniform law on the sphere of radius
	$r_h$ in $\weightspace$ and set
	$M_{\mathrm{sph}}\defeq\EE_{W\sim\nu^{\mathrm{sph}}_{r_h}}H(W)$. Its
	permutation invariance gives
	$M_{\mathrm{sph}}|_B=\lambda_B^{\mathrm{sph}}I_B$, and a similar derivation
	gives
	\begin{align}
		0\leq\lambda_B^{\mathrm{sph}}
		&\leq
		\frac{2r_h^4(m_2+1)}
		{\dim\weightspace(\dim\weightspace+2)}.
	\end{align}
	Thus fixed $r_h$ gives $\lambda_B^{\mathrm{sph}}=\bigO{h^{-4}}$. 
\end{remark}

\subsection{$k$ Gaussians: distributional SB}\label{sec:spec_analysis_mix}
We next consider the mixture of \(k\) Gaussians from
\pref{setting:gaussian_mixture}, with equal priors, common covariance \(I_d\),
and centers \(\{\c_i\}_{i=1}^k \subset \mathbb{R}^k\) drawn uniformly on the
sphere. The corresponding Bayes logit map is \(\x \mapsto C\x\), where
\(C \in \mathbb{R}^{k \times k}\) has rows \(\c_i^\top\). This map can
therefore be implemented without a bias term. For simplicity, we restrict to
\(N^{3,\circ}\) and the Bayes-optimal realization
\(W^{(k)}\), well-specified, where
\begin{align}\label{W_star_mix}
	W^{(k)}_1(C) &= C \otimes 
	\begin{pmatrix}
		1\\
		-1
	\end{pmatrix},
	\qquad
	W^{(k)}_2 = I_k \otimes
	\begin{pmatrix}
		1 & -1\\
		-1 & 1
	\end{pmatrix},
	\qquad
	W^{(k)}_3 = I_k \otimes
	\begin{pmatrix}
		1 & -1
	\end{pmatrix}.
\end{align}
As in the earlier \(k=2\) case, the off-diagonal entries of the second factor in 
\(W_2^{(k)}\) may be chosen arbitrarily negative without changing the Hessian
invariances, again demonstrating that the point-stabilizer approach is insufficient to explain the observed spectra.  The symmetric reference is constructed as in  \eqref{svd_reference} with the orthogonal center matrix \(U\). The construction extends to unequal priors and common nonidentity covariance
matrices, though the symmetry conclusions may weaken accordingly.
With $C=U$, the mixture admits an \(S_k\)-symmetry, 
\begin{align}
	(\x,\y)
	\mapsto
	(U^\top P_\pi U \x,\pi(\y)),
	\qquad \pi\in S_k .
\end{align}
Here \(U^\top P_\pi U\) is orthogonal and satisfies
\(U^\top P_\pi U \c_i=\c_{\pi(i)}\). Combining this distributional
symmetry with the structural symmetries of \(N^{3,\circ}\)
(\pref{table:symmetry_examples}) gives the \integrated invariance below.
\begin{lemma}\label{lem:mix_gauss_inv}
	Let \(\sfc^\pi\) be the (linear) action of \(S_k\) defined as
	follows. For every \(\pi\in S_k\),
	{\footnotesize
		\begin{align}
			\sfc^\pi(P_\pi):
			(\delta W_1,\delta W_2,\delta W_3)
			\mapsto
			\bigl(
			(P_\pi\otimes I_2)\delta W_1 (U^\top P_\pi U)^{-1},\ 
			(P_\pi\otimes I_2)\delta W_2(P_\pi^\top\otimes I_2),\ 
			P_\pi\delta W_3(P_\pi^\top\otimes I_2)
			\bigr).
	\end{align}}
	Then \(\sfc^\pi\leq \stab_{\Obj}\prn{W^{(k)}}\).
\end{lemma}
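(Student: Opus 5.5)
The plan is to exhibit $\sfc^\pi(P_\pi)$ as the weight component $g_{\weightspace}$ of a single $\dd$-compatible structural symmetry $g\in\struc(N^{3,\circ},\ell^\celoss)$. Then \pref{prop:inv_of_eloss} gives $g_{\weightspace}\in\Obj$. It remains to check that $g_{\weightspace}$ fixes $W^{(k)}$; by \eqref{def_stabilizer} this places it in $\stab_{\Obj}\prn{W^{(k)}}$. Throughout, write $M_\pi\defeq U^\top P_\pi U$. Since $U$ is orthogonal, $\pi\mapsto M_\pi$ is a homomorphism into $\Od{k}$. The same formula as in the statement, applied to $W$ instead of $\delta W$, defines a linear action on $\weightspace$, and its induced action on $T_W\weightspace\simeq\weightspace$ is exactly $\sfc^\pi$.

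\textbf{Step 1: the composite is a structural symmetry.} Compose the following maps from \pref{table:symmetry_examples}. First, the input change $\sfa_0(M_\pi)$. Second, the hidden-unit permutations $\sfa_1^\pi$ and $\sfa_2^\pi$, both with the matrix $P_\pi\otimes I_2$. Third, the joint output/label permutation $(W_3,\y)\mapsto(P_\pi W_3,P_\pi\y)$, which is the $k$-class analogue of $\sfs_2^\celoss$. The resulting $g$ acts by
\begin{align*}
(\x,\y)&\mapsto(M_\pi\x,P_\pi\y),\\
W_1&\mapsto(P_\pi\otimes I_2)W_1M_\pi^{-1},\\
W_2&\mapsto(P_\pi\otimes I_2)W_2(P_\pi^\top\otimes I_2),\\
W_3&\mapsto P_\pi W_3(P_\pi^\top\otimes I_2).
\end{align*}
Because $\sigma$ commutes with coordinate permutations, we get $N^{3,\circ}(M_\pi\x;gW)=P_\pi N^{3,\circ}(\x;W)$. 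Cross-entropy satisfies $\ell^\celoss(P_\pi\z,P_\pi\y)=\ell^\celoss(\z,\y)$. Together these show that $g$ stabilizes $\smloss$, so $g\in\struc$.

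\textbf{Step 2: compatibility with the data.} The data component $(\x,\y)\mapsto(M_\pi\x,P_\pi\y)$ must lie in $\dist(\dd)$ for the reference mixture with centers given by the rows of $U$. The map $M_\pi$ is orthogonal, so it preserves the isotropic covariance $\tau^2I$. It also satisfies $M_\pi\c_i=\c_{\pi(i)}$. The priors are uniform, and the labels are permuted by the same $\pi$ under the one-hot encoding. Hence the pushforward of $\dd$ equals $\dd$. The main point to get right here is that the label permutation and the center permutation use the same convention ($\pi$ versus $\pi^{-1}$). This must be matched to the convention for $P_\pi$ chosen in Step 1. I would fix it once, by checking the equality on a single class.

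\textbf{Step 3: the composite fixes $W^{(k)}$.} This is a direct Kronecker computation. Write $\v=(1,-1)^\top$. For the first layer,
\[
(P_\pi\otimes I_2)(U\otimes\v)M_\pi^{-1}=(P_\pi UU^\top P_\pi^\top U)\otimes\v=U\otimes\v.
\]
The second layer $I_k\otimes A$ is fixed because $P_\pi P_\pi^\top=I$. For the third layer, $P_\pi(I_k\otimes\v^\top)(P_\pi^\top\otimes I_2)=I_k\otimes\v^\top$. Finally, the map $\pi\mapsto g$ is a homomorphism, because each constituent map is. This makes $\sfc^\pi$ a genuine linear $S_k$-action inside $\stab_{\Obj}\prn{W^{(k)}}$.

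I expect Steps 1 and 3 to be routine. The only step needing care is Step 2, namely the bookkeeping of conventions between labels and centers.
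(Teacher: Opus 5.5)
Your proposal is correct and follows the paper's route. The paper likewise obtains $\sfc^\pi$ by composing the distributional symmetry $(\x,\y)\mapsto(U^\top P_\pi U\x,P_\pi\y)$ with $\sfa_0$, the two hidden-layer permutations and the output/label permutation; your Kronecker checks that the weight component fixes $W^{(k)}$ supply the verification the paper leaves implicit. The convention issue you flag in Step~2 is automatic: $U^\top P_\pi U\c_j=U^\top P_\pi\e_j$ while the labels transform by $\y\mapsto P_\pi\y$, so centers and labels are always permuted by the same matrix.
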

We now describe the corresponding isotypic decomposition. For simplicity,
we work with \(U=I_k\). The general orthogonal case follows
by conjugation. We first split \(\delta W\) into eight \(\sfc^\pi\)-invariant
subspaces, each naturally identified with \(\mat_k\defeq M(k,k)\). Let
\(S_+=I_k\otimes\e_1\) and \(S_-=I_k\otimes\e_2\). The first layer splits as
\(\delta W_1=\delta W_1^+\oplus \delta W_1^-\). The second layer splits into the four blocks
\(\delta W_2^{++},\delta W_2^{+-},\delta W_2^{-+},\delta W_2^{--}\). The third layer splits as
\(\delta W_3=\delta W_3^+\oplus \delta W_3^-\).
In these coordinates,
\begin{align}\label{A_eight_blocks}
	\delta W_1^\epsilon
	&=
	S_\epsilon A_1^\epsilon,
	&
	\delta W_2^{\epsilon\epsilon'}
	&=
	S_\epsilon A_2^{\epsilon\epsilon'}S_{\epsilon'}^\top,
	&
	\delta W_3^\epsilon
	&=
	A_3^\epsilon S_\epsilon^\top,
	\qquad
	\epsilon,\epsilon'\in\{-,+\}.
\end{align}
Here $A_1^\epsilon,A_2^{\epsilon\epsilon'},A_3^\epsilon\in\mat_k$ are the
corresponding block coordinate matrices. Thus
	$\delta W
	=
	\delta W_1\oplus \delta W_2\oplus \delta W_3
	\cong
	8\,\mat_k $.
Each of the eight copies of $\mat_k$ is invariant under the action of $S_k$,
and on each copy the induced action is $A\mapsto P_\pi A P_\pi^\top$. It
therefore remains to decompose a single copy of $\mat_k$ under this action. 
Let \(\Pi_1=\frac1k\bones\bones^\top\) and
\(\Pi_0=I_k-\Pi_1\), the projections onto
\(\RR\bones\) and \(\bones^\perp\), respectively.  For
\(k\ge4\), \(\mat_k\) decomposes as
\begin{align}\label{iso_dec_mix}
	\mat_k
	&=
	\mathbb R \Pi_1
	\oplus
	\mathbb R \Pi_0
	\oplus
	\fR
	\oplus
	\fC
	\oplus
	\fD
	\oplus
	\fS
	\oplus
	\fA ,
\end{align}
where
\begin{align}\label{mix_iso_comps}
	\begin{array}{rcl@{\qquad}rcl}
		\fR
		&=&
		\{\bones \v^\top:\v^\top\bones=0\},
		&
		\fC
		&=&
		\{\v\bones^\top:\v^\top\bones=0\},
		\\[0.7em]
		\fS
		&=&
		\left\{
		A\in\mat_k:
		A^\top=A,\;
		A\bones=0,\;
		\operatorname{diag}(A)=0
		\right\},
		&
		\fA
		&=&
		\left\{
		A\in\mat_k:
		A^\top=-A,\;
		A\bones=0
		\right\},
		\\[0.9em]
		\fD
		&=&
		\left\{
		\operatorname{diag}(\v)
		-
		\frac{1}{k}
		\left(
		\v\bones^\top+\bones \v^\top
		\right)
		:
		\v^\top\bones=0
		\right\}.
	\end{array}
\end{align}
The two spaces \(\mathbb R \Pi_1\) and \(\mathbb R \Pi_0\) are trivial
representations. The three spaces \(\fR\), \(\fC\), and \(\fD\) are copies of
\(\fs_{(k-1,1)}\). The spaces \(\fS\) and \(\fA\) realize
\(\fs_{(k-2,2)}\) and \(\fs_{(k-2,1,1)}\), respectively. Hence
\begin{align}
	\mat_k
	\cong
	2\,\fs_{(k)}
	\oplus
	3\,\fs_{(k-1,1)}
	\oplus
	\fs_{(k-2,2)}
	\oplus
	\fs_{(k-2,1,1)} .
\end{align}
The isotypic components of \(\delta W\) are obtained by grouping, across the eight
\(\mat_k\)-blocks, all copies of the same irreducible
representation, as summarized in \pref{table:iso_dec_summary_mix}. As for the kernel of the GN form, a perturbation satisfies
$\delta W\in\ker d_W^{[\mathrm{GN}]}\ploss$ precisely when
$d_WN^{3,\circ}(\x;W^{(k)})[\delta W]\in\RR\bones$ for a.e. $\x$. Consequently,
	$\dim\ker d_W^{[\mathrm{GN}]}\ploss
	=
	4k^2+4k $, and the quotient representation $T_{W^{(k)}}\weightspace/\ker d_W^{[\mathrm{GN}]}\ploss$ has
dimension $4k^2-4k$ and decomposes as
\begin{align}
	T_{W^{(k)}}\Theta/\ker d_W^{[\mathrm{GN}]}\ploss
	&\cong
	\begin{cases}
		4\,\fs_{(2)}
		\oplus
		4\,\fs_{(1,1)},
		& k=2,
		\\
		4\,\fs_{(3)}
		\oplus
		8\,\fs_{(2,1)}
		\oplus
		4\,\fs_{(1,1,1)},
		& k=3,
		\\
		4\,\fs_{(k)}
		\oplus
		8\,\fs_{(k-1,1)}
		\oplus
		4\,\fs_{(k-2,2)}
		\oplus
		4\,\fs_{(k-2,1,1)},
		& k\ge4 .
	\end{cases}
\end{align}
Computing, one finds that the large $\Theta(k^2)$ components corresponding to
$\fS$ and $\fA$ carry only small eigenvalues. Their common structural feature,
visible from \eqref{mix_iso_comps}, is that the associated block matrices have
zero row sum. The decomposition in \eqref{hessian_into_contributions} permits a blockwise
analysis and shows that the zero-row-sum condition itself, independently of the
isotypic component, forces small eigenvalues.
\begin{proposition}\label{prop:mix_estimates}
	Let $k\ge2$. At $W^{(k)}$,
	\begin{align}
		d_W^{[\mathrm{GN}]}\ploss(W^{(k)})[\delta W]
		&\le
		\left(16+\frac{32}{k}\right)\|\delta W\|_F^2 .
	\end{align}
On the subspace $\fZ_0$ of perturbations 	for which all block matrices in
\eqref{A_eight_blocks} have zero row sum, one~has
	\begin{align}
		d_W^{[\mathrm{GN}]}\ploss(W^{(k)})[\delta W]
		&\le
		\frac{32}{k}\|\delta W\|_F^2 .
	\end{align}
In particular, eigenvectors in $\fZ_0$ have eigenvalues $O(1/k)$, whereas the general bound is $O(1)$.
\end{proposition}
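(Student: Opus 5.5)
The plan is to bound the GN form directly through the layer-coordinate expression of the differential and the curvature of the softmax, following the decomposition \eqref{hessian_into_contributions}. It suffices to work at the reference $C=U=I_k$: for general orthogonal $U$, the right factor $(U^\top P_\pi U)^{-1}$ in \pref{lem:mix_gauss_inv} is orthogonal, so conjugation preserves Frobenius norms, and the input law is moved by an orthogonal map. For cross entropy, $d_1^2\ell(\z,\y)[\v]=\sum_i p_i v_i^2-(\sum_i p_iv_i)^2\le\sum_i p_iv_i^2$, where $\p=\operatorname{softmax}(\z)$. This holds independently of $\y$, consistent with the GN/FI identification. Hence
\begin{align*}
d_W^{[\mathrm{GN}]}\ploss(W^{(k)})[\delta W]\le\EE_{\x}\Bigl[\sum_{i=1}^k p_i(\x)\,\bigl(d_WN^{3,\circ}(\x;W^{(k)})[\delta W]\bigr)_i^2\Bigr],
\end{align*}
and everything reduces to controlling the coordinates of the model differential.

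\textbf{Step 1: the differential in block coordinates.} At $W^{(k)}$, the first hidden layer equals $S_+\sigma(C\x)+S_-\sigma(-C\x)$. Applying $W_2^{(k)}$ gives $(c_i\cdot\x,-c_i\cdot\x)$ in each pair, so the second hidden layer has the same form. The activation patterns are $D_1=D_2=S_+\operatorname{diag}(\mathbf 1_{C\x>0})S_+^\top+S_-\operatorname{diag}(\mathbf 1_{C\x<0})S_-^\top$. Substituting \eqref{A_eight_blocks} into \eqref{dn3circ} writes $d_WN^{3,\circ}[\delta W]$ as a sum of eight terms, one per block. Each term has the form $\pm\,\Lambda(\x)\,A\,\f(\x)$, where $A$ is the block matrix, $\f(\x)\in\RR^k$ has entries $\sigma(\pm c_j\cdot\x)$ or $\pm x_j\mathbf 1_{\{\pm c_j\cdot\x>0\}}$, and $\Lambda(\x)$ is a diagonal $0/1$ matrix. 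By Cauchy--Schwarz over the eight terms, it suffices to bound $\EE[\sum_ip_i(A\f)_i^2]$ by $2\|A\|_F^2+4k^{-1}\|A\|_F^2$ in general, and by $4k^{-1}\|A\|_F^2$ when $A\bones=0$. The factor $8$ then produces the constants $16+32/k$ and $32/k$.

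\textbf{Step 2: second moments of $\f$.} Under $\x\sim\cN(\e_y,I_k)$ with $y$ uniform, the coordinates $x_j$ are independent Gaussians with mean $\delta_{jy}$. So $G\defeq\EE[\f\f^\top]=\alpha I_k+\beta\bones\bones^\top+E$, where $\alpha,\beta=\Theta(1)$ and $E$ has only $O(k)$ nonzero entries of size $O(1)$ (row and column $y$), each carrying weight $1/k$ after averaging over $y$. In particular, $\|G-\beta\bones\bones^\top\|_{\op}=O(1)$. If $A\bones=0$, the rank-one mean part is annihilated, since $A\f=A(\f-m\bones)$ for every scalar $m$. Hence $\EE\|A\f\|^2=\tr(A(G-\beta\bones\bones^\top)A^\top)=O(\|A\|_F^2)$. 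Without the zero-row-sum condition, the $\beta\bones\bones^\top$ term can contribute $\beta k\|A\bones\|^2/k\le\beta\|A\|_F^2\cdot O(1)$ after the softmax weighting below, which yields the $O(1)$ general bound.

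\textbf{Step 3: the $1/k$ factor from the softmax weights (main obstacle).} The final bound needs $\EE[\sum_ip_i(A\f)_i^2]$ rather than $\EE\|A\f\|^2$. Since $\sum_ip_i=1$, one expects each $p_i$ to be of typical size $1/k$, which would convert the $O(\|A\|_F^2)$ estimate of Step 2 into $O(\|A\|_F^2/k)$. The difficulty is that $p_i(\x)$ and $(A\f)_i^2$ are correlated: $p_y$ is large exactly on class $y$, and $\f$ depends on $x_y$.

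I would split each row sum into the contribution of coordinate $y$ and the rest. The $y$-th row is handled crudely by $p_y\le1$. Because $y$ is uniform, this contributes only $\frac1k\sum_i\EE\bigl[(A\f)_i^2\mid y=i\bigr]=O(\|A\|_F^2/k)$. For rows $i\ne y$, I would use $p_i\le e^{z_i-z_y}$ with $z=C\x$, and note that $z_i-z_y\sim\cN(-1,2)$ given $y$. A Hölder or conditional-independence argument then separates $p_i$ from $(A\f)_i$, whose dependence on $x_i$ and $x_y$ is only through two of the $k$ coordinates. This yields a factor $\EE[p_i\mid y]=O(1/k)$ times a second moment controlled as in Step 2.

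Tracking the explicit constants $16$ and $32$ is where I expect most of the work. If the direct splitting is too lossy, I would first try to obtain the numerical constants via the cruder bound $\sum_ip_iv_i^2\le\max_iv_i^2$ on the non-zero-row-sum part, and reserve the refined argument for the subspace $\fZ_0$. The closing remark of the statement then follows from the min--max principle applied to $\fZ_0$, which is a union of isotypic components and hence invariant under the Hessian.
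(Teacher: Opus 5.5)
Your Step~1, the bound $d_1^2\ell[\v]\le\sum_ip_iv_i^2$, and the Minkowski/Cauchy--Schwarz bookkeeping over the eight blocks agree with the paper (modulo the mask placement noted below). Your per-block target $(2+4/k)\|A\|_F^2$ is also the right one. The gap is Step~3, which is where the entire factor $1/k$ comes from, and the tools you propose do not supply it.

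The bound $p_i\le e^{z_i-z_y}$ discards the softmax normalizer, which is exactly where the $1/k$ lives. With $C=I_k$, $e^{z_i-z_y}=e^{x_i-x_y}$ is the likelihood ratio of class $i$ to class $y$. Hence $\EE[e^{z_i-z_y}\mid y]=1$, as $z_i-z_y\sim\cN(-1,2)$ confirms, and $\EE[e^{z_i-z_y}h(\x)\mid y]=\EE[h(\x)\mid Y=i]$. Summed over $i\ne y$, this overcounts by a factor $k-1$. Hölder with the available estimate $\EE[p_i^r\mid y]\le\EE[p_i\mid y]\le 1/(k-1)$ gives only $O(k^{-1/r})$ for $r>1$. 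That fails even the general $O(1)$ bound once $\|A\bones\|^2\approx k\|A\|_F^2$. Keeping the normalizer and using negative moments of $\sum_je^{x_j}$ might restore the order $1/k$, but only with unspecified constants, not $16+32/k$ and $32/k$.

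The conditional-independence step is not available as stated either: $(A\f)_i=\sum_ja_{ij}f_j$ and $p_i$ both depend on every coordinate. Your fallback $\sum_ip_iv_i^2\le\max_iv_i^2$ discards the weights altogether. For $\f=\sigma(\x)$ and a single nonzero row $\a_1=k^{-1/2}\bones$, one has $\|A\|_F=1$ but $\EE(\a_1^\top\f)^2=\Theta(k)$.

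The missing idea is well specification. At $W^{(k)}$ the logits are $N^{3,\circ}(\x;W^{(k)})=\x$, the Bayes logits for unit-norm centers and equal priors. So $p_i(\x)=\Prob(Y=i\mid\x)$, and hence $\EE_\x[p_i(\x)h(\x)]=\frac1k\EE[h(\x)\mid Y=i]$ for every nonnegative $h$. The softmax weight is therefore an exact change of measure to the class-$i$ conditional law, not something to be decorrelated. Under that law the coordinates of $\v\in\{\x,\sigma(\x),\sigma(-\x)\}$ are independent with explicit moments. This gives $\EE[(\a_i^\top\v)^2\mid Y=i]\le4\|\a_i\|^2+2(\a_i^\top\bones)^2$, hence $\frac4k\|A\|_F^2+\frac2k\|A\bones\|^2$ per block and the stated constants.

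In fact your crude treatment of the $y$-th row already produces exactly this quantity. By the tower property, $\EE[\sum_ip_i(\x)h_i(\x)]=\EE[h_Y(\x)]$. So the full sum equals your $y$-th-row term with $p_y$ replaced by $1$, and the rows $i\ne y$ need no separate treatment.

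Two smaller corrections:
\begin{itemize}
\item In the first-layer blocks the ReLU mask sits on the output side: the block term is $\epsilon\operatorname{diag}(\mathbf 1_{\epsilon x_i>0})A\x$, so $\f=\x$ there. This is harmless, since your $\Lambda$ absorbs the mask.
\item $\fZ_0$ is not a sum of isotypic components: in each block it contains $\RR\Pi_0$, $\fR$, $\fD$ but not $\RR\Pi_1$ or $\fC$. So it need not be Hessian-invariant. The closing claim needs no invariance anyway: an eigenvector lying in $\fZ_0$ has eigenvalue equal to its Rayleigh quotient, hence at most $32/k$.
\end{itemize}
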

The estimates are proved in \pref{prop:gn_expression_mix_gauss}. The method is
quite general and applies readily to the other networks considered here and to \pref{setting:gaussian}. 

The isotypic
decomposition of the remaining quotient
$T_{W^{(k)}}\Theta/(\fZ_0+\ker d_W^{[\mathrm{GN}]})$ follows directly. In each copy of
$\mat_k$, the condition $A\bones=0$ excludes $\mathbb R \Pi_1$ and
$\fC$ from $\fZ_0$, while retaining $\mathbb R \Pi_0$, $\fR$, $\fD$, $\fS$, and
$\fA$. The resulting decompositions are summarized in
\pref{table:iso_dec_summary_mix}.
\begin{table}[!htbp]
	\centering
{\small	\begin{tabular}{c|c}
		space & isotypic decomposition \\
		\hline
		$T_{W^{(k)}}\Theta$
		&
		$16\,\fs_{(k)}
		\oplus
		24\,\fs_{(k-1,1)}
		\oplus
		8\,\fs_{(k-2,2)}
		\oplus
		8\,\fs_{(k-2,1,1)}$
		\\[0.2em]
		$T_{W^{(k)}}\Theta/\ker d_W^{[\mathrm{GN}]}$
		&
		$4\,\fs_{(k)}
		\oplus
		8\,\fs_{(k-1,1)}
		\oplus
		4\,\fs_{(k-2,2)}
		\oplus
		4\,\fs_{(k-2,1,1)}$
		\\[0.2em]
		$\fZ_0/(\fZ_0\cap\ker d_W^{[\mathrm{GN}]})$
		&
		$4\,\fs_{(k)}
		\oplus
		6\,\fs_{(k-1,1)}
		\oplus
		4\,\fs_{(k-2,2)}
		\oplus
		4\,\fs_{(k-2,1,1)}$
		\\[0.2em]
		$T_{W^{(k)}}\Theta/(\fZ_0+\ker d_W^{[\mathrm{GN}]})$
		&
		$2\,\fs_{(k-1,1)}$
	\end{tabular}}
	\caption{Isotypic decompositions for the $k$-class Gaussian mixture with
		$k\ge 4$.}
	\label{table:iso_dec_summary_mix}
\end{table}
A further calculation gives the remaining eigenvalue bounds in the trivial and standard isotypic components. We leave the details to the reader. Together with the preceding decomposition, these bounds yield the following result.
\begin{theorem}\label{thm:mix_spectrum}
	For $k\ge4$ and $U=I_k$, the  Hessian spectrum at
	$W^{(k)}$ for the symmetric reference is given in the table of
	\pref{fig:original_vs_symmetric_mix}.
\end{theorem}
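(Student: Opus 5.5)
The proof combines three inputs: the invariance group, the isotypic decomposition, and the eigenvalue scale estimates.

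\textbf{Step 1: invariance and well specification.} $W^{(k)}$ is well specified, so by \pref{prop:gn_reduction} the Hessian equals the expected GN form $d_W^{[\mathrm{GN}]}\ploss(W^{(k)})$. By \pref{lem:mix_gauss_inv}, with $U=I_k$, the composite action $\sfc^\pi$ lies in $\stab_\Obj(W^{(k)})$, so it preserves the Hessian by \eqref{point_in_hessian}. This action permutes block coordinates, hence is orthogonal for the Euclidean metric, and \pref{fact:iso_dec_commuting_maps} applies.

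\textbf{Step 2: kernel and multiplicities.} I first identify the kernel as $\{\delta W: d_WN^{3,\circ}(\x;W^{(k)})[\delta W]\in\RR\bones \text{ a.e.}\}$. Its dimension $4k^2+4k$ comes from counting linear independence of the functions $\mathbf 1_{\{\pm\c_i^\top\x>0\}}x_j$ and $\sigma(\pm\c_i^\top\x)$ in the explicit differential \eqref{dn3circ}; this count also fixes the quotient decomposition in \pref{table:iso_dec_summary_mix}. On the quotient, each isotypic component $V_\lambda\cong r_\lambda\fs_\lambda$ carries an $r_\lambda\times r_\lambda$ block $B^{(\lambda)}$ as in \eqref{block_iso}, and each eigenvalue of $B^{(\lambda)}$ has multiplicity $\dim\fs_\lambda$. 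The resulting pattern is $4$ eigenvalues on the trivial sector, $8$ of multiplicity $k-1$, $4$ of multiplicity $k(k-3)/2$, and $4$ of multiplicity $(k-1)(k-2)/2$. The total is $4+8(k-1)+2(k^2-3k+2)+\ldots=4k^2-4k$, matching the table once the rows are summed: $1+3$ simple eigenvalues and $2+6$ eigenvalues of multiplicity $k-1$.

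\textbf{Step 3: scales.} \pref{prop:mix_estimates} gives the upper bound $O(1)$ everywhere and $O(1/k)$ on $\fZ_0$. Since $\fZ_0$ is $\sfc^\pi$-invariant, the Hessian compressed to it respects the isotypic decomposition. By Table~\ref{table:iso_dec_summary_mix}, $\fZ_0$ modulo the kernel contains all copies of $\fs_{(k-2,2)}$ and $\fs_{(k-2,1,1)}$, together with $6$ of the $8$ standard copies and all $4$ trivial copies.

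Here I must be careful: the compression of the Hessian to $\fZ_0$ is not the restriction unless $\fZ_0$ is Hessian-invariant. I would use min–max within each isotypic block. On the $\fs_{(k-2,2)}$ and $\fs_{(k-2,1,1)}$ sectors, $\fZ_0$ contains the whole component (modulo the kernel), so all of those eigenvalues are $O(1/k)$.

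On the standard sector, $\fZ_0$ has codimension $2$ in the $8$-dimensional multiplicity space. Min–max then gives at least $6$ eigenvalues that are $O(1/k)$, and at most $2$ that can be large.

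On the trivial sector, I would show that $\fZ_0$ already captures everything except one direction, giving $3$ small eigenvalues and $1$ large. This needs a separate argument because the trivial copies $\RR\Pi_1$ are excluded from $\fZ_0$ but partly lie in the kernel.

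\textbf{Step 4: lower bounds (main obstacle).} The $\Theta(1)$ and $\Theta(1/k)$ claims need matching lower bounds, which I expect to be the hardest part. The plan is to compute each block $B^{(\lambda)}$ explicitly from \eqref{hessian_into_contributions}, using the Gaussian moments of the half-space indicators $\mathbf 1_{\{\pm x_i>0\}}$ and the explicit GN expression in \pref{prop:gn_expression_mix_gauss}. This reduces the problem to at most $8\times 8$ matrices whose entries are explicit functions of $k$.

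I would then check two things. First, the smallest eigenvalue of each restricted block is $\Omega(1/k)$ (its leading term in $1/k$ is positive definite). Second, the complementary $2\fs_{(k-1,1)}$ and single trivial directions have eigenvalue $\Omega(1)$, using test vectors $\delta W_3=A_3S^\top$ with $A_3\bones\ne0$ aligned with the data.
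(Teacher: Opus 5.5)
Your Steps 1--2 and your treatment of the $\fs_{(k-2,2)}$ and $\fs_{(k-2,1,1)}$ sectors agree with the paper. Note why the latter works: these isotypic components of $T_{W^{(k)}}\Theta$ lie \emph{entirely} in $\fZ_0$, kernel parts included, so \pref{prop:mix_estimates} bounds the form on a whole Hessian-invariant subspace. The genuine gap is the standard-sector inference: ``$\fZ_0$ has codimension $2$ in the $8$-dimensional multiplicity space, so min--max gives $6$ eigenvalues $O(1/k)$.'' The compression-versus-restriction issue you flag is not the obstacle. Write $K=\ker d_W^{[\mathrm{GN}]}\ploss(W^{(k)})$. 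The bound in \pref{prop:mix_estimates} is relative to $\|\delta W\|_F$ for $\delta W\in\fZ_0$. The nonzero eigenvalues, however, are Rayleigh quotients on $K^\perp$, that is, relative to the norm of the orthogonal projection onto $K^\perp$. Among the $24$ standard copies, $\fZ_0$ and $K$ each account for $16$ but share only $10$. So min--max on the full space yields nothing beyond the zero eigenvalues. On $K^\perp$ you would need, uniformly in $k$, control of $\|\delta W\|_F$ by the norm of its projection onto $K^\perp$ for $\delta W\in\fZ_0$, and this fails. Put $A=\diag(\v)-\frac1k(\v\bones^\top+\bones\v^\top)\in\fD$ in the block $\delta W_2^{+-}$. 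Its diagonal part lies in $K$, since it enters $d_WN^{3,\circ}$ only through $\mathbf 1_{\{x_i>0\}}\sigma(-x_i)=0$. Hence the projection onto $K^\perp$ has squared norm at most $\frac2k\|\v\|^2$, while $\|A\|_F^2\approx\|\v\|^2$. The output perturbation is $-\frac1k\bigl(v_i\mathbf 1_{\{x_i>0\}}\sum_j\sigma(-x_j)+\mathbf 1_{\{x_i>0\}}\langle\v,\sigma(-\x)\rangle\bigr)\approx-(2\pi)^{-1/2}v_i\mathbf 1_{\{x_i>0\}}$, whose GN value is of order $\|\v\|^2/k$. So this class in the image of $\fZ_0$ has Rayleigh quotient bounded below by a constant on $K^\perp$, although its $\fZ_0$ representative has Rayleigh quotient $O(1/k)$. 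The image of $\fZ_0$ is therefore not a valid test space for six small eigenvalues.

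The trivial sector shows the same effect, and it is visible in the tables you quote. By \pref{table:iso_dec_summary_mix}, all four trivial copies of the quotient lie in $\fZ_0/(\fZ_0\cap K)$, yet \pref{fig:original_vs_symmetric_mix} lists a $\Theta(1)$ simple eigenvalue. Indeed, $\Pi_0=I-\Pi_1$ placed in $\delta W_2^{+-}$ has $\|\Pi_0\|_F^2=k-1$, but $I$ there lies in $K$. The remaining class $-\Pi_1$ has unit norm and GN value $\Theta(1)$, with output perturbation $\approx-(2\pi)^{-1/2}\mathbf 1_{\{x_i>0\}}$. So your plan to show that ``$\fZ_0$ captures everything except one direction'' contradicts that table: the mechanism is norm inflation by kernel components, not lack of coverage modulo $K$. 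What is missing is exactly the separate ``further calculation'' the paper invokes for the trivial and standard components. Compute the $4\times4$ and $8\times8$ blocks with respect to the Euclidean structure on $K^\perp$, i.e., using minimal-norm representatives. Isolate the large directions; to leading order these have output perturbations $\mathbf 1_{\{x_i>0\}}\sum_j|x_j|$ in the trivial sector, and $v_i\sum_j|x_j|$ and $v_i\mathbf 1_{\{x_i>0\}}\sum_j|x_j|$ in the standard sector. Then show that the form is $\Omega(1)$ on these and $O(1/k)$ on a complementary $3$-, respectively $6$-dimensional multiplicity space, with the cross terms controlled. Your Step 4 should be organized around this. Two further points. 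The block entries carry the softmax weights, so they are not closed-form Gaussian moments and must be estimated, e.g.\ via $\E[p_ih]=k^{-1}\E_{\x\mid Y=i}[h]$ and concentration of $\sum_j\sigma(\pm x_j)$. And the table claims only $O(1/k)$ for the small eigenvalues, with positivity already following from the kernel count, so no $\Omega(1/k)$ lower bounds are needed.
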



\section{Further Concepts and Proofs}\label{sec:further_framework}
We now develop in detail several concepts and results used in the preceding
spectral analyses, restating definitions where a more precise formulation is
needed.

\subsection{Generalized derivatives}
To analyze Hessian symmetries for nonsmooth models, we regard the pointwise
and expected losses as generalized functions in the parameter variables.

The notation $V'$ denotes the
continuous dual of a topological vector space $V$. Following standard
convention, when $V$ is equipped with an inner product, we write $V^*$ for
this dual. The definitions below apply to open sets and have
local analogues on manifolds. We state them in Euclidean form for simplicity.
Relevant definitions and notation are collected in
\pref{table:basic_func_defs}.

\begin{table}[!htbp]
	\centering
	{\footnotesize
		\begin{tabular}{lll}
			\toprule
			Name & Notation & Definition \\
			\midrule

Lebesgue space &
$L^p_{\mathrm{loc}}(\Omega;Y)$, $1\le p\le\infty$ &
$\left\{u:\Omega\to Y \text{ measurable} \ \middle|\
\|u\|_{L^p(K;Y)}<\infty
\text{ for all } K\Subset\Omega\right\}$
\\[0.1em]
			
			Sobolev space &
			$W^{k,p}_{\mathrm{loc}}(\Omega;Y)$ &
			$ \left\{ u \in L^p_{\mathrm{loc}}(\Omega;Y) \ \middle|\ D^\balpha u \in L^p_{\mathrm{loc}}(\Omega;Y),\ |\balpha|\le k \right\}$ \\[0.2em]

			Radon measures &
			$\mathcal{M}_{\mathrm{loc}}(\Omega;Y)$ &
			locally finite \(Y\)-valued Radon measures on \(\Omega\) \\[0.2em]
			
			Bounded variation &
			$BV_{\mathrm{loc}}(\Omega;Y)$ &
			$ \left\{ u \in L^1_{\mathrm{loc}}(\Omega;Y) \ \middle|\ Du \in \mathcal{M}_{\mathrm{loc}}\bigl(\Omega;Y\otimes E^*\bigr) \right\}$ \\[0.2em]
			
			Special bounded variation &
			$SBV_{\mathrm{loc}}(\Omega;Y)$ &
			$ \left\{ u \in BV_{\mathrm{loc}}(\Omega;Y) \ \middle|\ D^c u=0 \right\}$ \\[0.2em]
			
			Bounded Hessian &
			$BH_{\mathrm{loc}}(\Omega;Y)$ &
			$ \{ u \in W^{1,1}_{\mathrm{loc}}(\Omega;Y) \ |\ 
			D^2u\in
			\mathcal{M}_{\mathrm{loc}}\bigl(\Omega;Y\otimes\operatorname{Sym}^2(E^*)\bigr)
			\}$ 
			\\[0.2em]
			
			Special bounded Hessian &
			$SBH_{\mathrm{loc}}(\Omega;Y)$ &
			$ \left\{ u \in BH_{\mathrm{loc}}(\Omega;Y) \ \middle|\ (D^2u)^c=0 \right\}$ \\[0.2em]
			
			Test functions &
			$C_c^\infty(\Omega;Y)$ &
			$ \left\{ \varphi \in C^\infty(\Omega;Y) \ \middle|\ \operatorname{supp}(\varphi)\Subset\Omega \right\}$ \\[0.2em]

			Distributions &
			$\cD'(\Omega;Y)$ &
			$ \bigl(C_c^\infty(\Omega;Y^*)\bigr)'$\\
			\bottomrule
	\end{tabular}}
	\caption{Definitions from functional analysis, the theory of generalized functions, and geometric measure theory used in this section. Here $E$ and $Y$ are Euclidean spaces, $\Omega\subseteq E$ is open, and the
		superscript \(c\) denotes the Cantor part.}
	
	\label{table:basic_func_defs}
\end{table}

We take the input, output, and parameter
spaces to be Euclidean spaces, equipped with their standard inner products and
Borel $\sigma$-algebras. As usual, we write $d=\dim\inspace$,
$p=\dim\weightspace$, and $k=\dim\outspace$. The data distribution $\dd$ is a
Radon probability measure on $\dataspace=\inspace\times\outspace$, allowing,
in particular, empirical distributions on finite samples, and $\weightspace$
carries Lebesgue measure $dW$. The joint space is
$\universe=\dataspace\times\weightspace$, equipped with the product Borel
structure and the product Radon measure $\dd\otimes dW$. Let the model $N:\inspace\times\weightspace\to\outspace$ and the
loss function $\ell:\outspace\times\outspace\to[0,\infty)$ be Borel
measurable and locally bounded,\footnote{This permits, for example, networks with Heaviside activations \cite{cho2009kernel} and the $0$--$1$ classification loss.} and let the
pointwise loss $\smloss$ be defined as in \eqref{smloss_def}. Then $\smloss$
is nonnegative and belongs to
$L^1_{\mathrm{loc}}(\universe,\dd\otimes dW)$. By abuse of notation, we
identify $\smloss$ with the distribution of order zero induced by the Radon
measure $\smloss\,(\dd\otimes dW)$ on $\universe$. Assume further that the expected loss $\ploss$ in \eqref{eloss} belongs to
$L^1_{\mathrm{loc}}(\weightspace)$. We adopt the same convention for
$\ploss$, identifying it with the Radon measure
$\ploss(W)\,dW
=(\pi_W)_*\bigl(\smloss(\z,W)(\dd\otimes dW)\bigr)$.

\begin{proposition}[Distributional differentiation under expectation]
	\label{prop:partial_distributional}
	For every multiindex \(\balpha\), the distribution
	\(\partial_W^{\balpha}\ploss\in\cD'(\weightspace)\) is well
	defined and satisfies
	\begin{align}
		\partial_W^{\balpha}\ploss
		&=
		\int_{\dataspace}
		\partial_W^{\balpha}\smloss(\z,\cdot)\,d\dd(\z)
		\quad \text{in }\,\, \mathcal D'(\weightspace).
	\end{align}
\end{proposition}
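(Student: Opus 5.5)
The plan is to unwind the definitions. The key step is a Fubini--Tonelli interchange, justified by $\smloss\ge0$ and local integrability of $\smloss$ on $\universe$.

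First I make sense of the right-hand side. For a test function $\varphi\in C_c^\infty(\weightspace)$ and a point $\z\in\dataspace$, set
\[
\langle\partial_W^{\balpha}\smloss(\z,\cdot),\varphi\rangle\defeq(-1)^{|\balpha|}\int_{\weightspace}\smloss(\z,W)\,\partial^{\balpha}\varphi(W)\,dW .
\]
This is well defined for $\dd$-a.e.\ $\z$. Indeed, Tonelli applied to the nonnegative function $\smloss$ on $\dataspace\times K$, with $K=\operatorname{supp}\varphi$, gives
\[
\int_{\dataspace}\int_K\smloss\,dW\,d\dd=\int_K\ploss\,dW<\infty,
\]
because $\ploss\in L^1_{\mathrm{loc}}$. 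So $\smloss(\z,\cdot)\in L^1(K)$ for $\dd$-a.e.\ $\z$. Running over a countable exhaustion of $\weightspace$ by compact sets, I get a single $\dd$-null set outside of which $\smloss(\z,\cdot)\in L^1_{\mathrm{loc}}(\weightspace)$. Off this null set, $\partial_W^{\balpha}\smloss(\z,\cdot)$ is therefore a genuine distribution.

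Next I interpret the integral $\int_{\dataspace}\partial_W^{\balpha}\smloss(\z,\cdot)\,d\dd(\z)$ weakly, as the functional
\[
\varphi\mapsto\int_{\dataspace}\langle\partial_W^{\balpha}\smloss(\z,\cdot),\varphi\rangle\,d\dd(\z).
\]
The integrand is bounded in absolute value by $\|\partial^{\balpha}\varphi\|_\infty\int_K\smloss(\z,W)\,dW$. This bound is $\dd$-integrable by the estimate above, and measurability in $\z$ follows from Fubini. The resulting map is linear. It is also continuous on $\cD(\weightspace)$: whenever $\operatorname{supp}\varphi\subseteq K$, it is bounded by $\|\partial^{\balpha}\varphi\|_\infty\int_K\ploss$, which is the order-$|\balpha|$ seminorm estimate that characterizes distributions. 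Hence the right-hand side lies in $\cD'(\weightspace)$.

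Finally I verify the identity. For the left-hand side, $\partial_W^{\balpha}\ploss$ is well defined because $\ploss\in L^1_{\mathrm{loc}}$ and distributions are infinitely differentiable, and by definition
\[
\langle\partial^{\balpha}\ploss,\varphi\rangle=(-1)^{|\balpha|}\int\ploss\,\partial^{\balpha}\varphi\,dW=(-1)^{|\balpha|}\int_{\weightspace}\int_{\dataspace}\smloss(\z,W)\,d\dd(\z)\,\partial^{\balpha}\varphi(W)\,dW .
\]
The function $\smloss\,\partial^{\balpha}\varphi$ is absolutely integrable on $\dataspace\times\weightspace$ against $\dd\otimes dW$, since it is dominated by $\|\partial^{\balpha}\varphi\|_\infty\,\smloss\,\mathbf 1_K$. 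Fubini then lets me swap the order of integration, which yields the pairing of the right-hand side with $\varphi$.

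The only delicate point is the a.e.-$\z$ well-definedness together with the measurability in $\z$ handled in the first two steps. I would treat both with the sigma-finiteness of $\dd\otimes dW$ and the countable exhaustion. Nothing beyond nonnegativity of $\smloss$ and $\ploss\in L^1_{\mathrm{loc}}$ is needed.
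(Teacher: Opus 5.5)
Your proposal is correct and follows the paper's route. The paper's sketch also pairs with a test function and interchanges the integrals by Fubini--Tonelli; your treatment of the a.e.-$\z$ well-definedness, the measurability in $\z$, and the seminorm estimate fills in details that the one-line sketch leaves implicit. One correction to your opening sentence: the justification actually needed is the one you use in the body, namely $\smloss\ge0$ together with $\ploss\in L^1_{\mathrm{loc}}(\weightspace)$, because local integrability of $\smloss$ on $\universe$ alone would not control integrals over the noncompact factor $\dataspace$.
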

\begin{proof}[Sketch]
	Test against $\varphi\in C_c^\infty(\weightspace)$ and use Fubini, justified
	by local integrability.
\end{proof}
The preceding identity requires only local integrability. The GN term contains
products of first derivatives and therefore requires additional regularity.

\begin{proposition}[GN decomposition of the pointwise loss]
	\label{prop:pointwise_gn_decomposition}
	Assume that \(\ell\in C^2(\outspace\times\outspace)\) and that  \(N\), as
	a map on \(\universe=\dataspace\times\weightspace\), is
	\(SBH_{\mathrm{loc}}\cap W^{1,\infty}_{\mathrm{loc}}\) in the
	\(W\)-variables.\footnote{For \eqref{gn_dist_dec}, the weaker assumption $BH_{\mathrm{loc}}$
		would suffice. We assume $SBH_{\mathrm{loc}}$ for the stated regularity
		conclusion, which holds for the networks in this work.} Then \(\smloss\) is \(SBH_{\mathrm{loc}}\) in the
	\(W\)-variables, the pointwise GN form in \eqref{gn_def} induces a regular
	\(\operatorname{Sym}^2(T^*\weightspace)\)-valued distribution on
	\(\universe\), denoted (by the usual abuse of notation) by
	\(d_W^{\mathrm{GN}}\smloss\), and, as locally finite
	\(\operatorname{Sym}^2(T^*\weightspace)\)-valued distributions on
	\(\universe\),
	\begin{align}\label{gn_dist_dec}
		d_W^2\smloss
		&=
		d_W^{\mathrm{GN}}\smloss
		+
		\sum_{a=1}^k
		\partial_{1,a}\ell(N,\y)\,d_W^2N_a .
	\end{align}
\end{proposition}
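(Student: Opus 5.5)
We will prove \pref{prop:pointwise_gn_decomposition} by a chain-rule argument for functions of class $SBH_{\mathrm{loc}}\cap W^{1,\infty}_{\mathrm{loc}}$, carried out fiberwise in $\z$ and then assembled on $\universe$ via Fubini, exactly as in \pref{prop:partial_distributional}.

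\emph{First differential.} Fix $\z=(\x,\y)$ and write $u=N(\x;\cdot)$ on a bounded open $\Omega\Subset\weightspace$. Since $u\in W^{1,\infty}(\Omega;\outspace)$ and $\ell$ is $C^2$, hence $C^1$ and locally Lipschitz on the bounded range of $u$, the chain rule for Sobolev maps gives
\[
\smloss(\z,\cdot)\in W^{1,\infty}_{\mathrm{loc}},\qquad
d_W\smloss=d_1\ell(u,\y)\,d_Wu \quad\text{a.e.}
\]
Consequently the pointwise GN form $d_1^2\ell(u,\y)[d_Wu\,\cdot,d_Wu\,\cdot]$ is a product of a continuous function of $u$ and two $L^\infty_{\mathrm{loc}}$ factors. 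It is therefore in $L^\infty_{\mathrm{loc}}$, and by local boundedness on $\universe$ it defines a regular $\operatorname{Sym}^2$-valued distribution there.

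\emph{Second differential.} Differentiate the identity $d_W\smloss=g\,d_Wu$ distributionally, where $g\defeq d_1\ell(u,\y)$. The factor $g$ is a $C^1$ function of a Lipschitz map, so $g\in W^{1,\infty}_{\mathrm{loc}}$ with $d_Wg=d_1^2\ell(u,\y)[d_Wu\,\cdot]$. The factor $d_Wu$ is $BV_{\mathrm{loc}}$ with derivative the measure $d_W^2u$. The product rule for a Lipschitz (continuous) function times a $BV$ function then yields
\[
D(g\,d_Wu)=(d_Wg)\otimes d_Wu+g\,d_W^2u .
\]
The right-hand side is well defined, since $g$ is continuous and $d_W^2u$ is a Radon measure. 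Its first term is exactly the pointwise GN form, and its second term is $\sum_a\partial_{1,a}\ell(N,\y)\,d_W^2N_a$.

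\emph{Regularity.} Since $d_W^2u$ has no Cantor part and multiplication by the continuous $g$ preserves the decomposition into absolutely continuous, jump, and Cantor parts, and since the GN term is absolutely continuous, we conclude $(d_W^2\smloss)^c=0$. Together with the first step this gives $\smloss\in SBH_{\mathrm{loc}}$ in $W$.

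The main obstacle is justifying this product rule, because $d_Wu$ is discontinuous across jump sets. It is the Leibniz rule for a continuous $W^{1,\infty}$ function times a $BV$ function, which holds with the measure $g\,Du$ because $g$ has no jumps and so no precise-representative ambiguity arises. We would prove it by mollifying $u$, passing to the limit with weak-$*$ convergence of $D^2u_\varepsilon$ and uniform convergence of $g_\varepsilon$, or else cite Ambrosio--Fusco--Pallara.

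Finally, to pass from fibers to $\universe$, integrate $\z\mapsto\langle\,\cdot\,,\varphi(\z,\cdot)\rangle$ against $\dd$ using local bounds uniform in $\z$ on compact sets; these follow from local boundedness and the $W^{1,\infty}_{\mathrm{loc}}$ hypothesis on $\universe$. This gives \eqref{gn_dist_dec} as locally finite distributions on $\universe$.
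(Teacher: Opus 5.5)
Your proposal is correct and follows the paper's own argument: the Sobolev chain rule for $d_W\smloss$, then the $BV$ product rule applied to the locally Lipschitz coefficient $\partial_{1,a}\ell(N,\y)$ times $d_WN_a\in BV_{\mathrm{loc}}$, and finally the absence of a Cantor part, inherited from $d_W^2N$. One attribution in your final assembly over $\universe$ should be corrected: local boundedness and $W^{1,\infty}_{\mathrm{loc}}$ control the GN term and the coefficient, but local finiteness of $\sum_a\partial_{1,a}\ell(N,\y)\,d_W^2N_a$ on $\universe$ requires the $BH_{\mathrm{loc}}$ part of the hypothesis on $\universe$, since $W^{1,\infty}$ bounds alone do not control the total variation of $d_W^2N$.
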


\begin{proof}[Sketch]
	The Sobolev chain rule gives
	$d_W\smloss=\sum_a\partial_{1,a}\ell(N,\y)d_WN_a$ in the $W$-variables.
	Since $d_WN_a\in BV_{\mathrm{loc}}$ and
	$\partial_{1,a}\ell(N,\y)$ is locally Lipschitz, the $BV$ product rule gives
	the stated identity. The corresponding $SBV_{\mathrm{loc}}$ conclusion follows since
	$D_W^2N$ has no Cantor part.
\end{proof}
The preceding decomposition is an identity of generalized functions.
At a fixed parameter, the classical Hessian can be identified with the
 GN form under a separate set of local assumptions.

\begin{proposition}[GN identity at a well-specified point]
	\label{prop:well_specified_gn_identity}
	Let $W^*\in\weightspace$. Assume that
	$\ell\in C^2(\outspace\times\outspace)$ and that $d_1^2\ell$ is
	bounded. Assume further that $\ploss$ is twice Fréchet
	differentiable at $W^*$ and that $N(\x;\cdot)$ is differentiable at
	$W^*$ for $\dd_{\inspace}$-almost every $\x$. Suppose that there are a
	neighborhood $U$ of $W^*$ and a function
	$\lambda\in L^2(\dd_{\inspace})$ such that
	\begin{align}
		\|N(\x;W)-N(\x;W^*)\|
		\le
		\lambda(\x)\|W-W^*\|
	\end{align}
	for all $W\in U$ and for $\dd_{\inspace}$-almost every $\x$. If the
	model is well specified at $W^*$, then
	\begin{align}
		\nabla_W^2\ploss(W^*)
		=
		\EE_{(\x,\y)\sim\dd}
		\bigl[
		\hgn_W(\x,W^*,\y)
		\bigr].
	\end{align}
\end{proposition}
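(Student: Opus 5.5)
Since $\ploss$ is only assumed twice Fréchet differentiable at $W^*$, the plan is to identify its second-order Taylor coefficient through an expansion of the loss around $W^*$, rather than by differentiating under the integral. We fix a direction $\v$ and consider $\phi(t)=\ploss(W^*+t\v)$. By assumption, $\phi(t)=\phi(0)+t\phi'(0)+\tfrac{t^2}{2}\v^\top\nabla^2\ploss(W^*)\v+o(t^2)$. The claimed identity for the symmetric matrix then follows by polarization once we show
\begin{align*}
\phi(t)-\phi(0)=t\,c_1+\tfrac{t^2}{2}\,\EE\bigl[d_1^2\ell(N(\x;W^*),\y)[d_WN(\x;W^*)\v]\bigr]+o(t^2)
\end{align*}
for some constant $c_1$. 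Uniqueness of Taylor coefficients then forces the $t^2$ coefficient to agree.

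First, write $\Delta_t(\x)=N(\x;W^*+t\v)-N(\x;W^*)$. For each $(\x,\y)$, Taylor's theorem for the $C^2$ loss in its first argument gives
\begin{align*}
\ell(N+\Delta_t,\y)-\ell(N,\y)=d_1\ell(N,\y)[\Delta_t]+\tfrac12 d_1^2\ell(\xi_t,\y)[\Delta_t],
\end{align*}
where $\xi_t$ lies on the segment between $N$ and $N+\Delta_t$. The linear term integrates to zero: conditioning on $\x$, $\Delta_t$ is $\x$-measurable, and well specification kills $\EE[d_1\ell\mid\x]$. Integrability of this term must be checked; I would use $\|\Delta_t\|\le\lambda(\x)|t|\|\v\|$ together with the finiteness of $\EE|d_1\ell|$, or else truncate. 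This step seems delicate, and I would need to argue conditional integrability from the structure (for example, $d_1\ell$ locally bounded combined with the Lipschitz bound), possibly adding a mild integrability hypothesis if needed. It follows that $\phi'(0)=0$ and $c_1=0$, which is consistent with criticality.

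Second, the quadratic term divided by $t^2$ is $\tfrac12 d_1^2\ell(\xi_t,\y)[\Delta_t/t]$. Pointwise for $\dd_\inspace$-a.e.\ $\x$, differentiability of $N(\x;\cdot)$ at $W^*$ gives $\Delta_t/t\to d_WN(\x;W^*)\v$ and $\xi_t\to N(\x;W^*)$. Continuity of $d_1^2\ell$ then gives convergence of the integrand. The domination $\|d_1^2\ell\|_\infty\lambda(\x)^2\|\v\|^2\in L^1$ follows from boundedness of $d_1^2\ell$ and $\lambda\in L^2$, so dominated convergence yields the limit $\tfrac12\EE[d_W^{\mathrm{GN}}\smloss(\x,\y;W^*)[\v]]$.

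Comparing with the Taylor expansion of $\phi$ gives $\v^\top\nabla^2\ploss(W^*)\v=\v^\top\EE[\hgn_W]\v$ for every $\v$, and hence equality of the matrices. The main obstacle is the first-order term, specifically justifying Fubini and the tower property for $d_1\ell(N,\y)[\Delta_t]$ without extra moment assumptions. My first attempt would be to bound $|d_1\ell(N,\y)|$ via $\ell\ge0$ and the bounded Hessian, using $\|d_1\ell(z,\y)\|^2\le 2\|d_1^2\ell\|_\infty\,\ell(z,\y)$. This is the standard self-bounding inequality for nonnegative functions with a Lipschitz gradient. Then $\EE[\|d_1\ell\|\lambda]\le(2\|d_1^2\ell\|_\infty\ploss(W^*))^{1/2}\|\lambda\|_{L^2}<\infty$ by Cauchy--Schwarz, which supplies the needed integrability.
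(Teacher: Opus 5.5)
Your proposal is correct and follows the same route as the paper's sketch. You expand $\ell$ in its first argument, use well specification to eliminate the linear term, apply dominated convergence with the envelope $\|d_1^2\ell\|_\infty\lambda(\x)^2\|\v\|^2$ to the quadratic remainder, and then match second-order Taylor coefficients. Your self-bounding estimate $\|d_1\ell(\mathbf{z},\y)\|^2\le 2\|d_1^2\ell\|_\infty\,\ell(\mathbf{z},\y)$, combined with Cauchy--Schwarz and $\ploss(W^*)<\infty$, supplies the integrability of the linear term (and hence the tower-property step) that the paper leaves implicit, so the extra hypothesis you considered adding is not needed.
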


\begin{proof}[Sketch]
	Taylor's formula in the first argument of $\ell$ gives a linear term
	and a quadratic remainder. Well specification eliminates the linear term. The rest follows by standard domination.
\end{proof}

\paragraph{DAG networks.} We now verify that the networks considered in this work satisfy the regularity hypotheses of \pref{prop:pointwise_gn_decomposition}. We begin by specifying the class
of networks under consideration.

\begin{definition}[Definable DAG networks]\label{def:definable_dag}
Fix an o-minimal expansion of $\RR$. Let
	$(V_N,E_N)$ be a directed acyclic graph with sets of input and
	output nodes $V_{\mathrm{in}},V_{\mathrm{out}}\subseteq V_N$.
	Associate with each node $v\in V_N$ a Euclidean
	feature space $\mathcal Z_v$, with
	$\prod_{v\in V_{\mathrm{out}}}\mathcal Z_v=\outspace$.
	Each non-input node $v$ is assigned a locally bounded definable map
	$\Psi_v:
	\left(\prod_{u\,|\,(u,v)\in E_N}\mathcal Z_u\right)
	\times\weightspace
	\longrightarrow\mathcal Z_v$. 	For $\x=(x_v)_{v\in V_{\mathrm{in}}}\in\inspace$ and
	$\theta\in\weightspace$, set
	$z_v(\x;\theta)\defeq x_v$ for $v\in V_{\mathrm{in}}$, and define
	recursively
	\begin{align}
		z_v(\x;\theta)
		&=
		\Psi_v
		\left(
		\bigl(z_u(\x;\theta)\bigr)_{u\,|\,(u,v)\in E_N},
		\theta
		\right),
		&&
		v\in V_N\setminus V_{\mathrm{in}}.
	\end{align}
	The associated realization map is
	$N(\x;\theta)
	=
	\bigl(z_v(\x;\theta)\bigr)_{v\in V_{\mathrm{out}}}$. A DAG network satisfying the preceding conditions is called definable. It is called locally DC if, in addition, every node map is componentwise locally DC, meaning that each component is
	locally the difference of two convex functions. 
\end{definition}

Induction along a topological ordering of the DAG shows that $N$ is definable, in particular Borel measurable, and locally bounded.
If every node map is componentwise locally DC, then so is $N$. The definable class encompasses fully connected, convolutional, and message-passing architectures (including attention layers), together with skip connections, parameter sharing,
 average and max pooling, and locally bounded definable readout maps. It also
 allows locally bounded definable activations and operations, including ReLU,
 Heaviside, GELU, and absolute value. Discontinuous node maps may invalidate the GN decomposition used extensively in  this work. Accordingly, for the remainder of this discussion,  we additionally require every node map to be componentwise locally DC. This excludes Heaviside but retains all architectures considered in this work.

\paragraph{First-order derivatives.}
Since $N$ is componentwise locally DC, it is locally Lipschitz in $(\x,W)$ and
hence $N\in W^{1,\infty}_{\mathrm{loc}}
(\inspace\times\weightspace;\outspace)$. Consequently, every first-order
distributional derivative of $N$, including those with respect to the
$W$-variables, is a regular distribution.

\paragraph{Second-order derivatives.}
Since $N$ is componentwise locally DC,
$\partial_\theta N\in BV_{\mathrm{loc}}$ for every $\theta$. Hence,
$D_\eta D_\theta N\in
\mathcal M_{\mathrm{loc}} (\inspace\times\weightspace;\outspace)$
for every pair $\eta,\theta$. Fix a common finite definable $C^2$ stratification adapted to $N$ and its
first derivatives such that $N$ is $C^2$ on each full-dimensional stratum. Then $\partial_\theta N$ is $C^1$ there, so the singular part of $D(\partial_\theta N)$ is supported on the
lower-dimensional strata. These strata are locally $\mathcal H^{d+p-1}$-finite by definability
\cite{fornasiero2012hausdorff}. Hence, the Cantor part of
$D(\partial_\theta N)$ vanishes \cite{ambrosio2000functions}. The one-sided traces of $\partial_\theta N$ exist
$\mathcal H^{d+p-1}$-almost everywhere on each codimension-one stratum. For each codimension-one stratum $\cI$, choose a unit normal $\nu_{\cI}$
oriented from the minus side to the plus side, and write $[\![\partial_\theta N]\!]_{\cI}=(\partial_\theta N)^+\big|_{\cI}-(\partial_\theta N)^-\big|_{\cI}$. Therefore
\begin{align}
	D_\eta D_\theta N
	&=
	\partial_\eta\partial_\theta N\,d\x\,dW
	+
	\sum_{\cI}
	[\![\partial_\theta N]\!]_{\cI}
	(\nu_{\cI}\cdot\e_\eta)
	\mathcal H^{d+p-1}\!\restriction\cI.
	\label{eq:relu_second_derivative_interface}
\end{align}
Together with the conclusion
above, this yields
$N\in
SBH_{\mathrm{loc}}
(\inspace\times\weightspace;\outspace)
\cap
W^{1,\infty}_{\mathrm{loc}}
(\inspace\times\weightspace;\outspace)$.

\paragraph{Interface delta representation.}
Let $\Omega\subseteq\RR^{n}$ be open, let $g\in C^1(\Omega)$, and assume that
$\nabla g\neq0$ on $\{g=0\}$. We denote by $\delta(g)$ the Radon measure on
$\Omega$ defined by
\begin{align}
	\langle \delta(g),\psi\rangle
	&=
	\int_{\{g=0\}}
	\frac{\psi(\boldsymbol\xi)}
	{|\nabla g(\boldsymbol\xi)|}
	\,d\mathcal H^{n-1}(\boldsymbol\xi),
	\qquad
	\psi\in C_c^\infty(\Omega).
\end{align}
By the coarea formula,
$|\nabla g|\,\delta(g)
=
\mathcal H^{n-1}\!\restriction\{g=0\}$. \pref{table:hessian_quantities} gives the
distributional second-variation formulas for
$N^{3,\circ}(\x;W)=W_3\sigma(W_2\sigma(W_1\x))$.

\begin{table}[!htbp]
	\centering
	\footnotesize
	\renewcommand{\arraystretch}{1.35}
	\begin{tabular}{p{.35\textwidth}|p{.59\textwidth}}
		\hline
		\textbf{quantity} & \textbf{expression} \\
		\hline
		
		&
		$
		\z_1=W_1\x,
		\quad
		\r_1=\sigma(\z_1),
		\quad
		\z_2=W_2\r_1,
		\quad
		\r_2=\sigma(\z_2),
		\quad
		\u=W_3\r_2
		$
		\\[1pt]
		
		ReLU masks
		&
		$
		D_1=\operatorname{diag}(\mathbf 1_{\z_1>0}),
		\qquad
		D_2=\operatorname{diag}(\mathbf 1_{\z_2>0})
		$
		\\[1pt]
		
		preactivation variations
		&
		$
		\delta\z_1
		=
		\delta W_1\x,
		\qquad
		\delta\z_2
		=
		\delta W_2\r_1
		+
		W_2D_1\delta\z_1
		$
		\\[1pt]
		
		first variation of logits
		&
		$
		\delta\u
		=
		\delta W_3\r_2
		+
		W_3D_2\delta\z_2
		$
		\\[1pt]
		
		absolutely continuous second variation
		&
		$
		\begin{aligned}
			\delta^2\u_{\mathrm{ac}}
			&=
			2\delta W_3D_2\delta\z_2
			+
			2W_3D_2\delta W_2D_1\delta\z_1
			\\
			&=
			2\Big(
			\delta W_3D_2\delta W_2\r_1
			+
			\delta W_3D_2W_2D_1\delta\z_1
			+
			W_3D_2\delta W_2D_1\delta\z_1
			\Big)
		\end{aligned}
		$
		\\[1pt]
		
		layer-one singular second variation
		&
		$
		\delta^2\u_{\mathrm{sing},1}
		=
		W_3D_2W_2
		\operatorname{diag}(\delta(\z_1))
		(\delta\z_1\odot\delta\z_1)
		$
		\\[1pt]
		
		layer-two singular second variation
		&
		$
		\delta^2\u_{\mathrm{sing},2}
		=
		W_3
		\operatorname{diag}(\delta(\z_2))
		(\delta\z_2\odot\delta\z_2)
		$
		\\[1pt]
		
		cross-entropy quantities
		&
		$
		\p=\operatorname{softmax}(\u),
		\qquad
		\g=\p-\e_\y,
		$
		\\[1pt]
		
		generalized Hessian of the loss
		&
		$
		\delta^2\smloss
		=
		(\delta\u)^\top (\operatorname{diag}(\p)-\p\p^\top)(\delta\u)
		+
		\g^\top\delta^2\u_{\mathrm{ac}}
		+
		\g^\top
		\bigl(
		\delta^2\u_{\mathrm{sing},1}
		+
		\delta^2\u_{\mathrm{sing},2}
		\bigr)
		$
		\\
		
		\hline
	\end{tabular}
\caption{Distributional second-variation formulas for $N^{3,\circ}$. The singular terms are interpreted through the interface-jump formula
	\eqref{eq:relu_second_derivative_interface}.}
	\label{table:hessian_quantities}
\end{table}

\subsection{Presheaves of symmetry groups}\label{sec:formal_sym}
We first prove \pref{prop:inv_of_eloss} and then formulate its higher-order
analogues.

The zeroth-order statement does not require distributional derivatives.
Let $g\in\struc(N,\ell)$ be $\dd$-compatible. Then
$\smloss\circ(\id_{\dataspace},g_{\weightspace})
=
(\smloss\circ g)\circ
(g_{\dataspace}^{-1},\id_{\weightspace})
=
\smloss\circ
(g_{\dataspace}^{-1},\id_{\weightspace})$.
Therefore,
\begin{align}
	\ploss\circ g_{\weightspace}
	&=
	\EE_{\dd}\!\left[
	\smloss\circ(\id_{\dataspace},g_{\weightspace})
	\right]
	=
	\EE_{\dd}\!\left[
	\smloss\circ
	(g_{\dataspace}^{-1},\id_{\weightspace})
	\right]
	=
	\EE_{\dd}[\smloss]
	=
	\ploss,
	\label{loss_obj}
\end{align}
where the penultimate equality follows because
$g_{\dataspace}^{-1}\in\dist(\dd)$ and hence
$\EE_{\dd}[\phi]
=
\EE_{\dd}[\phi\circ g_{\dataspace}^{-1}]$
for every $\dd$-integrable measurable
$\phi\colon\dataspace\to\RR$.
This proves the invariance assertion in \pref{prop:inv_of_eloss}. The group
assertions follow because the set of $\dd$-compatible structural symmetries
is precisely
$\struc(N,\ell)\cap
\bigl(\dist(\dd)\times\aut(\weightspace)\bigr)$,
which is a subgroup of $\aut(\universe)$. The internal case is analogous.

The extension to higher derivatives is as follows. Let
$\Omega=\Omega_{\dataspace}\times\Omega_{\weightspace}\subseteq\universe$
be open,   write
$\ploss\defeq \ploss_\Omega(W) = \int_{\Omega_{\dataspace}}\smloss(\z,W)\,d\dd(\z)$, and define
$\aut_{\Omega}\defeq
\aut(\Omega_{\dataspace})\times
\aut(T\Omega_{\weightspace})$, where the
second factor is a chosen group of admissible vector bundle automorphisms
of $T\Omega_{\weightspace}$ covering
$\id_{\Omega_{\weightspace}}$. In the Euclidean setting considered,
each admissible affine transformation $g_\Theta$ induces the constant
bundle automorphism $\widehat g_\Theta$ given by
$\widehat g_\Theta(W,\delta W)
=
(W,dg_\Theta\delta W)$. Let $r\geq1$ and let $V$ be a Euclidean space. For a family
$\Phi=(\Phi_{\z})_{\z\in\Omega_{\dataspace}}$ of distributions
\[
\Phi_{\z}\in\mathcal D'\bigl(
\Omega_{\weightspace};
V\otimes\operatorname{Sym}^r(T^*\Omega_{\weightspace})
\bigr)
\]
and $g=(g_{\dataspace},A)\in\aut_{\Omega}$, define
\[
(g\cdot\Phi)_{\z}
\defeq
\bigl(\id_V\otimes(A^{-1})^*\bigr)
\Phi_{g_{\dataspace}^{-1}\z}
\]
(see \pref{ftnote:actionon_distr}).
Here, taking $A=\widehat g_\Theta$ gives, equivalently,
\begin{align}\label{eq:action_on_tensor_distributions}
	(g\cdot\Phi)_{\z}(\varphi)[\delta_1,\dots,\delta_r]
	&=
	\Phi_{g_{\dataspace}^{-1}\z}(\varphi)
	\bigl[
	(dg_\Theta)^{-1}\delta_1,\dots,(dg_\Theta)^{-1}\delta_r
	\bigr],
\end{align}
where $\varphi\in C_c^\infty(\Omega_{\weightspace})$ and
$\delta_1,\dots,\delta_r\in\weightspace$, using the canonical
identification $T_W\weightspace\simeq\weightspace$ for every
$W\in\Omega_{\weightspace}$. For example, taking $V=\outspace$ gives the
action on $d_W^rN$, while taking $V=\RR$ gives the actions on
$d_W^r\smloss$ and, for $r=2$, on $\qgn_W$. The same definition applies to
the other forms considered in this work. The action is defined so that distributional invariance on open sets can
pass to pointwise invariance. We define the corresponding symmetry groups on product open subsets of
$\universe$ as stabilizers under this action, with ambient group
$\aut_{\Omega}$. We denote them by replacing the pointwise subscript $W$
with $\Omega$, as in $\model_W^{[r]}(N)$ and
$\model_{\Omega}^{[r]}(N)$. Fix an open set $\Omega_{\dataspace}\subseteq\dataspace$. As
$\Omega_{\weightspace}\subseteq\weightspace$ varies, these assignments on
$\Omega_{\dataspace}\times\Omega_{\weightspace}$ form presheaves of groups
over $\weightspace$ under restriction in the parameter variable.

\begin{remark}\label{rem:sym_noneuclid}
When $\dataspace$ is a manifold, one may allow $\dist(\dd,\Omega_{\dataspace})$ to vary
	with $\Omega_{\dataspace}$, potentially leading to nontrivial
	local-to-global phenomena \cite{arjevani2027optsym}.
\end{remark}

We now turn to completing the proof of \pref{prop:inv_of_eloss} at higher orders.
\begin{theorem}\label{thm:local_symmetry_inv}
For \(\alpha \in \{0,1,2,\dots\} \cup \{\mathrm{GN}\}\),
	\begin{enumerate}[leftmargin=*, label=(\roman*)]
\item $\Gamma_\Omega^{[\alpha]}$ is a group and
$d_W^\alpha\ploss$ is $\Gamma_\Omega^{[\alpha]}$-invariant on $\Omega$.

\item Whenever the relevant expectation identity holds pointwise at $W$,
$d_W^\alpha\ploss(W)$ is $\Gamma_W^{[\alpha]}$-invariant.

\item If $g_{\weightspace}\in\Gamma_W$, then
$dg_{\weightspace}\in\Gamma_W^{[r]}$ for every $r\in\NN$. If
$g\in\model_W^{[1]}(N)$ is $\dd$-compatible and
$g_{\dataspace}$ stabilizes
$\z\mapsto d_1^2\ell\bigl(N(\x;W),\y\bigr)$, then
$g_{\weightspace}\in\Obj_W^{[\mathrm{GN}]}$.

\item If $d_W^\alpha\ploss$ admits a locally integrable representative,
then every $g_{\weightspace}\in\Gamma_\Omega^{[\alpha]}$ stabilizes
$d_W^\alpha\ploss(W)$  for almost every
$W\in\Omega_{\weightspace}$ (fiberwise when $\alpha\ne0$), and for every $W$ if the representative is~continuous.

	\end{enumerate}
The GN assertions assume that $\EE_{\dd}[d_W^{\mathrm{GN}}\smloss]$
exists in $\mathcal D'(\Omega_{\weightspace})$ and, where used
pointwise, is finite.
\end{theorem}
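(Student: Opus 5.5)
The plan is to prove the four parts in order. Each reduces to the zeroth-order computation in \eqref{loss_obj}, transported through \pref{prop:partial_distributional} and the action \eqref{eq:action_on_tensor_distributions}.

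\textbf{Part (i).} The group assertion is formal. $\Gamma_\Omega^{[\alpha]}$ is the projection of
$\struc_\Omega^{[\alpha]}\cap\bigl(\dist(\dd)\times\aut(T\Omega_{\weightspace})\bigr)$,
which is an intersection of a stabilizer with a product subgroup, hence a subgroup. Its image under the homomorphism $\pi_W$ is therefore a group. For invariance, fix $g=(g_{\dataspace},A)$ in this intersection and a test function $\varphi$. By \pref{prop:partial_distributional}, applied componentwise, and its GN analogue (assumed to exist by hypothesis), we have
$d_W^\alpha\ploss(\varphi)=\int \Phi_{\z}(\varphi)\,d\dd(\z)$ with $\Phi=d_W^\alpha\smloss$.

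Stabilization gives $\Phi_{\z}(\varphi)[A^{-1}\cdot]=\Phi_{g_{\dataspace}\z}(\varphi)$, read off from \eqref{eq:action_on_tensor_distributions} with $\z$ replaced by $g_{\dataspace}\z$. Integrating against $\dd$ and using $g_{\dataspace}\in\dist(\dd)$ to change variables, exactly as in \eqref{loss_obj}, yields
$d_W^\alpha\ploss(\varphi)[A^{-1}\cdot]=d_W^\alpha\ploss(\varphi)$.

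\textbf{Part (ii).} This is the same chain with pointwise evaluation at $W$ replacing pairing with $\varphi$. The assumed expectation identity $d_W^\alpha\ploss(W)=\EE_{\dd}[d_W^\alpha\smloss(\z;W)]$ plays the role of \pref{prop:partial_distributional}. This is precisely the argument displayed in \eqref{struc_combine_with_dist}, now stated generally.

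\textbf{Part (iii).} For the first claim, take $g$ with weight component in $\Gamma_W$, so $g$ is $\dd$-compatible and lies in $\struc$ with $g_{\weightspace}W=W$. Differentiating the identity $\smloss(g_{\dataspace}\z,g_{\weightspace}W')=\smloss(\z,W')$ $r$ times in $W'$ at $W'=W$ shows that $(g_{\dataspace},dg_{\weightspace})$ stabilizes $d_W^r\smloss$. For affine $g_{\weightspace}$ the chain rule has no lower-order terms. In the nonsmooth case this differentiation is done distributionally on a $g$-invariant neighborhood.

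For the GN claim, write $d_W^{\mathrm{GN}}\smloss=d_1^2\ell(N,\y)\bigl[d_WN[\cdot]\bigr]$ using \eqref{gn_def}. Invariance of $d_WN$ under $g\in\model_W^{[1]}$ and invariance of the coefficient $d_1^2\ell(N(\x;W),\y)$ under $g_{\dataspace}$ combine to give $g\in\struc_W^{[\mathrm{GN}]}$. Together with $\dd$-compatibility this gives $g_{\weightspace}\in\Obj_W^{[\mathrm{GN}]}$.

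\textbf{Part (iv).} By (i), the two regular distributions $f$ and $(A^{-1})^*f$ agree, where $f$ is the locally integrable representative. They therefore coincide almost everywhere by the du Bois-Reymond lemma. This should be applied fiberwise: first to each scalar component $f(\cdot)[\delta_1,\dots,\delta_r]$ for a countable dense set of tuples $(\delta_i)$, then extended by multilinearity. If $f$ is continuous, almost-everywhere equality upgrades to equality everywhere.

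\textbf{Main obstacle.} The step needing the most care is (i) for $\alpha\ge1$ with nonsmooth $N$. There one must justify that stabilizing the family $\z\mapsto\Phi_{\z}$ is compatible with Fubini. This means checking measurability of $\z\mapsto\Phi_{\z}(\varphi)$ and local finiteness of the interface measures in \eqref{eq:relu_second_derivative_interface}, so that the change of variables under $g_{\dataspace}$ is legitimate. I would first settle this for the $SBH_{\mathrm{loc}}$ setting of \pref{prop:pointwise_gn_decomposition}, pairing the Radon-measure representative with $\varphi\otimes1$ on $\Omega$.
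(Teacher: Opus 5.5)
Your proposal is correct and follows the paper's proof essentially step for step. The group assertion comes from projecting a subgroup, and invariance comes from combining the expectation identity with the change of variables under $g_{\dataspace}\in\dist(\dd)$ as in \eqref{loss_obj}, both distributionally in (i) and pointwise in (ii); then you differentiate the structural identity for affine $g_{\weightspace}$ and compose the GN form in (iii), and identify locally integrable representatives almost everywhere in (iv), with your remarks on a common null set over perturbation tuples and on measurability filling in details the paper leaves implicit.
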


\begin{proof}
\noindent\emph{Item (i).}
By definition, $\Gamma_\Omega^{[\alpha]}$ is the image under
$\pi_{\Omega_{\weightspace}}$ of the subgroup
\begin{align}
	\struc_\Omega^{[\alpha]}
	\cap
	\bigl(
	\dist(\dd)\times\aut(T\Omega_{\weightspace})
	\bigr)
\end{align}
of $\aut_\Omega$, and is therefore a group. For $\alpha=0$, invariance is
exactly \eqref{loss_obj}. For $\alpha=r\in\NN$, the same argument applies,
with \pref{prop:partial_distributional} providing \eqref{holy_eqn} on
$\Omega$. The case $\alpha=\mathrm{GN}$ is analogous. \emph{Item~(ii).}
The same argument applies pointwise at $W$, with the relevant expectation
identity replacing \pref{prop:partial_distributional}. \emph{Item (iii).}
Let $g_{\weightspace}\in\stab_{\Gamma}(W)$. By definition, there exists
$h\in\dist(\dd)$ such that
$g=(h,g_{\weightspace})\in\struc(N,\ell)$ and
$g_{\weightspace}(W)=W$. 
 Since $g_{\weightspace}$ is affine,
differentiating on this domain gives
\begin{align}
	d_W^r\smloss(\z,W)
	\bigl[
	dg_{\weightspace}\delta_1,\dots,
	dg_{\weightspace}\delta_r
	\bigr]
	&=
	d_W^r\smloss(h^{-1}\z,W)
	\bigl[
	\delta_1,\dots,\delta_r
	\bigr].
\end{align}
Hence $(h,dg_{\weightspace})\in\struc_W^{[r]}$, and therefore
$dg_{\weightspace}\in\Gamma_W^{[r]}$. Now let $g\in\model_W^{[1]}(N)$ be $\dd$-compatible, and suppose that
$g_{\dataspace}$ stabilizes
$\z\mapsto d_1^2\ell\bigl(N(\x;W),\y\bigr)$. 
This gives 
\begin{align}
	\qgn_W(\z)
	\bigl[
	g_{\weightspace}\delta_1,
	g_{\weightspace}\delta_2
	\bigr]
	&=
	\qgn_W(g_{\dataspace}^{-1}\z)
	\bigl[
	\delta_1,\delta_2
	\bigr]
\end{align}
wherever the GN form is defined. Hence
$g\in\struc_W^{[\mathrm{GN}]}$, and $\dd$-compatibility yields
$g_{\weightspace}\in\Obj_W^{[\mathrm{GN}]}$.  \emph{Item~(iv).}
By item (i), every
$g_{\weightspace}\in\Gamma_\Omega^{[\alpha]}$ satisfies
$g_{\weightspace}\cdot d_W^\alpha\ploss=d_W^\alpha\ploss$ on
$\Omega_{\weightspace}$. Their locally integrable representatives 
agree almost everywhere, and everywhere if they are continuous, giving 
the pointwise conclusion.
\end{proof}

By the preceding theorem, \pref{lem:hessian_inv_despite} reduces to verifying
the corresponding model symmetries on the relevant open region, which follows
directly from the formulas in
\pref{table:hessian_quantities}.

\subsection{Quantitative  analysis via the GN form}
\label{sec:GN_scale}
The favorable structure of the GN form provides direct quantitative control
on the spectrum. We derive uniform and $O(k^{-1})$ bounds in
\pref{setting:gaussian_mixture} and corresponding bounds in
\pref{setting:gaussian}. Sharper estimates are possible, but those below
suffice for our purposes.

\begin{proposition}\label{prop:gn_expression_mix_gauss}
In \pref{setting:gaussian_mixture}, take $d=k$, $\tau=1$, and centers
given by the rows of $I_k$.  At $W^{(k)}$  defined in
	\eqref{W_star_mix} for $N^{3,\circ}$,
	\begin{align}
		d_W^{\mathrm{GN}}\ploss(W^{(k)})[\delta W, \delta W]
		\le
		\left(16+\frac{32}{k}\right)\|\delta W\|_F^2 .
	\end{align}
On the subspace where the rows of the block matrices in
\eqref{A_eight_blocks} have zero sum,
	\begin{align}
		d_W^{\mathrm{GN}}\ploss(W^{(k)})[\delta W,\delta W]
		\le
		\frac{32}{k}\|\delta W\|_F^2 .
	\end{align}
\end{proposition}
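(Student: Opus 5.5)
We compute everything explicitly at $W^{(k)}$ with $C=I_k$. First we evaluate the forward pass: $\z_1=W_1^{(k)}\x$ has pairs $(x_i,-x_i)$, so $\r_1$ has pairs $(\sigma(x_i),\sigma(-x_i))$. The second layer returns the pairs $(x_i,-x_i)$, so $\r_2=\r_1$, and the logits are $\u=\x$. Thus the model is well specified (the Bayes logits for centers $\e_i$ and $\tau=1$ are $\x$), and $d_W^{\mathrm{GN}}\ploss=\EE[(\delta\u)^\top(\operatorname{diag}(\p)-\p\p^\top)\delta\u]$ with $\p=\operatorname{softmax}(\x)$. We use the pointwise bound $\operatorname{diag}(\p)-\p\p^\top\preceq\operatorname{diag}(\p)$. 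This reduces the claim to bounding $\sum_a\EE[p_a\,(\delta u_a)^2]$.

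\textbf{Step 1: first variation in block coordinates.} Substitute the block coordinates \eqref{A_eight_blocks} into the formula for $\delta\u$ in \pref{table:hessian_quantities}, using that the masks $D_1,D_2$ are the diagonal indicators $\mathbf 1_{x_i>0}$, $\mathbf 1_{x_i<0}$ on the $\pm$ slots. Each of the eight blocks then contributes to coordinate $a$ a term of the form $\sum_j A_{aj}\,\phi_{a}(x_a)\psi(x_j)$. Here $\phi_a\in\{1,\mathbf 1_{x_a>0},\mathbf 1_{x_a<0}\}$ and $\psi\in\{x_j,\sigma(x_j),\sigma(-x_j)\}$. So $\delta u_a$ is a sum of eight terms $\phi(x_a)\,\langle A_{a\cdot},\psi(\x)\rangle$, one per block matrix, with $A_{a\cdot}$ the $a$th row.

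\textbf{Step 2: second moments, row-sum split.} Write $\psi(x_j)=m_j+(\psi(x_j)-m_j)$, where $m_j=\EE\psi(x_j)$ is bounded uniformly (it depends only on whether $j$ is the true class). Coordinates are independent given $\y$, so
\begin{align*}
\EE\langle A_{a\cdot},\psi(\x)\rangle^2\le c_1\|A_{a\cdot}\|^2+c_2\Bigl(\textstyle\sum_j A_{aj}\Bigr)^2+c_3\,(\text{true-class coordinate term}),
\end{align*}
with absolute constants $c_i$. The true-class term is also bounded by $\|A_{a\cdot}\|^2$. In general $(\sum_jA_{aj})^2\le k\|A_{a\cdot}\|^2$. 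On $\fZ_0$ the mean term vanishes, because $\langle A_{a\cdot},m\bones\rangle=0$ and the true-class deviation is absorbed in $c_3$. Hence $\EE(\delta u_a)^2\lesssim k\|\text{rows}_a\|^2$ in general and $\lesssim\|\text{rows}_a\|^2$ on $\fZ_0$. The same holds for fourth moments, with constants, since all variables involved are piecewise-linear functions of Gaussians.

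\textbf{Step 3: decoupling the softmax weight (main obstacle).} We must show that $p_a$ contributes a factor $O(1/k)$ despite its correlation with $\delta u_a$, which involves $x_a$ and all $x_j$. First we isolate the dependence on $x_a$ and the true class: $p_a\le e^{x_a}/\sum_{b\notin\{a,\y\}}e^{x_b}$. The denominator concentrates at $(k-2)e^{1/2}$. We bound the small-denominator event by a Chernoff bound on a sum of i.i.d.\ lognormals, and on that event we use $p_a\le1$. Then we apply Cauchy--Schwarz:
\begin{align*}
\EE[p_a\delta u_a^2]\le\bigl(\EE p_a^2\bigr)^{1/2}\bigl(\EE\delta u_a^4\bigr)^{1/2}.
\end{align*}
Here $\EE p_a^2=O(k^{-2})$, since $\EE e^{2x_a}$ is finite (the case $a=\y$ is handled separately and only gives a constant times $1/k$ after summing over $a$). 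We combine this with the fourth-moment bound of Step 2. An alternative route, if the Cauchy--Schwarz loss spoils the constants, is to condition on $x_a$ and the true-class coordinate and to use independence of the remaining coordinates from the denominator's bulk. Either way, summing over $a$ gives $\sum_a\EE[p_a\delta u_a^2]\le\frac{C}{k}\sum_a\EE\delta u_a^2$.

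\textbf{Step 4: conclusion.} Combining Steps 2 and 3 gives $C'\|\delta W\|_F^2$ in general, since the $k$ from the row sums cancels the $1/k$. On $\fZ_0$ it gives $C'k^{-1}\|\delta W\|_F^2$. Tracking the constants through the eight blocks, using $(\sum_{i=1}^8 t_i)^2\le 8\sum t_i^2$ and the Gaussian moment values, should produce the stated $16+32/k$ and $32/k$. We expect that the exact constants may require the conditioning route of Step 3 rather than the crude Cauchy--Schwarz bound.
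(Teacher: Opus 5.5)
Your reduction through Step 2 is sound. The forward pass and well specification, the bound $\operatorname{diag}(\mathbf p)-\mathbf p\mathbf p^\top\preceq\operatorname{diag}(\mathbf p)$, and the block form $\phi(x_a)\langle A_{a\cdot},\psi(\mathbf x)\rangle$ are all correct. So is your way of combining the eight blocks by $(\sum_B t_B)^2\le 8\sum_B t_B^2$ pointwise.

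The gap is Step 3. The Chernoff/Cauchy--Schwarz/fourth-moment route gives at best $C\|\delta W\|_F^2$ and $(C/k)\|\delta W\|_F^2$ with unspecified constants. The $L^4$--$L^2$ comparison is only asserted, and for $k=2$ and $a\neq y$ the sum over $b\notin\{a,y\}$ is empty. So the route does not prove the proposition, whose content is the explicit constants $16+32/k$ and $32/k$, as you yourself concede.

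The missing idea is that well specification does more than remove the second-order term. At $W^{(k)}$ the softmax output is the Bayes posterior, $p_a(\mathbf x)=\mathbb P(Y=a\mid\mathbf x)$. By the tower property, for every nonnegative measurable $h$,
\[
\mathbb E_{\mathbf x}\bigl[p_a(\mathbf x)h(\mathbf x)\bigr]
=\mathbb E\bigl[\mathbf 1_{\{Y=a\}}h(\mathbf x)\bigr]
=\tfrac1k\,\mathbb E\bigl[h(\mathbf x)\mid Y=a\bigr].
\]
Thus $p_a$ may be replaced exactly by the class indicator, and the correlation you were trying to control disappears. No concentration or fourth moments are needed.

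With this identity the rest of your plan closes with the stated constants. Take $h=(\mathbf a_a^\top\mathbf v_B(\mathbf x))^2$, where $\mathbf a_a^\top$ is the $a$th row of $A_B$; the mask can be dropped since its square is at most one. Given $Y=a$, the coordinates of $\mathbf x\sim\mathcal N(\mathbf e_a,I_k)$ are independent, and your ``true-class coordinate'' is exactly the diagonal index $a$. For $\mathbf v\in\{\mathbf x,\sigma(\mathbf x),\sigma(-\mathbf x)\}$ and $j\neq a$, one has $\mathbb E v_j^2\le1$, $\mathbb E v_a^2\le2$, $|\mathbb E v_j|\le1$, and $|\mathbb E v_a-\mathbb E v_j|\le1$.

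Your mean--variance split then gives $\mathbb E[(\mathbf a_a^\top\mathbf v)^2\mid Y=a]\le4\|\mathbf a_a\|^2+2(\mathbf a_a^\top\bones)^2$. Summing over $a$ gives the per-block bound $\frac4k\|A_B\|_F^2+\frac2k\|A_B\bones\|^2$. Your factor $8$, together with $\sum_B\|A_B\|_F^2=\|\delta W\|_F^2$, yields $\frac{32}{k}\|\delta W\|_F^2$ on the zero-row-sum subspace. Using $\|A_B\bones\|^2\le k\|A_B\|_F^2$, it yields $(16+\frac{32}{k})\|\delta W\|_F^2$ in general.

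The paper combines blocks through the triangle inequality for the seminorm of the positive semidefinite GN form rather than your pointwise inequality. The resulting constants are the same.
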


\begin{proof}
	With $S_+=I_k\otimes\e_1$ and $S_-=I_k\otimes\e_2$, define the eight block maps by
	\begin{align}\label{eight_blocks}
		\Phi_{1,\epsilon}(A)(\x)
		&=
		W_3^*D_2(\x)W_2^*D_1(\x)S_\epsilon A\x,
		\\
		\Phi_{2,\epsilon,\epsilon'}(A)(\x)
		&=
		W_3^*D_2(\x)S_\epsilon A S_{\epsilon'}^\top
		\sigma(W_1^*\x),
		\\
		\Phi_{3,\epsilon}(A)(\x)
		&=
		A S_\epsilon^\top
		\sigma(W_2^*\sigma(W_1^*\x)),
	\end{align}
	where $\epsilon,\epsilon'\in\{\pm1\}$. We index these blocks by
		$\sB
		=
		\{(1,\epsilon):\epsilon\in\{\pm1\}\}
		\cup
		\{(2,\epsilon,\epsilon'):\epsilon,\epsilon'\in\{\pm1\}\}
		\cup
		\{(3,\epsilon):\epsilon\in\{\pm1\}\}$.
	Writing $A_B$ for the perturbation in block $B$, the definitions of $\Phi_B$ give
	\begin{align}
		d_WN^{3,\circ}(\x;W^*)[\delta W]
		=
		\sum_{B\in\sB}\Phi_B(A_B)(\x).
	\end{align}
	Each block map has the form
		$\Phi_B(A)(\x)
		=
		s_B R_B(\x)A\mathbf v_B(\x)$,
	where $s_B\in\{\pm1\}$, $R_B(\x)$ is diagonal with entries in
	$\{0,\pm1\}$, and 
	$\mathbf v_B(\x)\in\{\x,\sigma(\x),\sigma(-\x)\}$. Set
	\begin{align}
		 C_p(\x)
		=
		\operatorname{diag}(p(\x))-p(\x)p(\x)^\top .
	\end{align}
	Since $ C_p(\x)\preceq\operatorname{diag}(p(\x))$, 
		$R_B(\x)^\top C_p(\x)R_B(\x)
		\preceq
		\operatorname{diag}(p(\x))$.
	If $\a_i^\top$ is the $i$-th row of $A$, then 
	\begin{align*}
		\Phi_B(A)(\x)^\top C_p(\x)\Phi_B(A)(\x)
		&=
		\bigl(A\mathbf v_B(\x)\bigr)^\top
		R_B(\x)^\top C_p(\x)R_B(\x)
		\bigl(A\mathbf v_B(\x)\bigr)
		\le
		\sum_{i=1}^k
		p_i(\x)
		\bigl(\a_i^\top\mathbf v_B(\x)\bigr)^2 .
	\end{align*}
	By well specification, $p_i(\x)=\Prob(Y=i\mid \x)$. Hence, for every
	nonnegative measurable $h$,
	$\E_{\x}[p_i(\x)h(\x)]=k^{-1}\E_{\x\mid Y=i}[h(\x)]$. Therefore
	\begin{align}
		\E_{\x}[
		\Phi_B(A)(\x)^\top C_p(\x)\Phi_B(A)(\x)]
		&\le
		\frac1k
		\sum_{i=1}^k
		\E_{\x\mid Y=i}[
		(\a_i^\top\mathbf v_B(\x))^2].
	\end{align}
	Given $\mathbf v_B\in\{\x,\sigma(\x),\sigma(-\x)\}$, set
	$\v=\mathbf v_B(\x)$. Define $m_{01},m_{11},m_{02},m_{12}$ by
	\begin{align}
		\E_{\x\mid Y=i}[v_j]
		&=
		m_{01},
		&
		\E_{\x\mid Y=i}[v_i]
		&=
		m_{11},
		&
		\E_{\x\mid Y=i}[v_j^2]
		&=
		m_{02},
		&
		\E_{\x\mid Y=i}[v_i^2]
		&=
		m_{12}
	\end{align}
	for $j\ne i$. In all cases,
	$m_{02}\le 1$, $m_{12}\le 2$, $|m_{01}|\le 1$, and
	$|m_{11}-m_{01}|\le 1$. By independence of $v_1,\ldots,v_k$
	conditioned on $Y=i$,
	\begin{align}
		\E_{\x\mid Y=i}[(\a_i^\top\v)^2]
		&=
		(m_{02}-m_{01}^2)\sum_{j\ne i}a_{ij}^2
		+
		(m_{12}-m_{11}^2)a_{ii}^2
		+
		\left(
		m_{01}\a_i^\top\bones
		+
		(m_{11}-m_{01})a_{ii}
		\right)^2
		\nonumber\\
		&\le
		4\|\a_i\|^2
		+
		2(\a_i^\top\bones)^2 .
	\end{align}
	Therefore
	\begin{align}\label{two_norm_bounds}
		\E_{\x}[
		\Phi_B(A)(\x)^\top C_p(\x)\Phi_B(A)(\x)]
		&\le
		\frac4k\|A\|_F^2+\frac2k\|A\bones\|^2 .
	\end{align}
Passing from single blocks to the full quadratic form
(see \eqref{hessian_into_contributions}) yields
	\begin{align}
		d_W^{\mathrm{GN}}(W^*)[\delta W,\delta W]
		&=
		\E_{\x}
		\left[
		\left(
		\sum_{B\in\sB}\Phi_B(A_B)(\x)
		\right)^\top
		 C_p(\x)
		\left(
		\sum_{B\in\sB}\Phi_B(A_B)(\x)
		\right)
		\right]\nonumber\\
		&\le
		\left(
		\sum_{B\in\sB}
		\sqrt{
			\E_\x[
			\Phi_B(A_B)(\x)^\top C_p(\x)\Phi_B(A_B)(\x)]
		}
		\right)^2 .
	\end{align}
	On the subspace $A_B\bones=0$, we find by \eqref{two_norm_bounds} that
	{\footnotesize \begin{align}
			d_W^{\mathrm{GN}}(W^*)[\delta W,\delta W]
			&\le
			\left(\sum_{B\in\sB}\sqrt{\frac4k}\|A_B\|_F\right)^2
			\le
			\left(\sum_{B\in\sB}\frac4k\right)
			\left(\sum_{B\in\sB}\|A_B\|_F^2\right)
			=
			\frac{32}{k}\sum_{B\in\sB}\|A_B\|_F^2
			=
			\frac{32}{k}\|\delta W\|_F^2 .
	\end{align}}
	On the full space, using $\|A_B\bones\|^2\le k\|A_B\|_F^2$,
	\eqref{two_norm_bounds} gives
	{\footnotesize \begin{align}
			d_W^{\mathrm{GN}}(W^*)[\delta W,\delta W]
			&\le
			\left(\sum_{B\in\sB}\sqrt{2+\frac4k}\|A_B\|_F\right)^2
			\le
			\left(\sum_{B\in\sB}\left(2+\frac4k\right)\right)
			\left(\sum_{B\in\sB}\|A_B\|_F^2\right)
			=
			\left(16+\frac{32}{k}\right)\|\delta W\|_F^2 .
	\end{align}}

\end{proof}
The same methods apply to \pref{setting:gaussian}.

\begin{proposition}\label{prop:exact_gn_expression_gauss}
	In \pref{setting:gaussian}, for $N^{3,\circ}$, let $W^\star$ and $\q$
	be as in \eqref{second_diff_setting_gauss}. Then
	\eqref{second_diff_setting_gauss} holds. In particular, if
	$\bones^\top\delta W_1=0$, $\operatorname{tr}\delta W_1=0$, and
	$\q=0$, then
	\begin{align}
		d_W^{\mathrm{GN}}(W^\star)[\delta W]
		&\le
		\left(\frac12+\frac1\pi\right)
		\|\delta W_1\|_F^2.
	\end{align}
For arbitrary $\delta W$, one has
$d_W^{\mathrm{GN}}(W^\star)[\delta W]\le 4d^2\|\delta W\|_F^2$, and the order $d^2$ is asymptotically sharp.
\end{proposition}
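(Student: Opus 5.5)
Since \(\ploss\) vanishes at \(W^\star\) and the model is well specified, \pref{prop:gn_reduction} gives \(d_W^2\ploss(W^\star)=\EE_{\x}\bigl[(d_WN^{3,\circ}(\x;W^\star)[\delta W])^2\bigr]\). The plan is therefore to evaluate this Gaussian second moment in closed form.

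\textbf{Differential.} At \(W_1^\star=I_d\) with \(W_2^\star\) entrywise positive, we have \(\sigma(W_1\x)=\sigma(\x)\ge0\) and \(W_2\sigma(\x)\ge0\), so \(D_2=I\) almost surely. Using \(W_3W_2=\bones^\top\), \eqref{dn3circ} collapses to
\[
d_WN^{3,\circ}(\x;W^\star)[\delta W]=\q^\top\sigma(\x)+\bones^\top D_1(\x)\,\delta W_1\,\x ,
\qquad \q=(\delta W_2)^\top\bones+(W_2^\star)^\top\delta W_3^\top .
\]
Writing \(\bones^\top D_1\delta W_1\x=\sum_{i,j}(\delta W_1)_{ij}\mathbf 1_{x_i>0}x_j\), the whole form is a quadratic form in the random features \(\sigma(x_i)\) and \(\mathbf 1_{x_i>0}x_j\).

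\textbf{Gaussian moments.} Squaring and taking expectations with coordinates independent, only a few moments are needed:
\begin{itemize}
\item \(\EE\sigma(x)^2=\tfrac12\), \(\EE\sigma(x)=\tfrac1{\sqrt{2\pi}}\), \(\EE\mathbf 1_{x>0}x^2=\tfrac12\), \(\EE\mathbf 1_{x>0}=\tfrac12\), \(\EE\mathbf 1_{x>0}x=\tfrac1{\sqrt{2\pi}}\);
\item \(\mathrm{Cov}(\sigma(x_i),\sigma(x_j))=0\) for \(i\ne j\), and \(\EE\sigma(x_i)\sigma(x_j)=\tfrac1{2\pi}\).
\end{itemize}
Sorting terms by index coincidences (\(i=i'\) or not, \(j=j'\) or not, \(j\in\{i,i'\}\), and so on) yields the groups in \eqref{second_diff_setting_gauss}:
\begin{itemize}
\item \(\tfrac12\|\delta W_1\|_F^2\) and \(\tfrac12\|\bones^\top\delta W_1\|^2\);
\item the \(\tfrac1\pi\) trace and diagonal terms;
\item the cross term \(\q^\top[\cdots]\);
\item the \(\q\)-block \(\q^\top\bigl[(1-\tfrac1\pi)I+\tfrac1\pi\bones\bones^\top\bigr]\q\).
\end{itemize}
This bookkeeping is routine but error-prone. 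I expect the main obstacle to be matching the constants in the mixed term, which involves \(\EE[\sigma(x_k)\mathbf 1_{x_i>0}x_j]\) over several coincidence cases. I would verify these cases separately and check the final formula against a small-\(d\) numerical evaluation.

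\textbf{Restricted bound.} Impose \(\bones^\top\delta W_1=0\), \(\tr\delta W_1=0\), and \(\q=0\). The remaining terms are \(\tfrac12\|\delta W_1\|_F^2+\tfrac1\pi\bigl(\tr(\delta W_1^2)-2\|\diag\delta W_1\|^2\bigr)\). Since \(\tr(\delta W_1^2)\le\|\delta W_1\|_F^2\) by Cauchy--Schwarz and the diagonal term is nonpositive, the bound \(\bigl(\tfrac12+\tfrac1\pi\bigr)\|\delta W_1\|_F^2\) follows.

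\textbf{General bound.} For arbitrary \(\delta W\), bound each term using:
\begin{itemize}
\item \(\|\bones^\top\delta W_1\|^2\le d\|\delta W_1\|_F^2\) and \((\tr\delta W_1)^2\le d\|\delta W_1\|_F^2\);
\item \(\|\q\|\le\sqrt d\,\|\delta W_2\|_F+\|W_2^\star\|_{\op}\|\delta W_3\|\), with \(\|W_2^\star\|_{\op}\le\sqrt d\) because \(W_2^\star\) is left-stochastic;
\item \(\|(1-\tfrac1\pi)I+\tfrac1\pi\bones\bones^\top\|_{\op}\le 1+\tfrac d\pi\);
\item Young's inequality on the mixed term.
\end{itemize}
Collecting terms gives a bound of order \(d^2\|\delta W\|_F^2\); careful constant tracking should give \(4d^2\).

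\textbf{Sharpness.} Take \(\delta W_2=\bones\bones^\top/d\) with all other blocks zero. Then \(\q=\bones\) and \(\|\delta W\|_F=1\), and the \(\q\)-block gives \(\tfrac1\pi d^2+O(d)\). This matches the order \(d^2\), so the bound is asymptotically sharp in order.
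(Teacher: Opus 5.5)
Your route is the paper's own: you collapse the differential to $\q^\top\sigma(\x)+\bones^\top D_1\delta W_1\x$, expand the Gaussian moments by index coincidences, and get the restricted bound from $\operatorname{tr}((\delta W_1)^2)\le\|\delta W_1\|_F^2$ together with the sign of the diagonal term. You also use the same bound $\|\q\|_2^2\le 2d(\|\delta W_2\|_F^2+\|\delta W_3\|_2^2)$ and the same test vector $(0,d^{-1}\bones\bones^\top,0)$ for sharpness.

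There is one concrete error, and it would prevent you from reproducing \eqref{second_diff_setting_gauss}. The squared loss here is $\ell^{\sqloss}(\z,\y)=\|\z-\y\|^2$; compare the UFM pointwise GN form $2\|(\delta WF+W\delta F)\x\|^2$. Hence $d_1^2\ell^{\sqloss}=2$, and the correct starting identity is
\[
d_W^2\ploss(W^\star)[\delta W]=2\,\EE\bigl[(u+v)^2\bigr]=2\EE[u^2]+4\EE[uv]+2\EE[v^2],
\qquad u=\bones^\top D_1\delta W_1\x,\quad v=\q^\top\sigma(\x),
\]
not $\EE[(u+v)^2]$. Your listed moments give the following:
\begin{itemize}
\item $\EE[u^2]=\tfrac14\|\delta W_1\|_F^2+\tfrac14\|\bones^\top\delta W_1\|_2^2+\tfrac1{2\pi}\bigl((\operatorname{tr}\delta W_1)^2+\operatorname{tr}((\delta W_1)^2)-2\|\operatorname{diag}(\delta W_1)\|_2^2\bigr)$;
\item $\EE[\sigma(\x)u]=\tfrac14(\delta W_1)^\top\bones+\bigl(\tfrac14-\tfrac1{2\pi}\bigr)\operatorname{diag}(\delta W_1)+\tfrac{\operatorname{tr}(\delta W_1)}{2\pi}\bones$, from the coincidence values $\tfrac12,\tfrac14,\tfrac1{2\pi}$ for $k=i=j$, $k=j\ne i$, $i=j\ne k$, and $0$ otherwise;
\item $\EE[\sigma(\x)\sigma(\x)^\top]=\bigl(\tfrac12-\tfrac1{2\pi}\bigr)I_d+\tfrac1{2\pi}\bones\bones^\top$.
\end{itemize}
Without the factor $2$, every coefficient comes out at exactly half its value in \eqref{second_diff_setting_gauss}. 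The constant mismatch you anticipate in the mixed term is really this global factor. Once it is restored, the expansion matches and your restricted bound goes through as written.

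Two smaller points. First, $D_2=I$ is not almost sure for the bias-free network. On $\{\x\le\bzeros\}$, which has probability $2^{-d}$, $W_2\sigma(\x)=\bzeros$ sits on the kink. The formula is unaffected, since there $\sigma(\x)=\bzeros$ and $D_1=0$ almost surely, so the differential vanishes. Second, the general bound needs no Young's inequality on the mixed term. Using $2\EE[(u+v)^2]\le4\EE[u^2]+4\EE[v^2]$ and your bound on $\|\q\|_2^2$, you get
\[
\bigl(1+d+\tfrac{2(d+1)}{\pi}\bigr)\|\delta W_1\|_F^2+4d\bigl(1+\tfrac{d-1}{\pi}\bigr)\bigl(\|\delta W_2\|_F^2+\|\delta W_3\|_2^2\bigr).
\]
Both coefficients are at most $4d^2$ for $d\ge1$, which gives the stated constant.
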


\begin{proof}
Substituting the expression for $d_WN^{3,\circ}(W^\star)$ from
\eqref{dn3circ} into \eqref{gn_def} and expanding, we find
\begin{align}
d_W^{\mathrm{GN}}(W^\star)[\delta W]
=
2\E[u^2]+4\E[uv]+2\E[v^2],\quad	u=\bones^\top D\sigma(\x)\delta W_1\x,
	\qquad
	v=\q^\top\sigma(\x).
\end{align}
Computing, we find that 
{\small	\begin{align}
		\E[u^2]
		&=
		\frac14\|\delta W_1\|_F^2
		+
		\frac14\|\bones^\top\delta W_1\|_2^2+
		\frac1{2\pi}
		\left(
		(\operatorname{tr}\delta W_1)^2
		+
		\operatorname{tr}((\delta W_1)^2)
		-
		2\|\operatorname{diag}(\delta W_1)\|_2^2
		\right),\nonumber\\
		\E[uv] &= \q^\top\E[\sigma(\x)\bones^\top D\sigma(\x)\delta W_1\x]
	=
	\q^\top\left(\frac14(\delta W_1)^\top\bones
	+
	\left(\frac14-\frac1{2\pi}\right)
	\operatorname{diag}(\delta W_1)
	+
	\frac{\operatorname{tr}(\delta W_1)}{2\pi}\bones\right),\nonumber\\
	\E[v^2]
	&=
	\q^\top\E[\sigma(\x)\sigma(\x)^\top]\q,\quad
	\text{and}
\quad
	\E[\sigma(\x)\sigma(\x)^\top]
	=
	\left(\frac12-\frac1{2\pi}\right)I_d
	+
	\frac1{2\pi}\bones\bones^\top,\nonumber
	\end{align}}
\par\noindent	giving the displayed expression.  The general bound follows from \eqref{second_diff_setting_gauss} by applying
the Cauchy--Schwarz inequality to the terms involving $\q$, together with
$\|\bones^\top\delta W_1\|_2^2\le d\|\delta W_1\|_F^2$,
$(\operatorname{tr}\delta W_1)^2\le d\|\delta W_1\|_F^2$, and
$\|\q\|_2^2\le
2d(\|\delta W_2\|_F^2+\|\delta W_3\|_2^2)$.
The tangent vector $\delta W=(0,d^{-1}\bones\bones^\top,0)$ shows that the order
$d^2$ is asymptotically sharp. On the restricted subspace, substituting the
constraints into \eqref{second_diff_setting_gauss} and using
$\operatorname{tr}((\delta W_1)^2)
-2\|\operatorname{diag}(\delta W_1)\|_2^2
\le\|\delta W_1\|_F^2$
gives the claimed bound.
\end{proof}

\subsection{Characterization of global minimizers for two Gaussians}

We now characterize the global minimizers of $N^{3,\circ}$ under \pref{setting:gaussian_mixture} with $d=h=k=2$. Since the Gaussian input law has full support and the network is
continuous, the Bayes identity may be imposed for every input. 

\begin{proposition} 
	\label{prop:k2_global_minima}
	Suppose that Bayes optimality is equivalent to
	\begin{align}
		\m^\top N^{3,\circ}(\x;W)
		&=
		\lambda\bones^\top\x
		\qquad
		\text{for all }\x\in\mathbb R^2,
		\label{eq:k2_bayes_identity}
	\end{align}
	where $\m=(-1,1)^\top$ and $\lambda>0$. Up to permutation and positive
	rescaling, the global minimizers are precisely
	those satisfying $W_1=-\m\bones^\top$,
	$\sigma(W_2)\in\mathrm{GL}_2(\mathbb R)$, and
	$W_3=-\frac{\lambda}{2}\m\m^\top\sigma(W_2)^{-1}
	+\bones\c^\top, \c\in\mathbb R^2$.
\end{proposition}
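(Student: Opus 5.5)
The plan is to verify sufficiency by direct substitution, then derive necessity by forcing the first layer, then the second, and finally reading off $W_3$.

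Write $s=x_1+x_2=\bones^\top\x$, $\g\defeq W_3^\top\m\in\RR^2$ and $\f(\x)\defeq\sigma(W_2\sigma(W_1\x))$, so that \eqref{eq:k2_bayes_identity} reads $\g^\top\f(\x)=\lambda s$ for all $\x$.

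\emph{Sufficiency.} If $W_1=-\m\bones^\top$, then $W_1\x=(s,-s)^\top$ and $\sigma(W_1\x)=(\sigma(s),\sigma(-s))^\top$. At most one of these coordinates is nonzero, and it is nonnegative. Hence, by positive homogeneity of $\sigma$ applied columnwise, $\f(\x)=\sigma(W_2)(\sigma(s),\sigma(-s))^\top$. With the stated $W_3$, the $\bones\c^\top$ term drops because $\m^\top\bones=0$. Since $\m^\top\m=2$, we get $\m^\top W_3=-\lambda\m^\top\sigma(W_2)^{-1}$. Therefore $\m^\top W_3\f(\x)=-\lambda\m^\top(\sigma(s),\sigma(-s))^\top=\lambda(\sigma(s)-\sigma(-s))=\lambda s$.

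\emph{Necessity, first layer.} The output depends on $\x$ only through $\sigma(W_1\x)$. I split according to $\rk W_1$.

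If $\rk W_1=2$, the cone $\{W_1\x<0\}$ is a nonempty open set on which $\sigma(W_1\x)=0$. The left side then vanishes there, while $\lambda s$ does not vanish on any open set. This is a contradiction.

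If $\rk W_1\le1$, both rows lie on one line $\RR\w^\top$. The output is then a function of $\w^\top\x$, so $\bones^\top\x$ must be constant on the fibers of $\w^\top\x$, which forces $\w\parallel\bones$. Now suppose every nonzero row is a positive multiple of $\bones^\top$, or every nonzero row is a negative multiple. Then $\sigma(W_1\x)=0$ on a half-plane in $s$, and the same contradiction arises. Zero rows are covered by this case.

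Hence one row is a positive multiple and the other a negative multiple of $\bones^\top$. Using the permutation symmetry $\sfa_1^\pi$ and the rescaling symmetry $\sfa_1^\lambda$ (compensated in $W_2$), we may take $W_1=-\m\bones^\top$.

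\emph{Necessity, second layer and readout.} With this $W_1$, the formula for $\f$ from the sufficiency step applies. The identity becomes $\g^\top\sigma(W_2)(\sigma(s),\sigma(-s))^\top=\lambda(\sigma(s)-\sigma(-s))$ for all $s$. Taking $s=\pm1$ gives $\sigma(W_2)^\top\g=-\lambda\m$. Invertibility of $\sigma(W_2)$ then follows by elimination. If $\sigma(W_2)=0$, the left side is zero. If $\rk\sigma(W_2)=1$, then since $\sigma(W_2)$ is entrywise nonnegative it equals $\a\b^\top$ with $\a,\b\ge0$, so $\sigma(W_2)^\top\g\in\RR\b$ and cannot be a nonzero multiple of $\m=(-1,1)^\top$. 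Thus $\sigma(W_2)\in\GL_2(\RR)$, and $\m^\top W_3=-\lambda\m^\top\sigma(W_2)^{-1}$. The general solution in $W_3$ is the particular solution $\tfrac{1}{2}\m\bigl(-\lambda\m^\top\sigma(W_2)^{-1}\bigr)$ plus the kernel of $W_3\mapsto\m^\top W_3$, which is $\{\bones\c^\top:\c\in\RR^2\}$. This is exactly the stated form.

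\emph{Main obstacle.} I expect the main difficulty to be the first-layer case analysis. It must handle degenerate rows cleanly and track which permutation and rescaling normalizations are being invoked, since the result only holds up to these symmetries. Once $W_1$ is fixed, the rest is linear algebra.
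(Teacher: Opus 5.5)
Your proof is correct and follows the paper's argument essentially step for step. You exclude $\operatorname{rank} W_1=2$; use invariance along the fibers of $W_1$ to force both rows into $\operatorname{span}(\bones^\top)$ with opposite signs; normalize by first-layer permutation and rescaling; evaluate on the half-planes $\bones^\top\x>0$ and $\bones^\top\x<0$ to obtain $\sigma(W_2)^\top W_3^\top\m=-\lambda\m$; deduce invertibility of $\sigma(W_2)$ from nonnegativity; and solve for $W_3$ modulo $\ker\m^\top=\operatorname{span}(\bones)$. The only local difference is the rank-two case: you use the cone $\{W_1\x<0\}$, on which the network vanishes, whereas the paper uses the cone where a single unit is active to force each row into $\operatorname{span}(\bones^\top)$; both work, and yours is slightly shorter.
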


\begin{proof}
	Write the rows of $W_1$ as $\w_1^\top,\w_2^\top$ and the columns of
	$W_2$ as $\u_1,\u_2$. Suppose first that $W_1$ is invertible. On the
	nonempty open cone where only the first unit of the first layer is active,
	$N^{3,\circ}(\x;W)=W_3\sigma(\u_1)\w_1^\top\x$. Thus
	\eqref{eq:k2_bayes_identity} implies
	$\bigl(\m^\top W_3\sigma(\u_1)\bigr)\w_1^\top
	=\lambda\bones^\top$, and hence
	$\w_1\in\operatorname{span}(\bones)$. The analogous argument for the
	second unit gives $\w_2\in\operatorname{span}(\bones)$, contradicting
	invertibility. Since $W_1=0$ would make the network identically zero,
	$\operatorname{rank}W_1=1$. If $\h\in\ker W_1$, then
	$N^{3,\circ}(\x+t\h;W)=N^{3,\circ}(\x;W)$, so \eqref{eq:k2_bayes_identity} gives
	$\bones^\top\h=0$. Hence $\ker W_1=\ker\bones^\top$, and therefore
	$\w_i^\top=\alpha_i\bones^\top$ for some $\alpha_i\in\mathbb R$.
	The coefficients $\alpha_1$ and $\alpha_2$ are nonzero and have opposite
	signs, since otherwise the network vanishes on one of the half spaces
	$\bones^\top\x>0$ and $\bones^\top\x<0$. After permuting and positively rescaling the units, we may thus assume that
	$W_1=-\m\bones^\top$.
	Set $s=\bones^\top\x$. Evaluating \eqref{eq:k2_bayes_identity} on the
	half spaces $s>0$ and $s<0$ gives
	$\m^\top W_3\sigma(W_2)=\lambda(1,-1)$. In particular, both columns of
	$\sigma(W_2)$ are nonzero. If $\sigma(W_2)$ were singular, its
	nonnegative columns would be positive scalar multiples of one another,
	contradicting this identity. Hence
	$\sigma(W_2)\in\mathrm{GL}_2(\mathbb R)$. Multiplying the identity on the
	right by $\sigma(W_2)^{-1}$ and using
	$(1,-1)=-\m^\top$ and $\m^\top\m=2$ gives
	$\m^\top\bigl(W_3+\frac{\lambda}{2}\m\m^\top
	\sigma(W_2)^{-1}\bigr)=0$. Since
	$\ker\m^\top=\operatorname{span}(\bones)$, this is equivalent to
	$W_3=-\frac{\lambda}{2}\m\m^\top\sigma(W_2)^{-1}
	+\bones\c^\top$ for some $\c\in\mathbb R^2$. The converse follows by
	direct substitution.
\end{proof}
The argument also extends to other settings considered in this work, with
some additional technical details. We defer these extensions to
\cite{arjevani2026dsb_tf}.

\section{Concluding Remarks}\label{sec:concluding}
In this work, we proposed symmetry and its breaking as an organizing principle
for the spectral structure of the Hessian. The theoretical framework was developed in some generality, while our concrete applications focused on the empirical observations described in the introduction, largely following the settings studied in \cite{sagun2016eigenvalues,sagun2017empirical}. The formulations and proofs of the main results require constructing nearby symmetric reference configurations, identifying the invariance groups of the Hessian and the GN form produced by structural and distributional symmetries, handling nonsmoothness through generalized functions, decomposing the parameter space into isotypic components, determining how the resulting structure constrains the multiplicity and scale of the eigenvalues, and finally analyzing how SB transfers this structure back to the original models. In \pref{table:mixture_gaussians_resolution} and \pref{table:gaussian_regression_resolution},  we summarize how the SB framework explains some of these empirical observations and applies to several related settings. The concepts, methods, and results point toward further extensions,
briefly previewed in \pref{sec:dsb}. These extensions are being developed
in separate publications, to be incorporated later into broader treatments
\cite{arjevani2026dsb_tf,arjevani2027optsym}. We close with remarks on
optimization, invariant tensors, and equivariance, also
covered by these works.

\begin{table}[!htbp]
	\centering
	\small
	\renewcommand{\arraystretch}{1.}
	\setlength{\tabcolsep}{5pt}
	\begin{tabular}{@{}
			>{\raggedright\arraybackslash}p{0.23\textwidth}
			>{\raggedright\arraybackslash}p{0.74\textwidth}
			@{}}
		\toprule
		\multicolumn{2}{c}{\emph{\pref{setting:gaussian_mixture}}}\\
		\midrule
	Phenomenon 
		& Explanation within the framework\\
		\midrule
		Large Hessian kernels at convergence.
		&
		\begin{minipage}[t]{\linewidth}
			\begin{itemize}[itemsep=1pt,topsep=2pt,leftmargin=*]
				\item orbit directions account for
				\(O(h)\)
				\item transverse directions (for ReLU, but not GELU, see \pref{fig:lb_zeros})  account for~\(\Omega(h^2)\)  
			\end{itemize}
		\end{minipage}
\\
		\addlinespace
Clusters and outliers.
		&
The rich Hessian invariance group forces
eigenvalues with multiplicities of order $1$, $k$, and $k^2$,
unfolding into clusters and outliers under symmetry~breaking.  \\
		\addlinespace
Outlier count roughly matches the number of classes.
		&
A leading eigenvalue of multiplicity $k-1$ at the symmetric reference
splits under distributional SB into $k-1$ simple outliers
(see \pref{fig:original_vs_symmetric_mix}).\\
		\addlinespace
Structure of the Hessian ($k=2$).
&
Symmetry identifies local degrees of freedom that account for the large
kernel. On a codimension-one subspace, the Hessian lies in the associated commutant~algebra.\\
\addlinespace
Same qualitative picture across different variances ($k=2$).
&
The composite Hessian invariance group and the kernel dimension
do not depend on the variance. \\
		\addlinespace
Same qualitative picture for squared loss ($k=2$).
		&		
The composite Hessian invariances and local degrees of freedom mainly
arise from the architecture, not the choice of the loss. \\
\addlinespace
Near-zero bulk at random initialization ($k=2$).
&
The averaged Hessian has a rich invariance that forces an eigenvalue of
multiplicity $(h-1)^2$, whose scale is $O(h^{-1})$. This yields, with high
probability as $h\to\infty$, a near-zero bulk containing a $1-o(1)$ fraction
of the Hessian spectrum at a random  initialization.\\
\bottomrule
	\end{tabular}
\par\vspace{\abovecaptionskip}
\refstepcounter{table}
\label{table:mixture_gaussians_resolution}
{\centering\normalsize Table~\thetable\par}
\end{table}

\begin{table}[!htbp]
	\centering
	\small
	\renewcommand{\arraystretch}{1.}
	\setlength{\tabcolsep}{5pt}
	\begin{tabular}{@{}
			>{\raggedright\arraybackslash}p{0.23\textwidth}
			>{\raggedright\arraybackslash}p{0.74\textwidth}
			@{}}
		\toprule
	Setting
		& Explanation within the framework\\
		\midrule
Trivial  weight symmetries (\pref{fig:sq_evs}).
& 
The point-stabilizer approach generally fails to detect the relevant
Hessian invariances. The composite invariance mechanism makes them
explicit by combining structural symmetries of the network differential
with distributional symmetries. \\
\addlinespace
$L$-layer fully connected networks.
&
The general $N^L$ case was considered in \cite{arjevani2019principle}.
There, the lack of weight symmetry was also identified as a major
obstruction. The composite invariance mechanism overcomes it, detecting Hessian invariances that force multiplicities of orders $\Theta(1)$, $\Theta(d)$, and $\Theta(d^2)$. \\
\addlinespace
Residual connections
&
Residual connections can remove the transverse flat
directions, allowing weight symmetries to capture the relevant Hessian
invariances \cite{arjevani2025three}. \\
\addlinespace
Diagonal networks
&
For the diagonal linear networks (see \cite{pesme2023saddle}),
$N^{3,\odot}(\x;\a,\b,\c)=\bones^\top (\a \odot \b \odot \c \odot \x)$ with
\(y(\x)=\bones^\top\x\), the global
minima are precisely \(\a\odot \b\odot \c=\bones\), where
$\dim\ker\nabla_W^2\mathcal L(W^\star)=2d$ and 
$T_{W^\star}(\mathbb R^d)^3/\ker\nabla_W^2\mathcal L(W^\star)
\cong
\ft\oplus\fs_{d-1,1}$
(and see algorithmic symmetries in~\pref{sec:dynamics})
\\
\bottomrule
	\end{tabular}
\par\vspace{\abovecaptionskip}
\refstepcounter{table}
\label{table:gaussian_regression_resolution}
{\centering\normalsize Table~\thetable\par}
\end{table}

\subsection{Optimization}
\label{sec:dynamics}
The analysis in this work mainly concerns static properties of the
loss landscape. SB also plays a role in optimization  and
generalization. We briefly review
several such connections here.

\paragraph{Conservation laws}
Lie group actions can give vectors in the Hessian kernel, which we
referred to as orbit flat directions. Beyond this local effect, they
also have global consequences, including conservation laws for
\emph{gradient flow}, in the spirit of Noether's theorem. These laws
give rise, for example, to the auto-balancing property
\cite{du2018algorithmic}, as established in
\cite[Section 3]{arjevani2019principle}. More general conservation laws
are studied in \cite{kunin2020neural,marcotte2024keep}. While conservation
laws clarify some aspects of optimization, they do not explain
quantities that \emph{do} change substantially along the~trajectory.

\paragraph{Gradient descent takes place in a tiny subspace.}
It has been observed \cite{gur2018gradient,ghorbani2019investigation} that
the gradient concentrates in low-dimensional subspaces early in training.
These subspaces have been empirically identified with the leading eigenspaces
of the Hessian. Attempts to account for this concentration
\cite{gur2018gradient,ghorbani2019investigation} have remained limited. In the
three-layer networks with residual connections \cite{arjevani2025three},
the SB framework gives this observation a natural interpretation. There, gradient
dynamics converge to fixed-point subspaces and evolve predominantly tangentially
to them. The isotypic decomposition organizes the remaining~directions.

\paragraph{Algorithmic symmetries.}
The above phenomenon may be examined more finely through gradient
equivariance: modulo finite-precision effects, the symmetry 
of successive gradient descent iterates can only increase
\cite{arjevani2019principle}, suggesting a form of \emph{incremental
learning} (see \cite{kalimeris2019sgd,pesme2023saddle}).  Other gradient-based algorithms, such as Adam, may exhibit different
equivariances and biases depending on the structure of their update
rules, as can already be observed in
the settings considered in this work. Moreover, during optimization, a trajectory may pass near several
symmetric reference critical points, leaving traces of their properties
in the spectrum, with effects of
structural symmetries generally more visible than those of distributional~symmetries.

\subsection{Symmetry and spectral structure of invariant tensors}\label{sec:inv_tensors}

\newcommand{\mtpoly}{p}

Tensor eigenpairs are computable in principle, but existing methods rapidly become impractical as the ambient dimension and tensor order grow. In the same way that symmetry imposes strong structure on Hessian spectra, it also constrains tensor eigenspectra. Consider the homogenized Motzkin polynomial
\begin{align}
	\mtpoly(x,y,z)
	=
	z^6 + x^4y^2 + x^2y^4 - 3x^2y^2z^2 .
\end{align}
The polynomial is invariant under coordinate sign changes and the swap \(x \leftrightarrow y\), and these symmetries already determine much of the structure of its tensor $Z$-eigenpairs (see \cite{qi2005eigenvalues}) as illustrated~below.

{\footnotesize\begin{align*}
	\begin{array}{c|ccccc}
		\text{Eigenvalue}
		&
		\mathbf 0
		&
		\mathbf 0
		&
		\mathbf 1
		&
		\mathbf{1/64}
		&
		\mathbf{1/4}
		\\ \hline
		(x,y,z)
		&
		\left(\frac1{\sqrt3},\frac1{\sqrt3},\frac1{\sqrt3}\right)
		&
		(1,0,0)
		&
		(0,0,1)
		&
		\left(\sqrt{(3+\sqrt6)/8},\sqrt{(3-\sqrt6)/8},1/2\right)
		&
		\left(\frac1{\sqrt2},\frac1{\sqrt2},0\right)
		\\
		\text{Multiplicity}
		&1&5&1&1&1\\
		\text{Isotropy group size}
		&2&4&8&1&4\\
		\text{Orbit size}
		&8&4&2&16&4
	\end{array}.
\end{align*}}
\par\noindent Tensor eigenvalue equations may be
interpreted as tangency equations of the associated homogeneous polynomial
at the origin, and this viewpoint extends to more general notions of tensor
spectra. Symmetric eigenvectors can be computed from polynomial systems of much
smaller dimension, sometimes independently of the ambient tensor dimension. Moreover, once a single generic tensor in an invariant space is solved, linear {homotopy} continuation transfers the computation to other generic tensors in the same class while tracing only paths compatible with the symmetry.  Relaxation methods provide tractable bounds on extremal eigenvalues when exact computation is infeasible. This framework applies broadly to tensors invariant under actions of the symmetric and hyperoctahedral groups or affine and projective groups over finite fields, as well as to tensors associated with hypergraphs,
including the examples in \cite[Example 6.4]{li2013z} and
\cite[Example 2]{xie2013z}, to mention only a few.

More formally, let \(V\) be a finite-dimensional complex vector space, let $m,m'$ be positive integers, 
and let $\cA\in(V^*)^{\otimes(m-1)}\otimes V$ and
$\cB\in(V^*)^{\otimes(m'-1)}\otimes V$. Consider $\v\neq0$ such that
\begin{align}
	\cA \v^{m-1}
	=
	\lambda \cB \v^{m'-1}.
\end{align}
If the solution scheme on $\v\ne0$ is of pure dimension one, its
irreducible components correspond to equivalence classes of eigenpairs
under $(\v,\lambda)\sim(t\v,t^{m-m'}\lambda)$ for $t\ne0$. The eigencount is the
number of these classes counted with their scheme-theoretic multiplicities,
equivalently, the number of isolated solutions counted with multiplicity
after imposing a generic affine normalization on $\v$. If higher-dimensional families of solutions occur, the eigencount is declared infinite. More generally, if \(E\subseteq (V^*)^{\otimes m-1}\otimes V\) is a linear subspace and there exists a nonempty Zariski-open subset of \(E\) on which the eigencount is finite and constant, then this value is called the generic eigencount of \(E\). The scheme-theoretic formulation is required here since multiplicities are encoded intrinsically by the nonreduced structure of the solution scheme, while the equivalence with the augmented formulation follows from standard intersection-theoretic arguments, see \cite{eisenbud20163264}.

\newcommand{\pN}{\overline{N}}
\subsection{Equivariance}\label{sec:equivariance}
At the global minima considered in both \pref{setting:gaussian_mixture} and \pref{setting:gaussian}, the predictor is equivariant under the
symmetries of the data distribution in the following sense. Write $\pN$ for the predictor, equal to $N$ under squared loss and to $\operatorname{softmax}\circ N$ under cross-entropy. If
$\mathsf t\in\dist(\dd)$ acts by
$\mathsf t(\x,\y)=(\mathsf t_\x\x,\mathsf t_\y\y)$,~then 
\begin{align}
	\pN(\mathsf t_\x\x;W)=\mathsf t_\y \pN(\x;W),
	\qquad\text{for all }\x.
\end{align}
The broader interest in equivariance is motivated
by the fact that encoding known symmetries can, in particular, improve generalization while retaining
expressivity within the relevant equivariant function class (see, e.g.,
\cite{elesedy2021provably,maron2019universality}). A large literature is devoted to architectures that enforce equivariance 
\cite{cohen2016group,zaheer2017deep,navon2023equivariant,satorras2021n,agrawal2022classification}. The framework introduced here applies unchanged to equivariant networks,
as illustrated by the GNN example in \pref{sec:dsb}, and accommodates
restrictions of the parameter space as in \pref{sec:layerwise}.

\appendix

\section*{Selected background references}\label{sec:reading}

The following guide lists some references on the mathematical background
relevant to this work and is not intended to be exhaustive.

\section{Algebra}

\subsection{Bilinear forms}
The second differential of a loss is a symmetric bilinear form, while
a choice of inner product on parameter space represents it as a
self-adjoint operator. The spectrum and its eigenvalue multiplicities
may vary with this choice, whereas the form and its invariance group
remain fixed (\pref{rem:spectral_refraction}). The radical of the form, which is
the kernel of every such operator, is likewise independent of the inner
product. The null cone consists of directions on which the quadratic form
vanishes and is strictly larger than the radical when the form is indefinite,
a distinction used when comparing flat directions with directions in the
Hessian kernel. For background on inner-product spaces and symmetric
bilinear, sesquilinear, and quadratic forms, see
\citet{greub1975linear,bourbaki2007algebre}.

\subsection{Group theory and permutation groups}
The transformations of networks and data laws considered in this paper lead to
group actions, stabilizers, direct and semidirect products, and permutation
groups.  Subgroups of direct products help organize the possible stabilizers.
 \citet{dummit2004abstract} provide the general algebraic background.
\citet{cameron1999permutation} and \citet{DixonMortimer1996} treat permutation groups and their subgroup
structure. For Goursat's lemma and subgroups of direct products, see
\citet[Section~1.6]{schmidt1994subgroup}. 

\subsection{Representation theory}
When a Hessian invariance group acts orthogonally on parameter perturbations,
isotypic components organize invariant Hessian blocks and constrain
eigenvalue multiplicities.
Some of the representation theory needed here is developed in the main text.
\citet{serre1977linear} gives a concise introduction to finite
group representations, while \citet{fulton1991representation} provide a
broader treatment, including irreducible representations of symmetric
groups. 

\section{Geometry}

\subsection{Differential geometry and topology}
Smooth group orbits supply manifold directions along which an invariant
objective is constant.  We use their tangent spaces to identify some flat
directions and use constrained Hessians when parameters lie on a sphere.
\citet{warner1983foundations} covers differentiable manifolds and Lie
groups.  \citet{lee2018introduction,boumal2023introduction} provide background
on Riemannian metrics, submanifold geometry, and Hessians of functions
restricted to smooth manifolds.  For the
broader relation between group actions and symmetry breaking, see
\citet{golubitsky1988singularities}. Differential topology also enters
the analysis of tangency arcs through genericity and stability results
\cite{arjevani2024sc}.

\subsection{Real algebraic and tame geometry}
ReLU networks and their activation regions admit semialgebraic descriptions.
The more general setting of definable networks uses an o-minimal structure to
obtain finite stratifications on which the relevant maps are $C^2$.  This
geometry supports the treatment of activation interfaces.  \citet{benedetti1991real} introduce real algebraic and semialgebraic sets.
\citet{van1998tame} covers o-minimal~geometry.  

\subsection{Variational geometry}
At a point of a level set, its tangent cone may contain directions not
generated by a group orbit and need not be a linear space.  This is the
geometric language behind the paper's distinction between orbit and
transverse flat directions.  For tangent and normal constructions in
variational and nonsmooth analysis, see
\citet{rockafellar1998variational,clarke1998nonsmooth}.

\subsection{Geometric measure theory}
For the locally DC definable networks considered in this paper,
the singular part of a network's second distributional derivative
is carried by lower-dimensional
activation interfaces.  Hausdorff measure, the coarea formula, and the jump
part of the derivative of a function of bounded variation explain the
interface measure in the resulting Hessian formulas.  \citet{ambrosio2000functions} provide the principal textbook treatment of
these tools.  For locally finite Hausdorff measure in the definable setting,
see \cite{fornasiero2012hausdorff}.

\subsection{Algebraic geometry, schemes, and intersection theory}
The concluding discussion of invariant tensors considers polynomial
eigenpair equations up to scaling and counts their solutions with
scheme-theoretic multiplicity. Algebraic geometry also enters the
analysis of tangency arcs. After reduction to finite jets and
complexification, questions of
genericity are formulated in terms of polynomial systems.
\citet{mumford1976algebraic}
provides background on projective varieties, and \citet{eisenbud20163264} develop the intersection-theoretic viewpoint needed
for multiplicity counts.

\section{Analysis}

\subsection{Measure theory}
The data law is treated as a Radon probability measure, and the expected loss
is obtained by integrating the pointwise loss over data.  Product measures,
pushforwards, and Fubini's theorem give the basic language for this passage
and for averaging over initialization.  \citet{folland1999real}
develops these measure-theoretic foundations. 

\subsection{Functional analysis and distribution theory}
Treating the pointwise loss as a distribution allows differentiation
in the parameter variables where its classical Hessian does not exist,
even when the expected loss has a classical Hessian. This formulation
permits differentiation under expectation and supports our direct
analysis of invariances.  For distributions and their operations, see \citet{Hormander1983}. \citet{evans2015measure} cover the Sobolev and BV
regularity used in the analysis of network derivatives.

\subsection{Matrix spectral theory}
After a symmetric reference has identified invariant subspaces, the min--max
principle and Cauchy interlacing compare its Hessian spectrum with spectra
obtained by restriction or by adding parameter blocks.  Inertia bounds and
eigenvalue perturbation then help describe how large multiplicities may
split into clusters.  \citet{horn2012matrix} give the
underlying self-adjoint matrix theory.

\section{Statistical Learning}

The objectives studied here are expected losses under specified
data laws, including empirical laws.
For squared loss and cross-entropy, their Bayes predictors are determined by
conditional means and conditional class probabilities, respectively.  This
also explains the well-specification condition used to identify a Hessian
with its GN form.  For background on statistical risk and Bayes prediction,
see \citet{hastie2009elements}.

\vskip 0.2in
\bibliography{bib}

\end{document}